\newtheorem{thm}{Theorem}[section]
\newtheorem{lem}[thm]{Lemma}
\journal{Some Journal}
\begin{document}

\begin{frontmatter}



\title{Gradient Distribution Priors for Biomedical Image Processing}


\author{Yuanhao~Gong, Ivo~F.~Sbalzarini}

\address{MOSAIC Group, Center for Systems Biology Dresden (CSBD),\\ 
Max Planck Institute of Molecular Cell Biology and Genetics, 
Pfotenhauerstr.~108, D--01307 Dresden, Germany.}

\begin{abstract}
Ill-posed inverse problems are commonplace in biomedical image processing. Their solution typically requires imposing prior knowledge about the latent ground truth. While this regularizes the problem to an extent where it can be solved, it also biases the result toward the expected. With inappropriate priors harming more than they use, it remains unclear what prior to use for a given practical problem. Priors are hence mostly chosen in an {\em ad hoc} or empirical fashion. We argue here that the gradient distribution of natural-scene images may provide a versatile and well-founded prior for biomedical images. We provide motivation for this choice from different points of view, and we fully validate the resulting prior for use on biomedical images by showing its stability and correlation with image quality. We then provide a set of simple parametric models for the resulting prior, leading to straightforward (quasi-)convex optimization problems for which we provide efficient solver algorithms. We illustrate the use of the present models and solvers in a variety of common image-processing tasks, including contrast enhancement, noise level estimation, denoising, blind deconvolution, zooming/up-sampling, and dehazing. In all cases we show that the present method leads to results that are comparable to or better than the state of the art; always using the same, simple prior. We conclude by discussing the limitations and possible interpretations of the prior.
\end{abstract}

\begin{keyword}

Gradient distribution\sep parametric prior\sep naturalization\sep variational method\sep denoising\sep deconvolution\sep noise estimation\sep dehazing.
\end{keyword}

\end{frontmatter}


\setlength{\tabcolsep}{4pt}

\section{Introduction}
\label{sec:int}
Image processing has become a central element of many workflows in biology and medicine. While the type and source of images varies greatly from light microscopy to magnetic resonance imaging to electron microscopy, the image-processing tasks are often similar. Frequent tasks include image denoising, image deconvolution (beblurring), image zooming (super-resolution), scatter light removal (dehazing), noise level estimation, image quality assessment, and contrast enhancement for image visualization. All of these are inverse problems, as one attempts to reconstruct an unknown ``perfect image'' from the given imperfect (noisy, blurry, hazy, etc.) observation. Inverse problems are almost always ill-posed or at least ill-conditioned, especially if the transformation that is to be undone is non-linear or unknown.

In order to be able to solve such problems, additional knowledge about the unknown perfect image has to be assumed. Conceptually, there are two approaches to estimating the perfect image: interpolation (smoothing or filtering) and model fitting (Bayesian inference). In the former approach, the additionally assumed knowledge is encoded in the choice of the interpolation method, or in the filter kernels used. These choices typically impose certain geometric properties of the perfect image, such as connectivity, smoothness, sparsity, or curvature. In the Bayesian approach, one attempts to reconstruct a perfect image such that it resembles as much as possible the observed image when run though the (blurring, noise, etc.) transformation. Bayesian inference requires prior knowledge in the form of a {\em prior} that sufficiently constrains the reconstruction problem to render it well-posed. Frequently used priors in biomedical image processing include sparsity in the spatial and/or frequency domain~\citep{PPAHybrid}, total variation (TV)~\citep{TV1992,chan2000,chantas2010variational}, mean curvature (MC)~\citep{mean1997,Zhu2007,Liu2011}, Gaussian curvature (GC)~\citep{lee2005,Zhu2007,Lu2011,gong2013a}, and hybrid priors~\citep{kim2006pde,TGV2010}. 

While the prior knowledge regularizes the inverse problem to an extent where it can be solved, it also biases the result toward the expected. It has repeatedly been shown that inappropriate priors may obscure features in the image, or lead to wrong results altogether. Choosing the ``right'' prior, however, is as hard as solving the original problem, since the underlying perfect image is unknown. The main drawback of frequently used priors is that they bear no relation to the image contents. They merely postulate certain geometric or spectral properties of the image signal, which may mean little. The very popular TV prior~\citep{TV1992,chan2000,chantas2010variational}, for example, presupposes that the unknown perfect image be a collection of uniformly colored or uniformly bright regions, i.e., be piece-wise constant. Imposing this prior leads to removal of image detail and processing artifacts if this presumption is not justified. 

Spectral priors have been introduced in order to relax some of the constraints. They do not directly impose knowledge about a property of the perfect image, but only about the histogram (or probability distribution) of that property. As such, they are weaker priors and bias the result less. A particularly popular spectral prior is the Gradient Distribution Prior (GDP), which presupposes a certain statistical distribution of the gradients in the image, i.e., a certain gradient histogram. It has been shown to lead to better results than the TV prior in many image-processing tasks~\citep{zhu:1997,NaturePrior,shan:2008a,cho:2009,chen:2010,krishnan2009fast,cho:2012}. In Section~\ref{sec:why}, we provide some reasoning about why and when GDP should be preferred. Conceptually, GDPs are related to histogram equalization. However, while the latter presupposes a uniform distribution in the intensity domain, the former operates in the gradient domain, and the presupposed distribution is not uniform, but learned from examples.

Despite their success in signal processing, GDPs learned from natural-scene images have to the best of our knowledge never been adopted in biomedical image processing, nor have they been validated on biomedical images. Validation as a prior entails showing the following two properties:
\begin{itemize}
\item {\bf Stability:} The prior should be independent of the image content. It should be stable against variations in the imaged objects (organs, cells, tissues, etc.) and against different imaging modalities (fluorescence microscopy, electron microscopy, X-ray imaging, etc.). Here, we validate this property for the GDP in Section~\ref{sec:GDP}.
\item {\bf Correlation with image quality:} The prior should be correlated with subjectively perceived image quality. Only then, imposing the prior is expected to improve image quality. We show this here for the GDP prior in Section~\ref{sec:correlation}.
\end{itemize} 
We provide here a complete validation of the natural-scene GDP for biomedical images. 
The concept is illustrated in Fig.~\ref{fig:flow}: The GDP is constructed from natural-scene images and is validated on biomedical images. Then, this prior can be used for biomedical image-processing tasks. 

\begin{figure}[tb]
  \centering
  \includegraphics[width=\linewidth]{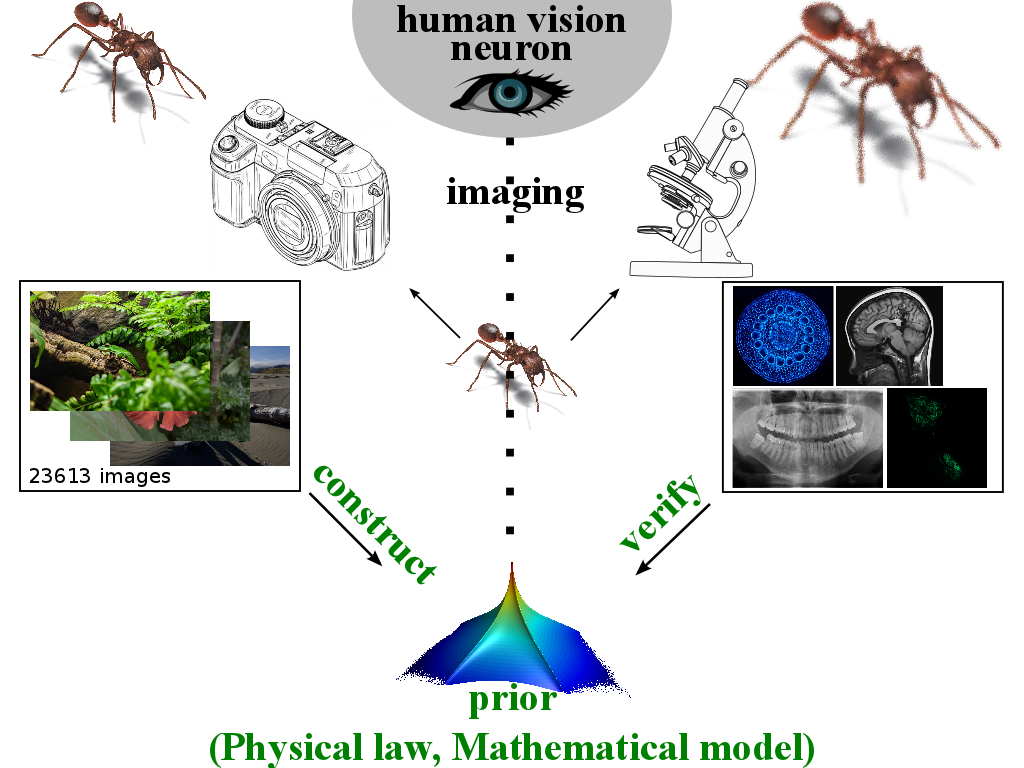}
  \caption{Concept of using natural-scene GDPs in biomedical imaging. The prior is learned from natural-scene images. Since they obey the same physical laws or mathematical model as biomedical images, we propose to use this prior also in that case. We hence learn the GDP from natural-scene images and then validate it on biomedical images, enabling its later application. The human vision system provides the unifying link between the two, as it has evolved to detect and process natural-scene images, but is also used to look at biomedical images.}
  \label{fig:flow} 
\end{figure}

The second contribution made here is the generalization of GDPs to two (and higher) dimensions, and the introduction of novel parametric models for gradient distributions, which includes the correlation between $x$ and $y$ component. So far, gradient distributions have been modeled under the assumption that the gradient components along $x$ and $y$ are statistically independent in log scale. This, for example, leads to the well-known hyper-Laplacian model~\cite{krishnan2009fast,cho:2012}. As we show here, this assumption is not always justified. Often, the gradient components in an image are correlated. Ignoring these correlations may not only lead to artifacts in the result, but also complicates solving the resulting inference problem, frequently requiring alternating optimization over the gradient components. Here, we propose new parametric models for GDP in two dimensions, hence accounting for all correlations. We show that these models not only lead to more accurate results, but also allow simpler and more efficient computational solution of the resulting inference problem. We also highlight the commonalities and differences between the popular TV prior, the hyper Laplace model, and our novel models. In particular, we shown that the popular TV prior can be interpreted as a linear approximation of a GDP in log scale (Section~\ref{sec:link}).

Before presenting our results, however, we formalize the problem in a mathematical framework in Section \ref{sec:variation} and provide a detailed motivation of our method in Sections~\ref{sec:why} and \ref{sec:whynot}. Stability of the GDP on biomedical images is studied in Section~\ref{sec:GDP}. In Section~\ref{sec:GDPModel} we provide novel parametric models for the GDP in one and two dimensions. Correlation with image quality is shown in Section~\ref{sec:correlation}. In Section~\ref{sec:remap}, we demonstrate how to impose the present GDP in a variational framework. In Section~\ref{sec:app}, we illustrate several applications from biomedical image processing. We conclude and discuss this work in the closing Section~\ref{sec:DFW}.   

\section{Mathematical Framework}
\label{sec:variation}
We aim at computing an estimate $\hat{U}$ of the unknown, latent perfect image $U(\vec{x})$ from the observed discrete samples $S = \{ s_i(\vec{x}): i=1\dots N\}$ ($\vec{x}$ is the spatial coordinate), which are the pixels of the data image, or more generally a cloud of data points. The data $S$ have been generated from the underlying truth $U$ by imaging the latter, introducing blur, noise, scattering, down-sampling, etc. This lossy image-formation process is generally modeled as a non-linear map $S=F(U)$. The reconstruction problem can then be expressed in variational form as:
\begin{equation}
\label{eq:general}
\hat{U} = \arg\min_{U'\in F_s}\left\{ \int_{\vec{x}\in\Omega}\!\!\!\Phi_1(U',S)\,\mathrm{d}\vec{x} + \lambda \!\! \int_{\vec{x}\in\Omega}\!\!\!\Phi_2(U')\,\mathrm{d}\vec{x}\right\}\, , 
\end{equation}
where $\Phi_1$ is a data-fitting cost function, measuring how well the estimated image approximates the observed image when run through the transformation considered. $\Phi_1$ hence models the (generally unknown) imaging transformation $F$. $\Phi_2$ constitutes the prior, i.e., a regularization function on $U$. It is common to include a scalar weighting coefficient $\lambda$, called the regularization coefficient, which tunes the trade-off between the data and the prior. The optimal value of $\lambda$ is only known for certain special models (Section~\ref{sec:link}). $\Omega$ is the image domain, and $F_s$ is the postulated function space in which $U$ lives.

The choice of function space $F_s$ (e.g., $C^2(\Omega)$, $C^4(\Omega)$, or Bounded Variation space over $\Omega$) defines the image model. All  solutions of the reconstruction problem are members of this space. A popular choice is the space $L^q(\Omega)$ ($0\leq q \leq \infty$). When $\frac{1}{q}+\frac{1}{q^{*}}=1$, $L^q(\Omega)$ and $L^{q^*}(\Omega)$ are dual spaces. From the H\"older inequality, one concludes that $L^{q_+}(\Omega)\subseteq L^{q}(\Omega)$, for $\forall q_+\geq q \geq 1$. This implies that $L^{1}(\Omega)$ is the most general (least restrictive) space among all $L^q(\Omega)$.
 
$\Phi_1$ measures how well a certain hypothetical reconstruction $U'$ fits the data $S$. This generally involves a model $\hat{F}$ of the unknown image-formation process $F$. This model is typically built from prior knowledge about the optics of the imaging equipment. In order to quantify the distance between $\hat{F}(U')$ and $S$, $\Phi_1$ uses any metric or semi-metric, such as the Euclidean distance, the Hausdorff distance, an $\ell_p$ distance, tangent distance, or a Bregman divergence~\citep{Paul:2013}. The choice of the data-fitting function $\Phi_1$ depends on how the data were obtained, on the noise type and magnitude, on the tolerated reconstruction error, and on considerations of computational efficiency. The $\ell_2$ norm is commonly used because it filters Gaussian noise on the data. Another frequent choice is the $\ell_1$ norm, because it filters outliers.    

In most of models, $\Phi_2$ is a regularization term, such as Tikhonov, the $\ell_2$ norm of the gradient, TV, MC, or GC. This term imposes prior knowledge (sparsity, smoothness, etc.) about the unknown perfect image $U$.
 
\subsection{Spectrally Regularized Models}
When using a spectrally regularized model, the regularization term does not directly act on $U'$, but on a distribution or histogram $p(U')$:
\begin{equation}
\label{eq:spectral}
\begin{split}
\hat{U} = \arg\min_{U'\in F_s} & \left\{ \int_{\vec{x}\in\Omega}\!\!\!\Phi_1(U',S)\,\mathrm{d}\vec{x} \right\}\\
& s.t.~~p(\mathcal{J}(U')) = p^{\mathrm{pr}}_{\mathcal{J}}\, ,
\end{split}
\end{equation}
where $\mathcal{J}$ is a filter (map, feature, differential operator, etc.) and $p^{\mathrm{pr}}_{\mathcal{J}}$ is the corresponding spectral prior. In GDPs, the filter $\mathcal{J}=\nabla$, and $p(\cdot)$ is the gradient distribution.

The spectral constraint can be relaxed by introducing an auxiliary variable $\tilde{U}$ for decoupling:
\begin{equation}
\label{eq:spectralSplit}
\begin{split}
\hat{U} = \arg \min_{U'\in F_s} & \left\{ \int_{\vec{x}\in\Omega}\!\!\!\,\left[\Phi_1(U',S) +\lambda \Phi_2(U',\tilde{U})\right]\,\mathrm{d}\vec{x}\right\},\\
& s.t.~~p(J(\tilde{U}))= p^{\mathrm{pr}}_{J}\, .
\end{split}
\end{equation}
The type of decoupling is generic to all variational models with hard constraints. It has previously been used, for example, in split-Bregman~\citep{Paul:2013}, TGV~\citep{TGV2010}, and Hyper-Laplacian~\citep{krishnan2009fast} models. From an optimization point of view, Eq.~\ref{eq:spectral} and Eq.~\ref{eq:spectralSplit} are problems with hard and soft constraint, respectively (details see Section~\ref{sec:remap}).

\section{Motivation: Why the Gradient Distribution?}
\label{sec:why}
Rather than postulating {\it ad hoc} properties of the perfect image to be reconstructed, spectral priors are typically learned or identified from large image collections. Given a sufficiently diverse collection of images, the histogram or probability distribution of a spectral prior is estimated by computing some features over all images. There are many features that can be computed, including color and texture features, but the image gradient is particularly interesting. This is first because it is remarkably stable (invariant) across images. Second, it is easy to compute and can hence be learned from large image collections. Third, the gradient has a simple intuitive meaning as the first-order approximation to the perfect image. Furthermore, reconstructing an image from a given gradient field is computationally simple and efficient. In the following, we use the term {\em gradient field} whenever we mean the gradient image, i.e., an image that has the same size as the original data image, but where each pixel stores two values that are the two components of the gradient of the original image at that location. The {\em gradient distribution} is the histogram or probability distribution of these values across all pixels, and/or across multiple images. We restrict our discussion to two-dimensional images where the gradient has two components. Extensions to higher-dimensional images are readily possible by adding additional gradient vector components. 

GDP have been exploited in Bayesian frameworks for image denoising~\citep{zhu:1997}, deblurring~\citep{shan:2008a}, restoration~\citep{cho:2012}, super resolution~\citep{zhang:2012}, and others~\citep{NaturePrior,shan:2008a,cho:2009,chen:2010,krishnan2009fast,cho:2012}. As shown in Ref.~\citep{cho:2009}, deblurring in the gradient domain is more efficient than working with the original pixel values~\citep{shan:2008a}. This can be explained by the reduced correlation in the gradient domain (cf.~Fig.~\ref{fig:example}), which is favorable for blurring kernel estimation.

It is well known by now that the gradient distribution of any image has a heavy tail in log scale.  
In previous works, however, the 2D gradient distribution is assumed to be the product of two independent 1D distributions along $x$ and $y$~\citep{shan:2008a,cho:2009,chen:2010,krishnan2009fast,cho:2012}. As we show below, this is not necessarily the case and there can be significant correlations between the $x$ and $y$ components of the gradient. The full joint 2D gradient distribution can be estimated from image databases and we provide parametric models for it that enable us to use it similarly to the 1D case.

The gradient distribution has a number of different interpretations that provide additional arguments for its use.

\subsection{A statistical argument}
Statistical interpretations of images have led to a wealth of powerful reconstruction methods based on Bayesian estimation theory. A very famous example are Markov Random Fields (MRF), as used for example in {\it{Fields of Experts}} models~\citep{roth:2009,xu:2009,dong:2012}. They are based on assuming a Gibbs distribution for the pixel intensities $I$:
\begin{equation}
\label{FOE}
p(I(\vec{x}))=\frac{1}{Z}\prod_{r}\prod_{i=1}^{N}\Psi_i((J_i*I)_{r};\alpha_i)\, .
\end{equation}
$Z$ is a normalization constant, $J_i$ is a linear filter, $*$ is the convolution operator, $r$ a local window size, and $\Psi_i$ is modeled as a heavy-tailed (often Student-T) distribution with parameter $\alpha_i$ (the number of degrees of freedom). The perfect image is then estimated by a Bayesian {\it{Maximum A Posteriori}} (MAP) estimator. These models are powerful and can learn structural information, but there are several crucial parameters to be tuned: the image filter $J$, the shape of the potential function $\Psi$, the local window size $r$, etc. Of these, the window size $r$ is particularly hard to choose for training and inference.

While MRFs estimate the intensity in each pixel (hence taking a microscopic view of the image), spectral priors describe a global property of the whole image (macroscopic view), ignoring local geometry variations. This leads to more stable estimators from a statistical point of view. 

According to Boltzmann, the microscopic and the macroscopic view are connected through the concept of {\it entropy}. Interpreting each pixel as a random variable, an image can be seen as a state of a high-dimensional random process (pixels). This corresponds to an Ising model in statistical mechanics. Assuming that of all possible combinations of pixel values one could form, meaningful images are equilibrium states, i.e., states (pixel value combinations) of largest probability, the entropy of the image should also be largest. This is the basis of well-known maximum-entropy estimators, which have proven powerful in machine learning and image processing~\cite{Gull:1978,Gull:1984,Hu:1991}. Conversely, minimizing the  entropy leads to a more structured system, which is also useful in image processing~\citep{Awate:2006}. However, it is clear that simply maximizing the entropy does not only enhance the signal, but also the noise, while simply minimizing the entropy does not only reduce the noise, but also removes image detail. Therefore, a prior is again needed to trade this off. The entropy distribution of natural-scene images is shown in Fig.~\ref{fig:entropyInt}, which suggests that entropy should be kept on a certain level. GDPs achieve exactly this, as shown in Section~\ref{sec:convex}.

\begin{figure}[h]
  \centering
{\includegraphics[width=0.7\linewidth]{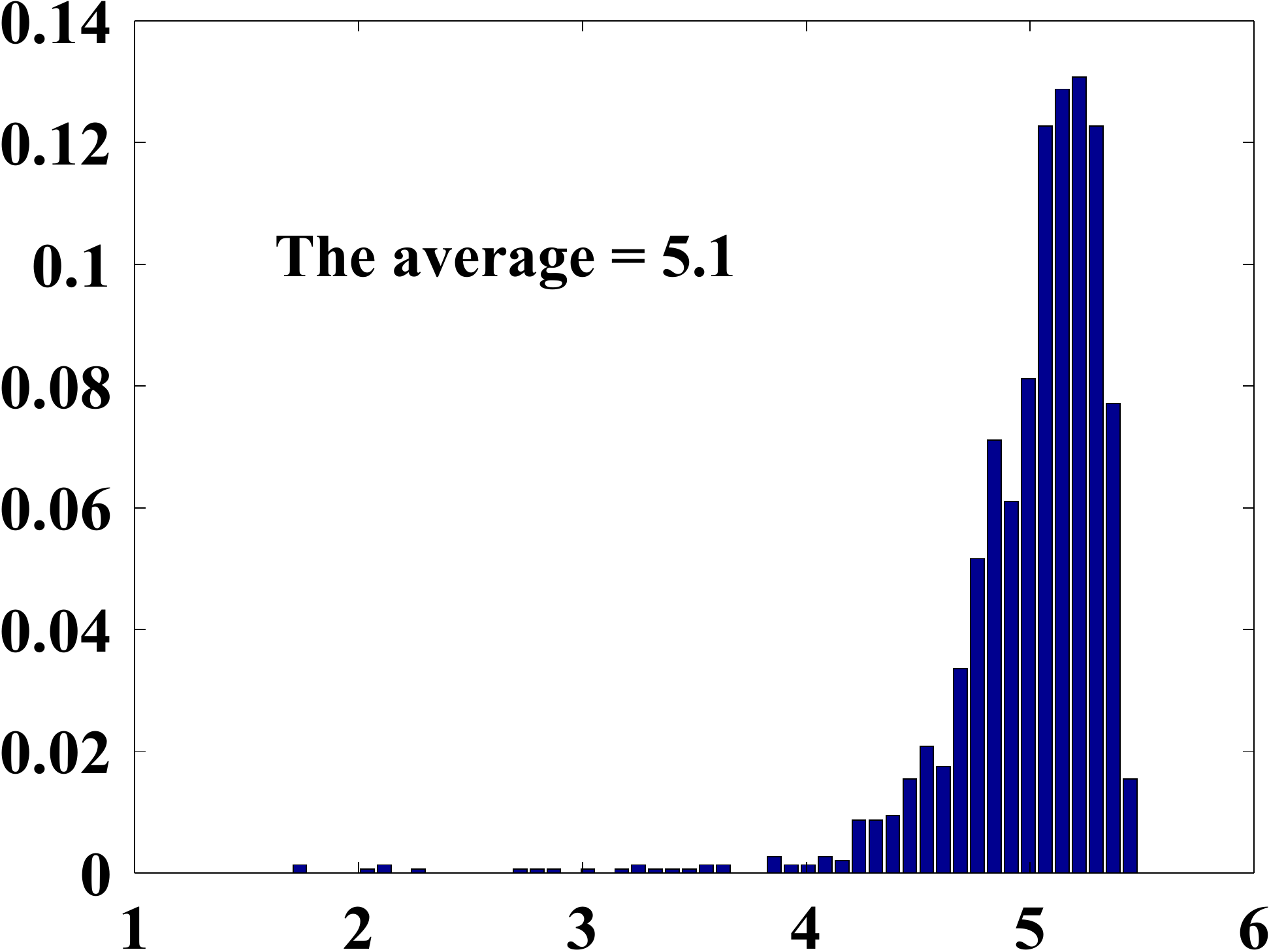}}
  \caption{Entropy distribution of natural-scene images.}
  \label{fig:entropyInt} 
\end{figure}

A simple comparison of point-wise MRFs, Fields-Of-Experts, and GDP is shown in Fig.~\ref{fig:limit}. The figure shows images that were directly sampled from the different prior models. It is worth noticing the unique texture and structure of each model. The main difference comes from the gradient distribution being a macroscopic image description, whereas the other two are microscopic models. 

\begin{figure}[h]
  \centering
  \subfigure[Images generated by a point-wise Markov Random Field.]{
  {\includegraphics[width=0.22\linewidth,height=0.22\linewidth]{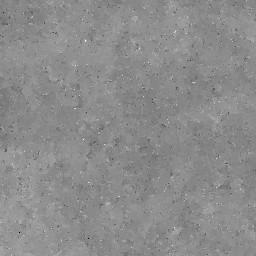}}
    {\includegraphics[width=0.22\linewidth,height=0.22\linewidth]{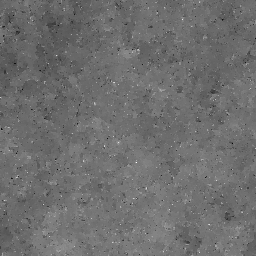}}
    {\includegraphics[width=0.22\linewidth,height=0.22\linewidth]{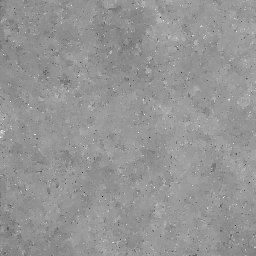}}
    {\includegraphics[width=0.22\linewidth,height=0.22\linewidth]{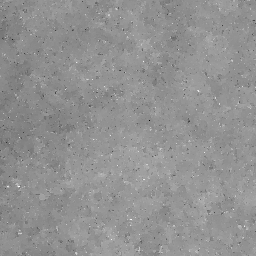}}
    {\includegraphics[width=0.07\linewidth,height=0.22\linewidth]{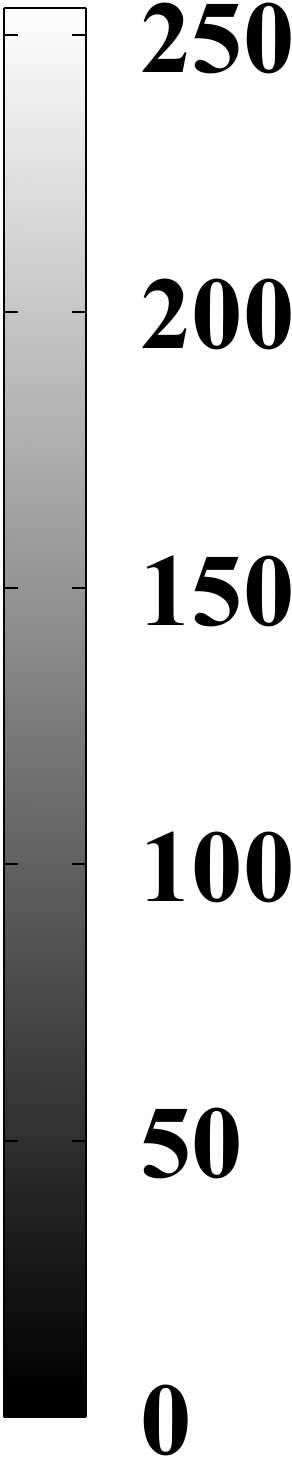}}
  \label{fig:pwmrf} 
  }
  
  \subfigure[Images generated by a gradient distribution prior.]{
  {\includegraphics[width=0.22\linewidth,height=0.22\linewidth]{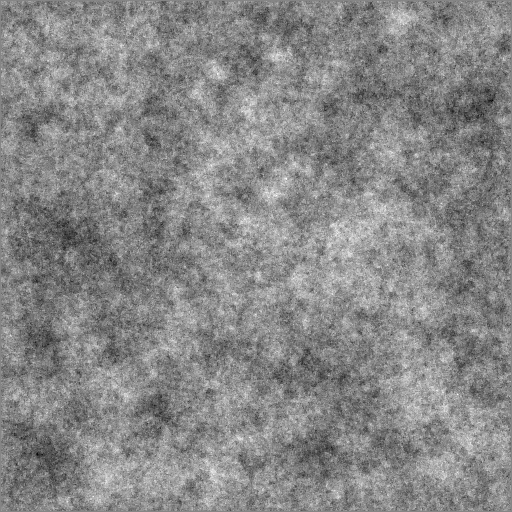}}
    {\includegraphics[width=0.22\linewidth,height=0.22\linewidth]{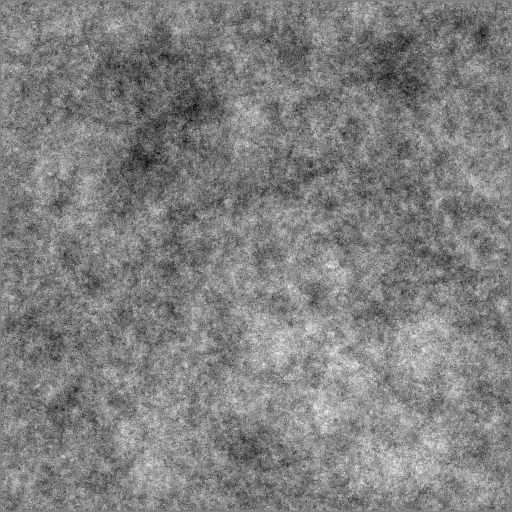}}
    {\includegraphics[width=0.22\linewidth,height=0.22\linewidth]{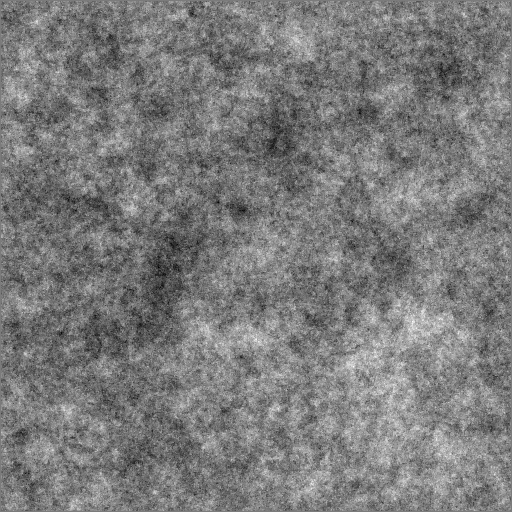}}
    {\includegraphics[width=0.22\linewidth,height=0.22\linewidth]{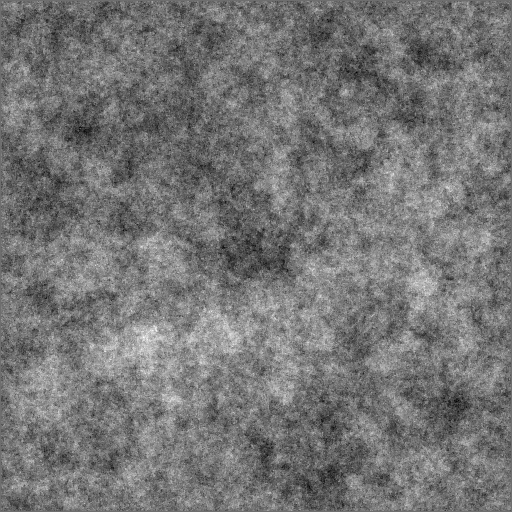}}
    {\includegraphics[width=0.07\linewidth,height=0.22\linewidth]{graymap.pdf}}
    \label{fig:Limit2} 
  }
  
  \subfigure[Images generated by a 3x3 Fields-Of-Experts.]{
  {\includegraphics[width=0.22\linewidth,height=0.22\linewidth]{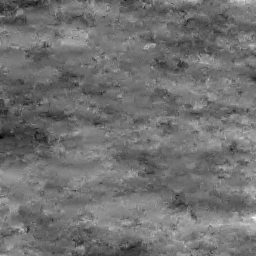}}
  {\includegraphics[width=0.22\linewidth,height=0.22\linewidth]{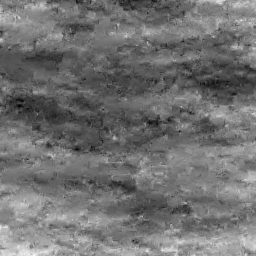}}
  {\includegraphics[width=0.22\linewidth,height=0.22\linewidth]{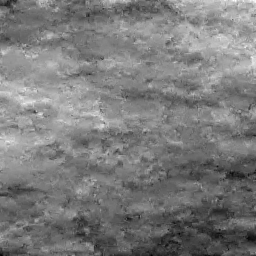}}
  {\includegraphics[width=0.22\linewidth,height=0.22\linewidth]{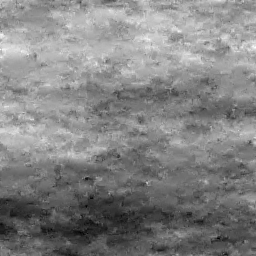}}
  {\includegraphics[width=0.07\linewidth,height=0.22\linewidth]{graymap.pdf}}
  \label{fig:FOE3} 
  }
\caption{Comparing images sampled from different prior models.}
\label{fig:limit}
\end{figure}

\subsection{A variational calculus argument}
\label{sec:link}
MAP estimation using Bayes' rule is equivalent minimizing the negative log-likelihood:
\begin{equation}
\label{eq:bayes}
\begin{split}
\hat{U} = &\arg \max_{U'} \ \{\ p(U'|S)\propto p(S|U')p(U')\ \}\\
\Leftrightarrow &\arg \min_{U'} \ \{\ -\log(p(S|U')) -\lambda \log(p(U'))\ \},
\end{split}
\end{equation}
where the scalar regularization parameter $\lambda$ is introduced to balance the likelihood $p(S|U')$ and the prior $p(U')$. Assuming a Gaussian distribution for the likelihood naturally leads to an $\ell_2$ norm in the data fitting term and to an optimal $\lambda\propto\sigma^2$, where $\sigma$ is the standard deviation of the Gaussian.

When using GDP, the prior $p(U')$ is not over the hypothetical image $U'$, but over its gradient $\nabla U'$. A frequent assumption for this term is a Generalized Gaussian distribution, hence $p(\nabla U')=\exp^{-\alpha\|\nabla U'\|_*}$, where~\mbox{$\|\cdot\|_*$} is any proper norm. In the negative logarithm, this then leads to the TV regularization $-\log(p(\nabla U'))=\alpha\|\nabla U'\|_1$ for the $\ell_1$ norm, corresponding to a Laplace distribution model. The TV regularizer, or the Laplacian model, can hence be interpreted as a linear approximation in log-space to the GDP. This is also visually shown in Fig.~\ref{fig:map}(h,k). A hyper-Laplacian prior can be imposed in the same way for the $\ell_q$ norm ($0<q<1$)~\citep{krishnan2009fast}.

While MAP uses the posterior as an objective function, other choices are possible. When using the Minimum Mean-Squared Error (MMSE) 
\begin{equation}
\label{eq:MMSE}
\hat{U} = \arg \min_{U'} \int_U' p(U'|S)U'\mathrm{d}U'\,.
\end{equation}
as a cost function, for example, the prior is imposed analogously. This has been successfully used in conjunction with a TV prior to reduce staircasing artifacts~\citep{TVLSE}. As shown here in Section~\ref{sec:denoise}, our GDP model achieves similar results without making the objective function more complex.

\subsection{A functional analysis argument}
\begin{figure}[h]
\centering
\subfigure{\includegraphics[width=.45\linewidth]{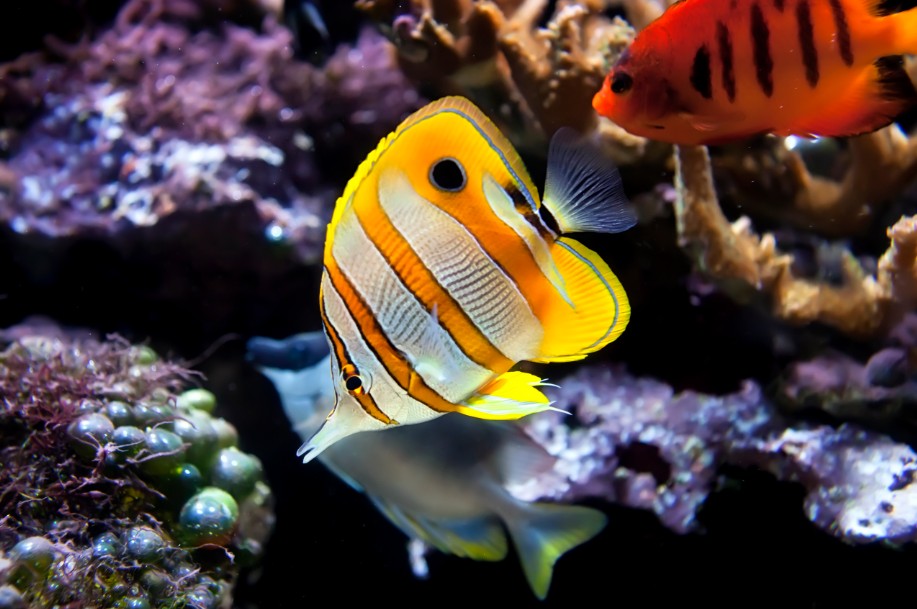}}
\subfigure{\includegraphics[width=.45\linewidth]{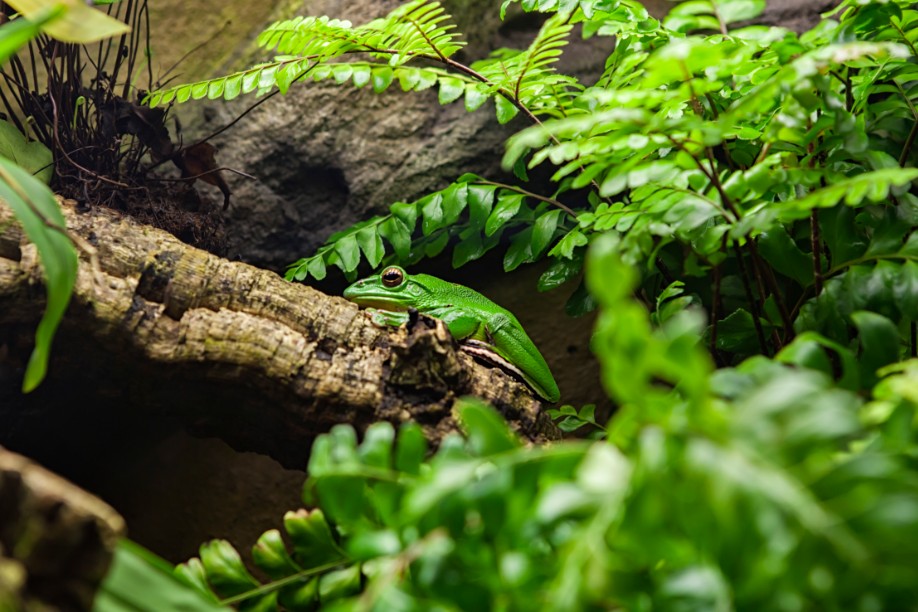}}
\subfigure{\includegraphics[width=.48\linewidth]{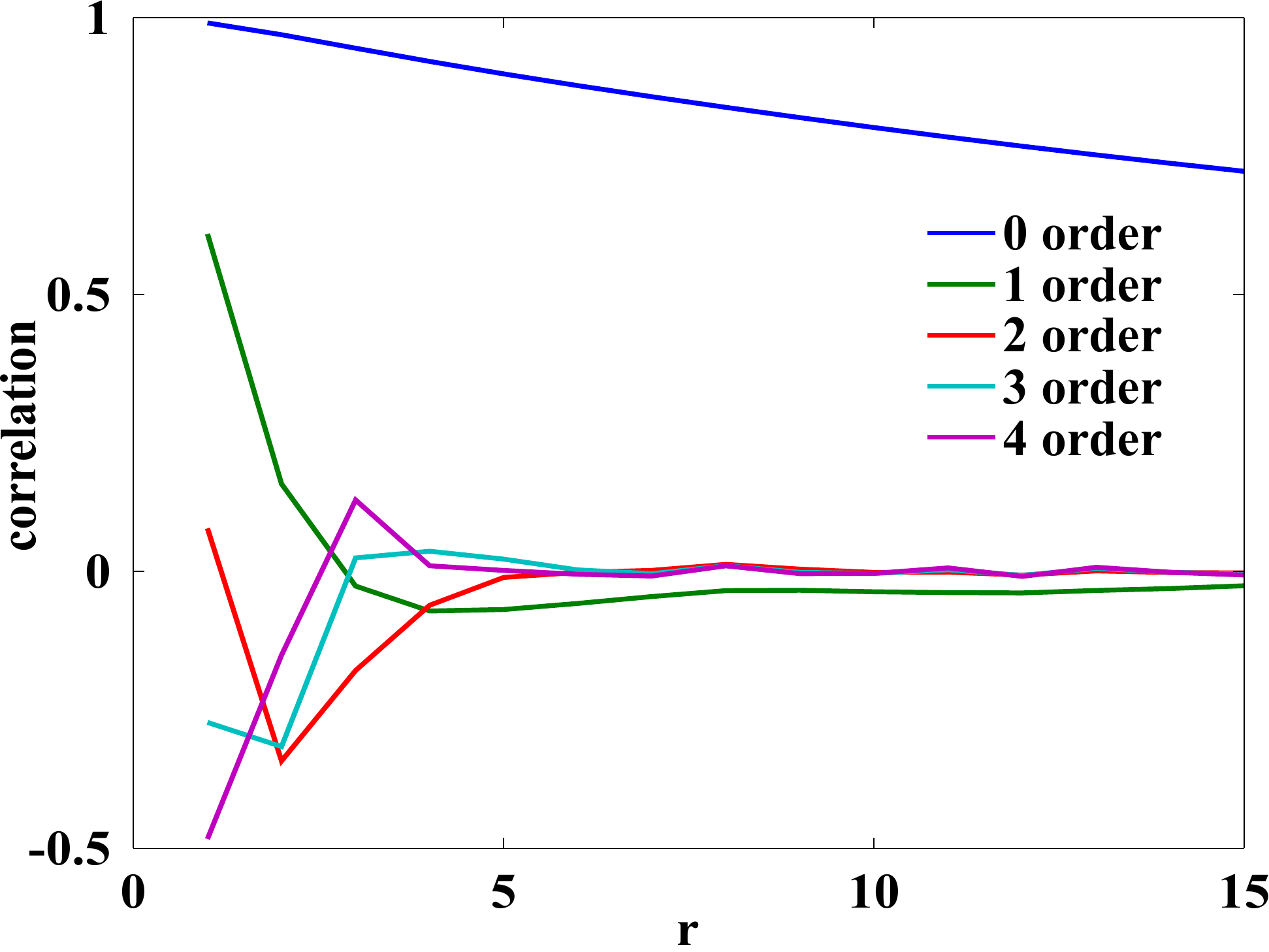}}
\subfigure{\includegraphics[width=.48\linewidth]{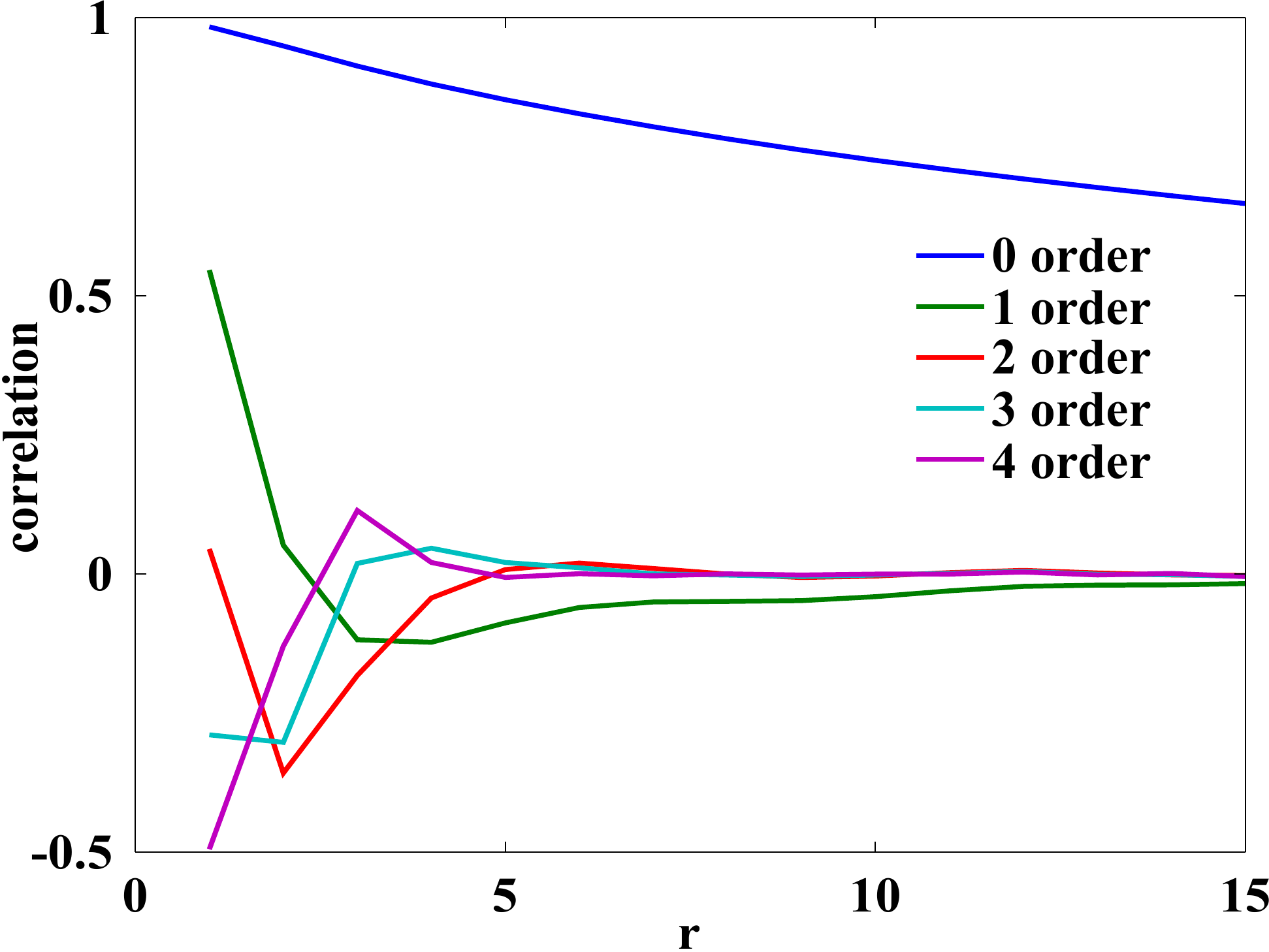}}
\caption{Two example images and their auto-correlation $AC(d,r)$.}
\label{fig:example}
\end{figure}

According to Taylor's theorem, the gradient is the first-order approximation to the perfect image. Higher-order approximations, however, do not necessarily improve the accuracy in image processing. The first reason for this is that the discrete image might not be high-order differentiable. 
A second reason is that the image auto-correlation decreases rapidly with increasing order of derivative. In Fig.~\ref{fig:example}, the auto-correlation between $I(\vec{x})$ and $I(\vec{x}+\vec{r})$ is shown for different orders $d$ of derivatives:
\begin{equation}
\label{COR}
AC(d,r)=\text{correlation}(\nabla^{d}I(\vec{x}),\nabla^{d}I(\vec{x}+\vec{r}))\, ,
\end{equation}
where $\vec{r}=(r,0)$. The correlation reduces significantly for $d>0$. This explains why gradient and curvature priors are so powerful for image processing, but higher-oder derivatives don't improve the result anymore. As seen in Fig.~\ref{fig:example}, the correlation reduces to zero more rapidly for higher $d$. For image-processing tasks, second order has repeatedly been shown to be enough~\citep{Zhu2007,gong2013a}. 

An alternative to using image derivatives could be to use Sobolev norms, which implicitly contain gradient information~\citep{sobolev2007}. This would mean that $F_s$ is a Sobolev space instead of $L^p(\Omega)$. When using Sobolev norms, however, there is no way to trade off the weight of the gradient information versus the data. This is why we prefer using GDPs instead of Sobolev norms, because the relative weight is an important parameters allowing us to control the noise level.

\subsection{A psychophysical argument}
The human vision system mainly detects gradient information~\citep{chichilnisky:2001, pillow:2005, cao:2011, gollisch:2010}. As shown in Fig.~\ref{fig:retina}, light entering the retina first arrives at the retinal ganglion cells. These cells are sensitive to gradient information, rather than to intensity, with a response that is described well by the error function~\citep{chichilnisky:2001,miller:2002}. Neighboring cells also interact with each other to amplify the gradient information, which explains the famous Mach band effect.  

During evolution, the neurons have adapted to the environment they were exposed to, and to process what is expected~\citep{chichilnisky:2001,simoncelli:2001,miller:2002}. This is the gradient distribution found in natural-scene images. The human vision systems is hence particularly well adapted to detect and process images that satisfy this distribution.

\begin{figure}[h]
\centering
\includegraphics[width=0.9\linewidth]{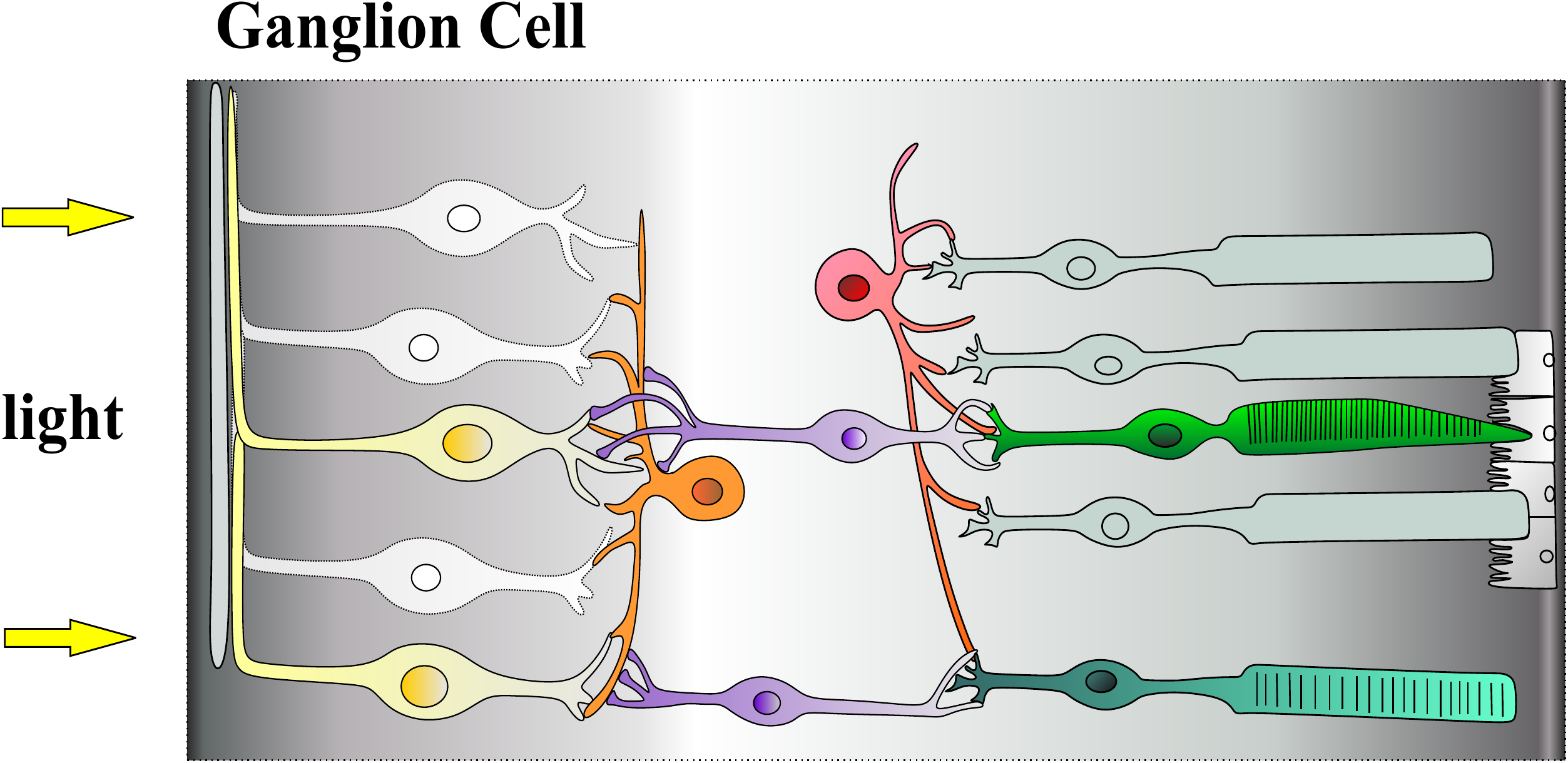}
\caption{Structure of the human retina. Light first hits the gradient detectors.}
\label{fig:retina}
\end{figure}

The fact that different people have almost the same visual perception is a consequence of the stability of the neuronal response to the gradient distribution. The vision system also suggests that coding an image by its gradient is an efficient way (sparse representation). We show coding efficiency and sparsity for the GDP in Section~\ref{sec:convex}. 

\subsection{Gradient field and original image}
Regardless of the stability of the gradient distribution, reconstructing an image from its gradient field is accurate and simple, as it constitutes an integration task with one point constraint~\citep{fattal2002gradient,perez2003poisson,agrawal2007gradient,gradientshop,kazhdan2006poisson,Xu2011,gong2012,nonlocal3D}. An excellent review about signal processing in the gradient domain can be found in Ref.~\citep{agrawal2007gradient}. Some recent advances in this area are described in Refs.~\citep{Xu2011,Xu2012,FATTAL09,Farbman2008,McCann2008}.  

Reconstructing an image from its gradient field can be done by solving a Poisson equation. With proper boundary conditions, the solution is unique, and there exists a wealth of stable, efficient, and accurate numerical solvers for this equation (details in Section.~\ref{sec:reconstruction}).  

\section{Why not directly learn GDP from biomedical images?}\label{sec:whynot}
Why are we proposing to learn the GDP from natural-scene images and then apply it to biomedical images (see Fig.~\ref{fig:flow})? Would it not be better to directly learn the GDP on biomedical images, for which its use is intended? 
The reason is two-fold: (1) biomedical images contain a variety of disturbances, such as noise, blur, and scattering. If the GDP is later to be used for denoising, deblurring, or dehazing, it must be estimated from images that do not already contain these disturbances. (2) Even if the disturbances should be part of the prior, it is not easy to learn a GDP directly from biomedical images. The main reason is that biomedical images are usually quite noisy, hampering gradient estimation (derivatives amplify noise). Therefore, it is important that the GDP is estimated from ``clean'' images. 
We here propose to learn the GDP from natural-scene images. The motivations have been given above. In the following, we fully validate the use of this prior for biomedical images. The few examples in Fig.~\ref{fig:showcase} are to illustrate that natural-scene and biomedical images have at least qualitatively similar gradient distributions.

\begin{figure}[h]
  \centering
  {\includegraphics[width=0.24\linewidth,height=0.24\linewidth]{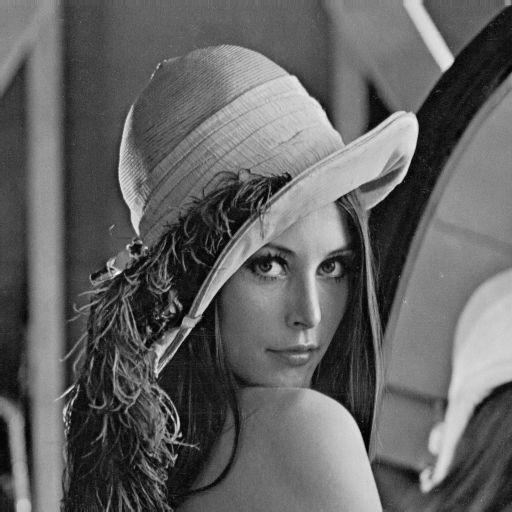}}
  {\includegraphics[width=0.24\linewidth,height=0.24\linewidth]{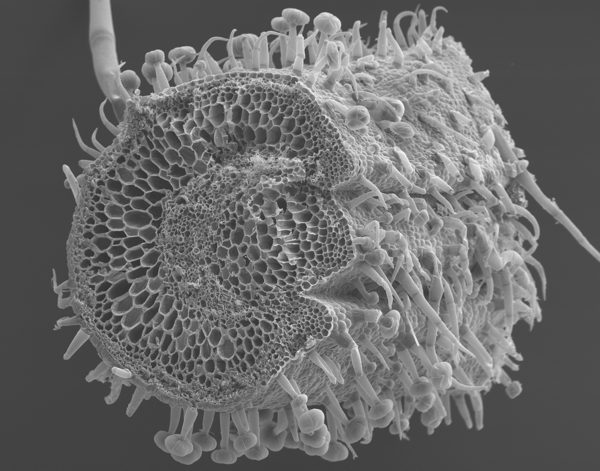}}
  {\includegraphics[width=0.24\linewidth,height=0.24\linewidth]{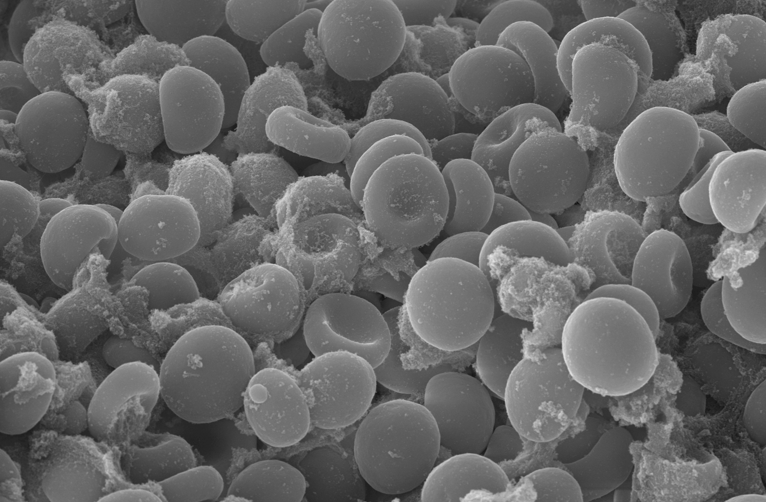}}
  {\includegraphics[width=0.24\linewidth,height=0.24\linewidth]{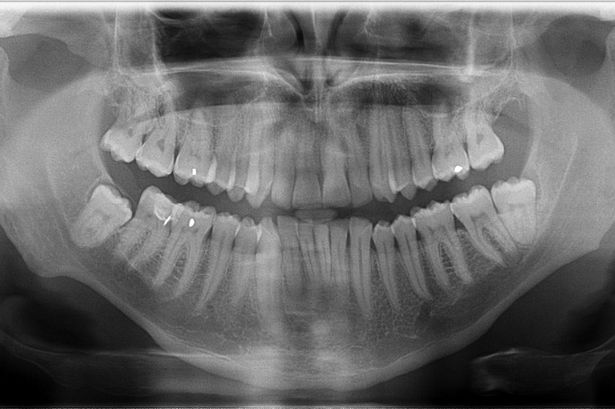}}
  {\includegraphics[width=0.96\linewidth,height=0.24\linewidth]{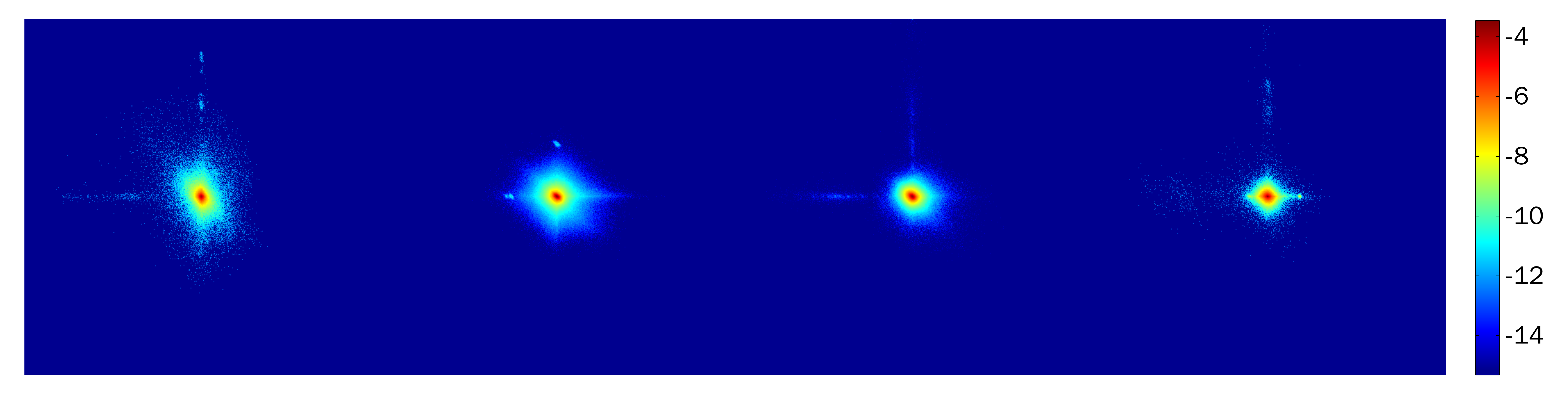}}
  \caption{Natural-scene and biomedical images have qualitatively similar gradient distributions and can hence be fitted with the same model. First row: natural-scene image (left) and biomedical images. Second row: corresponding gradient distributions in log scale.}
  \label{fig:showcase} 
\end{figure} 

\section{The Gradient Distribution Prior (GDP)}
\label{sec:GDP}
In order to build the GDP, we learn the gradient distribution from a database of 23\,613 images of natural scenes. We analyze the resulting distributions and compute the variability of images around the mean distribution. We then validate the stability of the GDP on biomedical images.

\subsection{Datasets}
We collected seven datasets of natural-scene images as shown in Table~\ref{table:data}. Each image $I(x,y)$ was converted to 8-bit gray-scale. The gradient field is defined as:
\begin{equation}
\label{eq:gradient}
\vec{G}(x,y)=\nabla I(x,y) ,
\end{equation}
where here we use the first-order finite difference approximations $\nabla I\approx ( I(x+1,y)-I(x,y),I(x,y+1)-I(x,y))$. We use homogeneous Dirichlet boundary conditions at the image borders. Due to the use of 8-bit gray-scale images, possible gradients are in the discrete domain $[-255,255]\times [-255,255]$, where we can easily construct the 2D histogram of $\vec{G}$. We use $G^x$ and $G^y$ to denote respective components of $\vec{G}$.

In order to turn the histogram into a probability distribution, we divide all bins by the total number of pixels in the image, i.e., by $mn$ where $m$ and $n$ are the number of pixels along the $x$ and $y$ edges of the image. After aggregating data from all images in the database, we further normalize by the total number of images in the dataset. The resulting empirical distribution $p^{\mathrm{pr}}$ is shown in Fig.~\ref{fig:ThePrior}. 

\begin{table}
\scriptsize
\centering
\begin{tabular}{cccccccc|c} 
\hline\hline
 Footnote  & \tablefootnote{{http://www.vision.ee.ethz.ch/showroom/zubud/}} &	 \tablefootnote{{http://see.xidian.edu.cn/faculty/wsdong/Data/Flickr\_Images.rar}}&	 \tablefootnote{{http://www.robots.ox.ac.uk/\textasciitilde vgg/data/oxbuildings/}}&	 \tablefootnote{{http://www.comp.leeds.ac.uk/scs6jwks/dataset/leedsbutterfly/}}&	 \tablefootnote{{http://lear.inrialpes.fr/\textasciitilde jegou/data.php}}&	 \tablefootnote{{http://www.vision.caltech.edu/visipedia/CUB-200.html}}&	 \tablefootnote{{http://www.robots.ox.ac.uk/\textasciitilde vgg/data/flowers/102/index.html}} & all\\
\hline
\#images &  1005&	 1000&	 5063&	 832&	 1491&	 6033&	 8189 &  23613\\
\hline
\hline
\end{tabular}

\caption{Natural-scene image datasets used to learn the prior. Source URLs are given in the footnotes.} 
\label{table:data} %
\end{table}

\begin{figure}[h]
\centering
\includegraphics[width=0.7\linewidth]{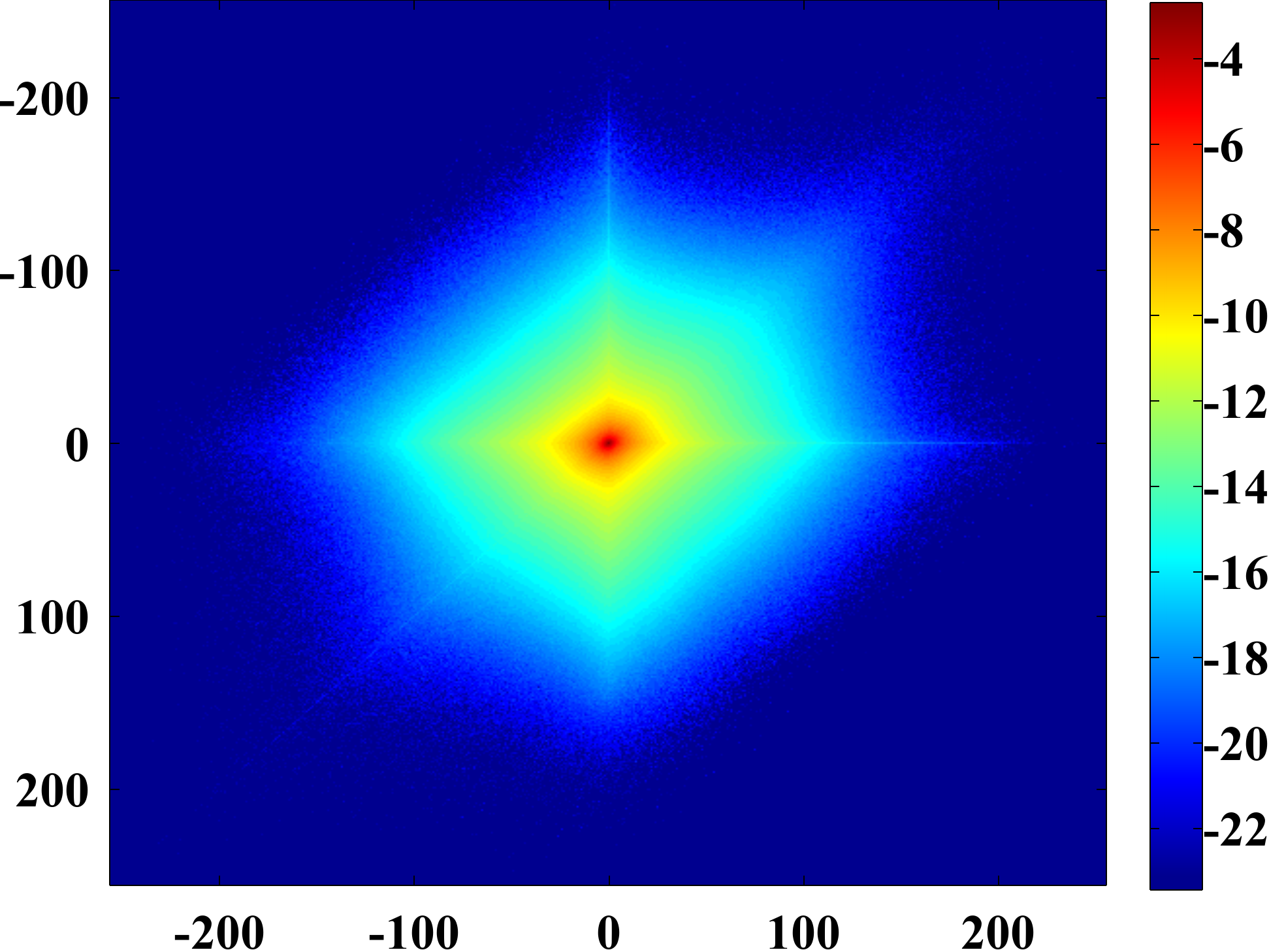}
\caption{Average gradient distribution of natural-scene images shown in log scale.}
 \label{fig:ThePrior}
\end{figure}

\subsection{Stability of the prior for natural-scene images}
We analyze how closely the natural-scene images in the training dataset match the average gradient distribution learned from them. Fig.~\ref{fig:Error} shows the histograms of several distances between the average prior and each image's individual gradient distribution. More than $95\%$ of the images have distributions that are closer to the prior than an RMS of $2\times10^{-4}$. Also when using other distance metrics, such as the Hellinger or KL distance, the training data are clustered around the prior (Fig.~\ref{fig:Error}b,c). 

\begin{figure}[h]
\centering
\subfigure[RMS distance]{\includegraphics[width=0.48\linewidth]{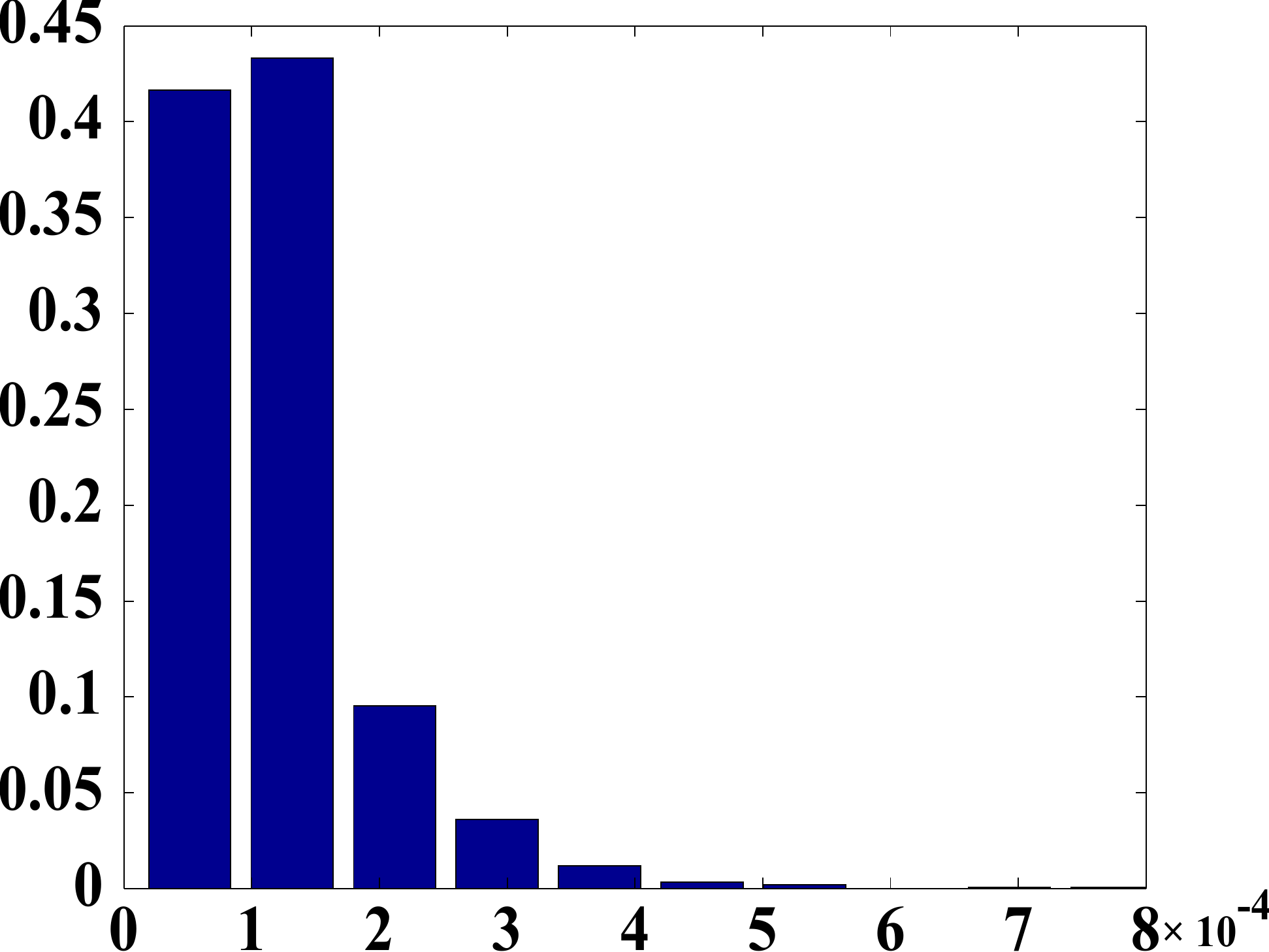}}
\subfigure[Hellinger distance]{\includegraphics[width=0.48\linewidth]{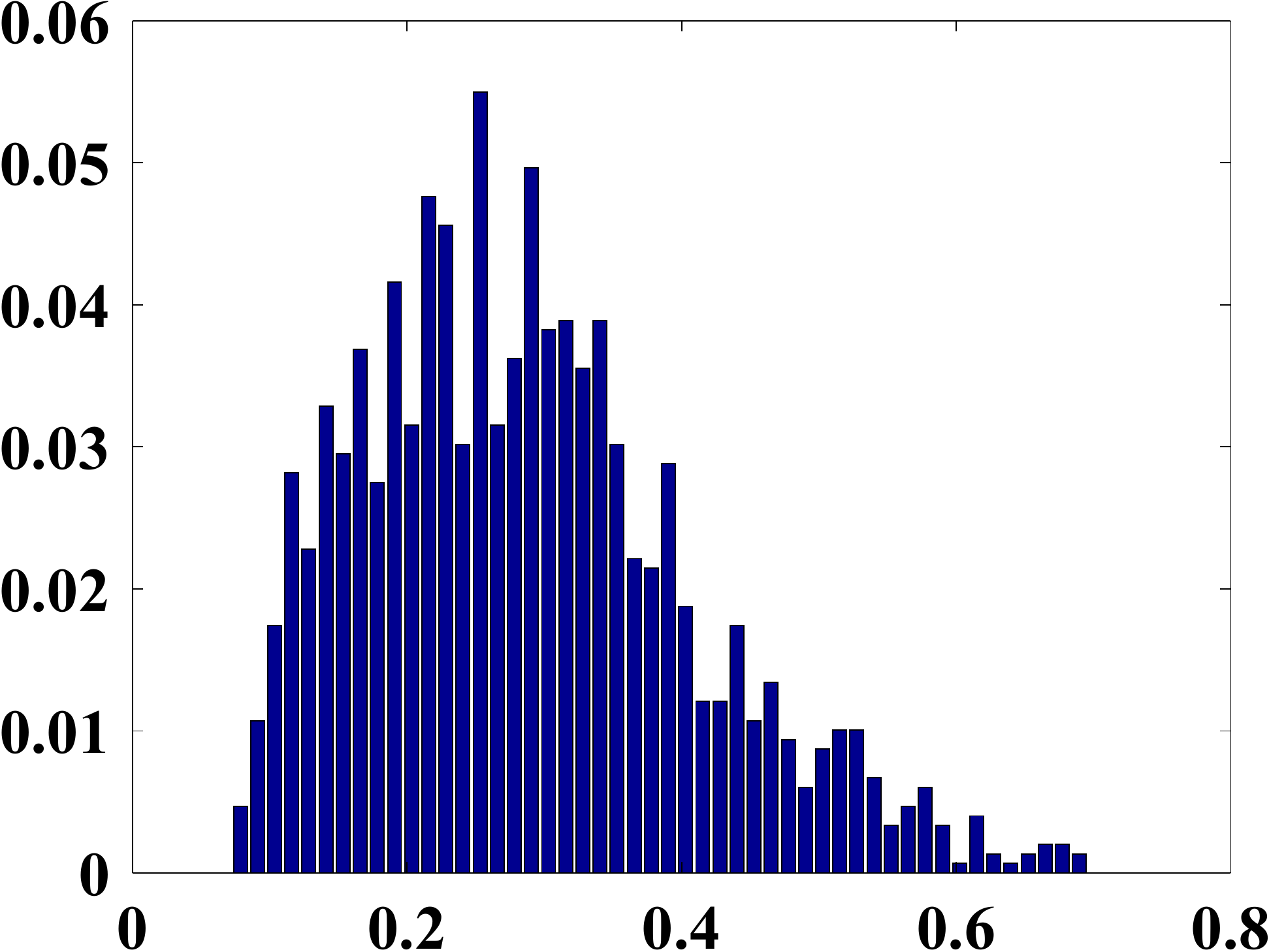}}
\subfigure[KL distance]{\includegraphics[width=0.48\linewidth]{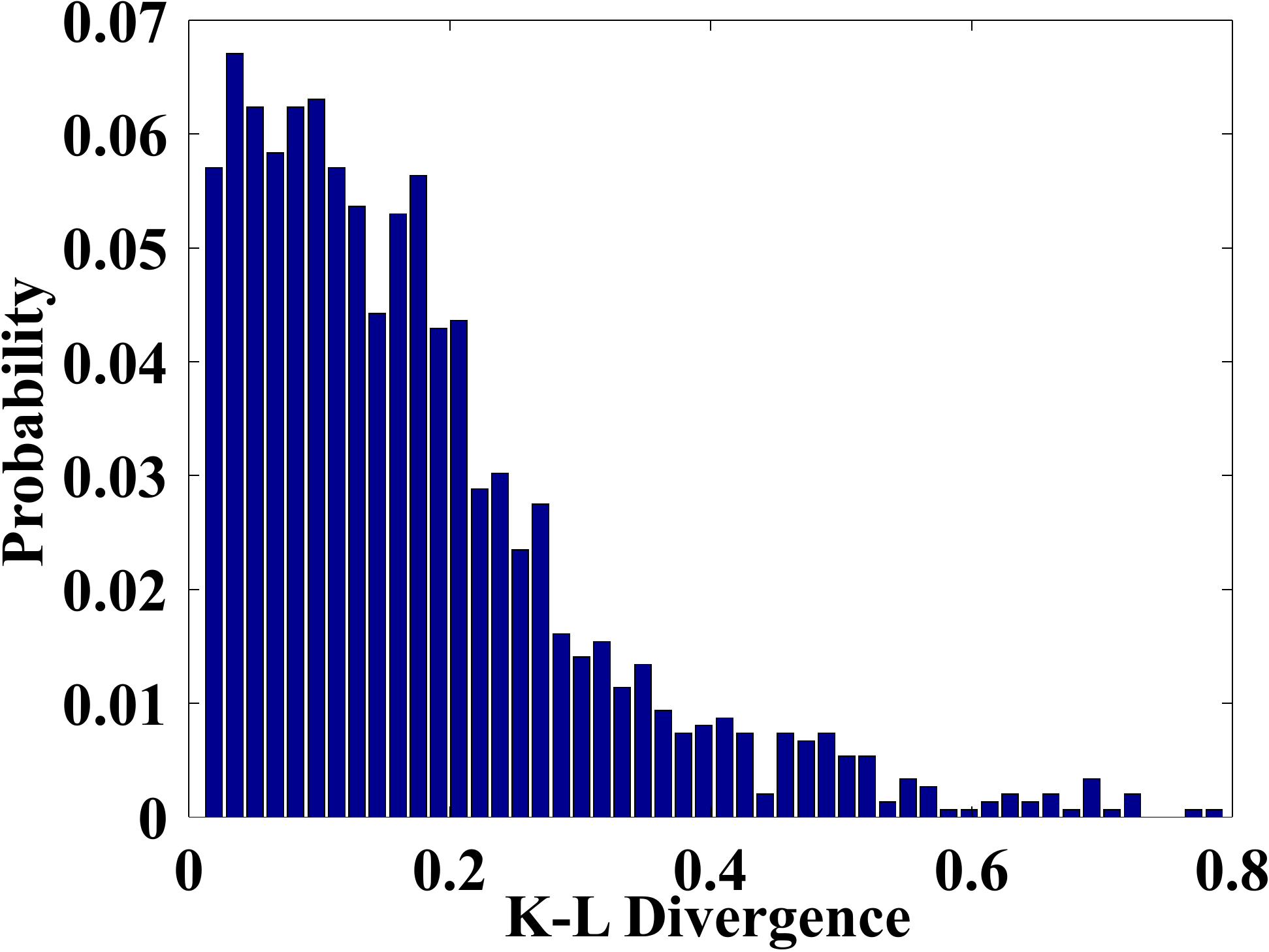}}
\caption{The prior is stable across natural-scene images. The distances (different metrics shown) between individual images and the average prior are mostly small, indicating that all images are clustered around the prior.}
 \label{fig:Error}
\end{figure}

\subsection{Scale stability of the prior}
In order for the GDP to be stable across images, it in particular has to be stable with respect to image scaling. We confirm this by down-sampling (down-scaling) all images in the training database by factors up to 0.5, and re-learning the average GDP from each scaled dataset. The scaling is applied to the whole image, rather than cropping a sub-image. The resulting average gradient distributions are shown in Fig~\ref{fig:Resize}. They are stable with scaling down to a scale factor of 0.5. Below 0.5, the average gradient distribution starts changing.

\begin{figure}[h]
\centering
\includegraphics[width=\linewidth]{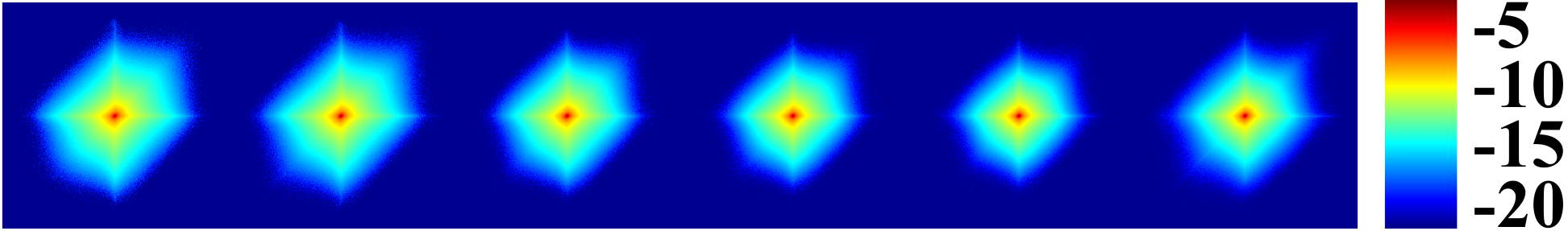}
\caption{The prior is stable with respect to image scaling. Shown is the average (across the entire training dataset) gradient distribution in log scale for images scaled by factors of (from left to right) 0.5, 0.6, 0.7, 0.8, 0.9, and 1 (i.e., the original, unscaled images).}
 \label{fig:Resize}
\end{figure}

\subsection{Stability of the prior on biomedical images}
\label{sec:test}
As mentioned above, it is hard (any potentially undesirable) to directly build a GDP from biomedical images. We hence learned the prior from natural-scene images, but validate it here on biomedical images. We first show stability of the present GDP for biomedical images. For this, we collected a small dataset of biomedical images, including X-ray, MRI, electron microscopy, and fluorescence microscopy images. Some examples are shown in Figs.~\ref{fig:Bio} and \ref{fig:naturalized}.

\begin{figure}[h]
\centering
\includegraphics[width=0.8\linewidth]{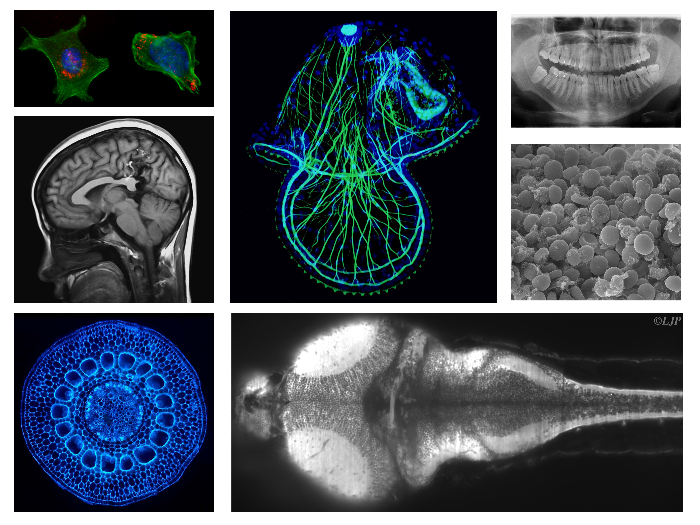}
\caption{Samples from our biomedical image collection.}
 \label{fig:Bio}
\end{figure} 

\begin{figure}[h]
\centering
\includegraphics[width=0.48\linewidth]{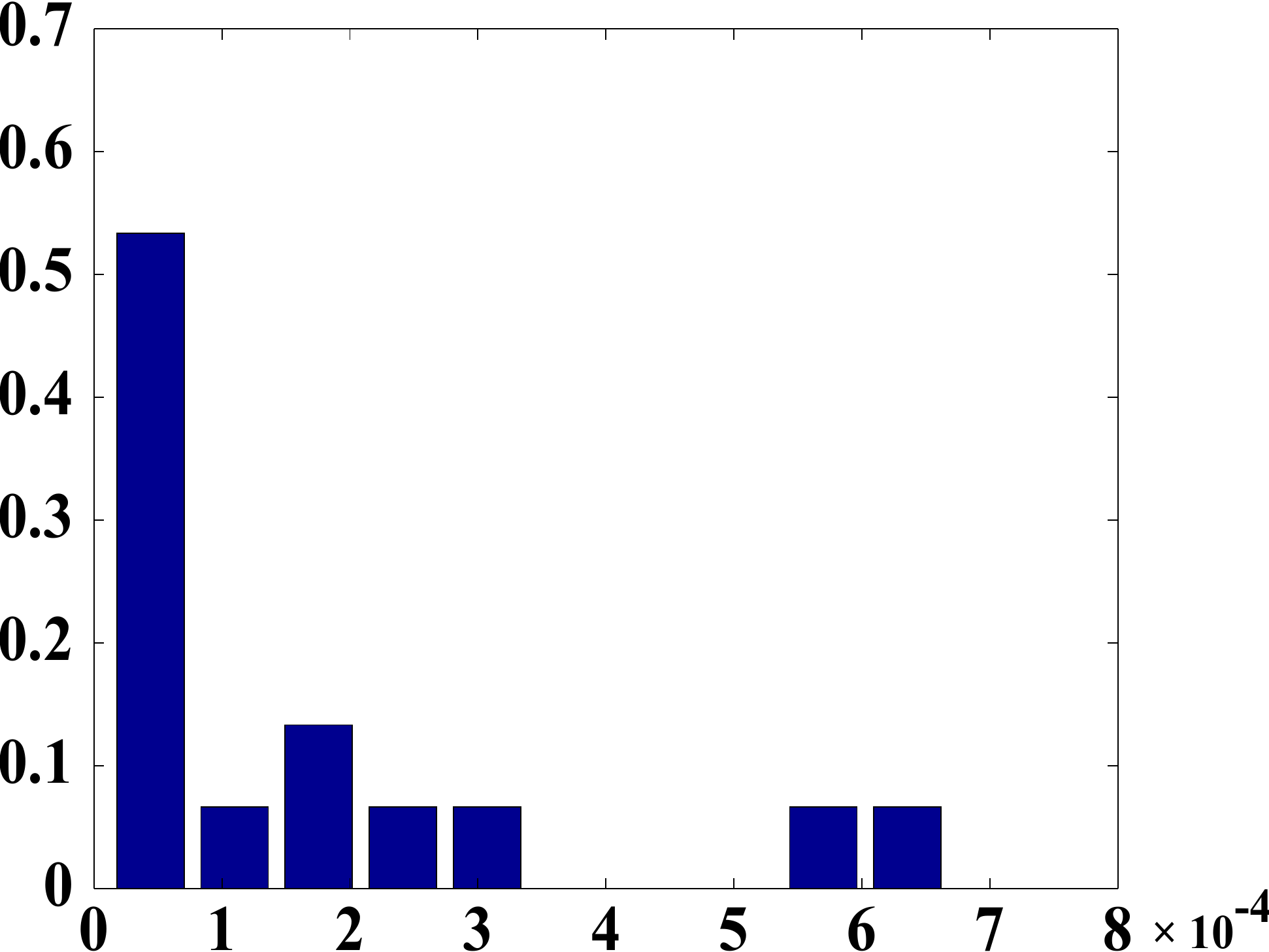}
\caption{RMS distance from the prior for biomedical images.}
 \label{fig:ValidError}
\end{figure}

We compute the RMS distance from the GDP for each image's gradient distribution. The distance histogram is shown in Fig.~\ref{fig:ValidError}. This confirms that most gradient distributions are close to the GDP learned from natural-scene images. As expected, the range of RMS for this test set of biomedical images is larger than the range of RMS for the training set of natural-scene images.

\section{Parametric Models for Gradient Distribution Priors}
\label{sec:GDPModel}
In order to efficiently use the GDP as a regularization term, and to formulate optimization schemed over it, a parametric model is desirable. 
We here provide two parametric models for the marginal and joint 2D distributions of our GDP, and we assess their approximation accuracy. We compare these new models with traditional gradient distribution models, such as Hyper Laplacian models, and with TV in 1D and 2D. We further propose a new model to approximate the cumulative distribution function (CDF) of the gradient instead of the PDF. This new model only has a single scale parameter, leading to effectively 1D problems in parameter inference.  We further discuss the convexity, sparsity, and entropy of these models. 

\subsection{1D Marginal Model}
Traditionally, image gradient distributions are modeled as Generalized Gaussian (Hyper Laplacian) Distributions:
\begin{equation}
\label{eq:GGD}
\log(P(G^{x}))=-a_0|G^{x}|^{b_0}+c_0 \, , 
\end{equation} 
where $a_0$, $b_0$, and $c_0$ are the model parameters.
This model includes Gaussian ($b_0=2$), Laplacian ($b_0=1$), and hyper-Laplacian distributions ($b_0=0.6$)~\citep{krishnan2009fast} and bears a close relationship with $\ell_q$ norms ($0<q<1$).

We instead propose to use the following models for the 1D marginal gradient distribution:

\textbf{Model 1:}
\begin{equation}
\label{eq:distr}
\log(P(G^{x}))=2a_1\left(\exp\!{\left\{-\frac{|G^{x}|^{b_1}}{a_1}\right\} }-1\right)+c_1(G^{x})^2 \, , 
\end{equation} 
where $a_1$, $b_1$, and $c_1$ are the parameters. The results of fitting this model (i.e., its parameters) to the average gradient distributions of our image sets are shown in Table~\ref{table:1D}. To the best of our knowledge, this Model 1 is the best-fitting model known so far (Fig.~\ref{fig:modelcompare} and Table~\ref{table:SSE}).
 
\begin{table}[H]
\scriptsize
\centering  
\begin{tabular}{c|cccccccc} 
\hline\hline
\multicolumn{2}{c}{Image set}  & 1&	2&	3&	4&	5&	6&	7 \\
\hline
\multirow{5}{*}{$G^x$} & $a_1$ & 3.66&	3.89&	3.93&	6.51&	7.83&	5.85&	6.34 \\
& $b_1$ & 0.58&	0.55&	0.58&	0.44&	0.44&	0.45&	0.50 \\
& $c_1\times 10^4$ & -2.4&	-1.2&	-1.5&	-0.48&	-1.9&	-0.57&	-1.3 \\
\cline{2-9}
& SSE & 40.5&	43.0&	67.7&	23.8&	34.1&	37.9&	25.4 \\
& $R^2$ & 0.99&	0.99&	0.99&	0.99&	0.99&	0.99&	0.99 \\
\hline
\multirow{5}{*}{$G^y$} & $a_1$ & 3.68&	3.87&	3.84&	7.29&	7.29&	5.55&	6.09 \\
& $b_1$ & 0.60&	0.55&	0.60&	0.42&	0.46&	0.47&	0.51 \\
& $c_1\times 10^4$ & -2.2&	-1.2&	-1.5&	-0.42&	-1.9&	-0.64&	-1.3 \\
\cline{2-9}
& SSE & 56.8&	41.4&	90.8&	19.9&	29.4&	32.6&	18.5 \\
& $R^2$ & 0.99&	0.99&	0.99&	0.99&	0.99&	0.99&	0.99 \\
\hline
\multicolumn{2}{c}{correlation}  & -0.12&	-0.23&	-0.19&	-0.22&	-0.12&	-0.25&	-0.11 \\
\hline
\multicolumn{2}{c}{(log scale)}  & 0.37&	0.31&	0.28&	0.37&	0.17&	0.39&	0.18 \\
\hline
\end{tabular}
\caption{Results for fitting the marginal with Model 1 to the average gradient distributions of all image datasets.} 
\label{table:1D} 
\end{table}

\textbf{Model 2:}
\begin{equation}
\label{eq:distr2}
\log(P(G^x))=-a_2(G^x)^2-\log(b_2 +|G^x|^2)+c_2,
\end{equation}
where $a_2$, $b_2$, and $c_2$ are the parameters. The results of fitting this model to the data are shown in Table~\ref{table:SSE}, compared with other models. Model 2 fits the data less well than the above Model 1, but has several advantages: 
\begin{itemize}
\item Integrability: Model 2 is integrable, which is convenient for use in optimization algorithms and for analytically computing the CDF. Let $T=\sqrt{a_2}$, $b_2=0$, and $c_2=0$, then the CDF of Model 2 is:
\begin{equation}
\label{eq:CDFModel2}
\widetilde{C}(G^x)=-\frac{e^{-(TG^x)^2}}{G^x}-T\sqrt{\pi}\,\text{erf}(TG^x)+H(G^x) ,
\end{equation} where $H$ is the Heaviside distribution and $\text{erf}$ is the error function. As shown in Fig.~\ref{fig:modelcompare}, the CDF version of this model still works when other models become invalid (Gaussian, Laplacian) or hard to integrate (Hyper-Laplacian, Model 1).
\item Computational efficiency: Model 2 has a simple mathematical form that can efficiently be evaluated on a computer. The effect is substantial, as shown in Fig~\ref{fig:CPU}(a) for the model evaluation, and in Fig~\ref{fig:CPU}(b) for evaluating the gradient of the model (e.g., in an optimization loop).
\item Optimization efficiency: Model 2 can be written as the difference of two convex functions. Optimization problems involving Model 2 can hence efficiently be solved using D.C.~programming, as shown in Section~\ref{sec:diffusion}. 
\end{itemize}
For these properties, we mainly consider Model 2 as a regularization term in Section~\ref{sec:app}.

\begin{figure}[H]
\centering
\subfigure[CPU time for evaluating the models.]{\includegraphics[width=0.7\linewidth]{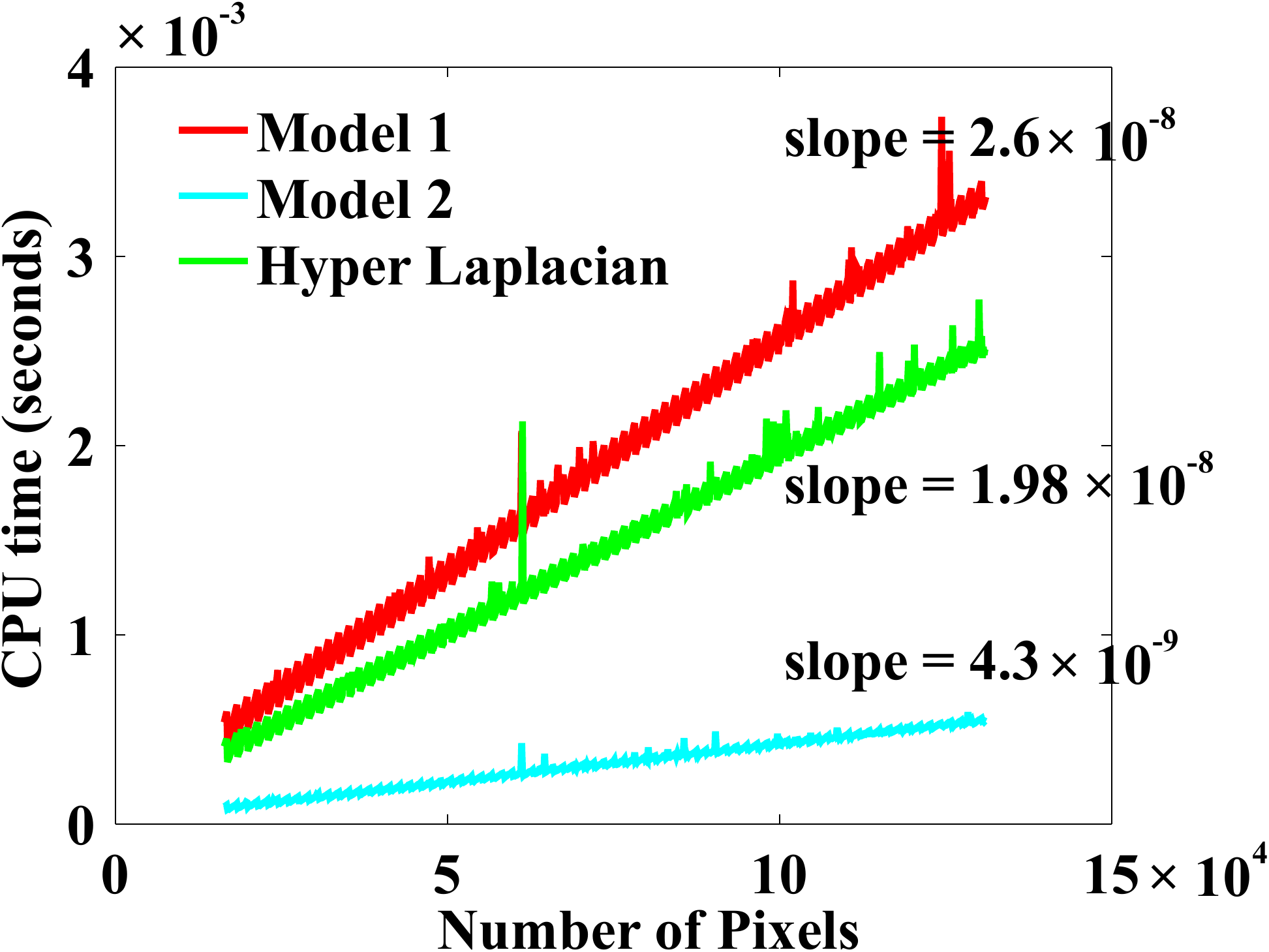}}
\subfigure[CPU time for evaluating the gradients of the models.]{\includegraphics[width=0.7\linewidth]{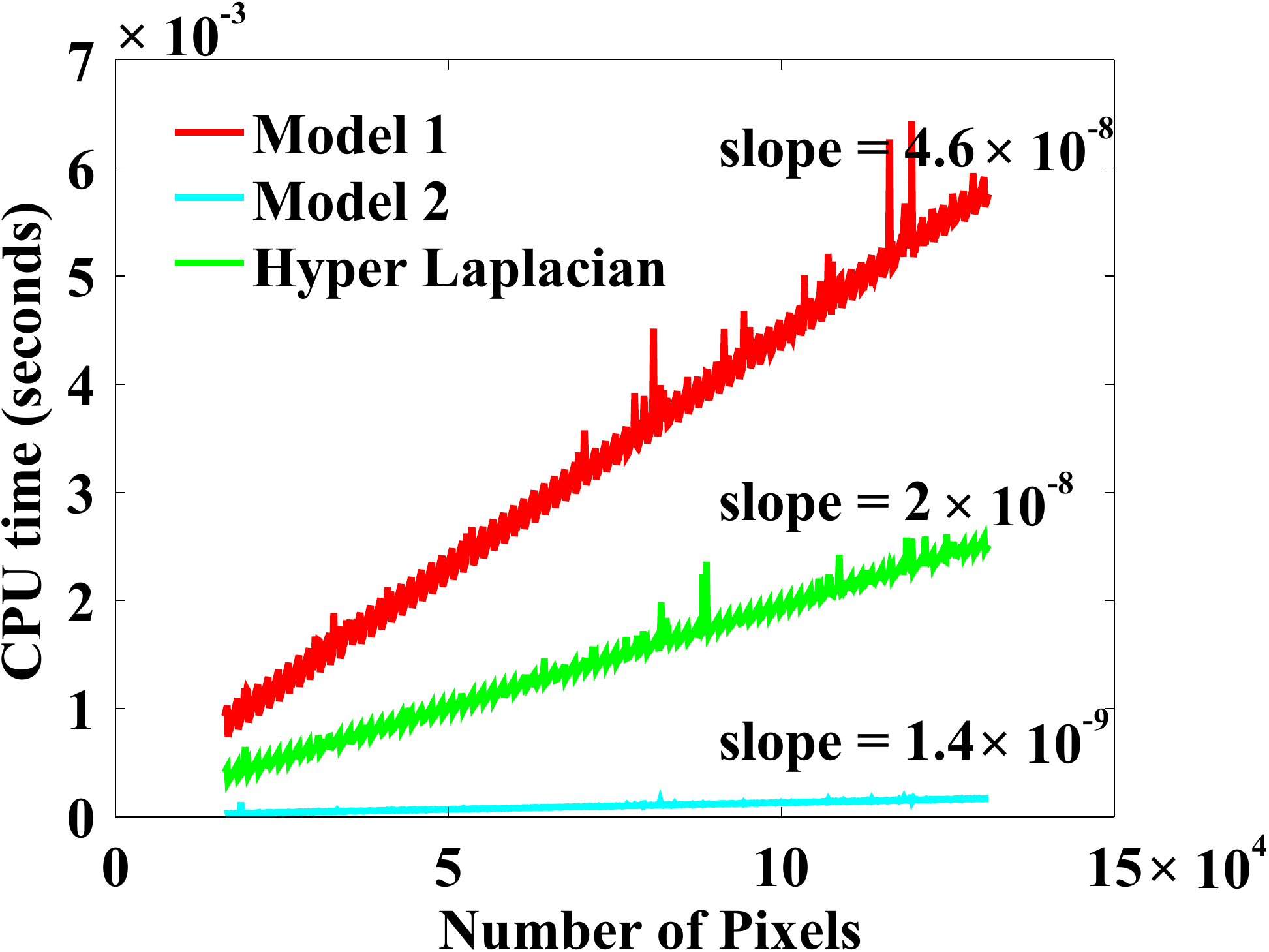}}
\caption{CPU time comparison for model and model gradient evaluations.}
\label{fig:CPU}
\end{figure}

We compare our two marginal models with other models in Fig.~\ref{fig:modelcompare} and Table.~\ref{table:SSE}. In all cases, they outperform the previously used Laplacian, Gaussian, and Hyper-Laplacian models. In Fig.~\ref{fig:CDFchange}, we and analyze the sensitivity of Model 2 as compared with the Hyper-Laplacian model. Model 2 fits the data better and shows good sensitivity (identifiability) with respect to parameter $a_2$.

\begin{figure}[!htb]
\centering
\subfigure[Marginal gradient distribution in log scale for each image. The color indicates scaled density.]{\includegraphics[width=.7\linewidth]{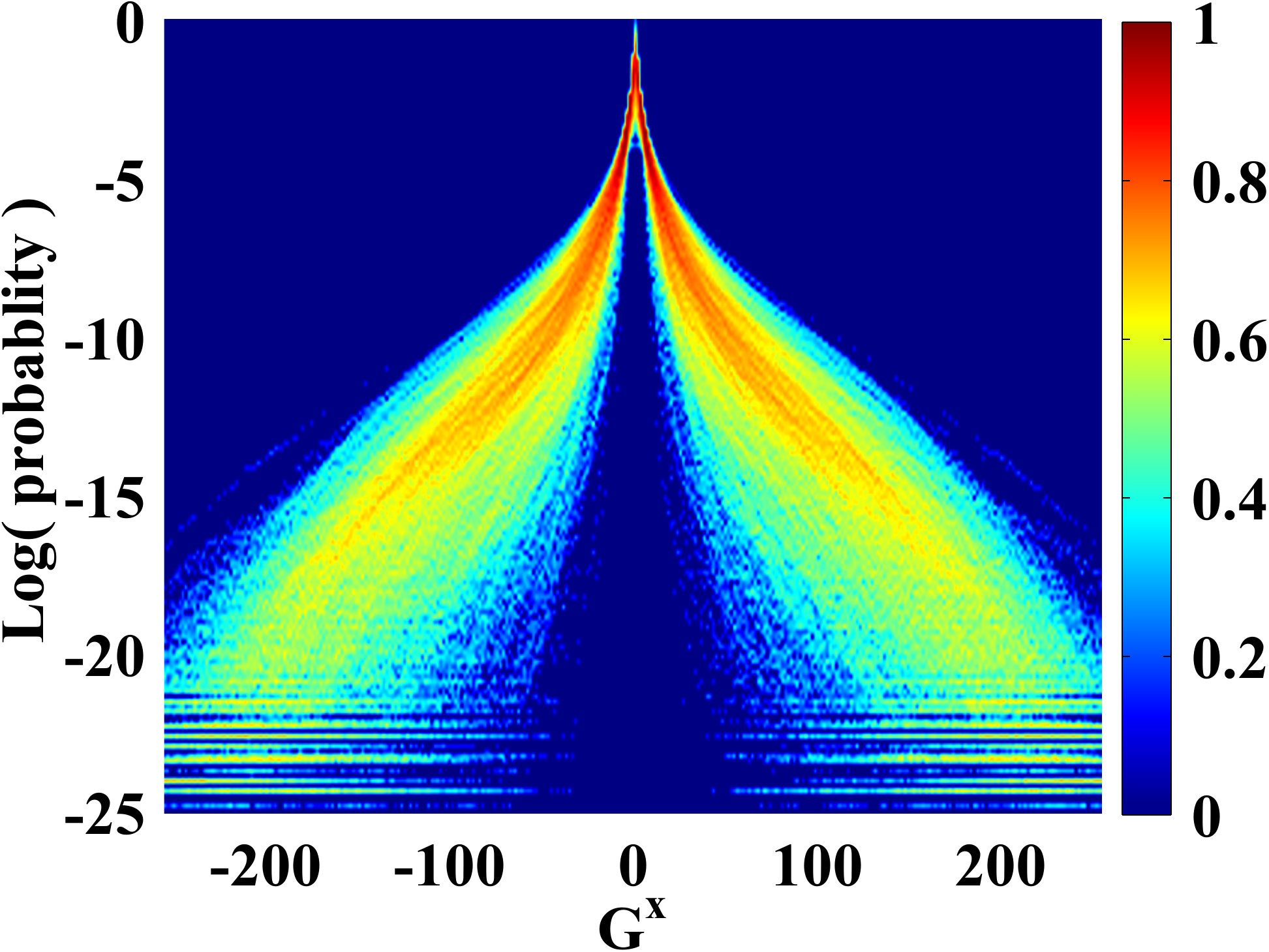}}
\subfigure[Model 2 with changing parameter $a_2$ (coded by color) and other parameters fixed at their best fit: $b_2=5.4$, $c_2=-0.266$.]{\includegraphics[width=.7\linewidth]{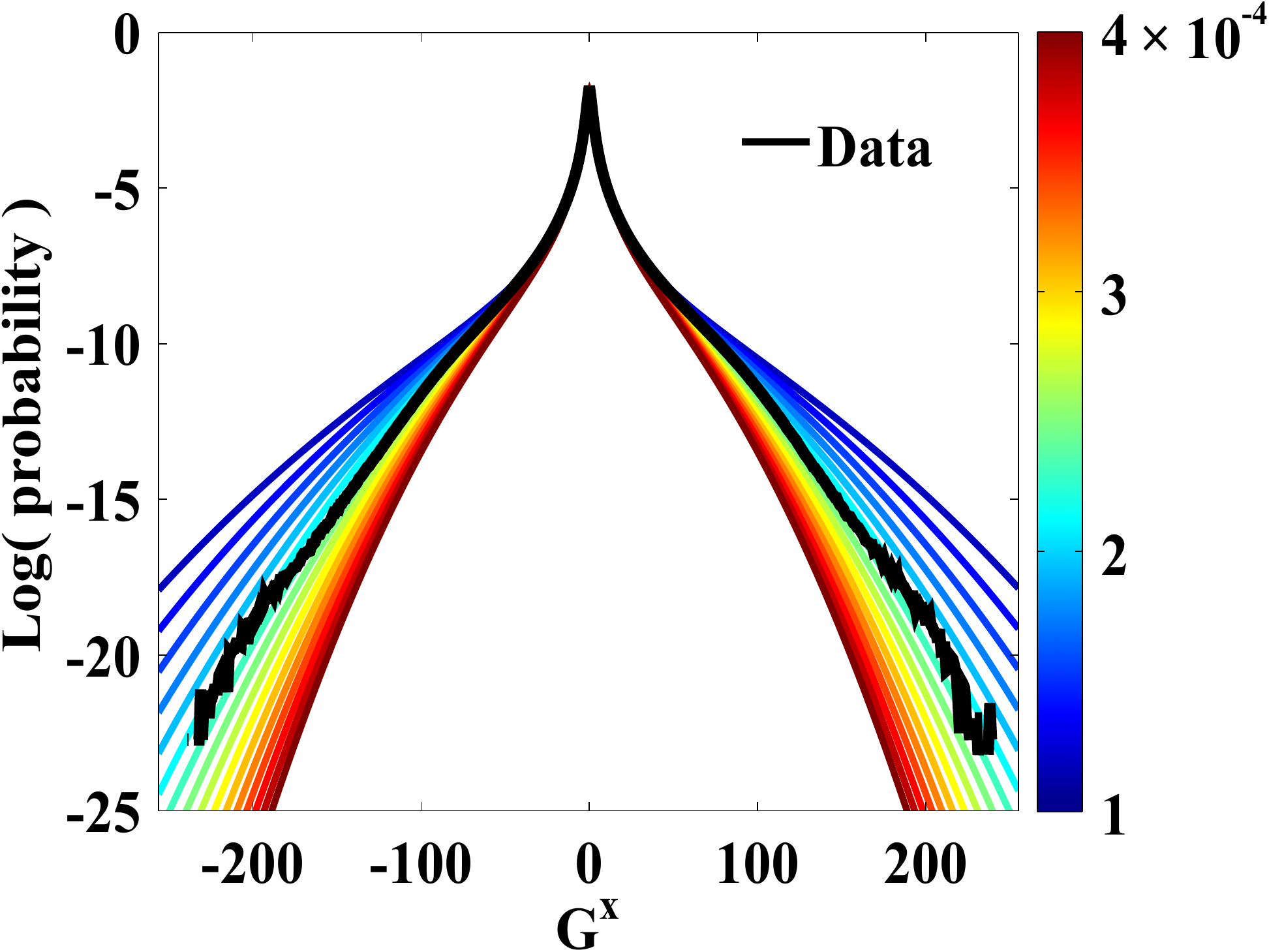}}
\subfigure[Hyper-Laplacian model with changing parameter $b_0$ (color) and all other parameters fixed at their best fit.]{\includegraphics[width=.7\linewidth]{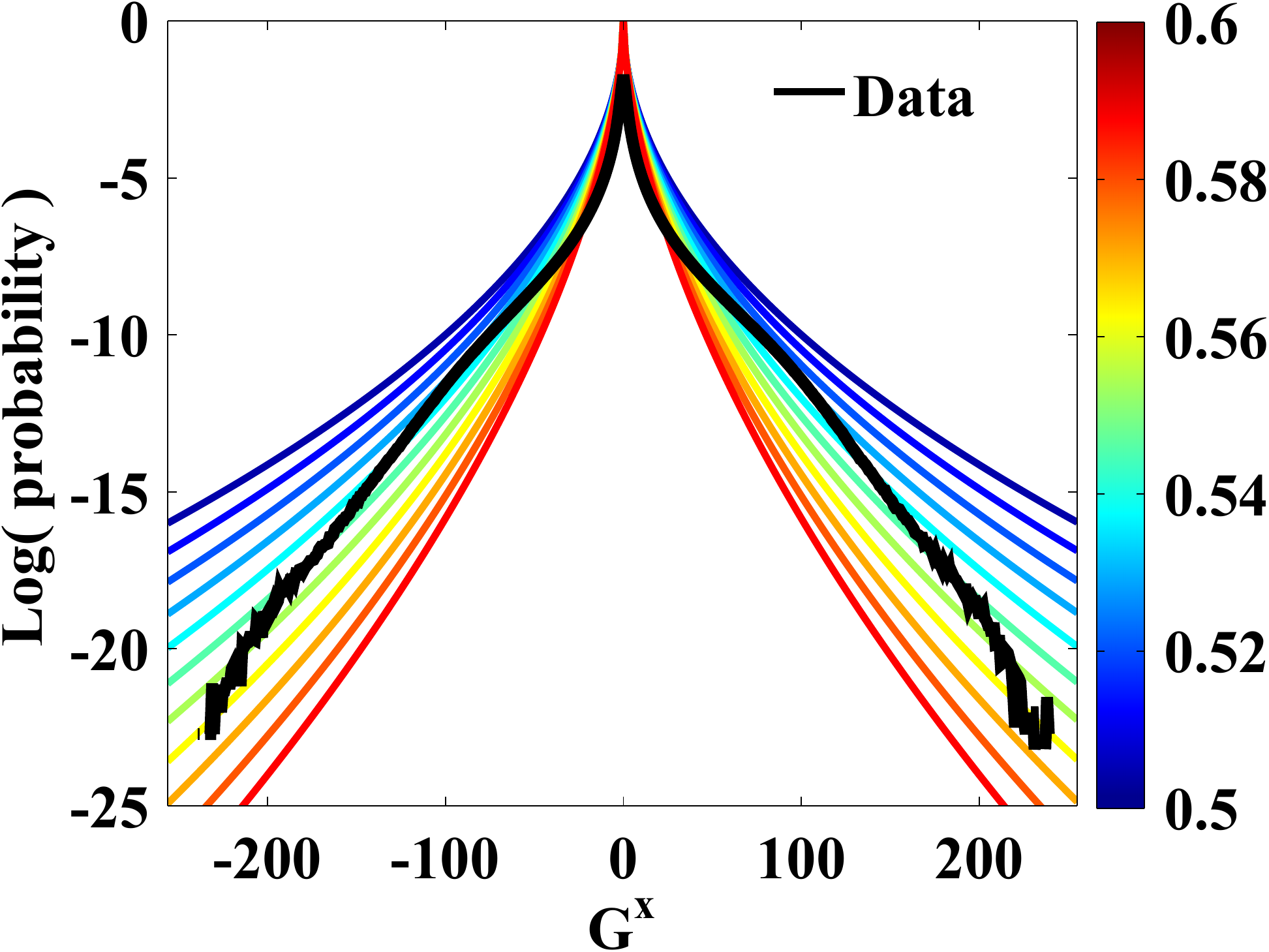}}
  \caption{Sensitivity analysis of Model 2 compared with the Hyper-Laplacian model. Parameter $a_2$ varies from $1\times 10^{-4}$ to $4\times 10^{-4}$ with step size $2\times 10^{-5}$. For the Hyper-Laplacian model, $b_0$ varies from 0.5 to 0.6 with step size 0.01.}
\label{fig:CDFchange} 
\end{figure}

\begin{figure}[!htb]
  \centering
\subfigure[Log scale]{\includegraphics[width=0.7\linewidth]{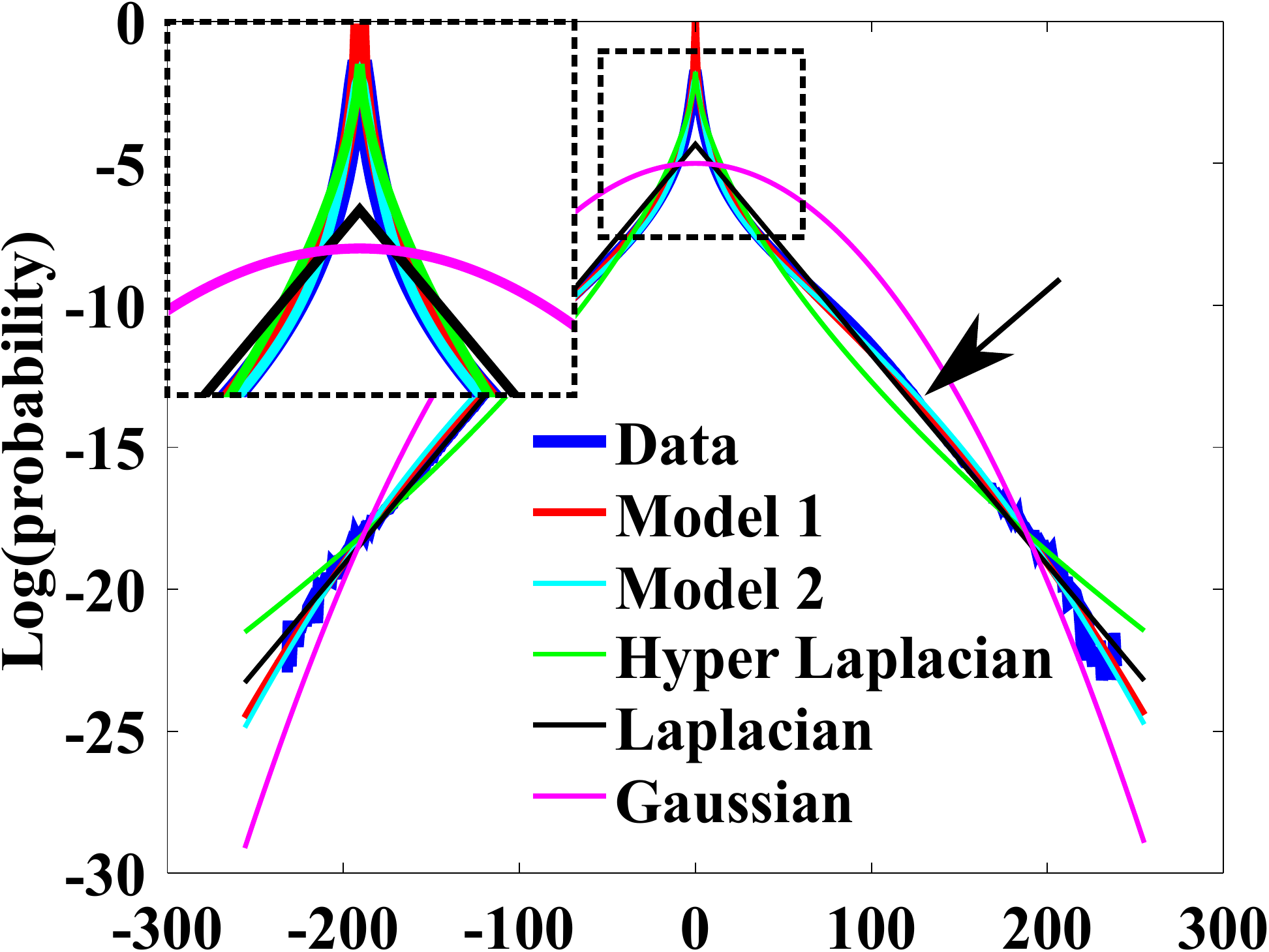}}
\subfigure[cumulative sum in linear scale]{\includegraphics[width=0.7\linewidth]{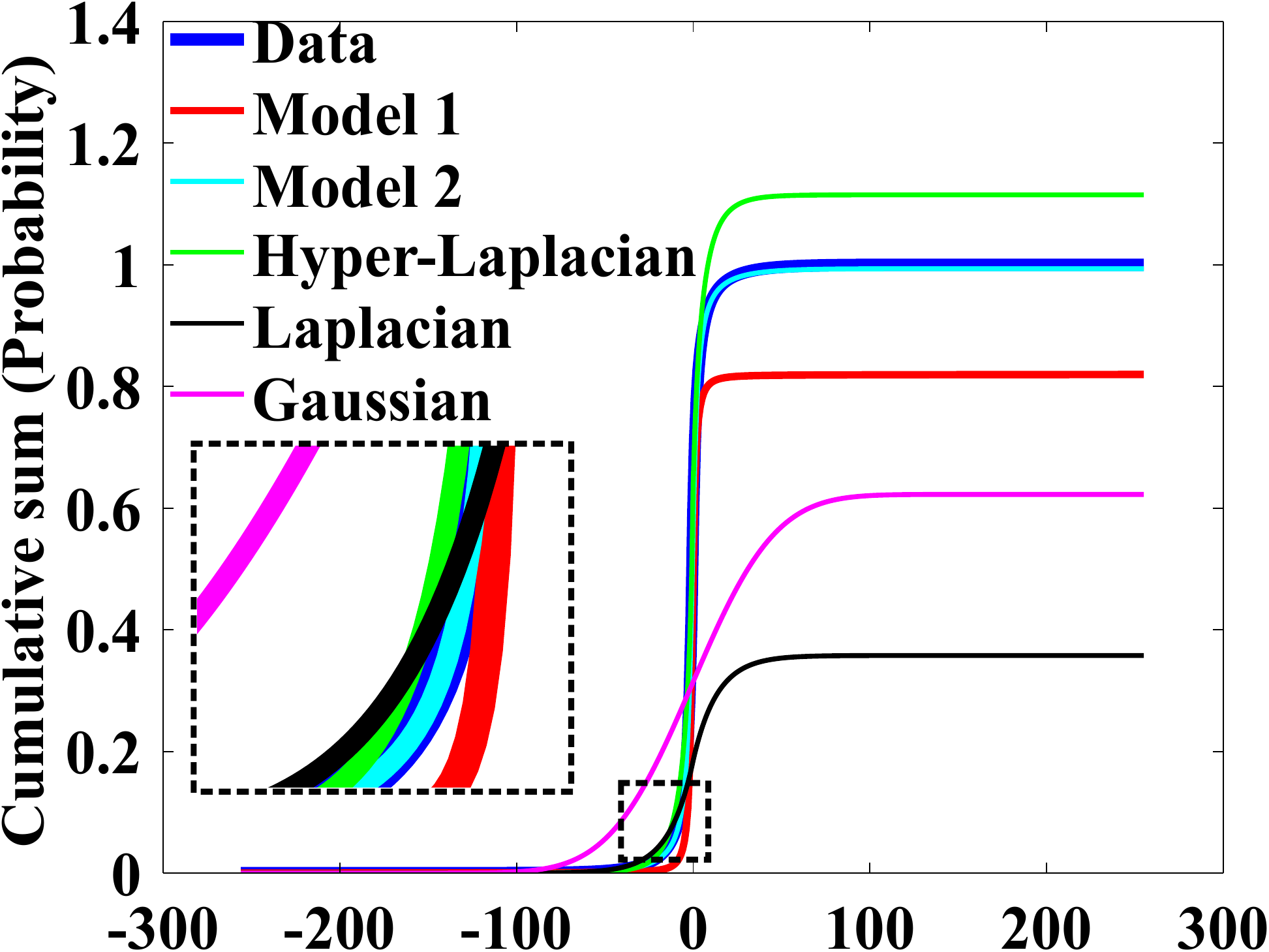}}
  \caption{Comparison of marginal models (log scale) and their cumulative sums (linear scale). Optimal parameters are used for each model. The quantitative differences are shown in Table~\ref{table:SSE}; sensitivity analysis of the model fits is shown in Fig.~\ref{fig:CDFchange}.}
  \label{fig:modelcompare} 
\end{figure}

\begin{table}[!htb]
\scriptsize
\centering  
\begin{tabular}{c|ccccccc} 
\hline\hline
Image set  & 1&	2&	3&	4&	5&	6&	7 \\
\hline
SSE & 40.5&	43.0&	67.7&	23.8&	34.1&	37.9&	25.4 \\
$R^2$ & 0.99&	0.99&	0.99&	0.99&	0.99&	0.99&	0.99 \\
\hline
 SSE & 271  &	324&	266&	44.4&	38.2&	62.8&	30.7 \\
$R^2$ & 0.96&	0.93&	0.96&	0.99&	0.99&	0.98&	0.99 \\
\hline
 SSE & 576&	301&	537&	45.4&	389&	70.5&	250 \\
$R^2$ & 0.92&	0.93&	0.91&	0.98&	0.96&	0.98&	0.97 \\
\hline
  SSE$\times 10^{-3}$ & 1.86&	3.01&	3.02&	3.95&	2.34&	3.90&	3.95 \\
$R^2$ & 0.74&	0.30&	0.52&	0.13&	0.81&	0.10&	0.57 \\
\hline
SSE$\times 10^{-4}$ & 0.83&	1.02&	1.10&	1.24&	1.32&	1.23&	1.64 \\
$R^2$ & -0.12&	-1.3&	-0.72&	-2.6&	-0.046&	-2.5&	-0.75 \\
\hline
\end{tabular}
\caption{Goodness of fit comparison for all models (from top to bottom): Model 1 (Eq.~\ref{eq:distr}), Model 2 (Eq.~\ref{eq:distr2}), Hyper-Laplacian, Laplacian, Gaussian.} 
\label{table:SSE} 
\end{table}

\subsection{2D Joint Model}
\label{sec:2DModel}
While the 1D marginal models approximate well the marginal distributions of the gradient, they are not statistically independent. As shown in the last two rows of Table~\ref{table:1D}, the two gradient components are weakly negatively correlated. This weak correlation between the gradient components explains why alternating optimization had to be used in all previous works that considered the marginal models independently, and why the results were still good even though the prior is not strictly correct.  

The traditional model (Eq.~\ref{eq:GGD}) can easily be extended to 2D:
\begin{equation}
\label{eq:GGD2}
\log(P(\vec{G}))=-a_0(|G^{x}|^{b_0}+|G^{y}|^{b_0})+c_0 \, , 
\end{equation} where $a_0$, $b_0$, and $c_0$ are the parameters. Such a model, including the Hyper Laplacian as a special case, however, treats the $x$ and $y$ components of the gradient as independent. 

\begin{figure*}[!tb]
  \centering
  \subfigure[data]{\includegraphics[width=0.25\linewidth]{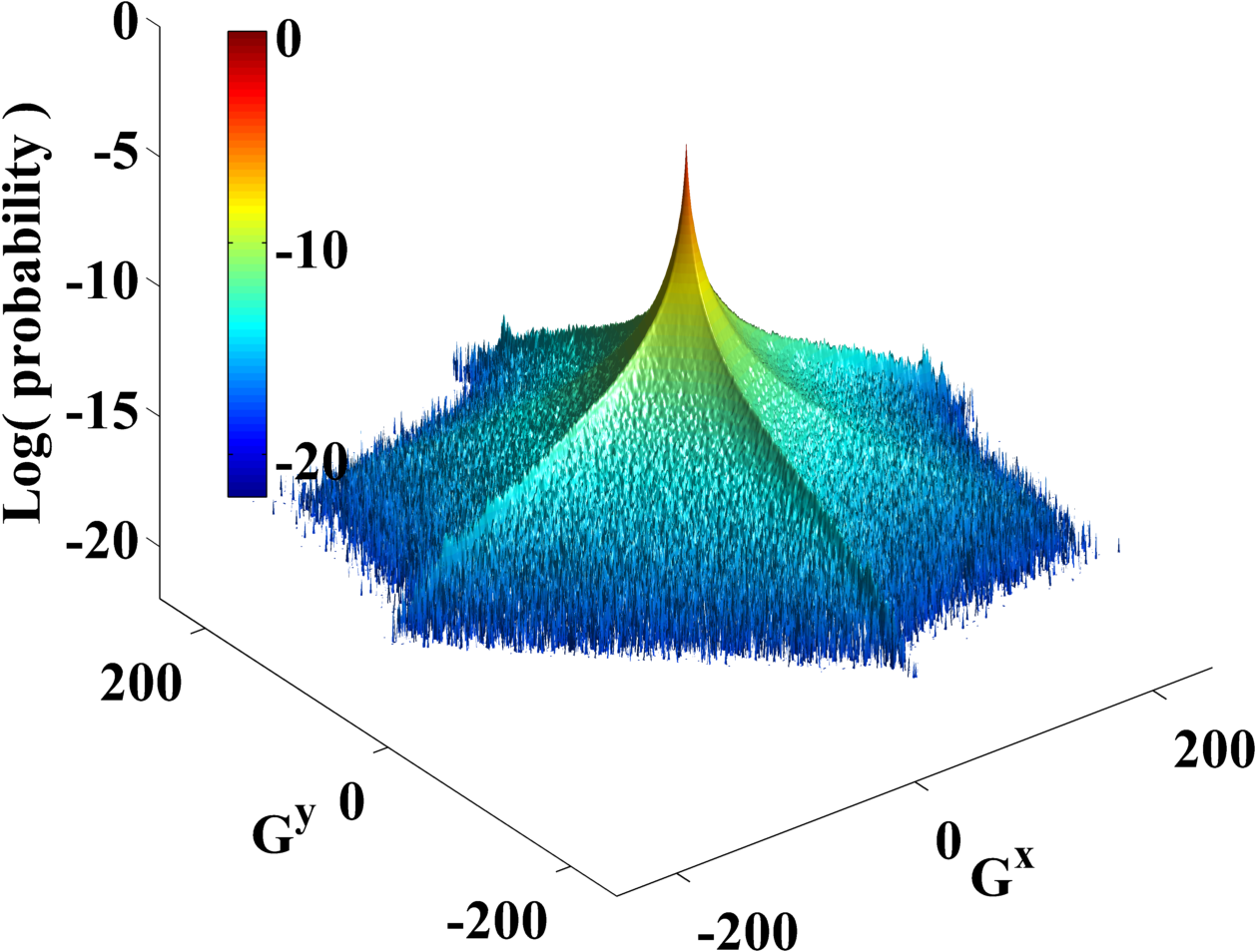}}
  \subfigure[Model 1]{\includegraphics[width=0.25\linewidth]{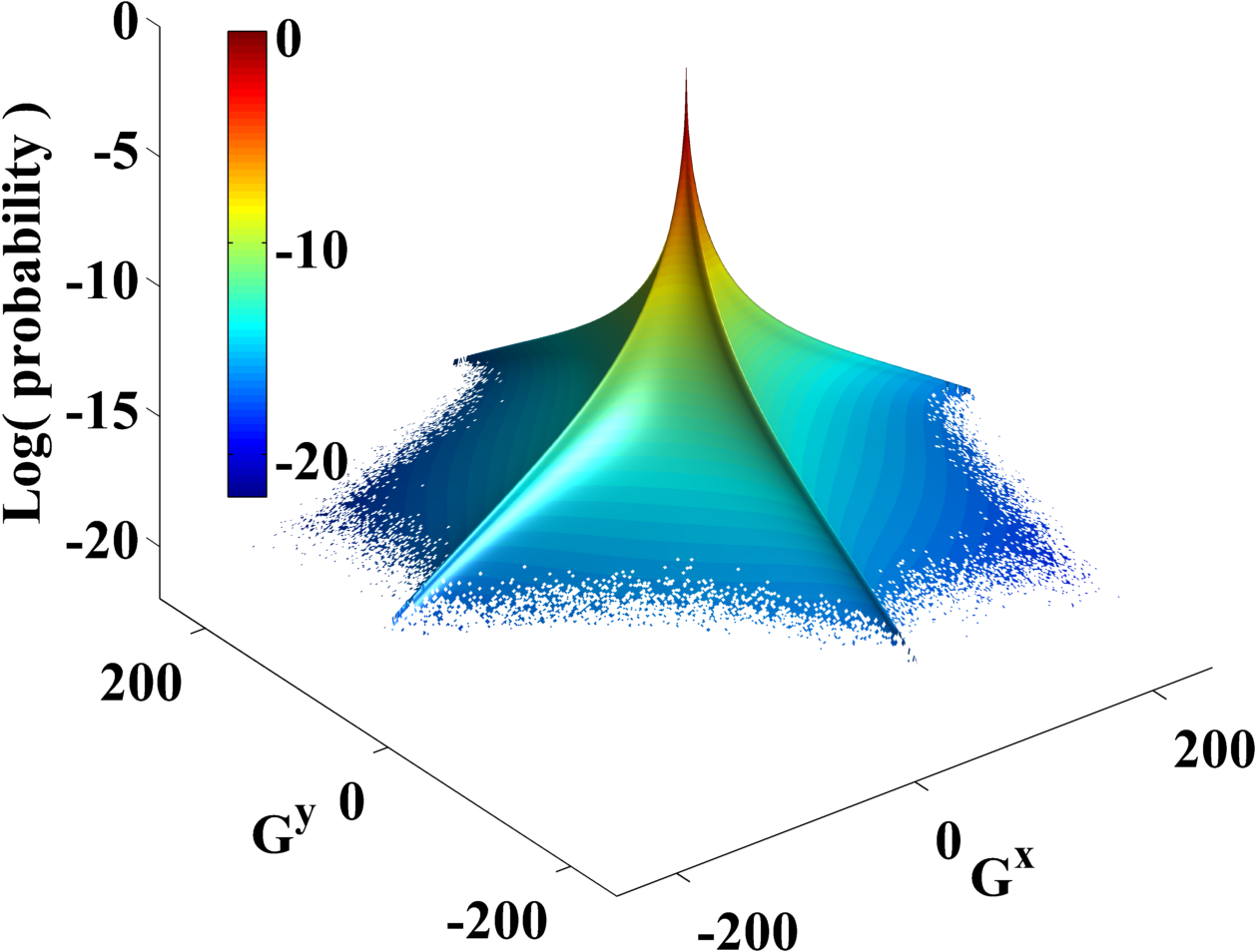}}
  \subfigure[Model 2]{\includegraphics[width=0.25\linewidth]{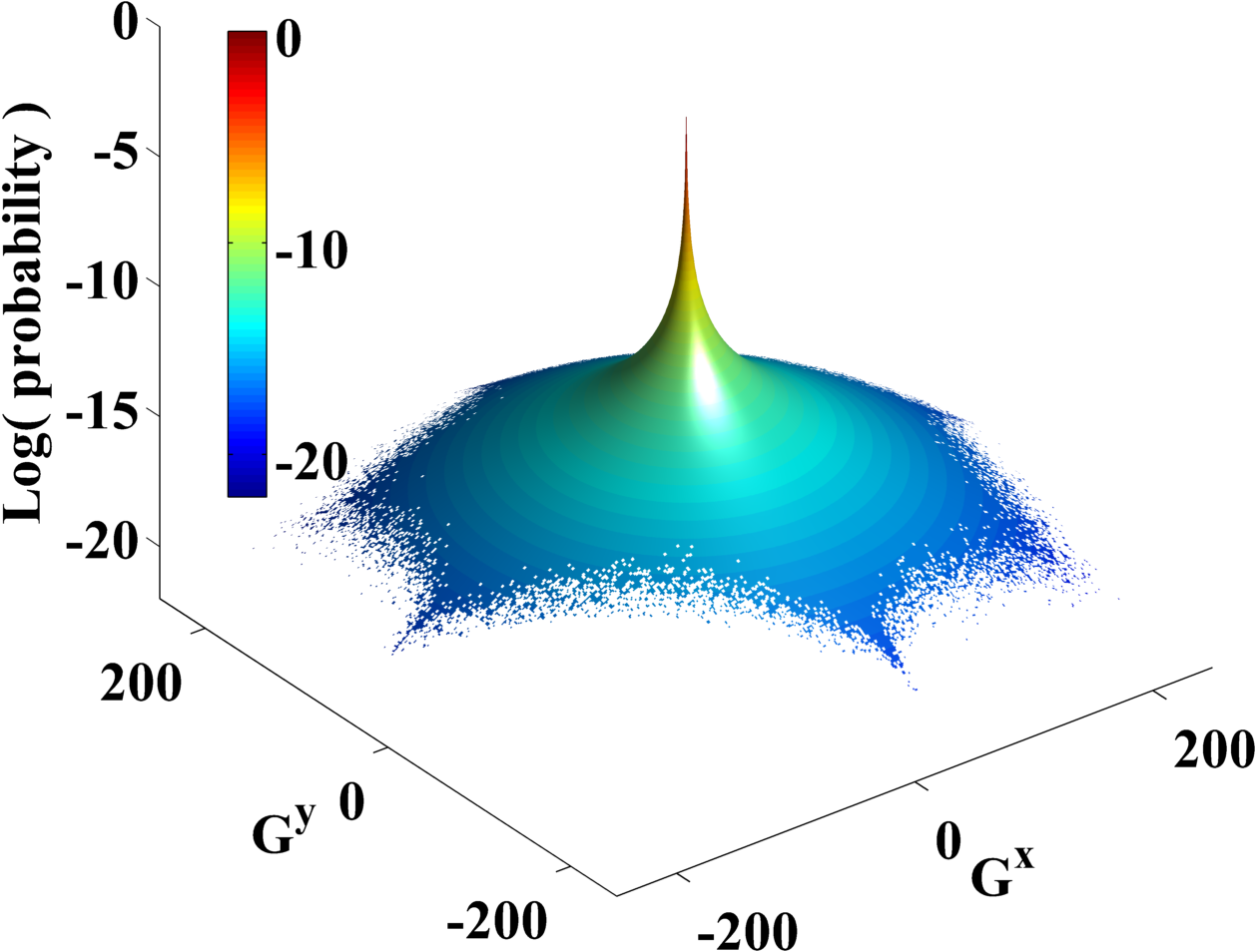}}
  
  \subfigure[data iso lines]{\includegraphics[width=0.25\linewidth]{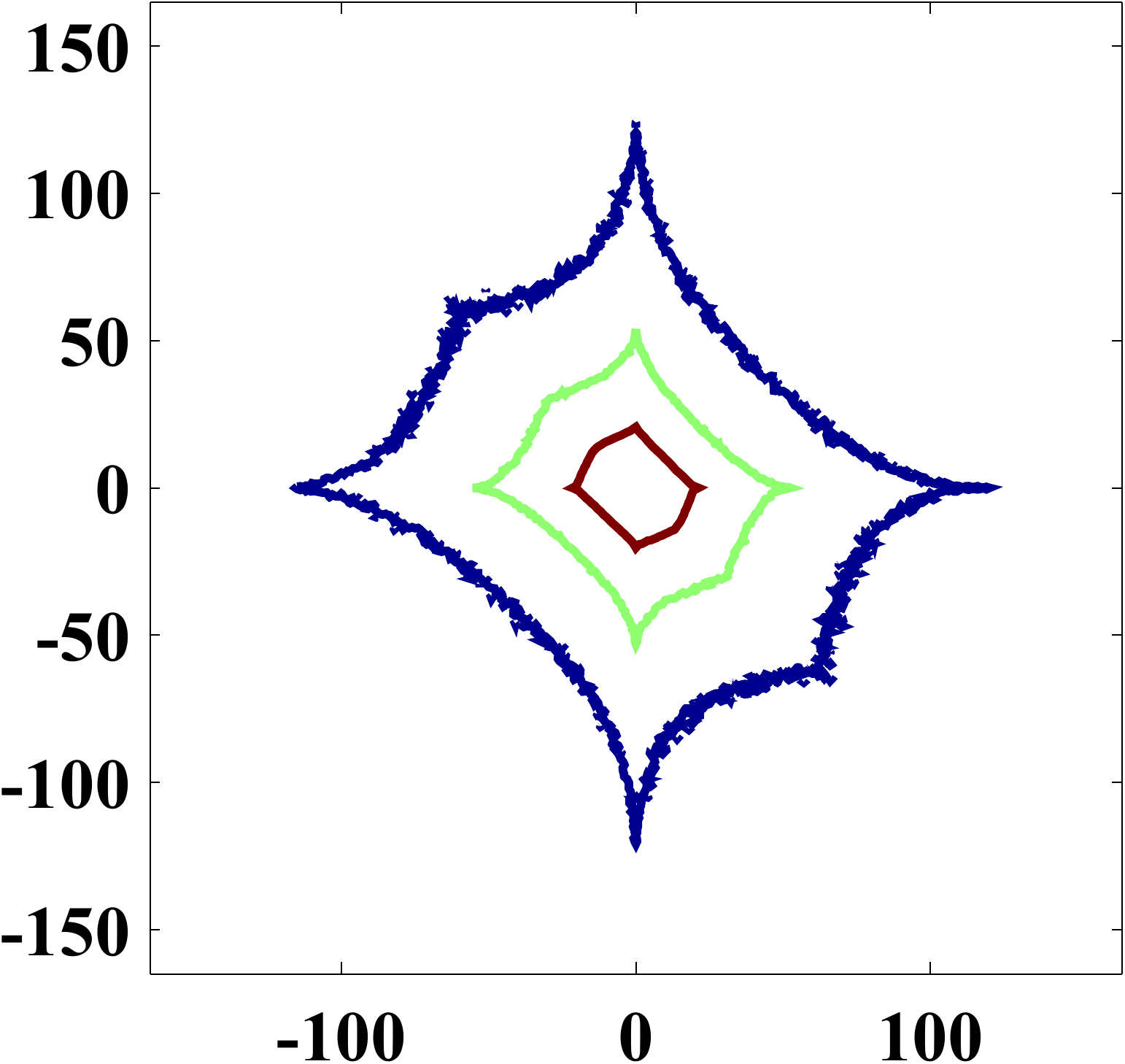}}
  \subfigure[Model 1 iso lines]{\includegraphics[width=0.25\linewidth]{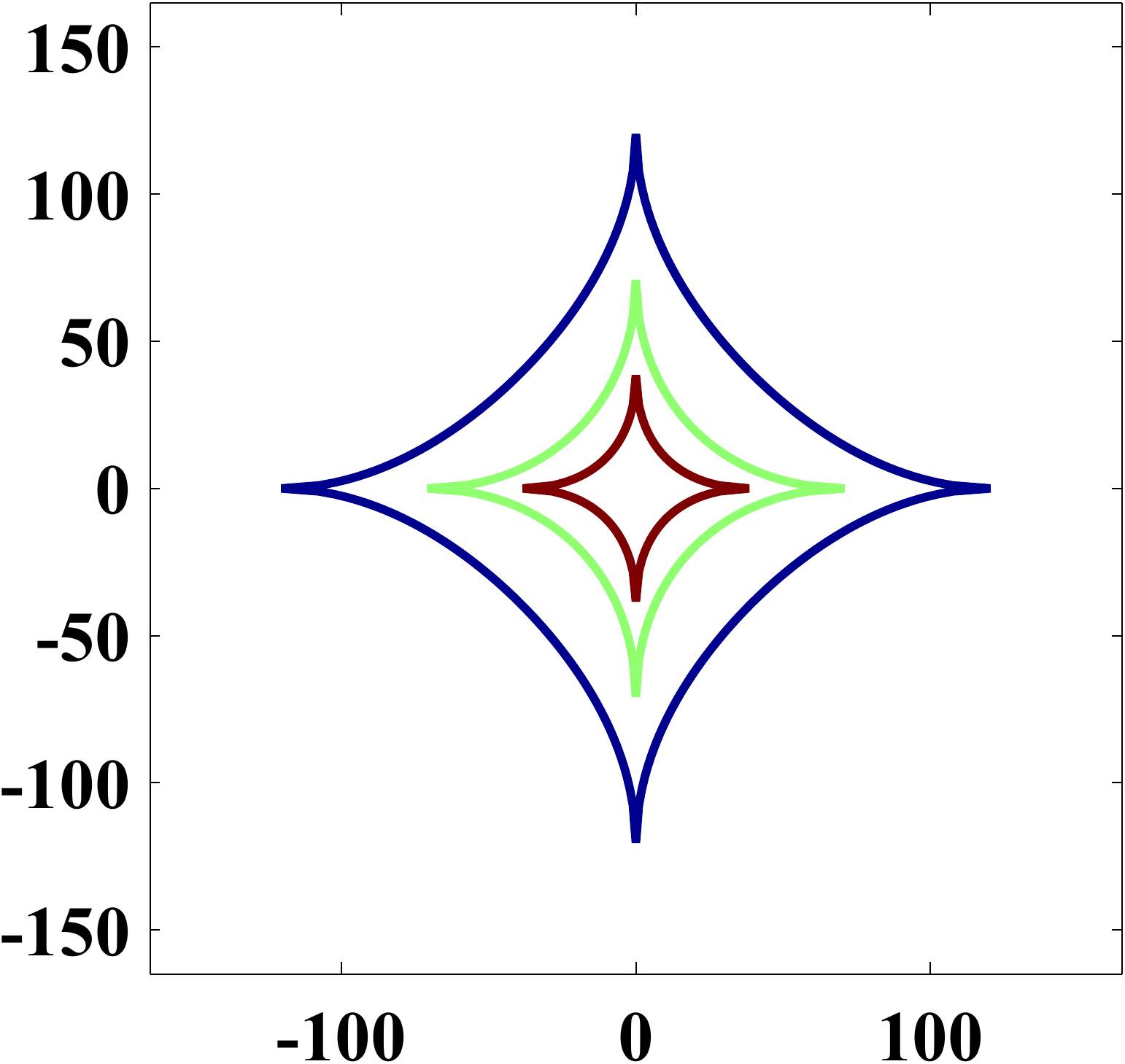}}
  \subfigure[Model 2 iso lines]{\includegraphics[width=0.25\linewidth]{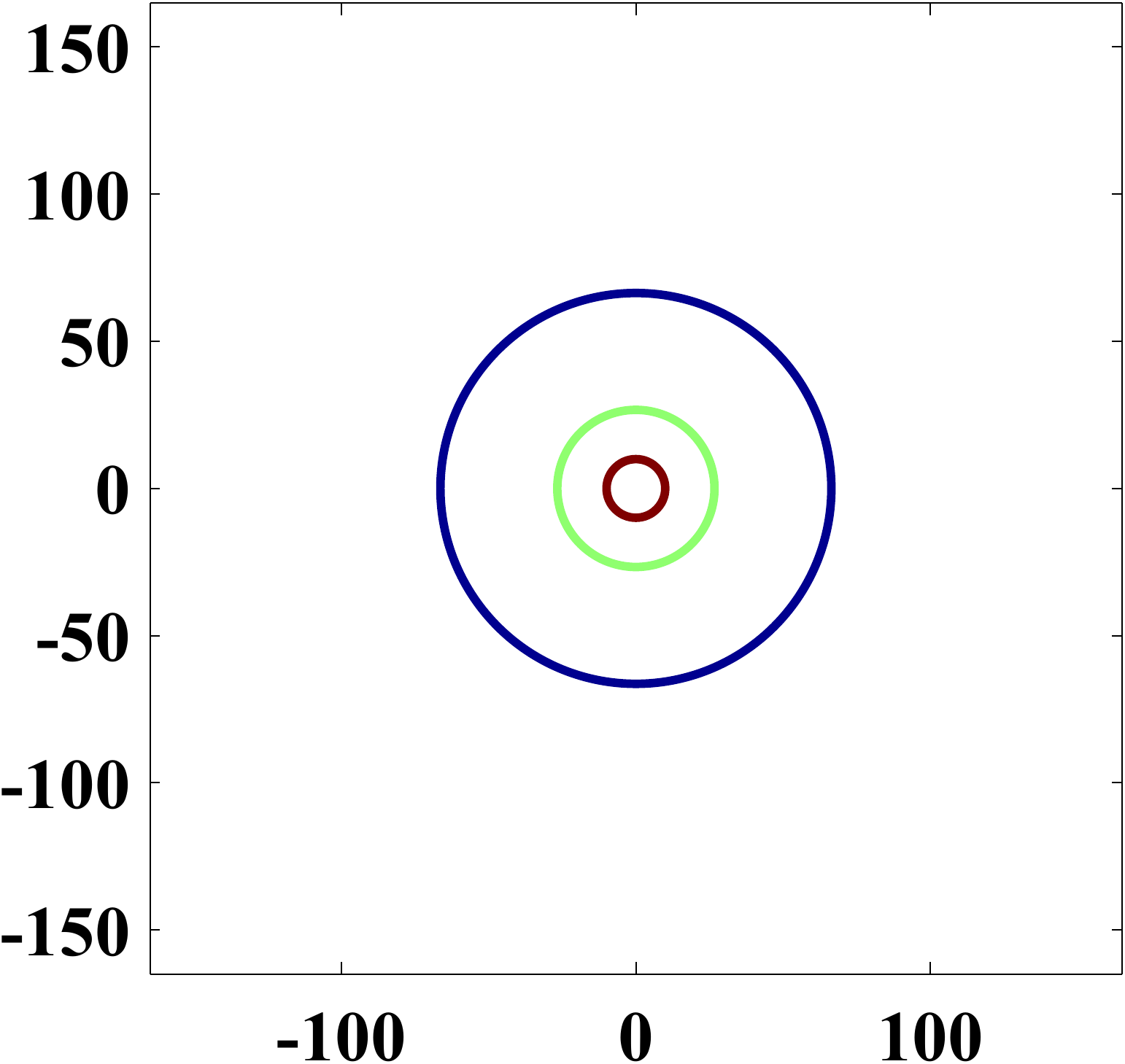}}
  
  \subfigure[Hyper Laplacian]{\includegraphics[width=0.25\linewidth]{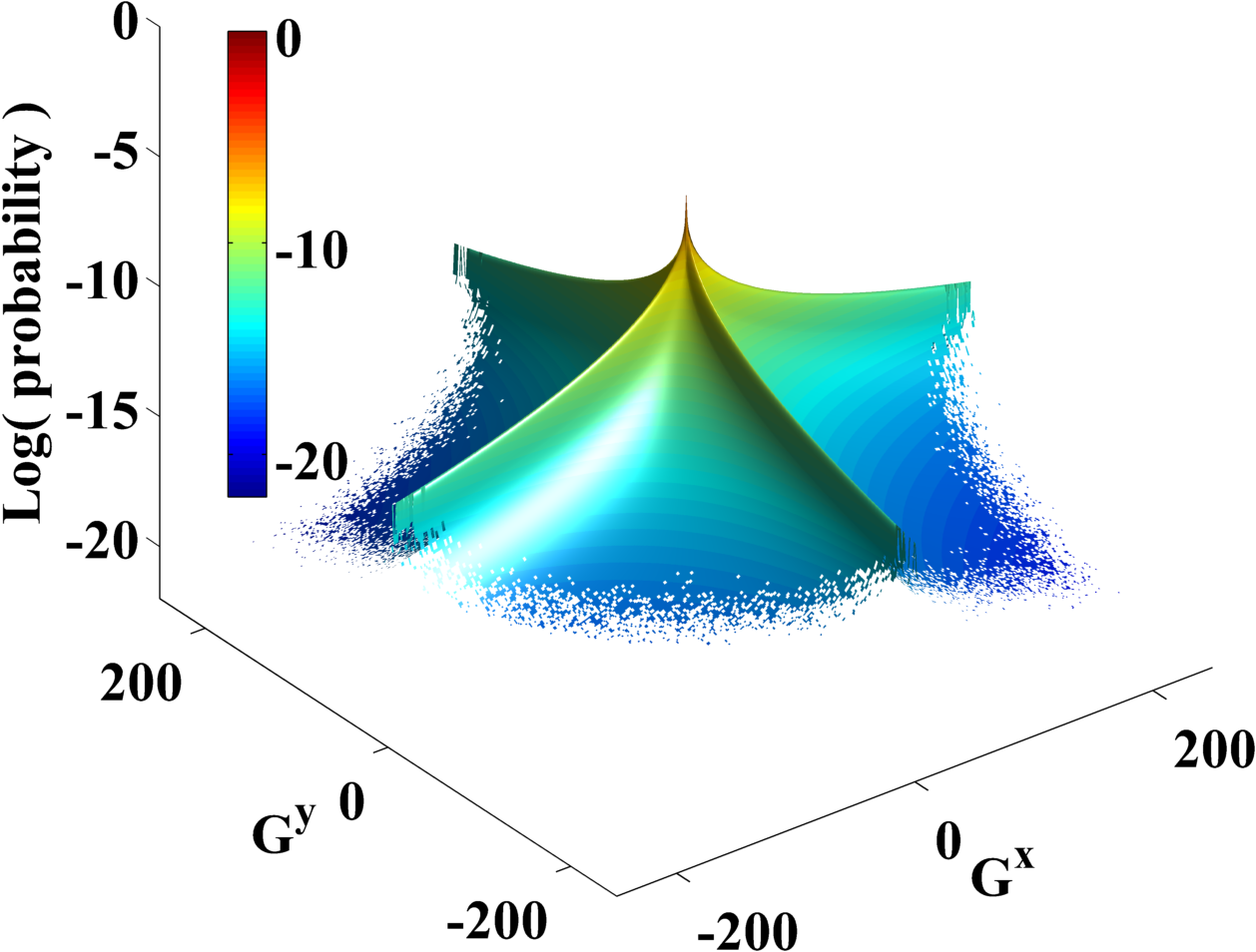}}
  \subfigure[Laplacian]{\includegraphics[width=0.25\linewidth]{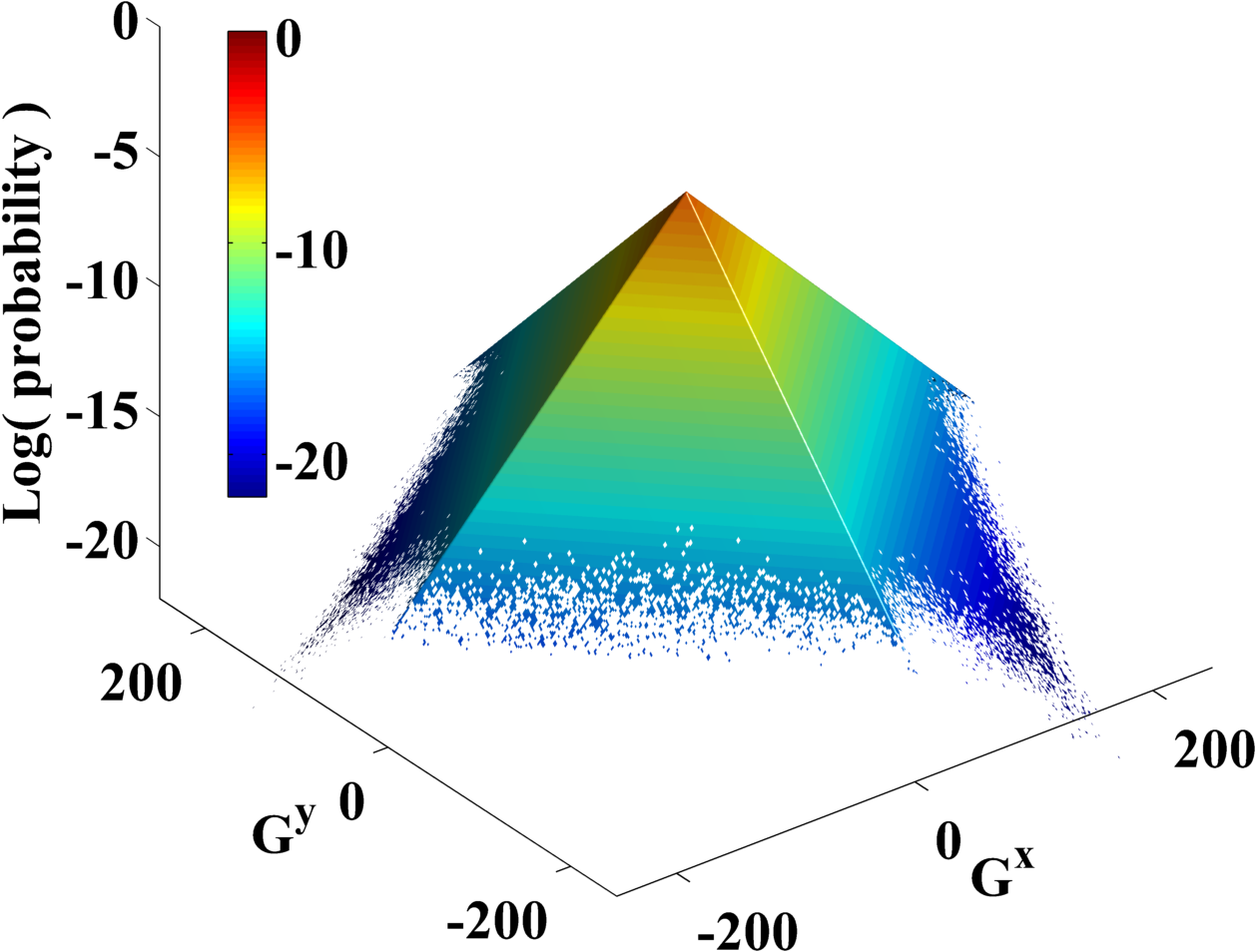}}
  \subfigure[Gaussian]{\includegraphics[width=0.25\linewidth]{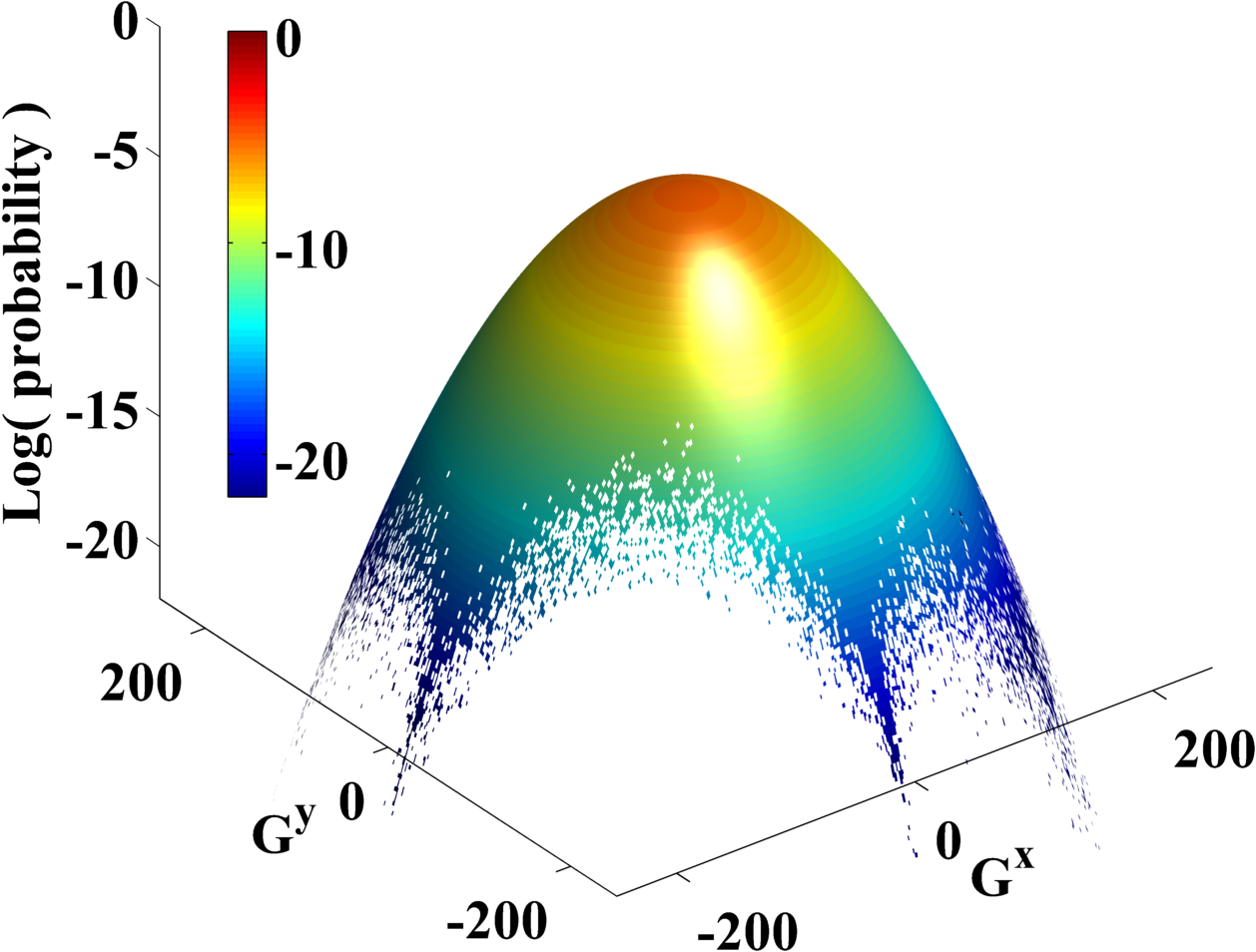}}

  \subfigure[HypLap iso lines]{\includegraphics[width=0.25\linewidth]{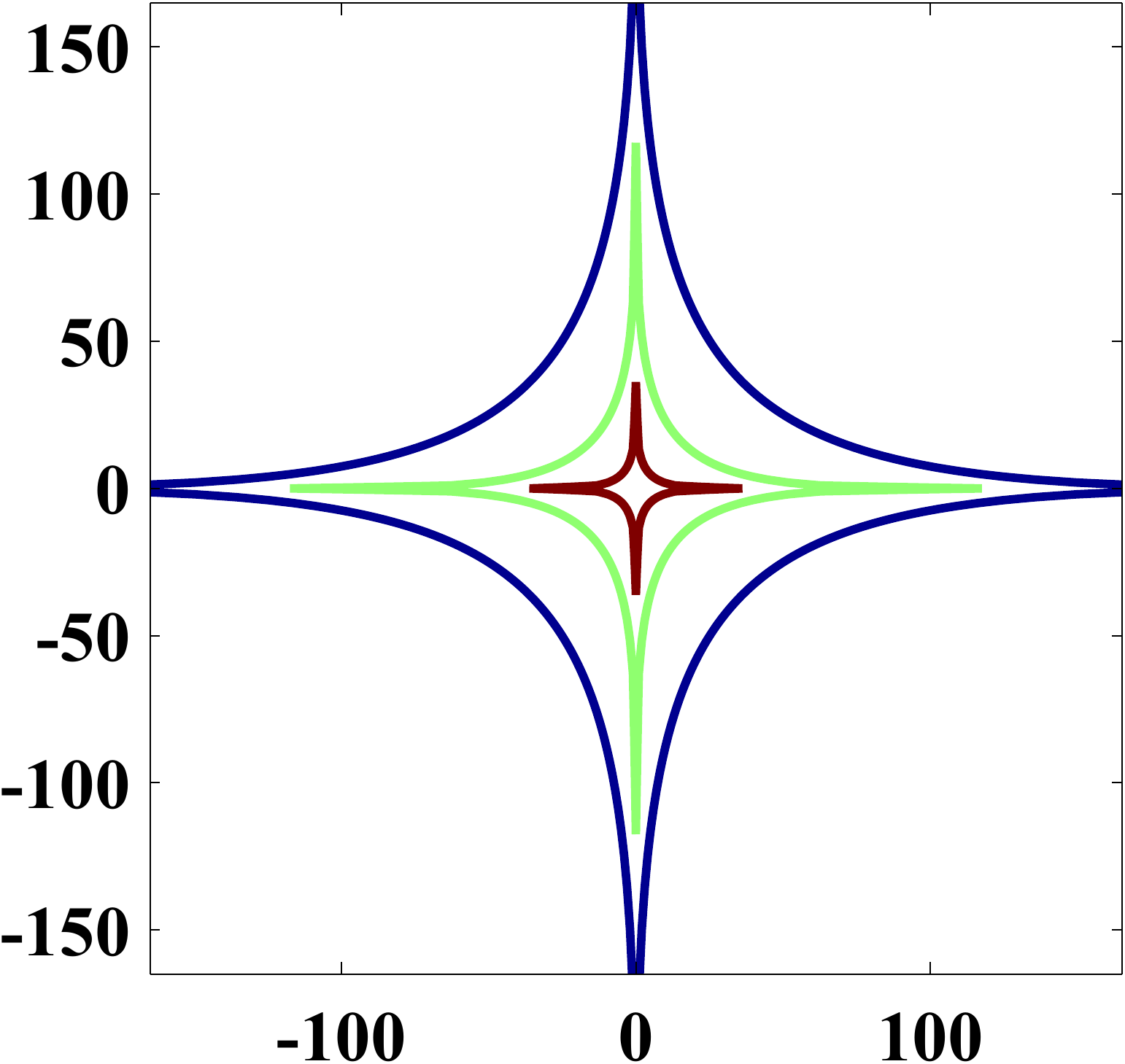}}
  \subfigure[Laplace iso lines]{\includegraphics[width=0.25\linewidth]{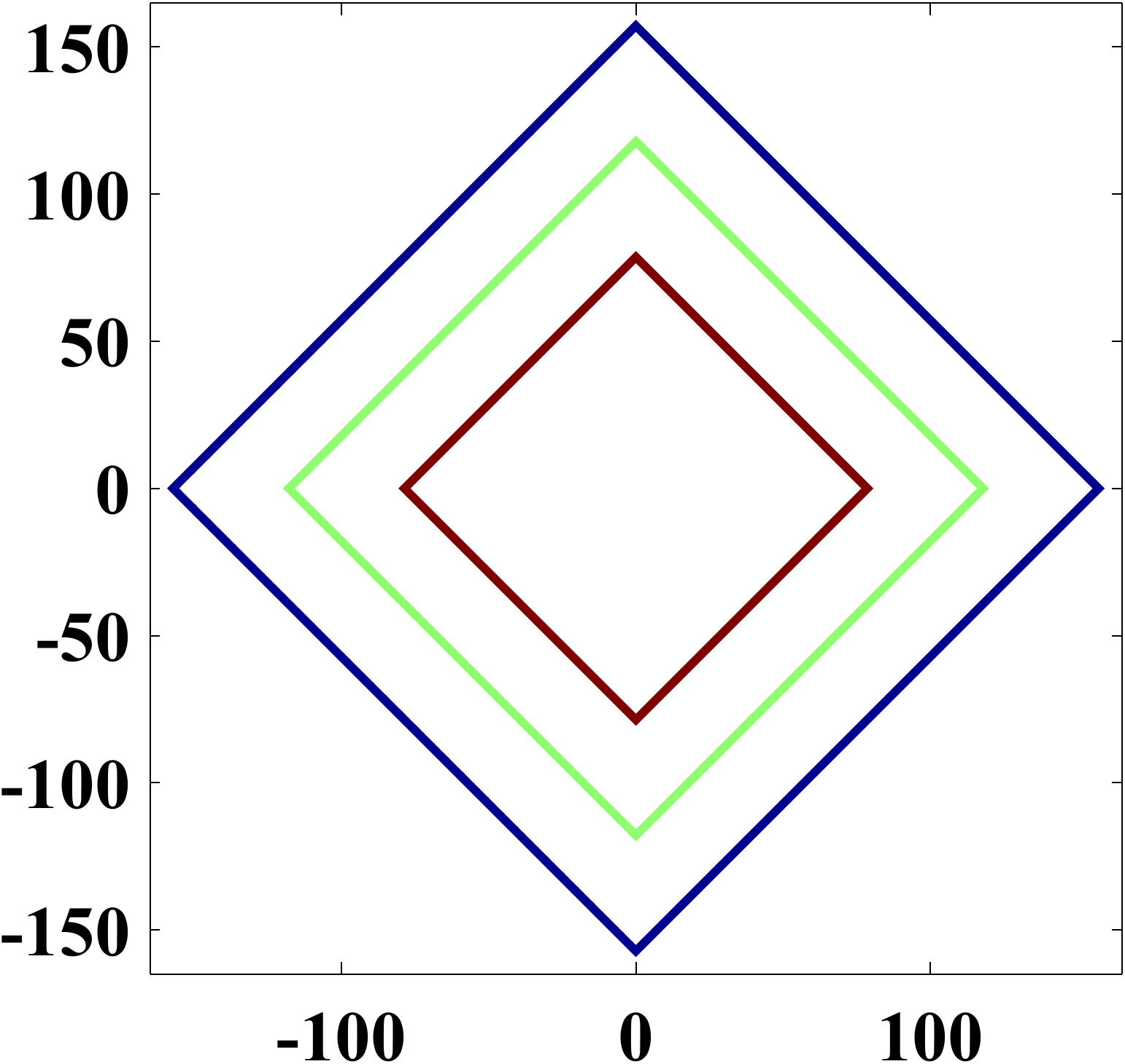}}
  \subfigure[Gaussian iso lines]{\includegraphics[width=0.25\linewidth]{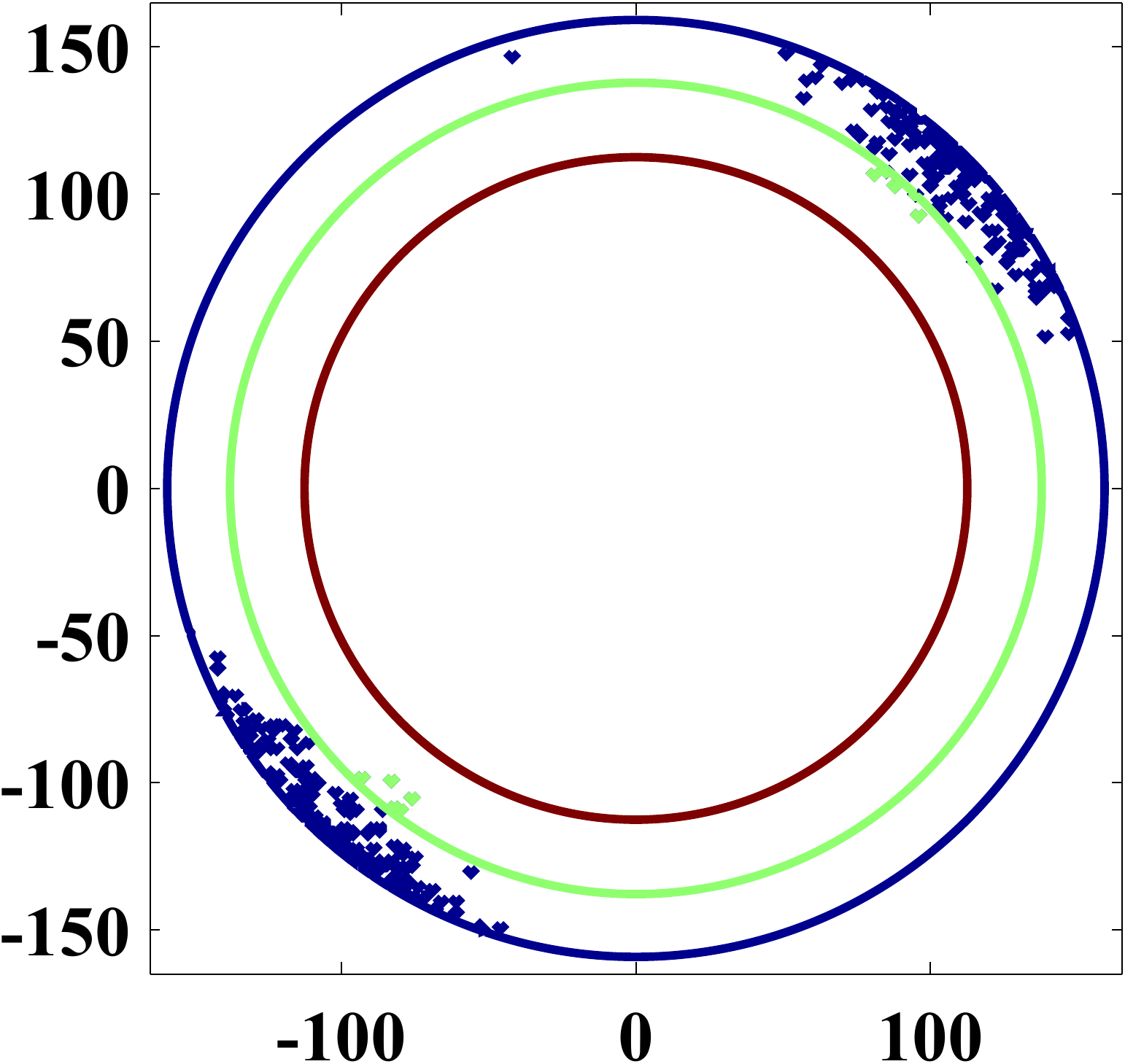}}
  \caption{Visual comparison of model fits for 2D joint gradient distribution models in log scale. First row: 2D gradient distribution of the data followed by plots of the best-fitting 2D models. Second row: iso contours of values -13, -11, and -9. The area included by isoline -13 of Model 2 is only $3\%$ of the whole domain, but the total probability mass in that area is $99\%$. More details about model sparsity are given in Section~\ref{sec:convex}.}
\label{fig:map} 
\end{figure*}

Considering that a correlation between the gradient components may exist, we instead propose the following two models (corresponding to Model 1 and Model 2 above) for the 2D joint gradient distribution: 
{\small
\begin{equation}
\abovedisplayskip=3pt 
\belowdisplayskip=3pt
\label{eq:2Ddistr}
\log(P)=2a_1(\exp\!\left\{ {-\frac{|G^x|^{b_1}+|G^y|^{b_1}}{a_1}} \right\}-1) +c_1\lVert\vec{G}\rVert_2^2 \, .
\end{equation}
\begin{equation}
\abovedisplayskip=3pt 
\belowdisplayskip=3pt
\label{eq:2Ddistr2}
\log(P)=-a_2(|G^x|^2+|G^y|^2)- \log(b_2+|G^x|^2+|G^y|^2) +c_2\, .
\end{equation}
}

The fitted parameters and are shown in Table~\ref{table:2D}. Figure~\ref{fig:map} compares the model fits with previous models. The data histogram is shown in the left panel of Fig.~\ref{fig:map}, whereas the best-fit parametric models are plotted in the remaining panels. The area included by isoline -13 of Model 2 is only $3\%$ of the whole domain, but the total probability mass in that area is $99\%$. More details about model sparsity are given in Section~\ref{sec:convex}. From Table.~\ref{table:2D}, it is clear that Eq.~\ref{eq:2Ddistr2} fits the data almost as good as Eq.~\ref{eq:2Ddistr}, and both models fits much better than any previous model.

\begin{figure*}[!tb]
\centering
\setlength{\abovecaptionskip}{0cm}
\setlength{\belowcaptionskip}{0cm}
\setlength{\subfigbottomskip}{0cm}
\setlength{\subfigcapskip}{0cm}
\setlength{\subfigtopskip}{0cm}
\subfigure[original image]{\includegraphics[width=.13\linewidth]{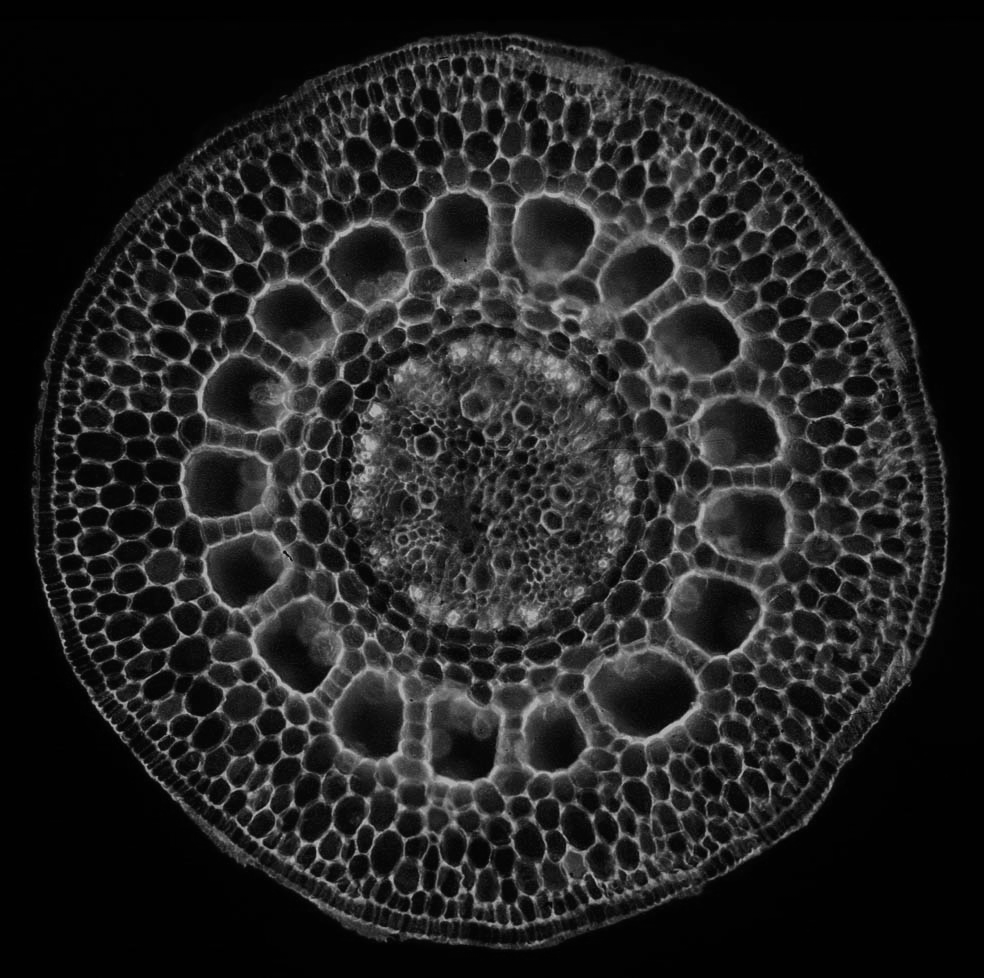}}
\subfigure[blurred ]{\includegraphics[width=.13\linewidth]{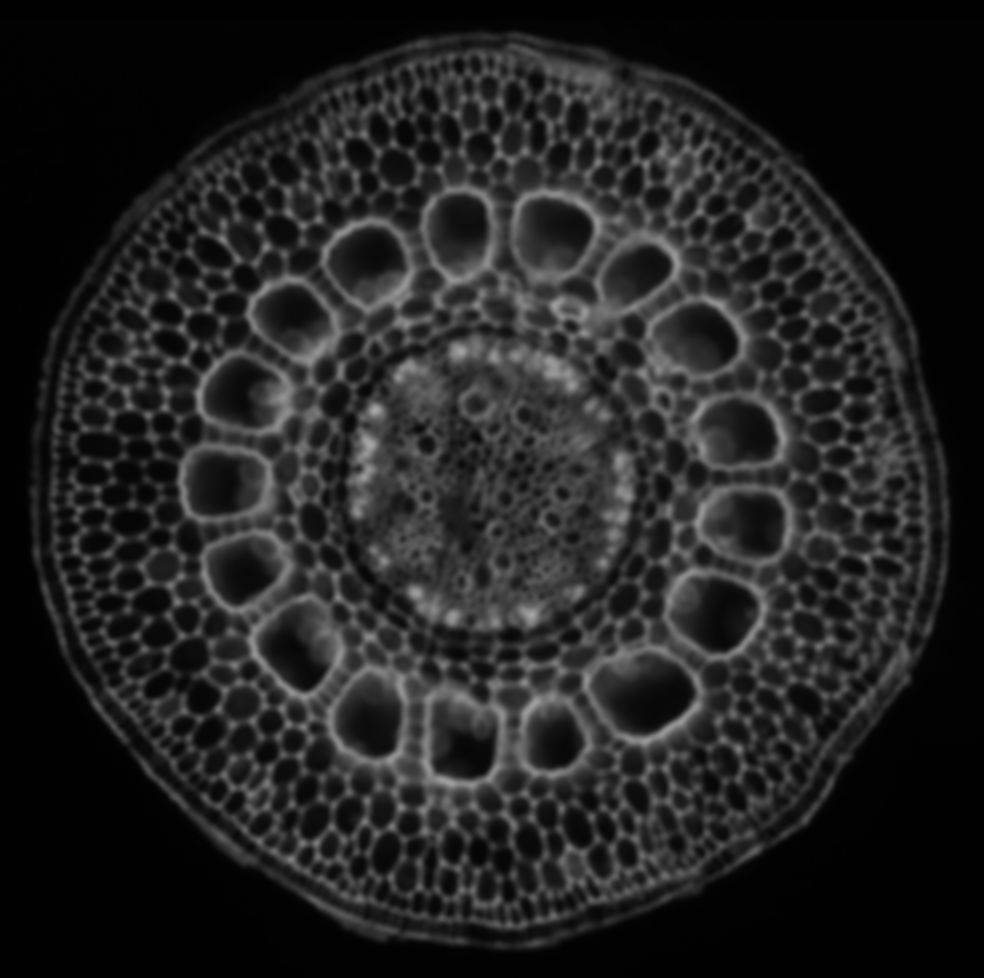}}
\subfigure[with noise]{\includegraphics[width=.13\linewidth]{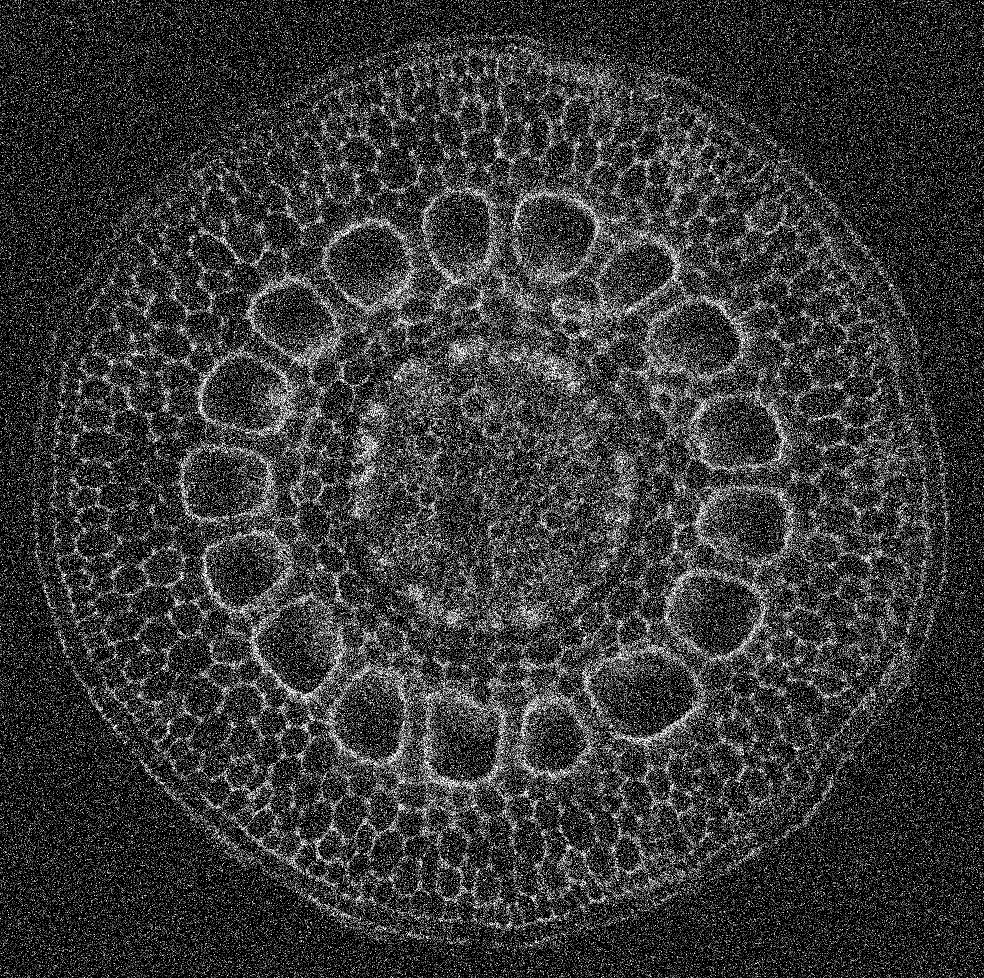}}
\subfigure[SR(nearest)]{\includegraphics[width=.13\linewidth]{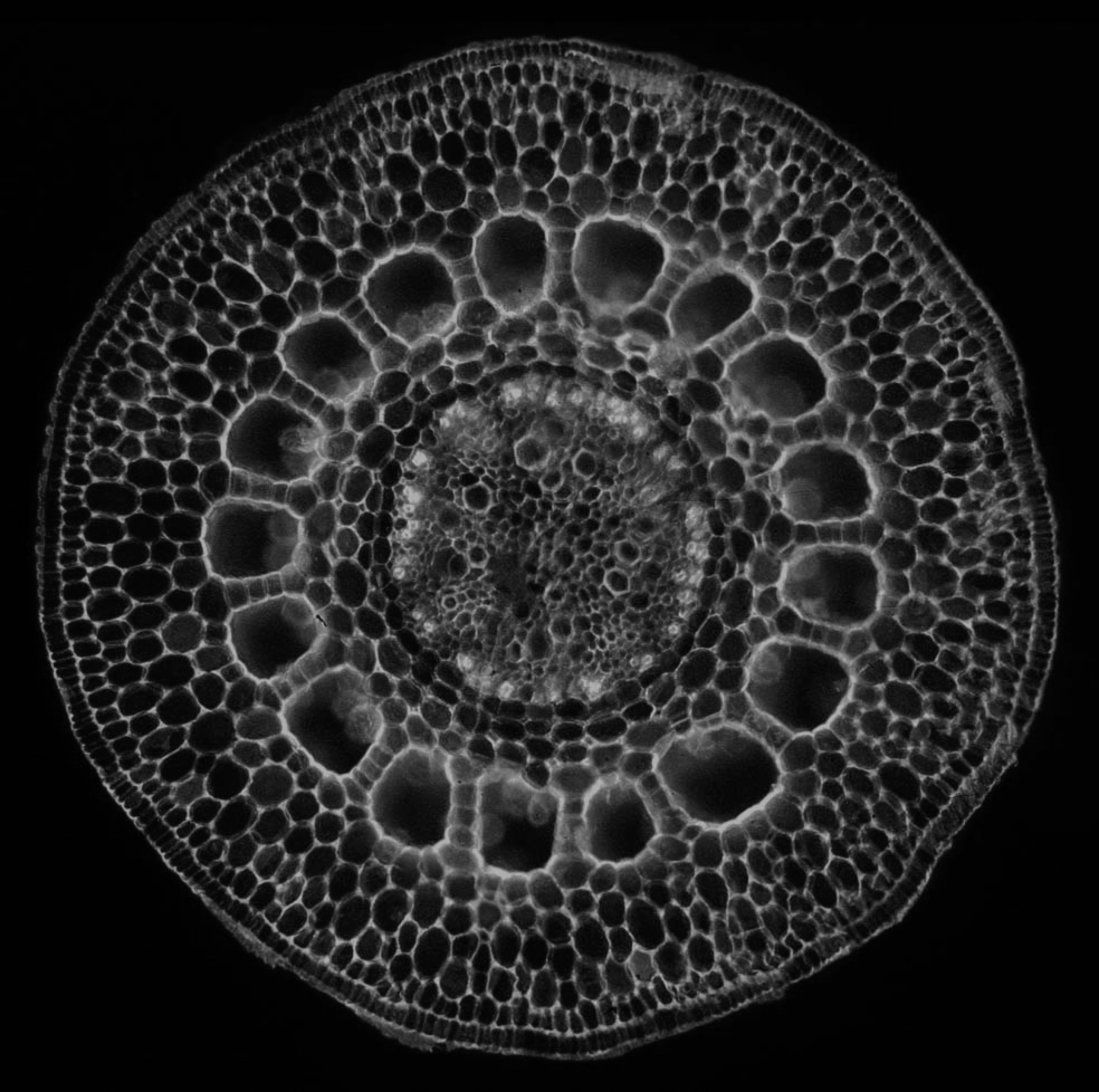}}
\subfigure[SR(bicubic)]{\includegraphics[width=.13\linewidth]{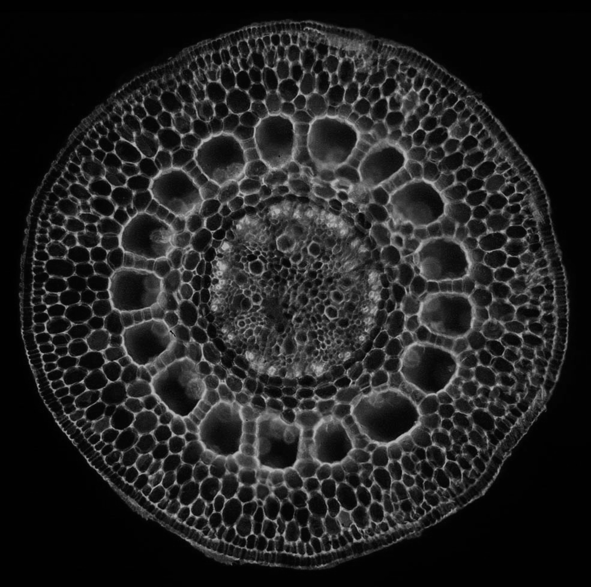}}
\subfigure[bilateral filter]{\includegraphics[width=.13\linewidth]{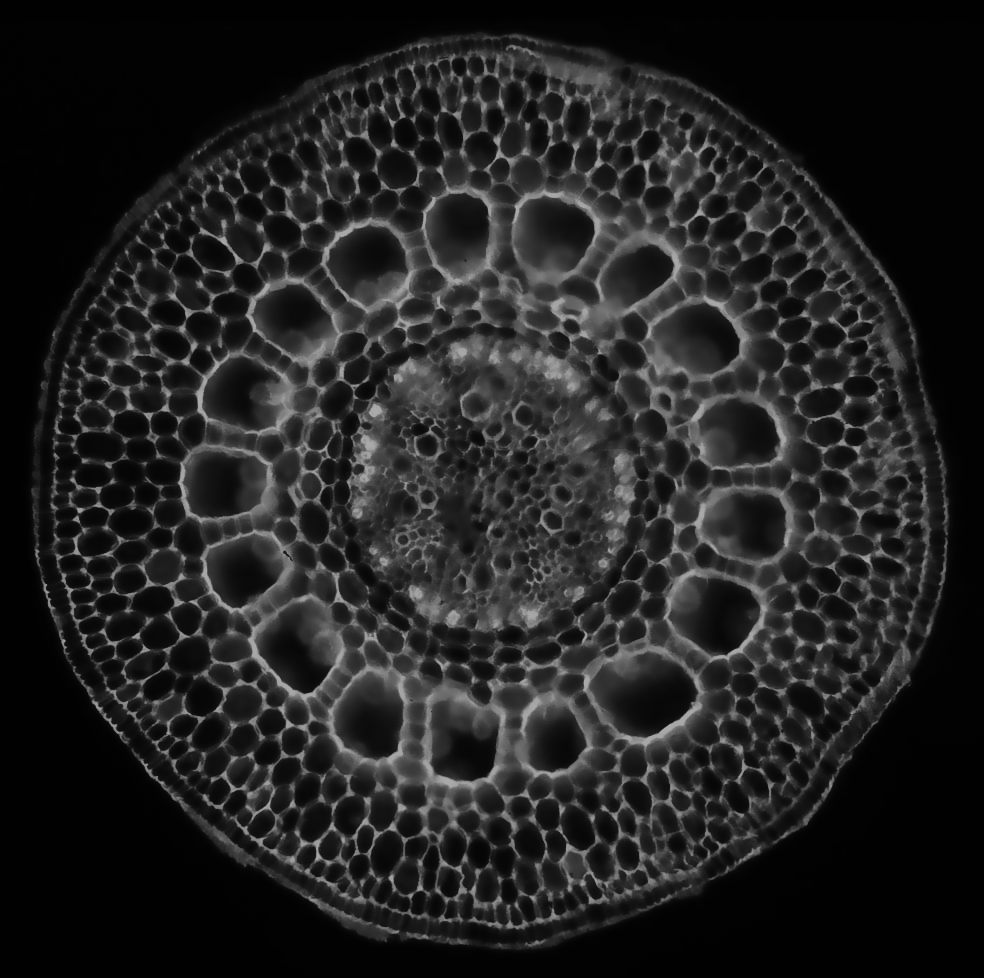}}
\subfigure[guided filter]{\includegraphics[width=.13\linewidth]{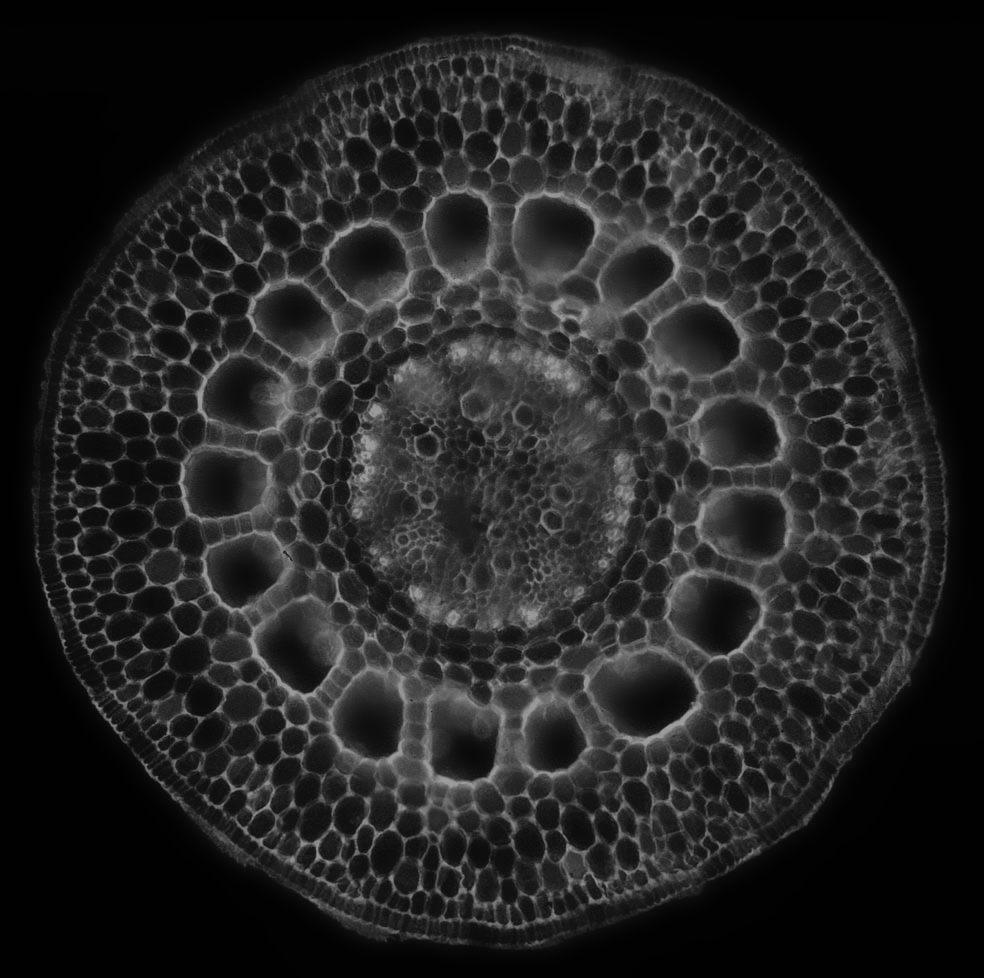}}
\subfigure[CDF($T$=0.40)]{\includegraphics[width=.13\linewidth]{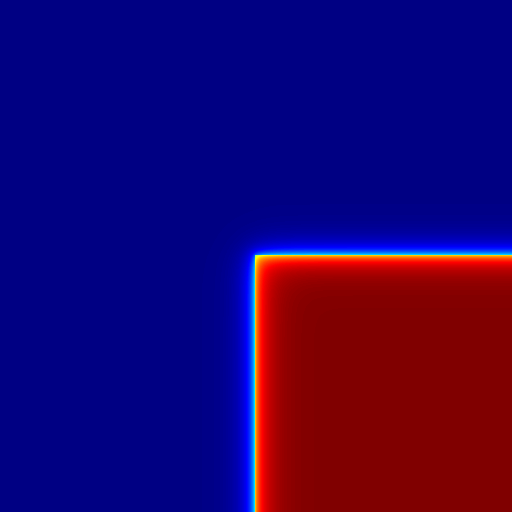}}
\subfigure[CDF($T$=0.72)]{\includegraphics[width=.13\linewidth]{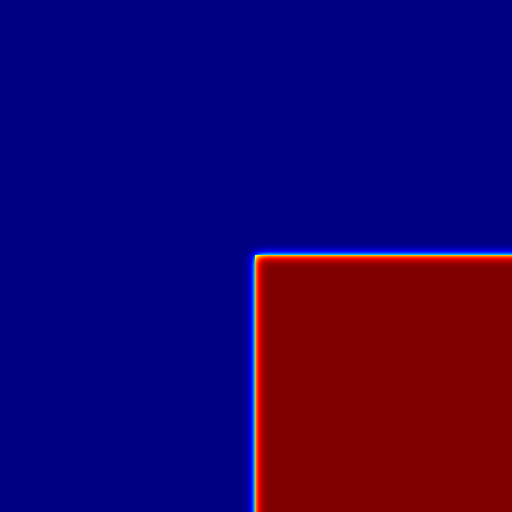}}
\subfigure[CDF($T$=0.03)]{\includegraphics[width=.13\linewidth]{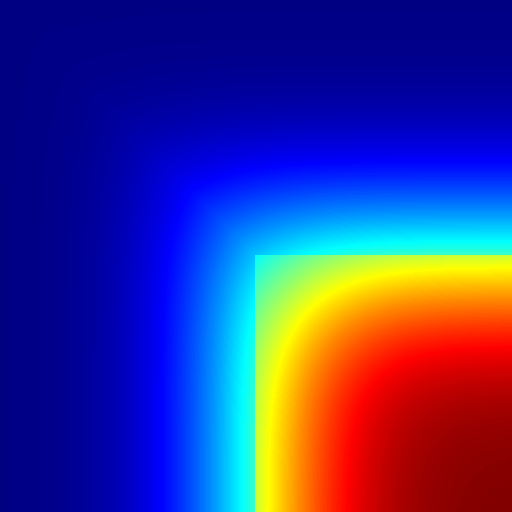}}
\subfigure[CDF($T$=0.84)]{\includegraphics[width=.13\linewidth]{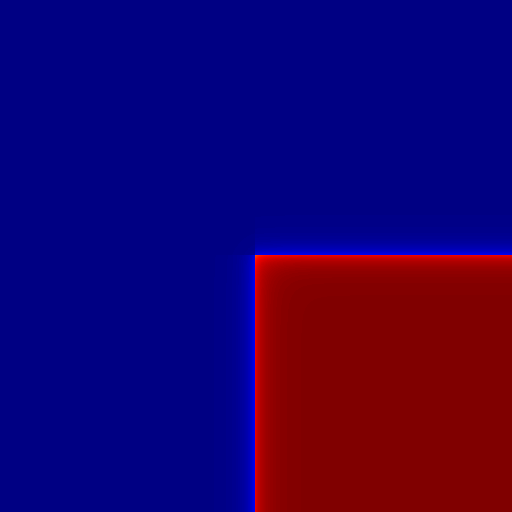}}
\subfigure[CDF($T$=0.80)]{\includegraphics[width=.13\linewidth]{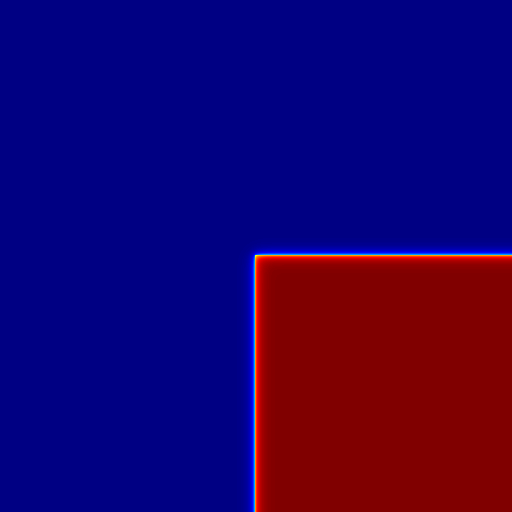}}
\subfigure[CDF($T$=0.60)]{\includegraphics[width=.13\linewidth]{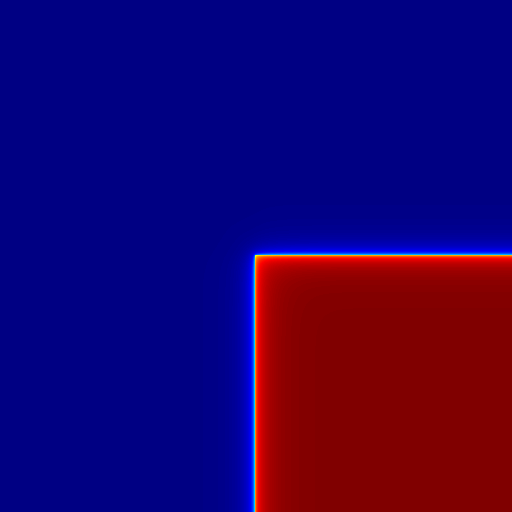}}
\subfigure[CDF($T$=0.58)]{\includegraphics[width=.13\linewidth]{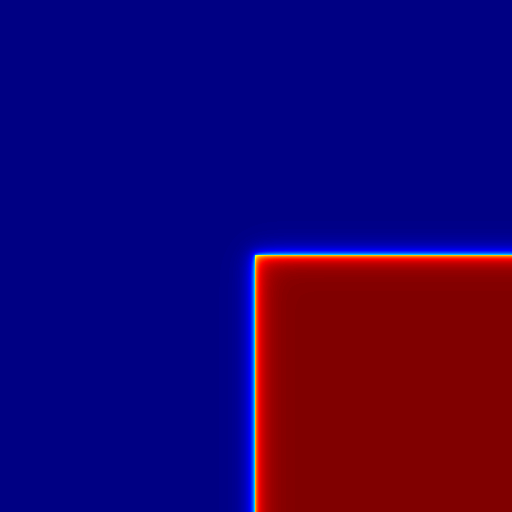}}
  \caption{Different images and their gradient CDFs (original image is from: beyondthehumaneye.blogspot.de).}
  \label{fig:CDF} 
\end{figure*}

\begin{table}[!tb]
\scriptsize
\centering  
\begin{tabular}[width=\linewidth]{c|ccccccc|c} 
\hline\hline
Image set  & 1&	2&	3&	4&	5&	6&	7 & all \\
\hline
$a_1$ & 8.62 & 7.88&	7.76&	8.34&	9.13&	8.07&	8.89&	8.37 \\
$b_1$ & 0.51 & 0.52&	0.54&	0.52&	0.54&	0.53&	0.55&	0.53 \\
$c_1\times 10^5$ & -8.9 & -5.0&	-9.0&	-4.3&	-14&	-5.6&	-10&	-6.3 \\
\hline
SSE$\times 10^{-5}$ & 71 & 69&	1.1&	1.3&	1.3&	1.1&	8.7&	1.2 \\
$R^2$ & 0.89&	0.88&	0.92&	0.84&	0.89&	0.90&	0.92 & 0.91 \\
\hline\hline

$a_2\times 10^5$ & 10.9 & 4.81&	8.27&	4.42&	16.5&	5.21&	11.0&	6.21 \\
$b_2\times 10^2$ & 4.56 & 6.67&	4.22&	2.60&	1.85&	3.62&	1.01&	2.39 \\
$c_2$ & -4.74 & -4.38&	-4.60&	-4.99&	-5.79&	-4.91&	-6.03&	-5.24 \\
\hline
SSE$\times 10^{-5}$ & 1.13 & 0.892&	1.32&	1.47&	1.72&	1.26&	1.17&	1.48 \\
$R^2$ & 0.83&	0.85&	0.90&	0.82&	0.86&	0.89&	0.89 & 0.90 \\
\hline
\end{tabular}
\caption{Parameters and goodness of fit of the two-dimensional models: Model 1 (top) and Model 2 (bottom).} 
\label{table:2D} 
\end{table}

As in the 1D case, we can use the model CDF alternatively to the PDF: 
\begin{equation}
C(\vec{G})=\int\limits^{G^y}_{-\infty}\int\limits^{G^x}_{-\infty}P((u,v))\, \mathrm{d}u\mathrm{d}v \, . 
\label{eq:CDF}
\end{equation}

We approximate the 2D joint CDF by the parametric model:
\begin{equation}
\widetilde{C}(\vec{G})=\widetilde{C}(G^x)\widetilde{C}(G^y)\, ,
\label{eq:CDFM}
\end{equation} where $\widetilde{C}$ is defined in Eq.~\ref{eq:CDFModel2}.
The fitting results are shown in Table~\ref{table:CDF}. 

\begin{table}[h]
\scriptsize
\centering  
\begin{tabular}{c|ccccccc|c} 
\hline\hline
Image set  & 1&	2&	3&	4&	5&	6&	7 & all \\
\hline
T & 0.37 & 0.26&	0.38&	0.35&	0.56&	0.37&	0.7&	0.46 \\
\hline
SSE & 20.7 & 23.1&	19.1&	23.7&	22.9&	19.6&	23.0&	18.8 \\
$R^2$ & 0.99&	0.99&	0.99&	0.99&	0.99&	0.99&	0.99 & 0.99 \\
\hline
\end{tabular}
\caption{Fits of the parametric 2D CDF model.} 
\label{table:CDF} 
\end{table}

The 2D gradient CDF is sensitive to image transformations and potentially provides a powerful prior for the corresponding inverse problem. This is shown in Fig.~\ref{fig:CDF}, where an image is treated by different transformations and the corresponding CDFs are shown below. For the blurred image (Gaussian blur, $\sigma=3$), the frequency of small gradients is increased. For the noisy image ($10\%$ Gaussian noise), the frequency of large gradients is increased. For the super-resolution (SR) image (upsampling factor 9), the frequency of small gradients is increased. 
For the bilateral filter ($w=5$, $\sigma_s=3$, $\sigma_c=0.1$) and the guided filter ($r=10$, $\epsilon=0.01$), the frequency of small gradients is increased.

The model in Eq.~\ref{eq:CDFM} has only a single scalar parameter: $T$. This parameter can easily be determined by solving the following convex  minimization problem:
\begin{equation}
\label{eq:parameter}
\min_T \int(\log(p)+T^2(G^x)^2+2\log(|G^x|))^2\mathrm{d}G^x,
\end{equation} which has the unique analytical solution:
\begin{equation}
\label{eq:parameterT}
T=\left(\frac{-
\int(2\log(|G^x|)+\log(p))(G^x)^2\mathrm{d}{G^x}}{\int (G^x)^4\mathrm{d}G^x}\right)^{\frac{1}{2}}.
\end{equation} 
Therefore, the parameter $T$ can directly be computed. The parameter $T$ leads to a linear gradient field remapping and guarantees an integrable gradient field. There is also an explicit relationship between parameter $a_2$ of the 1D marginal Model 2, and parameter $T$ of the 2D CFD model: $a_2=T^2$. This explains the good sensitivity of 1D Model 2 with respect to this parameter (see Fig.~\ref{fig:CDFchange}), and inspires the use of this parameter to define an image quality metric, as done next. 

\subsection{The Naturalness Factor}\label{sec:Nf}
Comparing any image's parameter $T$ with the expected value $T_\text{pr}$ from natural-scene images, i.e.~from the GDP, we define:

\noindent \fbox{%
  \parbox{\linewidth}{%
      {\bf Definition:} For any image $I$, the {\bf naturalness factor $N_{\!f}$} is defined as $N_{\!f}=\frac{T}{T_\text{pr}}$ and the image $I_n$ generated from $I$ such that $T_n\approx T_\text{pr}$ is called the {\bf naturalized image}.
  }%
}

Since $T_\text{pr}$ is obtained from the average gradient distribution of natural-scene images, the $N_f$ for a natural-scene image is expected to be one, as confirmed in Fig.~\ref{fig:NfDistr}. The range of possible values is $N_f\in [0.2,2.7]$, and the naturalness factor of natural-scene images satisfies a Log-Normally distribution (Fig.~\ref{fig:NfDistr}).

\begin{figure}[H]
  \centering
  \includegraphics[width=0.7\linewidth]{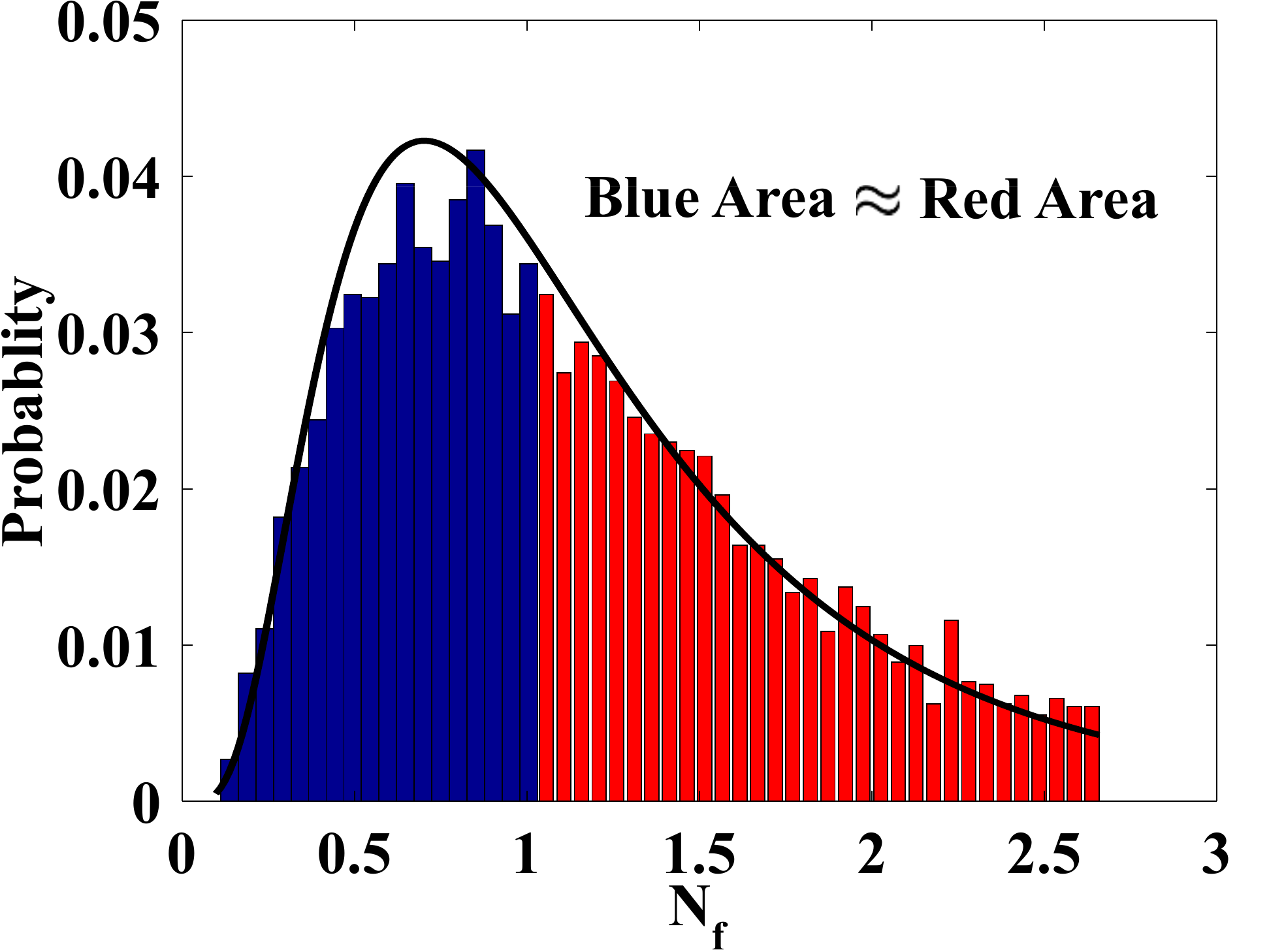}
  \caption{Naturalness Factor ($N_f$) distribution for the natural-scene images of the training set. Blue bars indicate $N_f<1$, red bars $N_f>1$. The black line is a Log-Normal distribution with parameters $\mu=0.039$ and $\sigma=0.613$. 
  }
  \label{fig:NfDistr} 
\end{figure}

\subsection{Convexity, Sparsity, and Entropy of the GDP}
\label{sec:convex}
The TV (Laplacian) prior is so popular because it leads to convex variational models. Even though the Hyper-Laplacian fits the data better, it leads to a non-convex model, which is harder to solve. We show here that our {\textbf{Model 2}} (Eq.~\ref{eq:distr2}) and its 2D variant (Eq.~\ref{eq:2Ddistr2}) are quasi-concave, which means that all iso-sets are convex, hence simplifying optimization.
\begin{lem}
Eq.~\ref{eq:distr2} and Eq.~\ref{eq:2Ddistr2} are quasi-concave.
\end{lem}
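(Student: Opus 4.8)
The plan is to exploit the fact that both Model~2 expressions are \emph{radial} in $\vec{G}$. In one dimension $(G^x)^2=|G^x|^2$, and in two dimensions $|G^x|^2+|G^y|^2=\lVert\vec{G}\rVert_2^2$, so in both cases Eq.~\ref{eq:distr2} and Eq.~\ref{eq:2Ddistr2} can be written as $\log P=g(s)$ with $s=\lVert\vec{G}\rVert_2^2$ and $g(s)=-a_2 s-\log(b_2+s)+c_2$. The two structural observations I would build on are that the map $\vec{G}\mapsto\lVert\vec{G}\rVert_2^2$ is convex, and that $g$ is a scalar function of the single nonnegative variable $s$. This reduces the claim to a statement about the composition of a monotone scalar function with a convex function, and treats the 1D and 2D versions simultaneously.

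First I would establish that $g$ is strictly decreasing on $[0,\infty)$. Differentiating gives $g'(s)=-a_2-\frac{1}{b_2+s}$, which is strictly negative for all $s\ge 0$ whenever $a_2>0$ and $b_2>0$ (the regime of all fitted parameters; cf.\ Table~\ref{table:2D}). Hence $g$ is continuous and strictly monotone, so its inverse $g^{-1}$ exists on the range of $g$, with maximal value $g(0)=c_2-\log b_2$.

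Next I would compute the superlevel (iso) sets of $\log P$ directly. Because $g$ is decreasing,
\begin{equation}
\{\vec{G}:g(\lVert\vec{G}\rVert_2^2)\ge\alpha\}=\{\vec{G}:\lVert\vec{G}\rVert_2^2\le g^{-1}(\alpha)\}.
\end{equation}
The right-hand side is a sublevel set of the convex function $\lVert\vec{G}\rVert_2^2$, i.e.\ a Euclidean ball, which is convex; when $\alpha>g(0)$ the set is empty and hence trivially convex. Since every iso-set is convex, $\log P$ is quasi-concave, proving the lemma.

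The argument is short, and the only point requiring care is the direction of the monotonicity: it is precisely the sign reversal in $g$ that turns the superlevel sets of the composition into sublevel sets $\{\lVert\vec{G}\rVert_2^2\le R\}$ of the norm, i.e.\ into convex balls. I therefore expect the main (and very mild) obstacle to be nothing more than verifying the domain conditions $a_2,b_2>0$ under which $g'$ is genuinely negative everywhere, together with the observation that both equations are radial functions of $\vec{G}$ so that a single reduction settles both cases at once.
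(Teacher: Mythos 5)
Your proof is correct and follows essentially the same route as the paper's: both rest on the observation that Model~2 is a monotonically decreasing function of the distance from the origin, so its superlevel sets are intervals (1D) or disks (2D), hence convex. Your formulation as the composition $g(\lVert\vec{G}\rVert_2^2)$ with $g'<0$ merely unifies the paper's two steps (monotonicity on either side of zero in 1D, rotation about the axis in 2D) into one argument and makes explicit the parameter conditions $a_2,b_2>0$ that the paper leaves implicit.
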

\begin{proof}
For Eq.~\ref{eq:distr2}, we have:
\begin{equation}
\log (P(G_1^x)) < \log (P(G_2^x))~~ \mathrm{when}~~ G_1^x>G_2^x>0
\end{equation}
\begin{equation}
\log (P(G_1^x)) < \log (P(G_2^x))~~\mathrm{when}~~G_1^x<G_2^x<0.
\end{equation}
This monotonicity property with respect to 0 ensures that Eq.~\ref{eq:distr2} is quasi-concave.
Eq.~\ref{eq:2Ddistr2} is a rotation of Eq.~\ref{eq:distr2} with respect to the $y$ axis. Therefore, the set $\left\{ \vec{G}\,:\, \log (P(\vec{G}))\geq h\right\}$ is convex $\forall h$. 
\end{proof}
The overall energy function with Model 2 used as a prior, however, is not quasi-concave, 
but can be written as the difference of two convex functions. Such optimization problems are known as D.C.~problems (short for: different of convex), and efficient solvers are available for them. The present model hence leads to efficiently solvable variational problems while still fitting the data better than previous models. Table \ref{table:advantage} qualitatively compares different models.

\begin{table}[h]
\footnotesize
\centering  
\begin{tabular}{c|c|c|c} 
\hline\hline
Model  & convexity & accuracy & computation cost\\
\hline
Eq.~\ref{eq:distr}& quasi& high & medium\\
Eq.~\ref{eq:distr2}& quasi& high & low\\
Hyper-Laplacian & quasi& medium & low\\
Laplacian & yes& low & low\\
Gaussian & yes & very low & low \\
\hline
Eq.~\ref{eq:2Ddistr}& no& high & medium\\
Eq.~\ref{eq:2Ddistr2}& quasi& high & low\\
2D Hyper-Lap & no& medium & low\\
2D Laplacian & yes& low & low\\
2D Gaussian & yes & very low & low \\
\hline
Eq.~\ref{eq:CDFM}& no& medium & low\\
\hline
\end{tabular}
\caption{Comparison of different models.} 
\label{table:advantage} 
\end{table}

The gradient distribution balances sparsity and signal encoding. We define the sparsity of $p(\vec{G})$ as:
\begin{equation}
s_p(h) =\frac{\iint\chi_{p(\vec{G})>h}\mathrm{d}G^x\mathrm{d}G^y}{\iint\mathrm{d}G^x\mathrm{d}G^y} ,
\end{equation} where $\chi :R\rightarrow \{0,1\}$ is an indicator function. We further define:
\begin{equation}
C_h(h) =\iint \chi_{p(\vec{G})>h}p(\vec{G})\,\mathrm{d}G^x\mathrm{d}G^y \, ,
\end{equation}
which is the total probability mass on levels larger than $h$. The sparsity $s_p(h)$ measures how many words (of some dictionary) are needed to encode the information in $C_h(h)$ with an accuracy or tolerance $h$.  
The relationship between $s_p$ and $C_h$ is shown in Fig.~\ref{fig:sparsityD} for the image data from the training set. We observe that the gradient signal is sparse already at a low cutoff $h$. Figure~\ref{fig:sparsityM} shows the sparsity information curves for different parametric models of the GDP. Our Models 1 and 2 are about as sparse as the data, whereas all other models are less sparse and hence thrown away more information. This suggests that GDP could also be potentially interesting for compressed sensing~\citep{donoho:2006}. 

\begin{figure}[h]
  \centering
  {\includegraphics[width=0.7\linewidth]{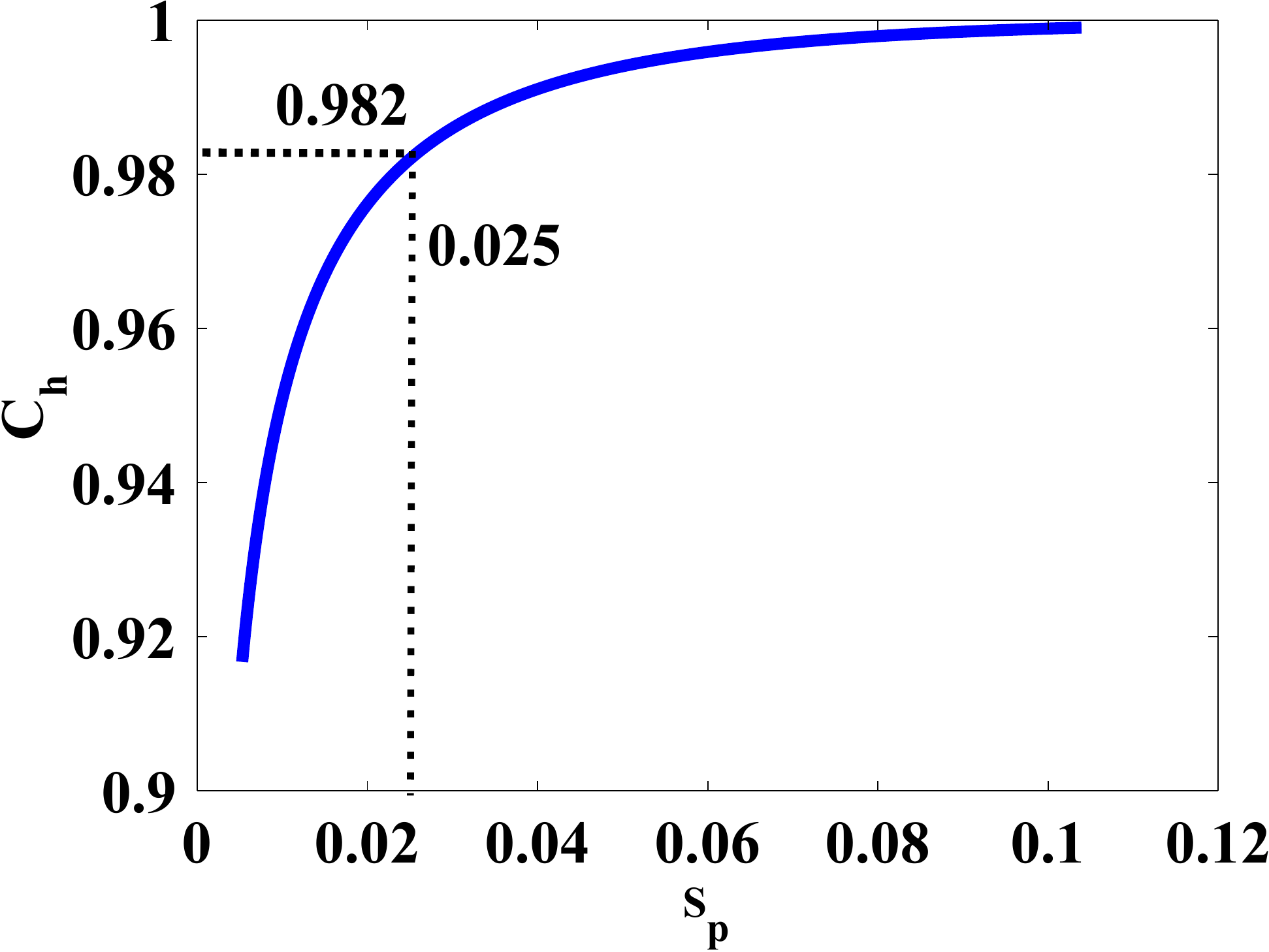}}
  \caption{Sparsity of the gradient distribution. $98.2\%$ of the information can be encoded with only $2.5\%$ of the dictionary at a cutoff level of $h=3.6\times 10^{-6}$.}
  \label{fig:sparsityD}
\end{figure}

\begin{figure}[h]
  \centering
  \subfigure[]{\includegraphics[width=0.7\linewidth]{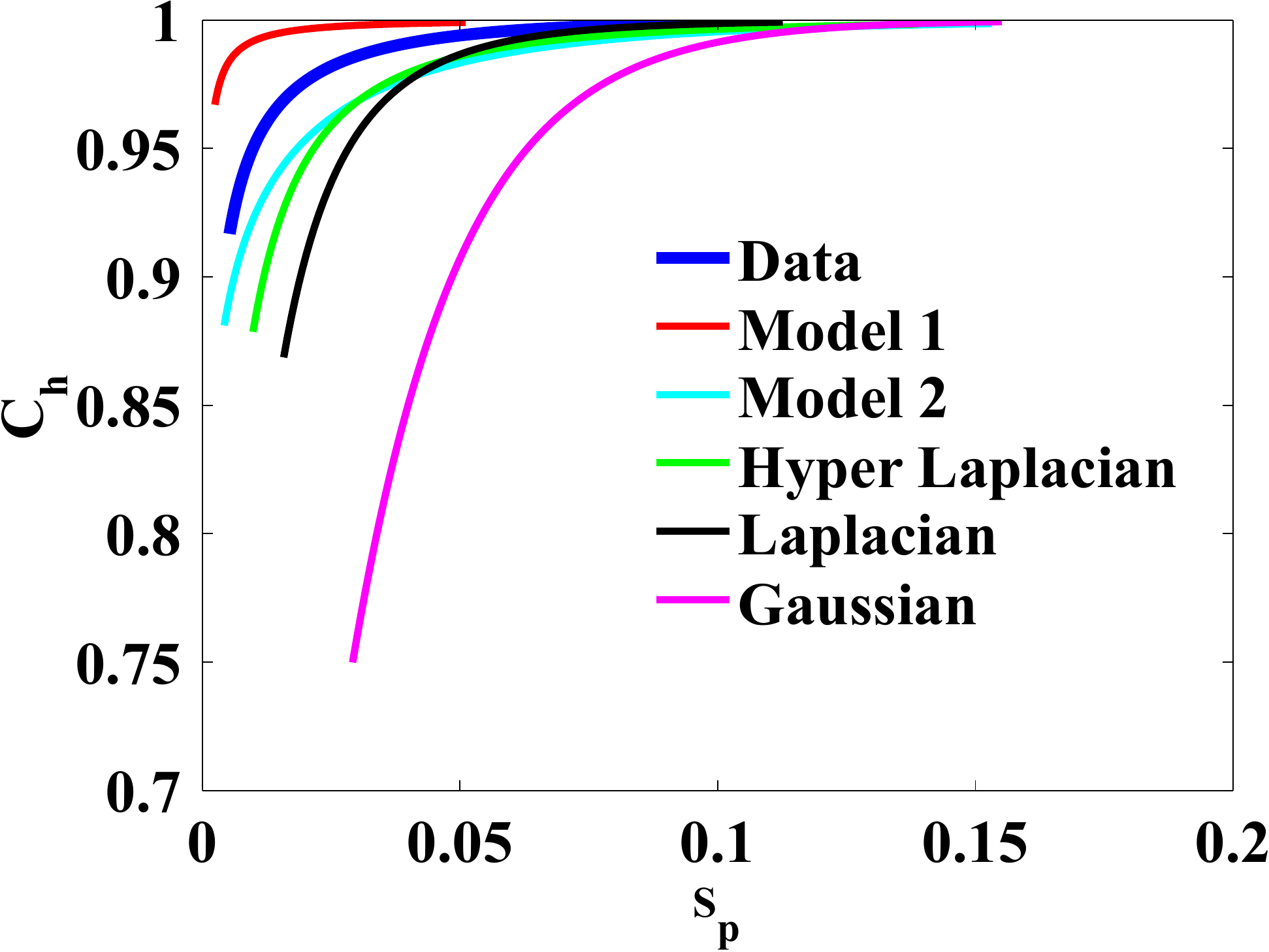}}
  \caption{Sparsity comparison for different models.}
  \label{fig:sparsityM} 
\end{figure}

The information of a signal is closely related to its entropy, describing the macroscopic behavior of the system as discussed above. The entropy of a 2D gradient distribution is defined as:
\begin{equation}
\label{eq:entropy}
E(p) = -\iint p(\vec{G})\log(p(\vec{G}))\,\mathrm{d}G^x\mathrm{d}G^y\, .
\end{equation} 
Since the entropy is entirely determined by the gradient distribution, imposing a gradient prior implies imposing an entropy prior. The entropy distribution of the natural-scene images from the training dataset is show in Fig.~\ref{fig:entropy}. It is normally distributed. The average entropy and the entropies of different parametric models are given in Table~\ref{table:entropy}.

\begin{figure}[h]
  \centering
  {\includegraphics[width=0.7\linewidth]{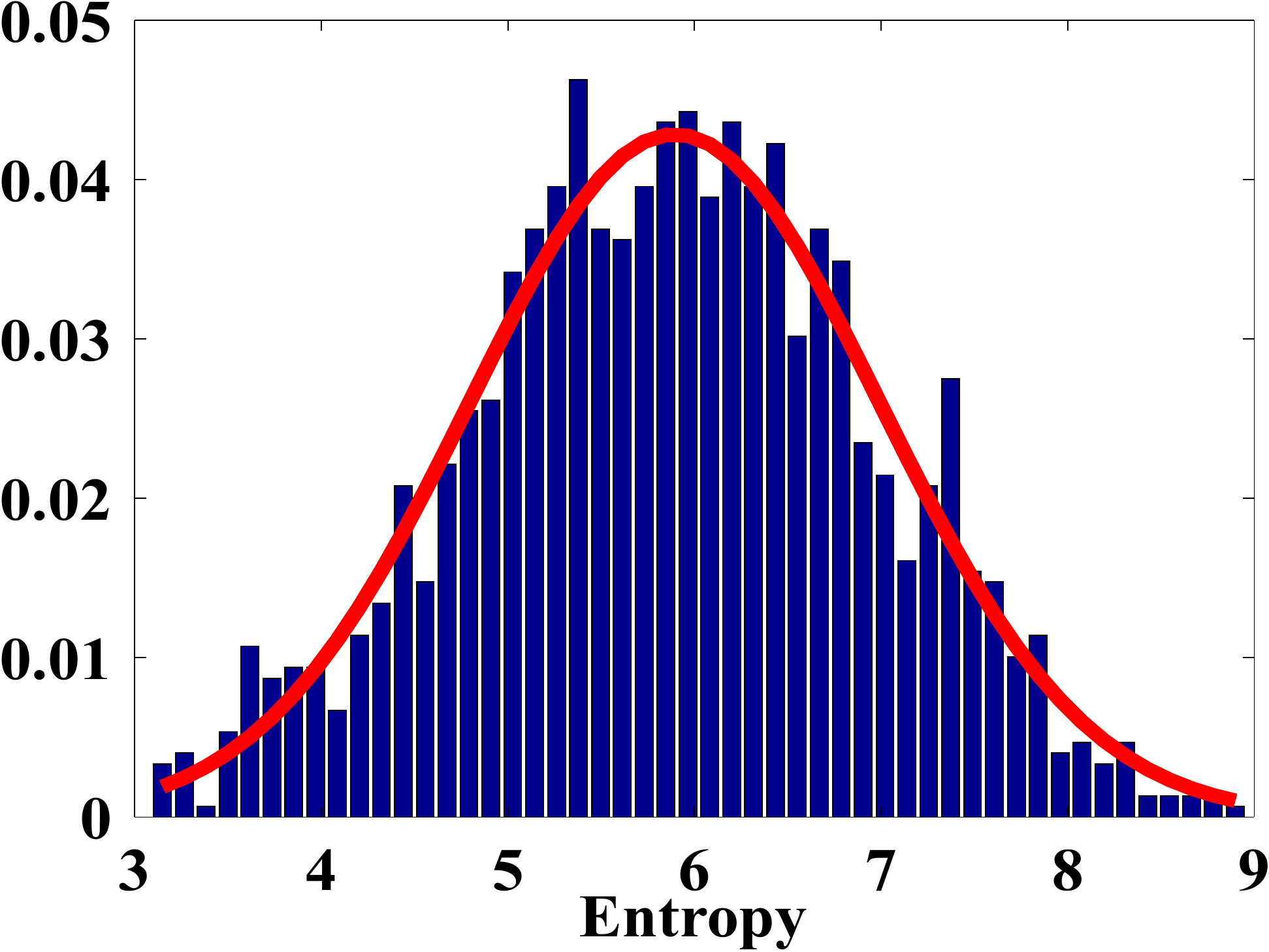}}
  \caption{Entropy distribution of the natural-scene images from the training set. The average is 5.88. The red line is the Gaussian $0.043\exp\left[{-(\frac{E-5.89}{1.56})^2}\right]$.}
  \label{fig:entropy} 
\end{figure} 

\begin{table}[h]
\scriptsize
\centering  
\begin{tabular}{c|c|c|c|c|c|c} 
\hline\hline
  & data & Eq.~\ref{eq:2Ddistr} & Eq.~\ref{eq:2Ddistr2} & HyperLap & Laplace & Gaussian\\
\hline
Entropy  & 5.88 & 9.05 & 2.05 & 2.94 & 31.2 & 223\\
\hline
\end{tabular}

\caption{Entropy of the training data and of different parametric models.} 
\label{table:entropy} 
\end{table}

\section{Correlation with Image Quality}
\label{sec:correlation}
\begin{figure*}[!htb]
\centering
\includegraphics[width=.32\linewidth ]{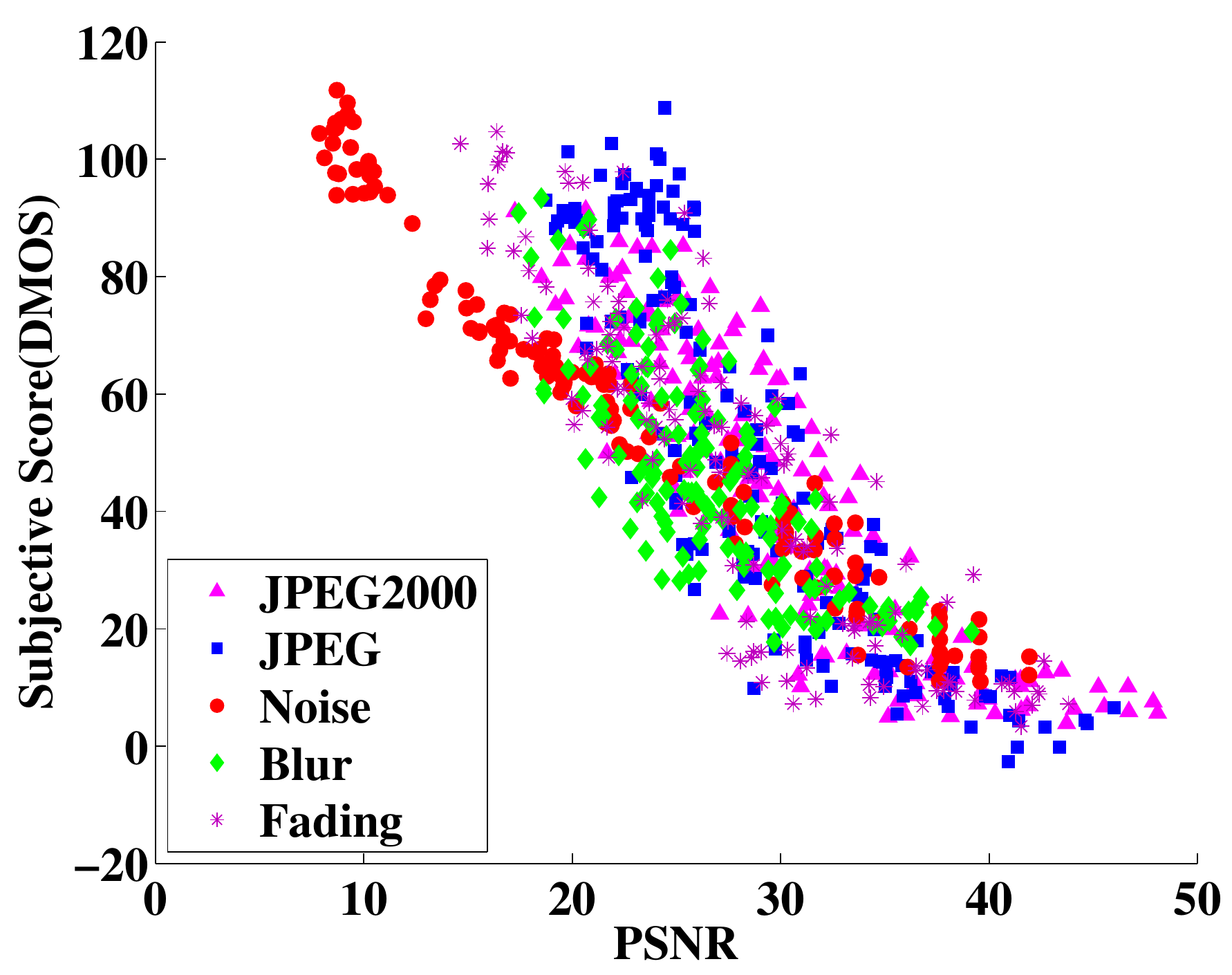}
\includegraphics[width=.32\linewidth]{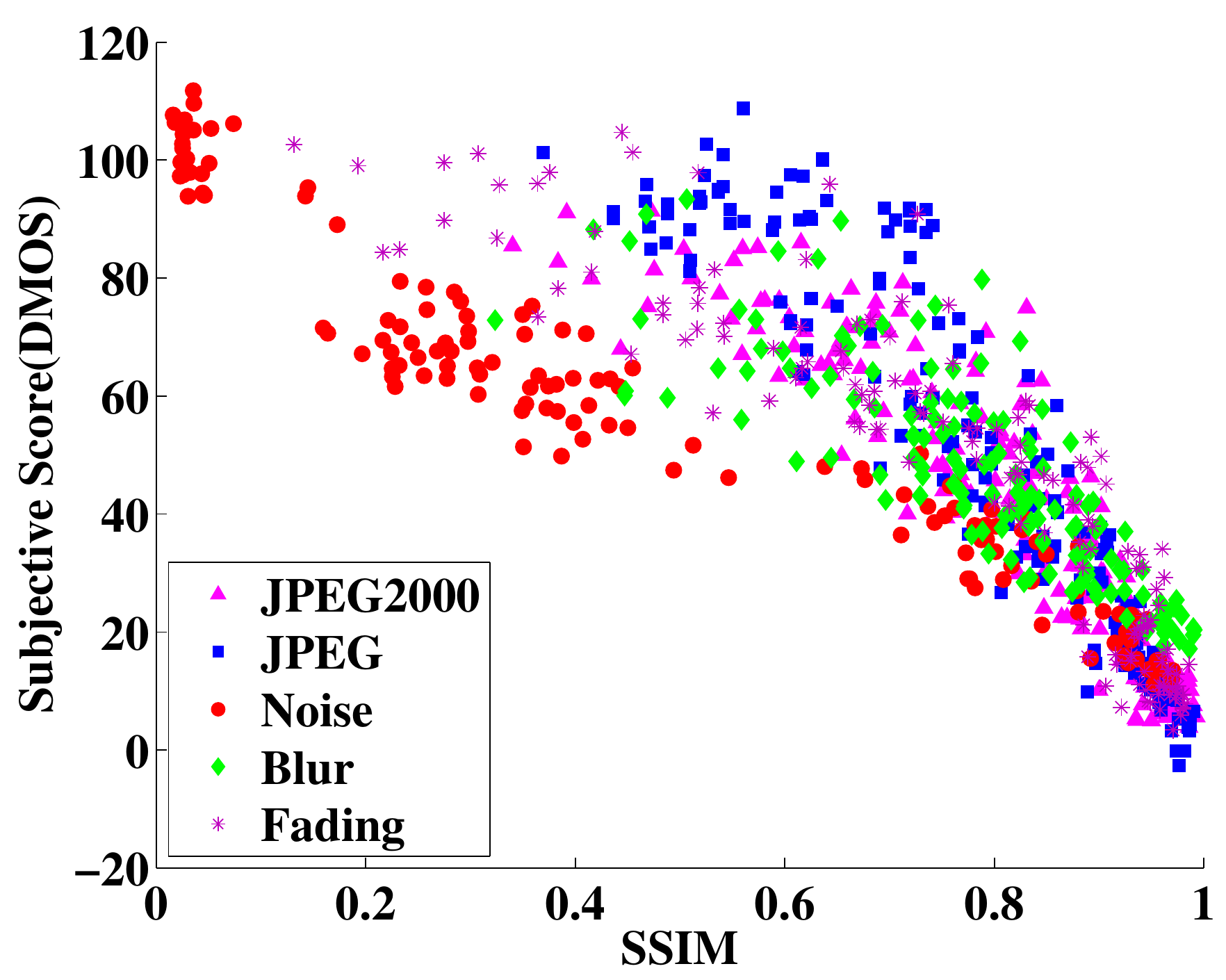}
\includegraphics[width=.32\linewidth]{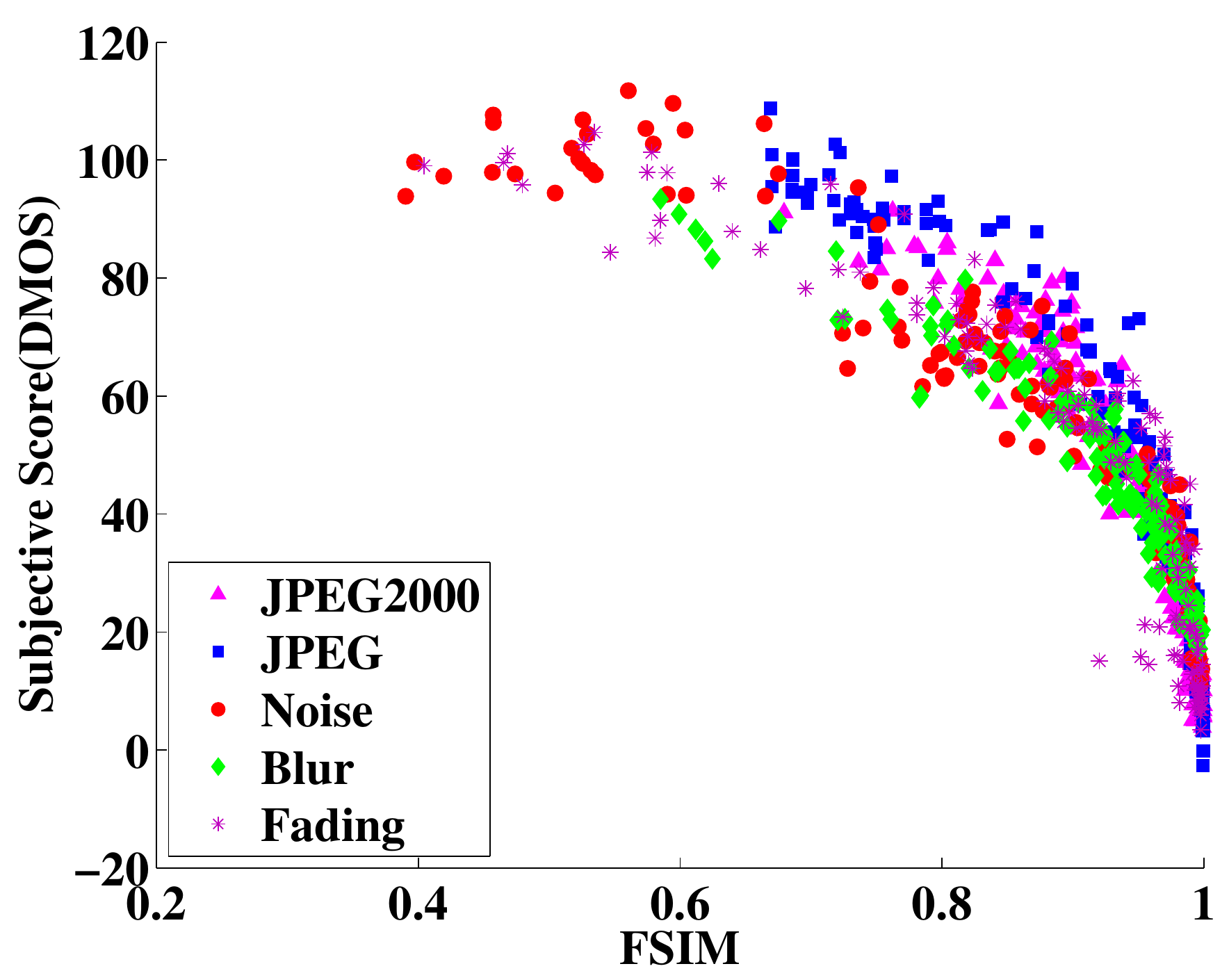}

\includegraphics[width=.32\linewidth]{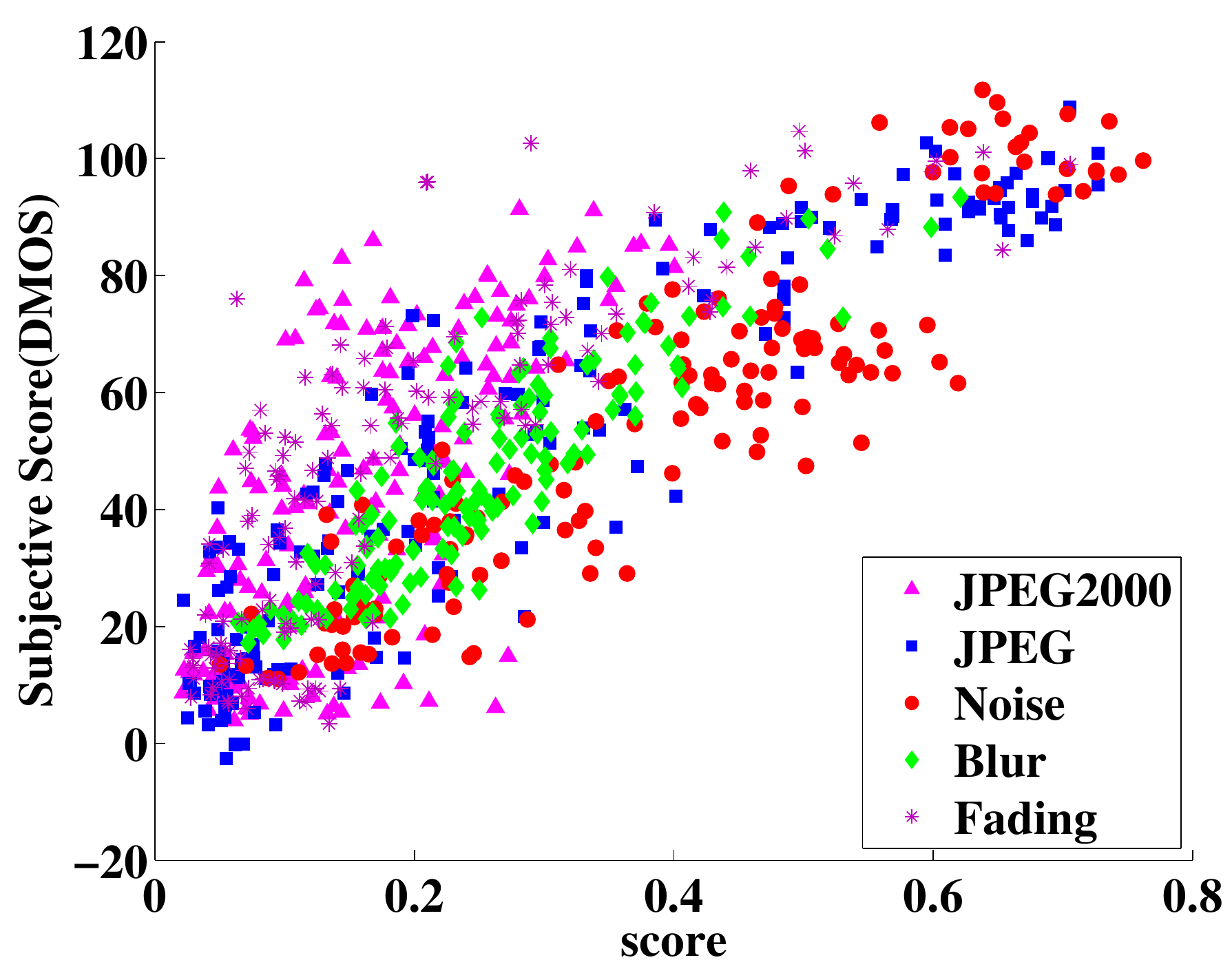}
\includegraphics[width=.32\linewidth]{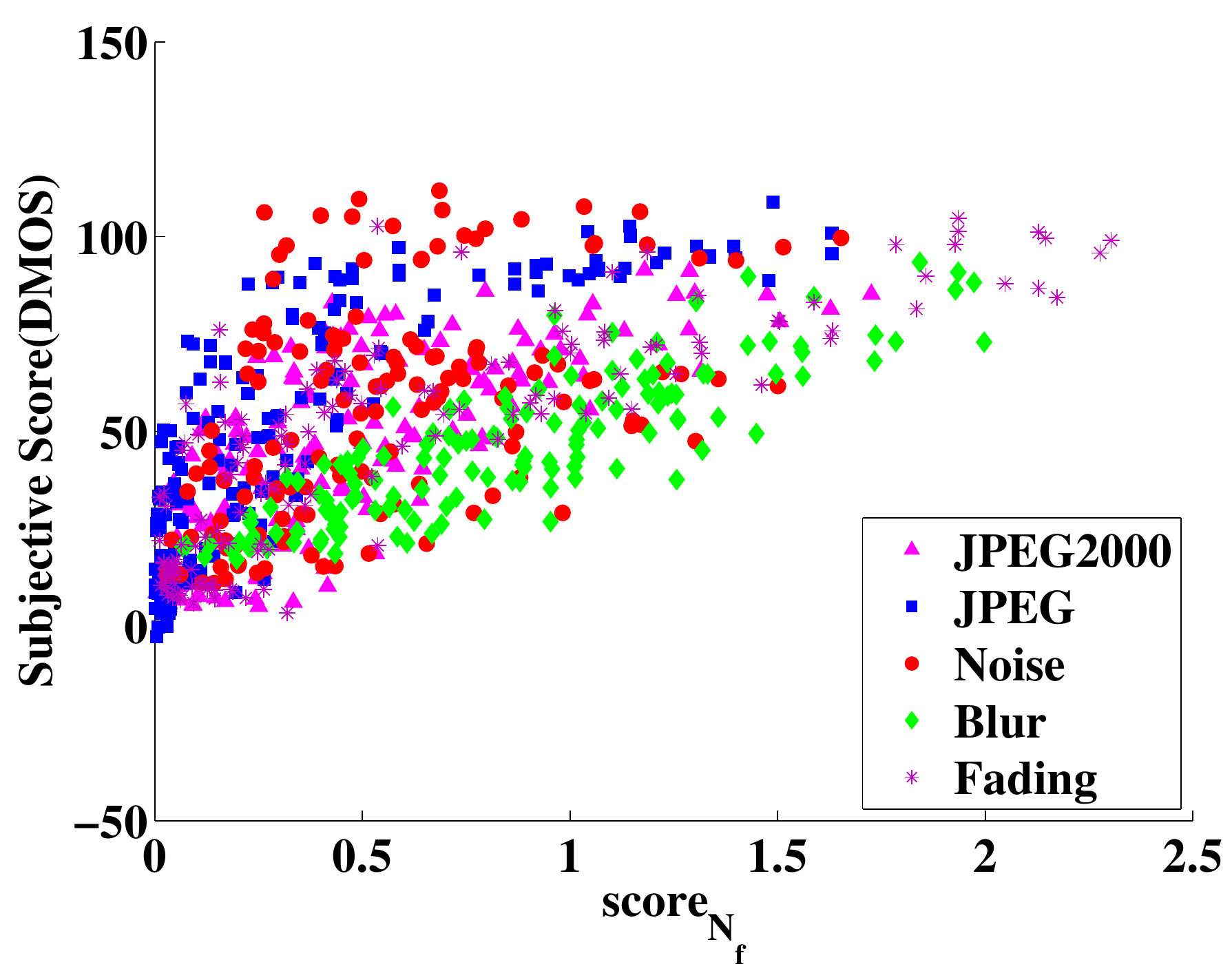}
\includegraphics[width=.32\linewidth]{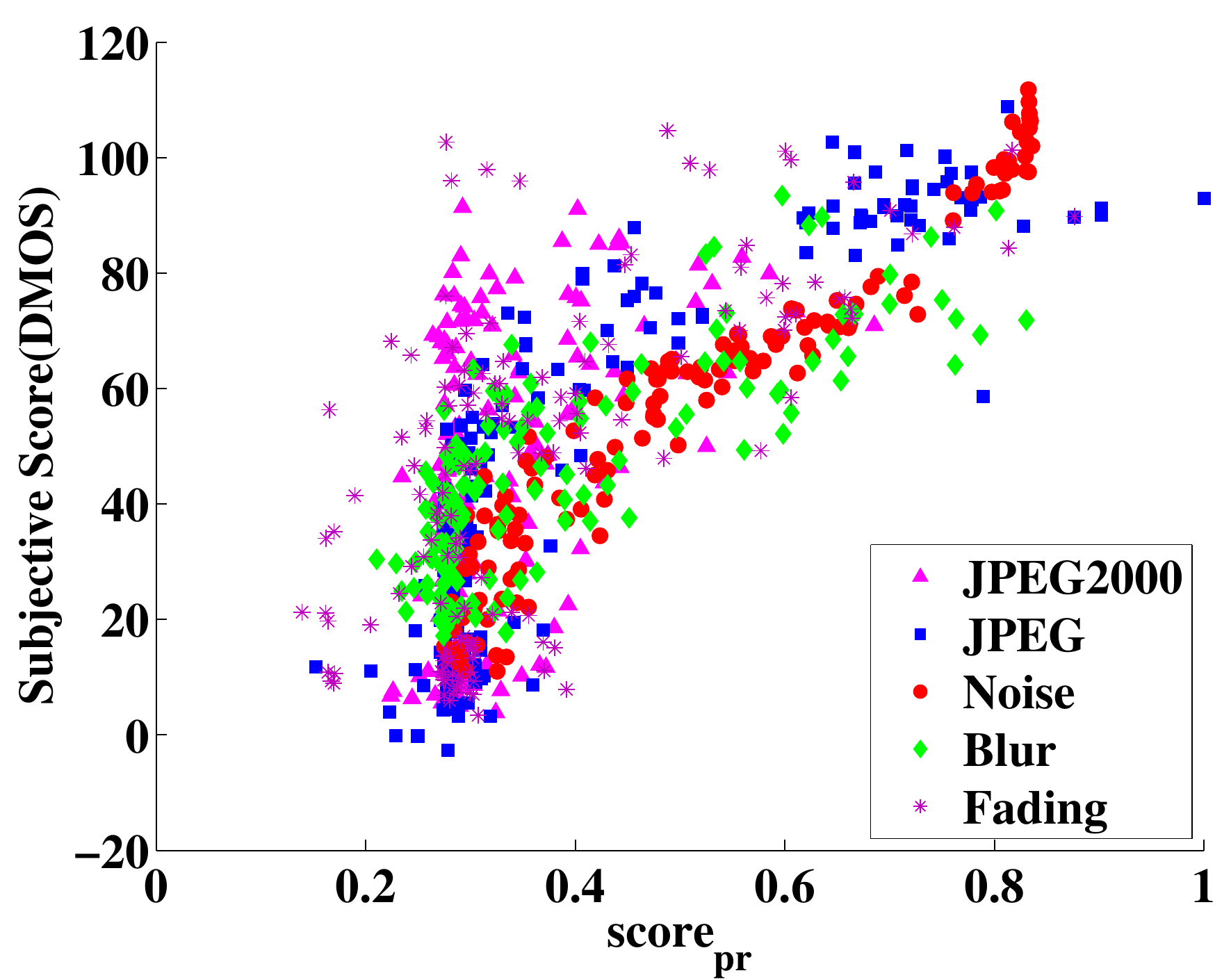}
\caption{Image quality assessment results on the LIVE benchmark. We show the scatter plots of different objective image-quality measures (from left to right: PSNR, SSIM, FSIM; $score$, $score_{pr}$, $N_{\!f}$) with subjectively perceived image quality as quantified by the DMOS score. The different colors correspond to the different distortion in the LIVE benchmark, as given in the inset legends.}
 \label{fig:quality} 
\end{figure*}
As a second key requirement for an image-processing prior, besides stability, we show next that the GDP is highly correlated with subjectively perceived image quality. We show that the distance between the gradient distribution of any given image and the GDP correlates with image quality. We use the standard LIVE benchmark dataset for image quality assessment~\citep{SSIM}. In order to measure the distance between two gradient distributions, we test the $\ell_2$ norm, $\ell_1$ norm, cosine distance, the Earth Mover Distance (EMD), $\chi^2$ distance, and the Hellinger distance. Using this distance, we form the objective image-quality score:
\begin{eqnarray}
\label{eq:score}
score = Distance(p(\nabla I^{\mathrm{true}}),p(\nabla I^{\mathrm{distorted}})).
\end{eqnarray} 
This can be further simplified to only use $N_{\!f}$:
\begin{eqnarray}
\label{eq:scoreNf}
score_{N_{\!f}} = |N_{\!f}^{\mathrm{true}}-N_{\!f}^{\mathrm{distorted}}|.
\end{eqnarray}
If the ground-truth image is unknown (i.e., in a real-world application rather than a benchmark setting), the score is defined with respect to the GDP:
\begin{eqnarray}
\label{eq:scorePr}
score_{pr} = Distance(p^{\mathrm{pr}},p(\nabla I^{\mathrm{distorted}}))\, .
\end{eqnarray} 

A measure of subjectively perceived image quality is provided by the LIVE benchmark's DMOS (difference mean opinion score). Different correlations between DMOS and our objective score are reported in Tab.~\ref{table:correlation}. 
\begin{table}[!htb]
\scriptsize
\centering
\begin{tabular}{c|cccccc}
\hline\hline
     & $\ell_2$ & $\ell_1$ & cos & EMD & $\chi^2$ & Hellinger\\
\hline
PCC & 0.6193 & 0.7926 & 0.6277 & 0.5172 & 0.7662 & \textbf{0.8687}\\
SCC & 0.6434 & 0.7773 & 0.6114 & 0.7576 & 0.7977 & \textbf{0.8630}\\
KCC & 0.4588 & 0.5822 & 0.4355 & 0.5639 & 0.6027 & \textbf{0.6745}\\
\hline
\end{tabular}
\caption{Correlations between subjectively perceived image quality (DMOS from LIVE benchmark~\citep{SSIM}) and our objective score using different distance metrics. The following correlations are reported: Pearson's linear correlation coefficient (PCC), Spearman's rank-order correlation coefficient (SCC), and Kendall's rank-order correlation coefficient (KCC). In all cases, the Hellinger distance between the gradient distributions shows the best correlation.}
\label{table:correlation}
\end{table}    

In all cases, the Hellinger distance between the gradient distributions shows the best correlation. This suggests that using this distance metric, the GDP can directly be used to construct a novel image-quality measure. We compare this new measure (i.e., the score of Eq.~\ref{eq:scorePr}) with other stat-of-art image quality assessment methods, such as PSNR (peak signal-to-noise ratio), SSIM~\citep{SSIM}, and FSIM~\citep{FSIM}. The results are shown in Fig.~\ref{fig:quality} and Table~\ref{table:quality}. The Hellinger distance between the gradient distribution of an image and the ground-truth GDP shows high linearity with DMOS, rendering the GDP a favorable prior for image processing. When used as an image-quality metric, however, the score without knowing ground truth ($score_{pr}$) is less good than specialized metrics like SSIM~\citep{SSIM} and FSIM~\citep{FSIM} (Table~\ref{table:quality}). This is entirely expected, and it is not out aim to propose a new image-quality metric. However, since differences in the gradient distribution correlate with image quality, imposing the GDP is expected to improve an image's quality. Together with its stability, this renders the GDP a good prior for practical applications, as illustrated in Sec.~\ref{sec:app}.

\begin{table}
\scriptsize
\centering
\begin{tabular}{c|cccccc}
\hline\hline
      & PSNR & SSIM & FSIM & $score$ & $score_{N_f}$ & $score_{pr}$
      \\
\hline
PCC  & -0.8585 & -0.8252 & -0.8586  & \color{ForestGreen}{\textbf{0.8687}} & 0.669 & 0.761 
\\
SCC  & -0.8756 & -0.9104 & -0.9634  & 0.8630 & 0.712 & 0.706 
\\
KCC  & -0.6865 & -0.7311 & -0.8337  & 0.6745 & 0.512 & 0.522 
\\
\hline
\end{tabular}
\caption{Image quality assessment results on the LIVE benchmark.}
\label{table:quality}
\end{table}    

\section{Imposing the GDP in Variational Problems}
\label{sec:remap}
In a variational framework, there are two ways to impose a prior: as a {\em hard constraint} or as a {\em soft constraint}. Both are possible for the GDP. For a hard constraint, the GDP is imposed by gradient remapping. The mapped gradient field is then used to reconstruct the output image by solving a Poisson equation with proper boundary conditions. For a soft constraint, the GDP can be imposed as a regularization term, leading to a minimization problem. As shown in Sec.~\ref{sec:app}, the decision between using a soft or hard constraint depends on the specific application.

\subsection{As a hard constraint}
We impose the GDP prior as a hard constraint by gradient-field remapping. The idea is to map the original gradient field, using a linear or nonlinear mapping function, into a new gradient field that exactly satisfies the prior. From this remapped gradient field, the output image is then reconstructed by solving a Poisson equation. In the special case of a linear mapping function, the reconstruction simplifies to rescaling the image pixel values.

While it is possible to directly map the input gradient field to the desired distribution~\citep{ExactHist, ExactHistVariation}, such non-parametric mappings lead to numerical ambiguity due to discretization of the distribution into bins. This approach also does not guarantee the output gradient field to be integrable. We hence instead propose the use of parametric mapping functions. At the expense of some accuracy, they guarantee integrability of the result and lead to well-posed reconstruction problems.

\subsubsection{Gradient field remapping}
\label{sec:GDS}
Let $Map$ remap the gradient field to a new field $\vec{G}_n$, which satisfies the GDP:
\begin{equation}
\label{eq:map}
\vec{G}_n = Map(\vec{G}),\ \ s.t.\ \ p(\vec{G}_n) = p^{\mathrm{pr}}.
\end{equation}

In general, $Map$ can be non-parametric, parametric, nonlinear, or linear. A parametric nonlinear mapping leads to an integrable field, and the final image can be obtained by solving Poisson equation (Eq.~\ref{eq:Poisson}), as outlined below. A linear parametric mapping leads to a simple rescaling of the pixel intensities and no Poisson equation needs to be solved. Fig.~\ref{fig:Poisson} illustrates the effects of different types of remapping. Linear remapping amounts to a simple rescaling of the intensities such that the gradient distribution fits the GDP in average. However, the fit obtained by nonlinear remapping is much better, but requires solving a Poisson equation. The gradient field entropies after remapping are 6.03 (original), 5.86 (reconstruction without remapping), 6.25 (linear remapping), and 5.79 (non-linear remapping), respectively. As expected, remapping makes the entropy of $\vec{G}_n$ closer to 5.88, the average entropy of natural-scene images. 

\subsubsection{Image reconstruction} 
\label{sec:reconstruction}
Reconstructing the output image from the remapped gradient field amounts to minimizing the following $q\mbox{-}Dirichlet$ energy:
\begin{equation}
\label{eq:reconstruction}
\begin{split}
& \arg \min_{I_n} \left\{\|\nabla I_n - \vec{G}_n \|_{q}\right\}\\
& s.\, t.~~ I_n\in Lip(\Omega)\, ,
\end{split}
\end{equation}
where $q>1$, $\|\cdot \|_q$ is the standard $\ell_q$ norm, and $Lip(\Omega)$ is the space of Lipschitz-continuous functions on domain $\Omega$. 

\begin{figure}[H]
\centering
\subfigure[Original]{
\includegraphics[width=0.48\linewidth]{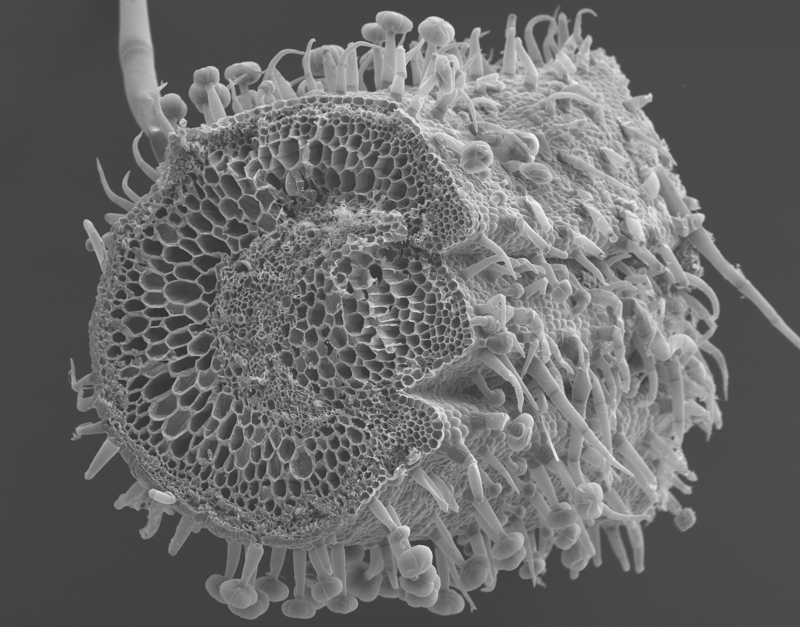}}
\subfigure[Original]{
\includegraphics[width=0.48\linewidth]{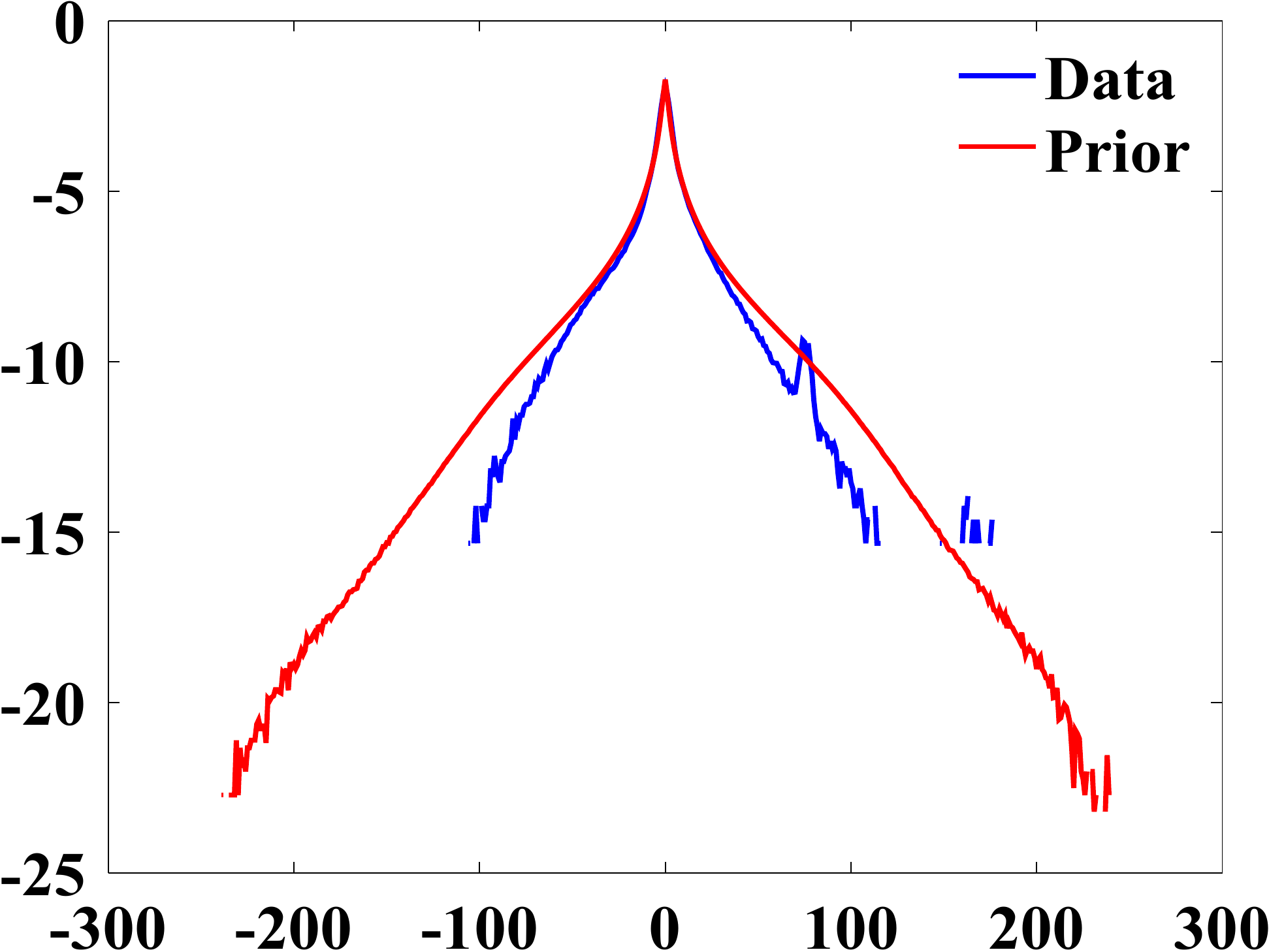}}
\subfigure[Reconstruction w/o map]{
\includegraphics[width=0.48\linewidth]{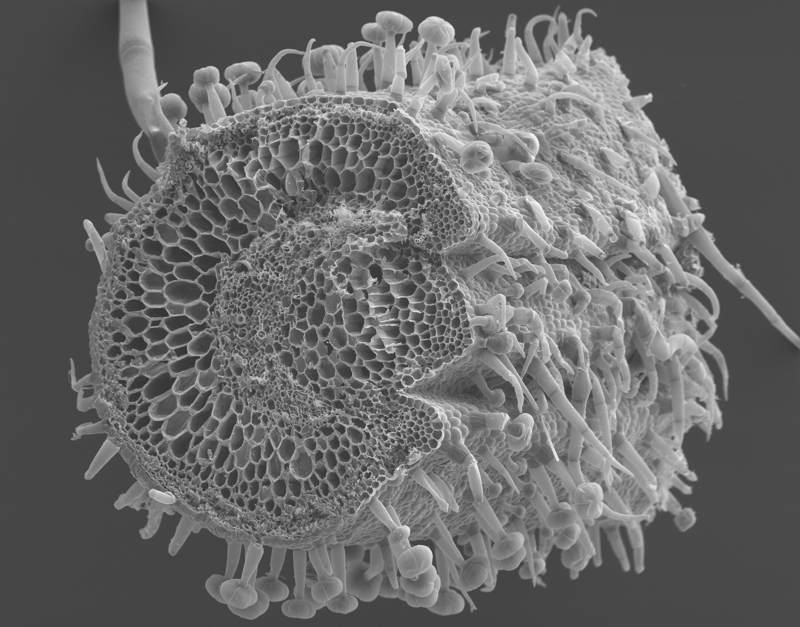}}
\subfigure[Reconstruction w/o map]{
\includegraphics[width=0.48\linewidth]{PoissonResultNoMap.pdf}}
\subfigure[Result with linear map]{
\includegraphics[width=0.48\linewidth]{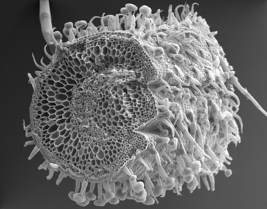}}
\subfigure[Result with linear map]{
\includegraphics[width=0.48\linewidth]{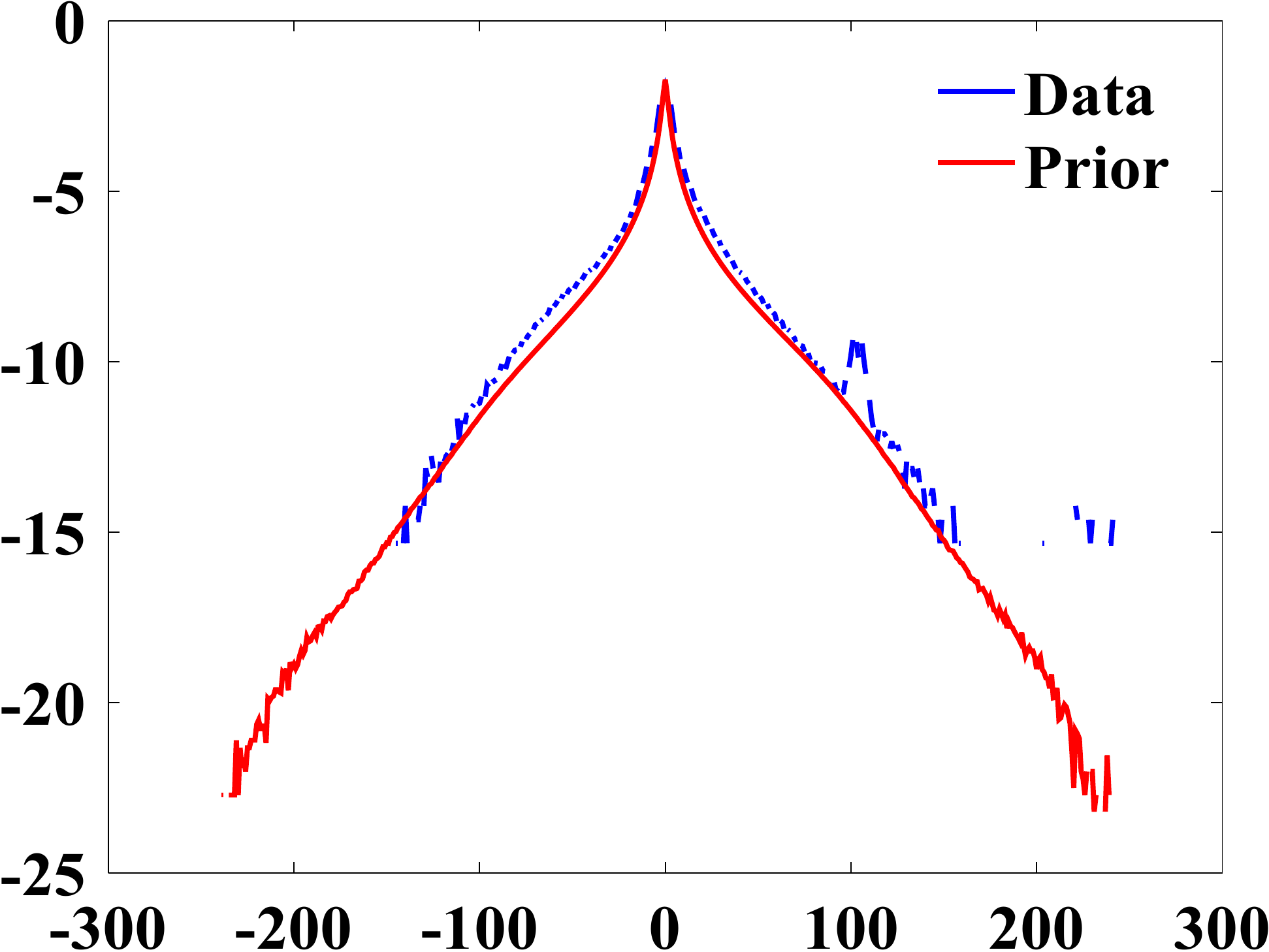}}
\subfigure[Result with nonlinear map]{
\includegraphics[width=0.48\linewidth]{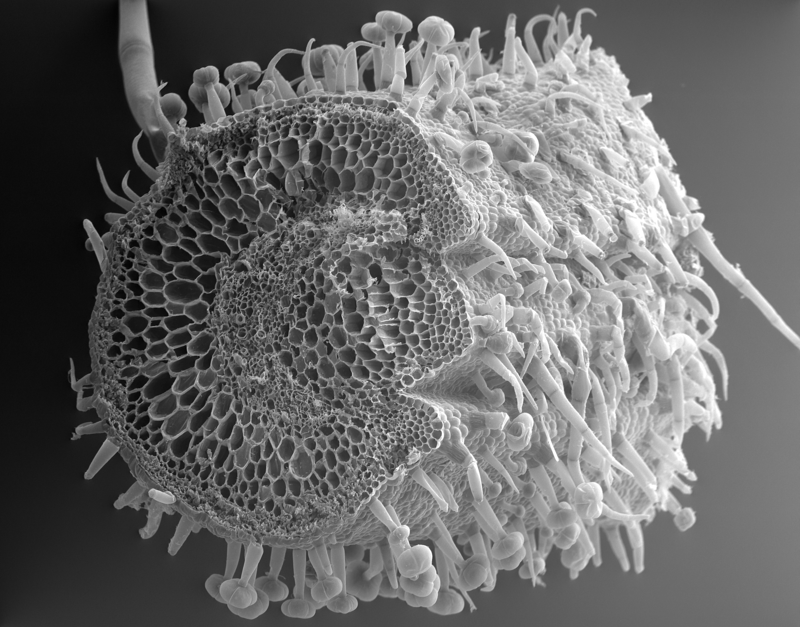}}
\subfigure[Result with nonlinear map]{
\includegraphics[width=0.48\linewidth]{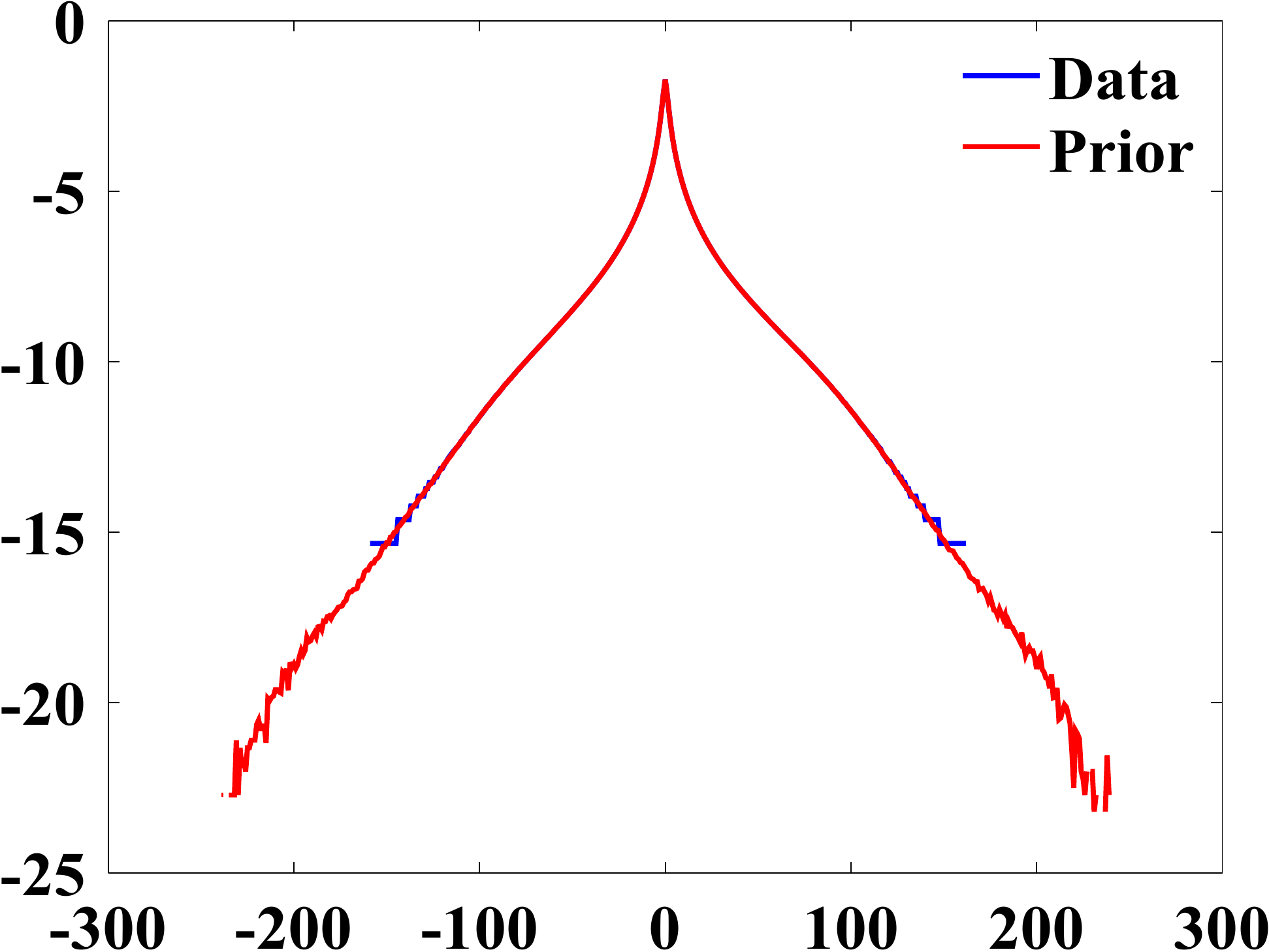}}
\subfigure[Linear and nonlinear mapping functions used]{
\includegraphics[width=0.6\linewidth]{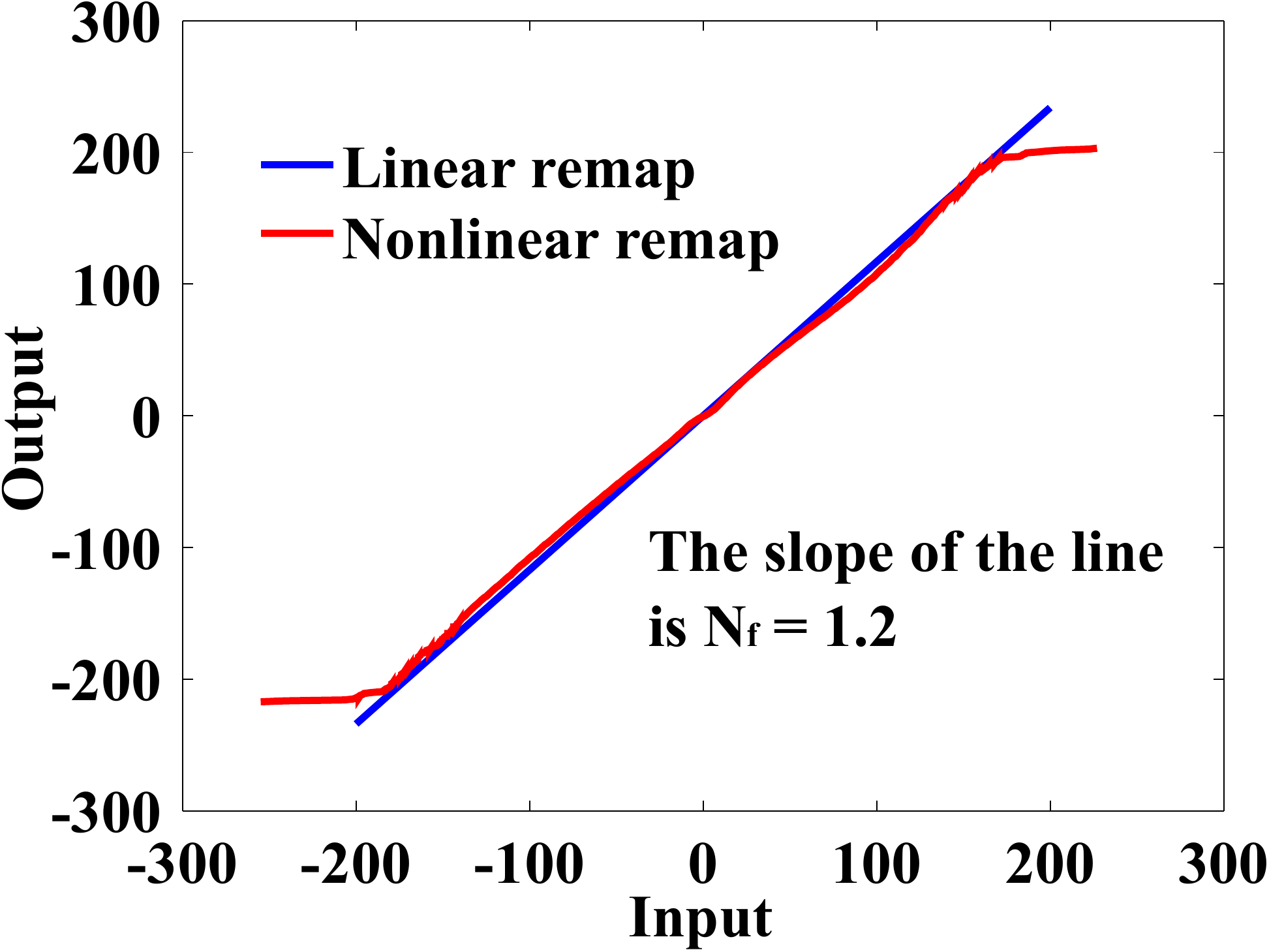}}
\caption{Comparison of different gradient field remapping methods: original image and its gradient distribution (a,b), image reconstructed from original gradient field without any remapping (c,d), with linear remapping (e,f), and with nonlinear remapping (g,h)~\citep{ExactHist}. The absolute RMS of the reconstructions are 0, 2.0, 23, and 33, respectively, with respect to the original image. The corresponding gradient distributions after remapping are shown to the right of the images. The linear and nonlinear remapping functions used are shown in (i).}
\label{fig:Poisson} 
\end{figure}

Existence and uniqueness of the solution of Eq.~\ref{eq:reconstruction} have been proven~\citep{boccardo:1996}. Commonly used norms are $\ell_2$ ($q=2$) and $\ell_1$ ($q=1$), which correspond to reducing measurement errors (unspecific) and gross errors (outliers), respectively. 

Taking the $\ell_2$ norm in Eq.~\ref{eq:reconstruction}, we recover the output image $I_n$ from the remapped gradient field $\vec{G}_n$ by solving the Poisson equation:
\begin{equation}
\label{eq:Poisson}
\Delta I_n = \nabla\cdot \vec{G}_n\, .
\end{equation}

This equation can be solved efficiently, e.g., by FFT-based algorithms or wavelet solvers. A short summary of available Poisson solvers is given in Table~\ref{table:Poisson}.

Reconstructing an image from its gradient field is accurate. An example is shown in Fig.~\ref{fig:Poisson}(c). The original image (a) is an 8-bit grayscale image. The absolute RMS of the reconstruction without remapping (b) is 2.032 with an average intensity value of 104.9. The size of the image is $1881\times2400$ pixels. Reconstruction using the wavelet solver in Matlab takes about 3.5 seconds on an Apple MacBook Pro (early 2011).
 
\begin{table}[h]
\scriptsize
\centering  
\begin{tabular}{c|cccc} 
\hline\hline
Solver  & Cholesky\footnotemark[8] & Jacobi&	Gauss-Seidel & SOR  \\
\hline
Type & direct & iterative & iterative	&	iterative\\
\hline
Complexity & $(mn)^3$ & $(mn)^2$ & $(mn)^2$ & $(mn)^{3/2}$\\

\hline \hline
Solver  & Cholesky\footnotemark[9] & {\color{ForestGreen}FFT} & Multigrid & {\color{ForestGreen}Wavelet}\\
\hline
Type & direct & direct&	iterative&	direct\\
\hline
Complexity & $(mn)^{3/2}$ & $(mn)log(mn)$ & $(mn)$&	$(mn)$\\
\hline
\end{tabular}
\caption{Summary of Poisson solvers. The FFT and Wavelet-based solvers are implemented in our software package.} 
\label{table:Poisson} 
\end{table}
\footnotetext[8]{dense Cholesky decomposition}  
\footnotetext[9]{sparse Cholesky decomposition} 

\subsection{As a soft constraint}
\label{sec:diffusion}
Imposing the GDP as a soft constraint is done by using the GDP as a regularization term. For a variational function $\mathcal{E}(\hat{U})$ this can be done by evolving the PDE
\begin{equation}
\frac{\partial \hat{U}}{\partial t} = -\frac{\partial \mathcal{E}(\hat{U})}{\partial \hat{U}}
\end{equation}
over pseudo-time $t$ (i.e., the iterations of the algorithm). Since the energy is $\mathcal{E}$ non-convex in general, minimization should be performed in a multi-scale space in order to avoid local minima and accelerate the computation. This can for example be done using multi-scale anisotropic diffusion, similar to the Perona-Malik model~\citep{PM1990}. More details about this procedure are given in Sec.~\ref{sec:denoise}.

When using our parametric Model 2 for the GDP, the minimization problem further simplifies. In this case, the variational energy is the difference of two convex functions, and the minimization problem can efficiently be solved using algorithms based on D.C.~programming~\citep{DCPBregman}. Then, the following decomposition holds:
\begin{equation}
\label{eq:DC}
\mathcal{E}(U)=\mathcal{E}_1(U)-\mathcal{E}_2(U)\, ,
\end{equation} where $\mathcal{E}_1(U)=\int_{\vec{x}\in \Omega}(\frac{1}{2}\|U-I\|_* +\frac{\lambda}{2}T_{\mathrm{pr}}^2\|\nabla U\|_2^2)\mathrm{d}\vec{x}$ and $\mathcal{E}_2(U)=-\frac{\lambda}{2}\int_{\vec{x}\in\Omega}\log(b_{\mathrm{pr}}+\|\nabla U\|_2^2)\mathrm{d}\vec{x}$ are differentiable convex functions, and $b_{\mathrm{pr}}$ is the GDP value of parameter $b_2$ of Model 2.

One way to solve such problems is to use Bregman splitting techniques~\citep{DCPBregman}. In this case, one needs to choose a Bregman function $\phi$, the choice of which however does not matter much to the algorithm performance~\citep{DCPBregman}. Then, Eq.~\ref{eq:DC} can be minimized using Algorithm~\ref{algo:DC}. The convergence proof can be found in~\citep{DCPBregman, Gasso:2009}. In the special case when $\phi$ is chosen to be a quadratic function, Algorithm~\ref{algo:DC} reduces to the standard proximal point algorithm.

\begin{algorithm}
\caption{Minimization using D.C.~programming}
\label{algo:DC}
\begin{algorithmic}[1]
\REQUIRE $\mathcal{E}_1$, $\mathcal{E}_2$, $\phi$, step size $\delta t>0$, $\epsilon>0$
\WHILE{$\|(U_{i} - U_{i-1})\|_\infty >\epsilon$}
\STATE{$U_{i+1}=(\nabla \phi +\delta t\nabla \mathcal{E}_1)^{-1}(\nabla \phi(U_i)+\delta t\nabla \mathcal{E}_2(U_i))$ } 
\ENDWHILE
\end{algorithmic}
\end{algorithm} 


\subsection{Implementation details}
For hard GDP constraints, we implemented nonparametric remapping based on exact histogram specification~\citep{coltuc:2006} and remapping based on our parametric Model 2. For image reconstruction, we implemented FFT(DST, DCT)-based and wavelet-based Poisson solvers. 

For soft GDP constraints, we implemented the multi-scale diffusion (Perona-Malik model~\citep{PM1990}), which is valid for all priors. Specifically for our Model 2, we also implemented the D.C.~programming of Algorithm~\ref{algo:DC}. 

The source code is available from our website as Matlab code, C++ code included in the OpenCV library, and Java code as an ImageJ/Fiji plugin.


\section{Example Applications}
\label{sec:app}
Priors play a central role in many image processing tasks. We exemplify this here by showing the use of the present GDP, the parametric models, and the solvers in a wide variety of image-processing tasks, ranging from contrast enhancement, to noise level estimation, denoising, deconvolution, zooming (super resolution), and dehazing.

\subsection{Image naturalization}
\begin{figure}[h]
  \centering
  \includegraphics[width=\linewidth]{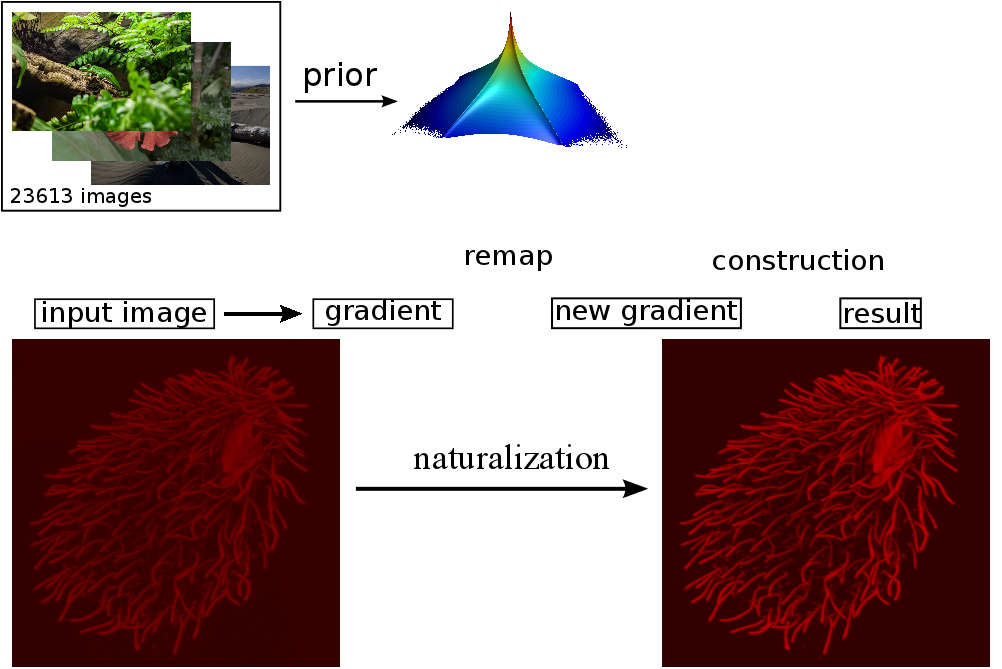}
  \caption{Naturalization.}
  \label{fig:flowN} 
\end{figure}

\begin{figure*}[!htb]
\centering
\subfigure[$N_{\!f}=1.07$ {\footnotesize{(beyondthehumaneye.blogspot.de)}}]{\includegraphics[width=.24\linewidth, height=.08\linewidth ]{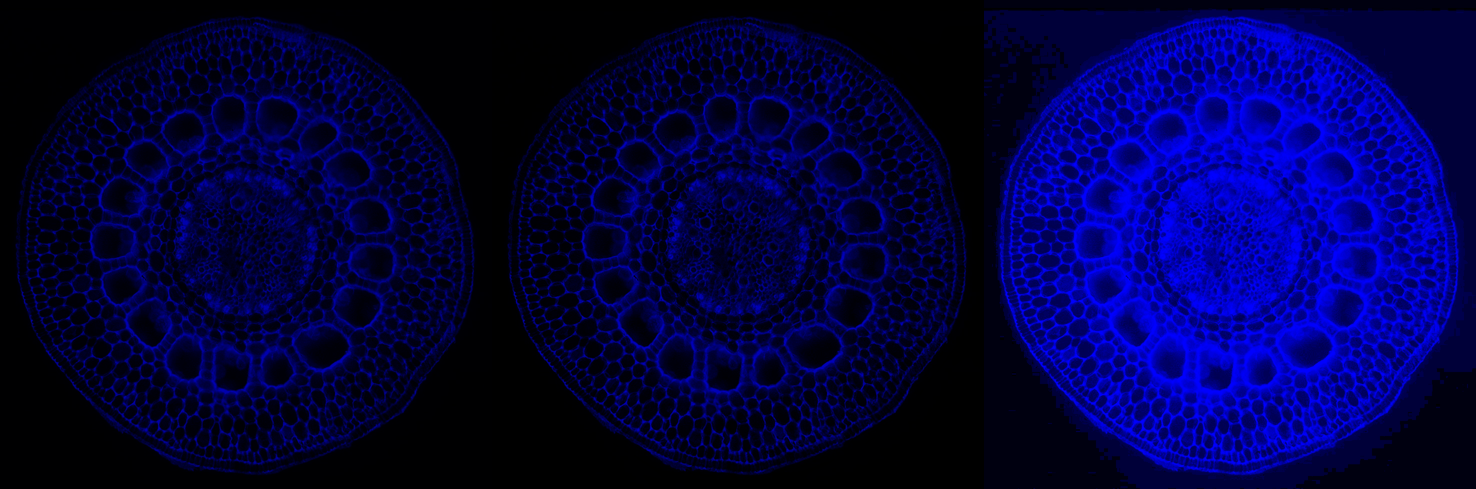}}
\subfigure[$N_{\!f}=1.51$ {\footnotesize{(Maryann Martone, CCDB)}}]{\includegraphics[width=.24\linewidth, height=.08\linewidth ]{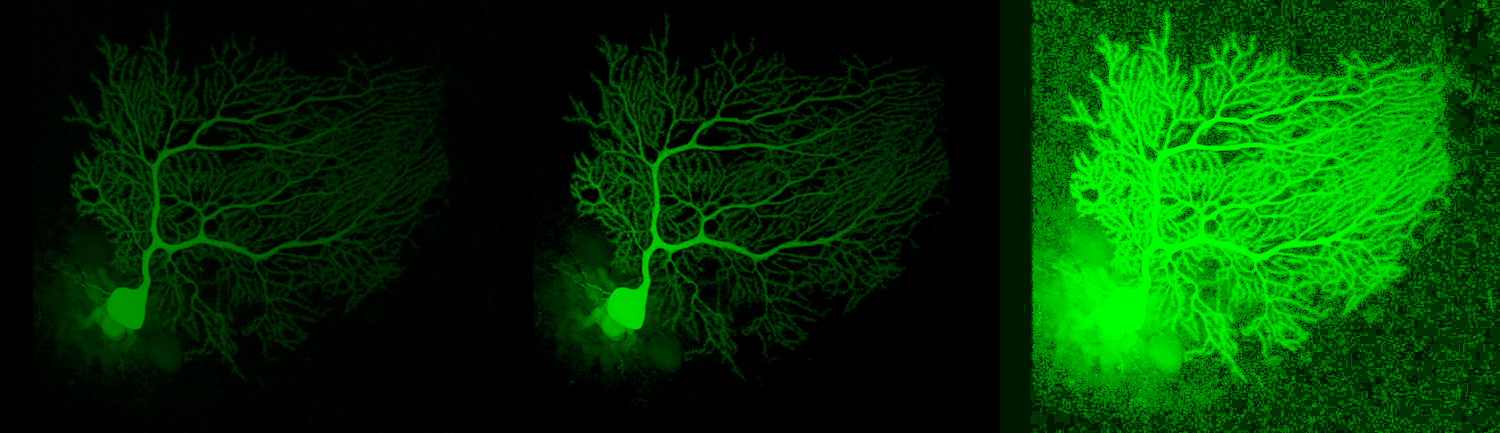}}
\subfigure[$N_{\!f}=1.81$ {\footnotesize{(Lee \& Matus, U Hawaii, amicros.org)}}]{\includegraphics[width=.24\linewidth, height=.08\linewidth ]{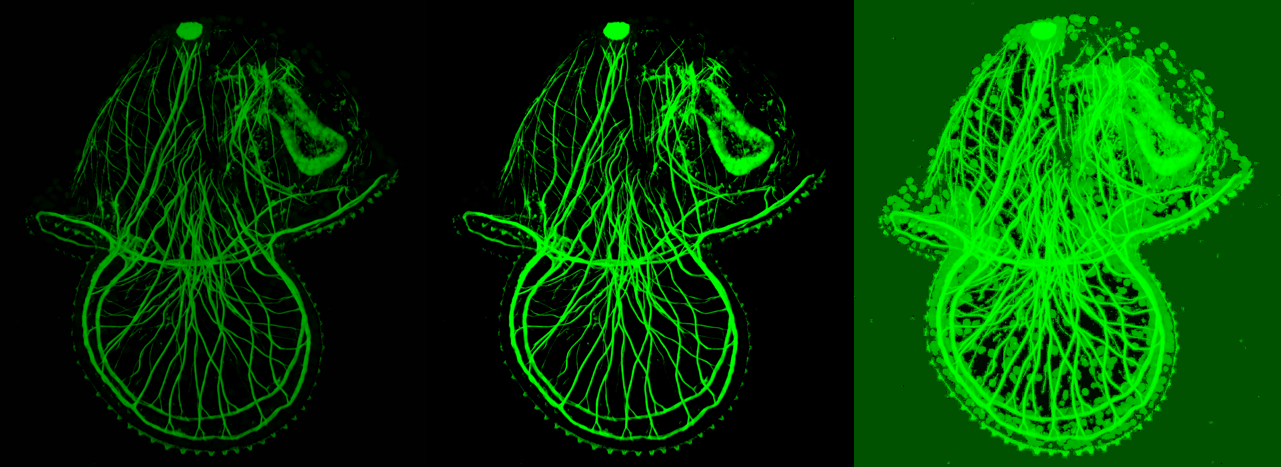}}
\subfigure[$N_{\!f}=2.00$ {\footnotesize{(Gaertig lab, U Georgia, bmc.uga.edu)}}]{\includegraphics[width=.24\linewidth, height=.08\linewidth ]{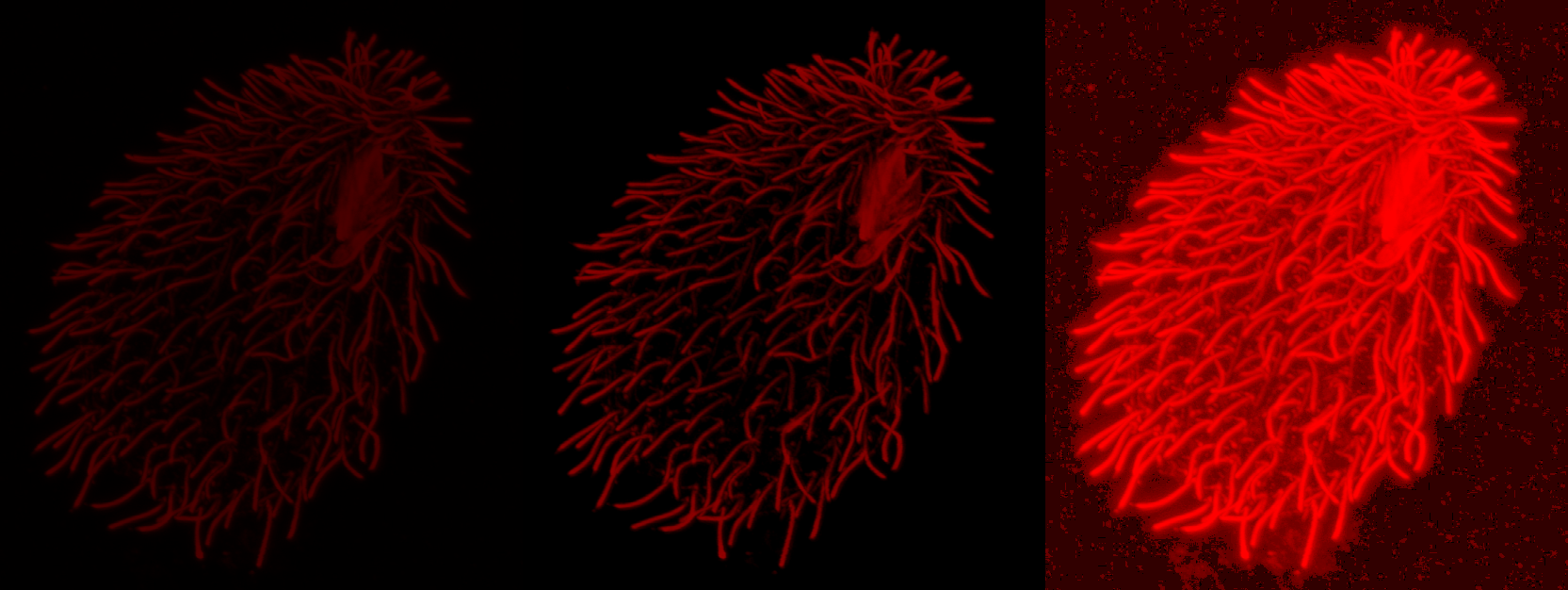}}

\subfigure[$N_{f}=1.03$ {\footnotesize{(Dartmouth EM Gallery)}}]{\includegraphics[width=.24\linewidth, height=.08\linewidth ]{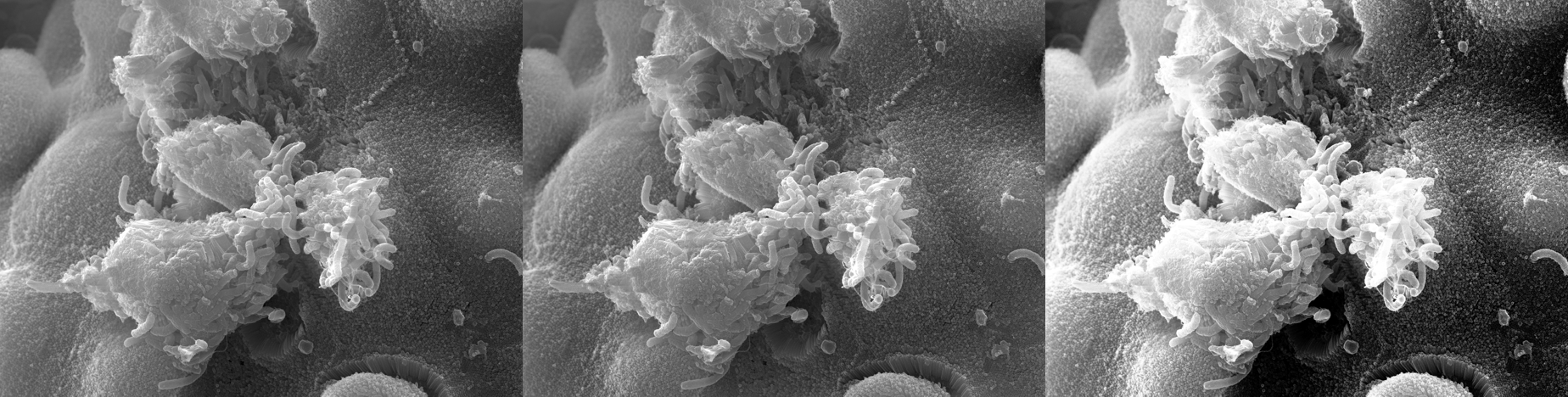}}
\subfigure[$N_{\!f}=1.27$ {\footnotesize{(Dartmouth EM Gallery)}}]{\includegraphics[width=.24\linewidth, height=.08\linewidth ]{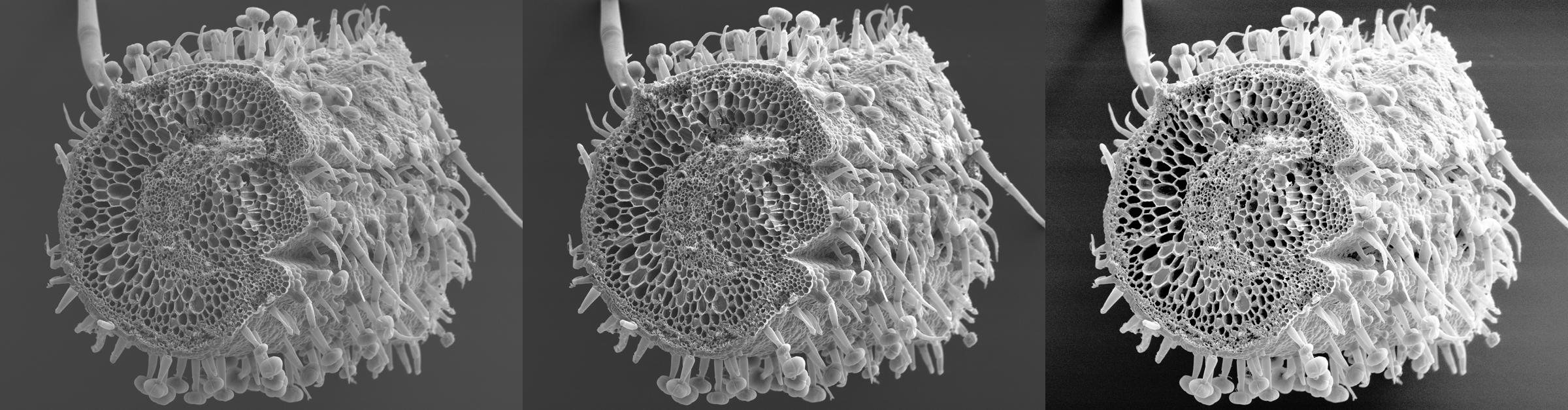}}
\subfigure[$N_{\!f}=1.45$ {\footnotesize{(Dartmouth EM Gallery)}}]{\includegraphics[width=.24\linewidth, height=.08\linewidth ]{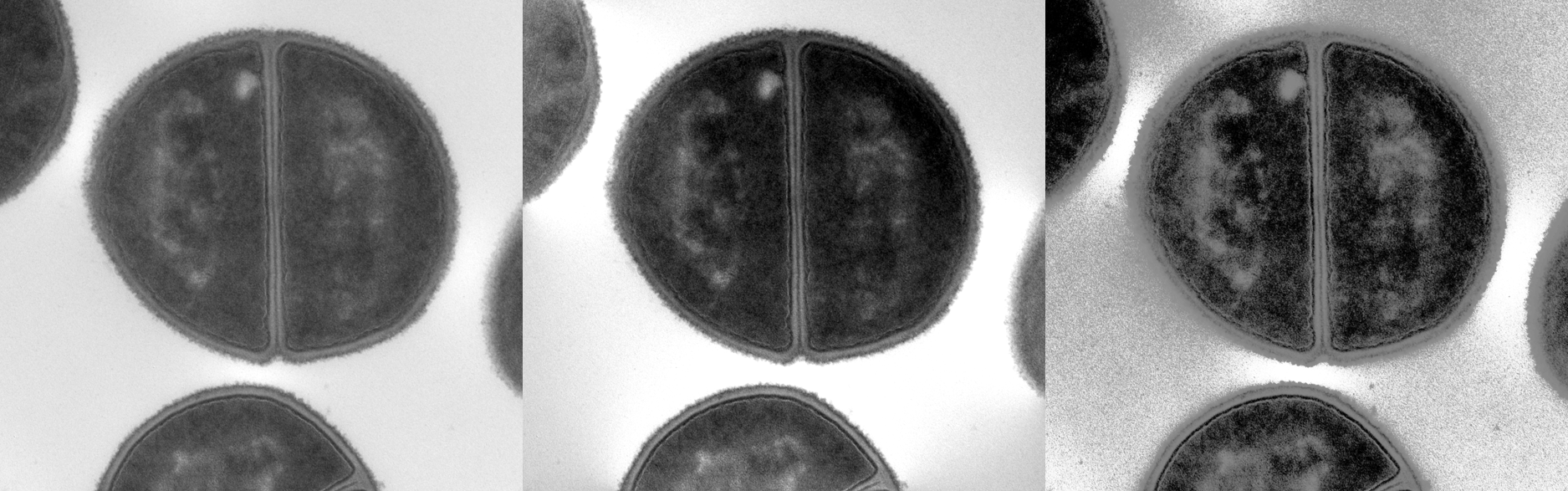}}
\subfigure[$N_{\!f}=2.29$ {\footnotesize{(K.~Ushakov, U Geneva)}}]{\includegraphics[width=.24\linewidth, height=.08\linewidth ]{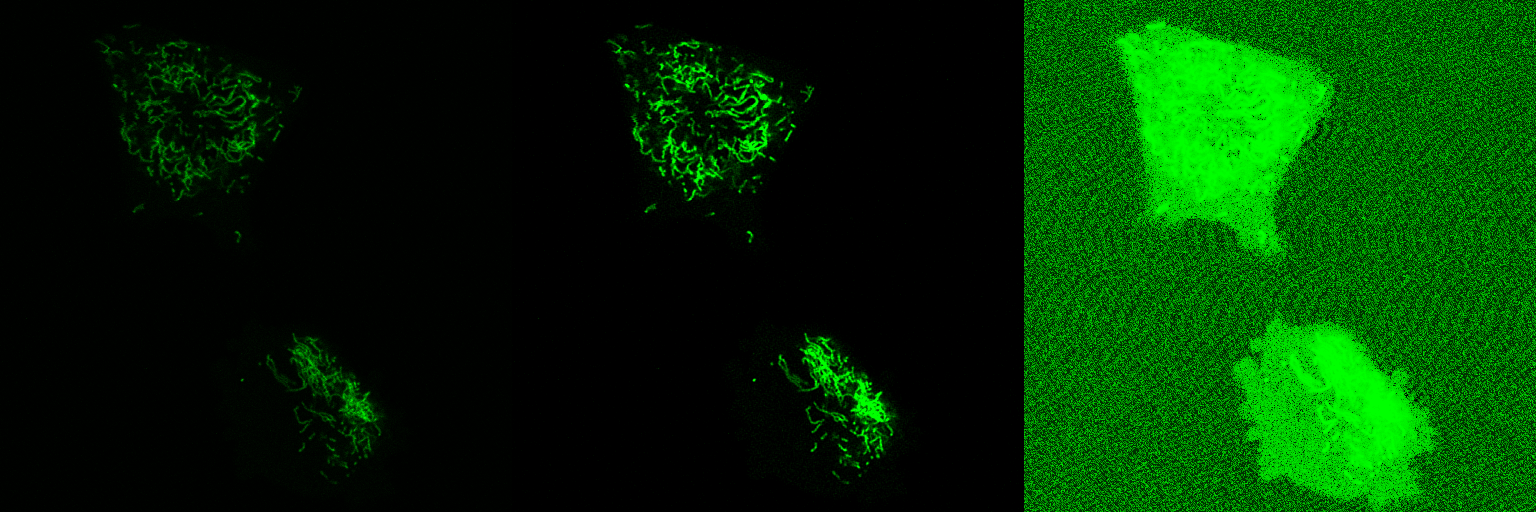}}

\caption{Examples of original (left), naturalized (middle) and histogram equalized (right) microscopy images.}
 \label{fig:naturalized} 
\end{figure*}

Remapping the gradient field of any image to match the GDP of natural-scene images, and then reconstructing the output image is called {\em image naturalization}, because the output image will have a gradient distribution that matches the one of natural-scene images. The naturalized output image hence looks more ``natural''. The workflow is shown in Fig.~\ref{fig:flowN}.
Since the GDP correlates with image quality, this makes the image look more appealing. We hence propose to use image naturalization as an alternative to histogram equalization when displaying images to a human observer. Image naturalization enhances contrast by solving Eq.~\ref{eq:spectral} with hard GDP constraint. Some examples of microscopy images (left tile of each panel) and their naturalized versions (middle tiles) are shown in Fig.~\ref{fig:naturalized} along with the naturalness factor of the original image. The histogram-equalized images are shown in right tiles for comparison. The first row shows four fluorescence-microscopy images. The second row shows three electron-microscopy images and one fluorescence image. All images were collected from publicly accessible web pages; credits are in parentheses. Here we use the simple linear $Map$ function, amounting to a straightforward rescaling of the image, albeit with a good, ``natural'' scale factor as determined by the GDP. In all cases. the naturalized image looks more appealing than the histogram-equalized image, and suffers from less background artifacts. 

\subsection{Noise level estimation}
Traditional denoising methods heavily rely on having an estimate of the noise level to adjust their parameters. In practical applications, however, the true noise level is unknown. We show how the GDP can be used to robustly estimate the noise level of an image. As shown by the noise case in Fig.~\ref{fig:CDF}, the parameter $T$ of Model 2 is sensitive to noise. This can be exploited to estimate the noise level by relating the fitted parameter $T$ of any given image to noise level through a calibration curve. We construct such a calibration curve ($T$ vs.~true noise level) by randomly choosing seven images from our natural-scene training dataset and adding to them Gaussian noise of varying $\sigma=[0.02:0.02:0.8]$. The dependence of $T$ on $\sigma$ shows a distinct characteristic, which is almost independent of image content (left panel of Fig.~\ref{fig:NoiseCurve}, 7 curves with different symbols). We fit this dependence using the mixture of exponentials:
\begin{equation}
\widetilde{\sigma} = \sum_{i=0}^{i=N}q_i\exp\!{\left\{s_iT\right\} },
\end{equation}
where $q_i>0$ and $s_i<0$ are parameters to be determined. For our dataset, we find the best fit $N=2$, $q_{\{1,2\}}=\{772.6,0.9538\}$, $s_{\{1,2\}}=\{-5321,-931.2\}$. The goodness of fit is: SSE=0.2741, RMSE=0.034, $R^2$=0.979. The model is shown by the solid blue line in the left panel of Fig.~\ref{fig:NoiseCurve}.

We test the model by adding Gaussian noise of different, known $\sigma$ to a disjoint randomly selected set of seven image (the test set for cross validation). The differences between the noise levels $\widetilde{\sigma}$ estimated by our model and the ground truth are shown in the right panel of Fig.~\ref{fig:NoiseCurve}. 87\% of the predictions have an accuracy $|\widetilde{\sigma}-\sigma|<0.04$. We find similar results also for other random image sets tested. 

This suggests that the parameter $T$ can provide accurate and robust noise-level estimation. A particularly favorable property of this estimator is its high sensitivity to changes in $\sigma$ for low noise levels ($\sigma<0.2$). Correctly estimating low noise levels is particularly hard for traditional, pixel-based estimators. 

\begin{figure}[h]
\centering
\setlength{\abovecaptionskip}{0cm}
\setlength{\belowcaptionskip}{0cm}
\setlength{\subfigbottomskip}{0cm}
\setlength{\subfigcapskip}{0cm}
\setlength{\subfigtopskip}{0cm}
\subfigure{\includegraphics[width=0.49\linewidth]{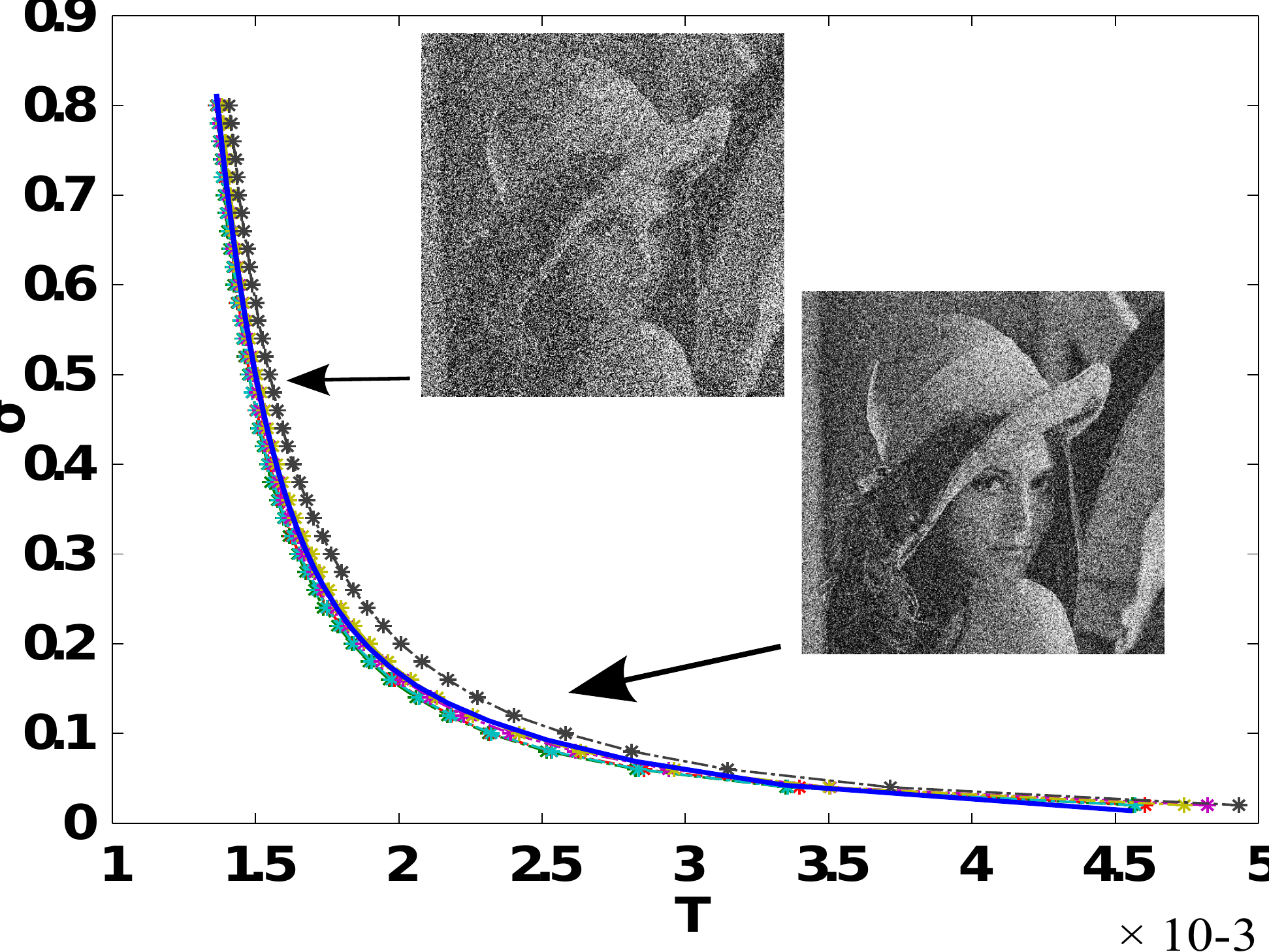}}
\subfigure{\includegraphics[width=0.49\linewidth]{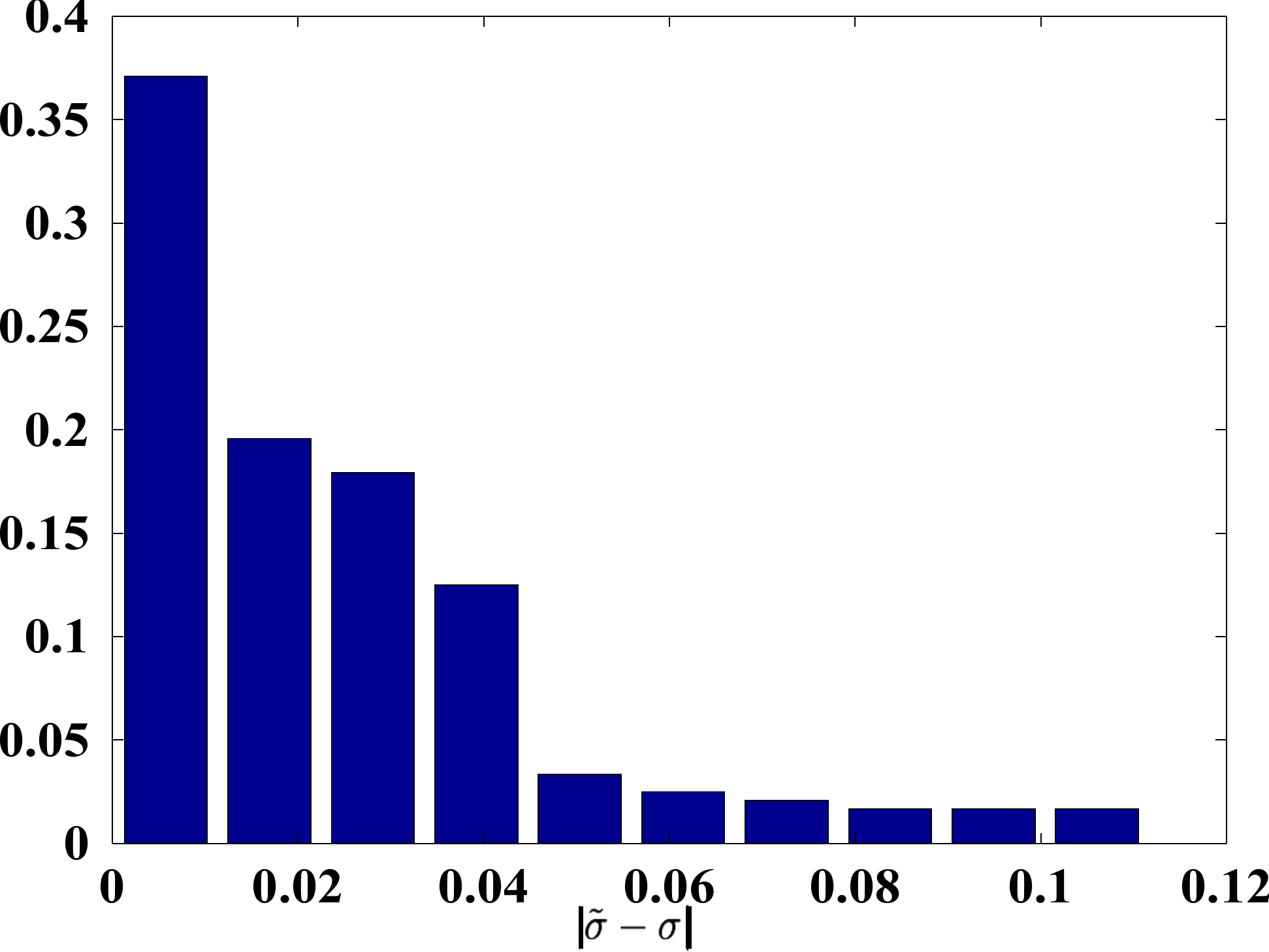}}
\caption{Noise-level model and its prediction-error distribution.}
 \label{fig:NoiseCurve} 
\end{figure}

\subsection{Denoising}
\label{sec:denoise}
The high sensitivity of the cumulative gradient distribution to small levels of noise makes the GDP a good prior for image denoising. Small non-zero gradients play a key role to recover image details. Traditional denoising methods with spatial regularization (such as TV and its variants, GC, etc.) remove both noise and small signal gradients. In contrast, the GDP can be used to distinguish between noise and small signal gradients. This is compatible with many researchers' observation that split-Bregman solvers for TV-$\ell_1$ models achieve better results in the sense of PSNR~\citep{splitBregman,Paul:2013}. This is because the auxiliary variable introduced in Bregman splitting changes the model to allow small gradients. These small gradients improve the result. Another example is non-local TV, using spatially repeated patterns to allow for small signal gradients~\citep{denoise_NTV:2014}. A third example is stochastic (Monte Carlo) denoising~\citep{wong:2011}. While all of these methods allow for small image gradients, distinguishing them from noise is mostly {\em ad hoc} and arbitrary. Here, the GDP can provide additional information. This has recently been demonstrated in a MAP Bayesian framework~\citep{cho:2012}, which also has the capability of recovering image details. Using our novel parametric GDP model, we introduce:
\begin{eqnarray}
\label{eq:denoisePre}
\begin{split}
\mathcal{E}(U)  = &\int_{\vec{x}\in\Omega} \frac{1}{2}\|U-I\|_2^2 +\\
& \frac{\lambda}{2}(T_{\mathrm{pr}}^2\|\nabla U\|_2^2+\log(b_{\mathrm{pr}}+\|\nabla U\|_2^2))\mathrm{d}\vec{x}. 
\end{split}
\end{eqnarray}

This denoising model is differentiable with respect to $U$ and can be efficiently solved by gradient descent (Algorithm~\ref{algo:denoise}). Using the G\^{a}teaux derivative, this model can be interpreted as an anisotropic diffusion and inverse diffusion process:
\begin{eqnarray}
\label{eq:denoise}
\begin{split}
\frac{\partial U}{\partial t} & = -\frac{\partial \mathcal{E}}{\partial U} \\
& = I - U +\lambda\left(T^2_{\mathrm{pr}} + \frac{b_{\mathrm{pr}}-\|\nabla U\|_2^2}{(b_{\mathrm{pr}}+\|\nabla U\|_2^2)^2}\right)\Delta U.
\end{split}
\end{eqnarray}
This equation can also be derived from the Euler-Lagrange equation of the variational form. 

The regularization term is a hybrid of diffusion and inverse diffusion, which is fundamentally different from traditional approaches that only depend on one of them. For example, the traditional anisotropic diffusion $(1+\|\nabla U\|_2^2)^{-1}$ (Perona Malik model~\citep{PM1990}) only tries to smooth the image, while inverse diffusion only enhances the image~\citep{Calder:2011}. The behavior of the diffusion coefficient $W$ in Algorithm~\ref{algo:denoise} is illustrated in Fig.~\ref{fig:TheW}. It is clear that $U$ gets enhanced (inverse diffusion, $W<0$) or smoothed (diffusion, $W>0$) depending on the gradient magnitude. Even though this behavior is similar to forward-backward diffusion~\citep{Guy:2002}, the fundamental difference is that our model is derived from a distribution prior, rather than from the gradient itself. As a result, the parameters $T_{\mathrm{pr}}$ and $b_{\mathrm{pr}}$ are learned from datasets and do not need to be manually adjusted, as in forward-backward diffusion~\citep{Guy:2002}.

An example of using this model for denoising as given in Algorithm~\ref{algo:denoise} is shown in Fig.~\ref{fig:denoiseResult} and Table~\ref{table:denoise}. The present model achieves state-of-the-art PSNR with significantly better image quality, quantified by the SSIM quality measure in Table~\ref{table:denoise}.

\begin{algorithm}
\begin{algorithmic}[1]
\REQUIRE $I$, $\lambda$, step size $\delta t$, $T_{\mathrm{pr}}$, $b_{\mathrm{pr}}$, $\epsilon$
\WHILE{$\|(U_{i} - U_{i-1})\|_\infty >\epsilon$}
\STATE{$W=T_{\text{pr}}^2+\frac{b_{\mathrm{pr}}-\|\nabla U\|_2^2}{(b_{\mathrm{pr}}+\|\nabla U\|_2^2)^2}$}
\STATE{$U_{i+1}=\frac{U_i}{1+\delta t} +\frac{\delta t}{1+\delta t}I+\frac{\delta t}{1+\delta t}\lambda W\Delta U_i$ } 
\ENDWHILE
\end{algorithmic}
\caption{Denoising with GDP}
\label{algo:denoise}
\end{algorithm} 

\begin{lem}
\label{lem:w}
In Algorithm~\ref{algo:denoise}, let $v=\|\nabla U\|_2^2$, if~$T^2_{\mathrm{pr}}b_{\mathrm{pr}}\geq \frac{1}{8}$, then $W\geq 0$. If ~$T^2_{\mathrm{pr}}b_{\mathrm{pr}}< \frac{1}{8}$, then there are two fixed points $v_L$ and $v_U$ ($v_L<v_U$) such that 
\begin{eqnarray}
\begin{cases}
W \leq 0      & if ~~v_L\leq v\leq v_U \\
W >0   & else. \\
\end{cases}
\end{eqnarray}
\end{lem}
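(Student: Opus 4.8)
The plan is to treat $W$ as a single-variable function of $v=\|\nabla U\|_2^2\ge 0$ and reduce everything to the sign of a quadratic. Writing $W(v)=T_{\mathrm{pr}}^2+\frac{b_{\mathrm{pr}}-v}{(b_{\mathrm{pr}}+v)^2}$ and putting both terms over the common positive denominator $(b_{\mathrm{pr}}+v)^2$, the sign of $W$ equals the sign of the numerator
\begin{equation*}
N(v)=T_{\mathrm{pr}}^2(b_{\mathrm{pr}}+v)^2+b_{\mathrm{pr}}-v = T_{\mathrm{pr}}^2 v^2+(2T_{\mathrm{pr}}^2 b_{\mathrm{pr}}-1)v+(T_{\mathrm{pr}}^2 b_{\mathrm{pr}}^2+b_{\mathrm{pr}}).
\end{equation*}
Since the leading coefficient $T_{\mathrm{pr}}^2$ is positive, the whole lemma becomes a statement about where this upward parabola is nonnegative.

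The key computation is the discriminant of $N$, which I expect to collapse neatly: $(2T_{\mathrm{pr}}^2 b_{\mathrm{pr}}-1)^2-4T_{\mathrm{pr}}^2(T_{\mathrm{pr}}^2 b_{\mathrm{pr}}^2+b_{\mathrm{pr}})=1-8T_{\mathrm{pr}}^2 b_{\mathrm{pr}}$. This immediately produces the threshold $\tfrac18$. When $T_{\mathrm{pr}}^2 b_{\mathrm{pr}}\ge \tfrac18$ the discriminant is nonpositive, so $N(v)\ge 0$ for all $v$ and hence $W\ge 0$, giving the first case. As a sanity check one can instead maximize $\frac{v-b_{\mathrm{pr}}}{(b_{\mathrm{pr}}+v)^2}$ directly; substituting $u=v+b_{\mathrm{pr}}$ shows its maximum is $\tfrac{1}{8b_{\mathrm{pr}}}$, attained at $v=3b_{\mathrm{pr}}$, which recovers the same threshold and explains its geometric meaning.

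For the second case, $T_{\mathrm{pr}}^2 b_{\mathrm{pr}}<\tfrac18$ makes the discriminant strictly positive, so $N$ has two distinct real roots $v_L<v_U$; these are the two ``fixed points'' of the statement. Because the leading coefficient is positive, $N(v)\le 0$ exactly on $[v_L,v_U]$ and $N(v)>0$ outside, which transfers verbatim to $W$. The one point that needs care --- and the only place the argument could realistically go wrong --- is checking that both roots lie in the physical domain $v\ge 0$, since $W$ is only evaluated there. This I would verify from Vieta's formulas: the product of the roots is $(T_{\mathrm{pr}}^2 b_{\mathrm{pr}}^2+b_{\mathrm{pr}})/T_{\mathrm{pr}}^2>0$ and their sum is $(1-2T_{\mathrm{pr}}^2 b_{\mathrm{pr}})/T_{\mathrm{pr}}^2>0$ (using $T_{\mathrm{pr}}^2 b_{\mathrm{pr}}<\tfrac18<\tfrac12$), so both roots are positive. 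This completes the case split and matches the claimed sign pattern of $W$.
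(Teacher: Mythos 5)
Your proposal is correct and follows essentially the same route as the paper's proof: reduce the sign of $W$ to that of the quadratic $T_{\mathrm{pr}}^2(b_{\mathrm{pr}}+v)^2+b_{\mathrm{pr}}-v$, whose discriminant $1-8T_{\mathrm{pr}}^2 b_{\mathrm{pr}}$ yields the threshold $\tfrac18$ and, when positive, the two roots $v_L<v_U$ bounding the region where $W\le 0$. Your additional checks (the explicit discriminant computation, the Vieta's-formulas verification that both roots are positive, and the direct maximization giving the value $\tfrac{1}{8b_{\mathrm{pr}}}$ at $v=3b_{\mathrm{pr}}$) merely fill in details the paper's terse appendix leaves implicit.
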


This is illustrated in Fig.~\ref{fig:TheW}. The proof is given in Appendix A.

\begin{figure}[h]
\centering
\includegraphics[width=0.7\linewidth]{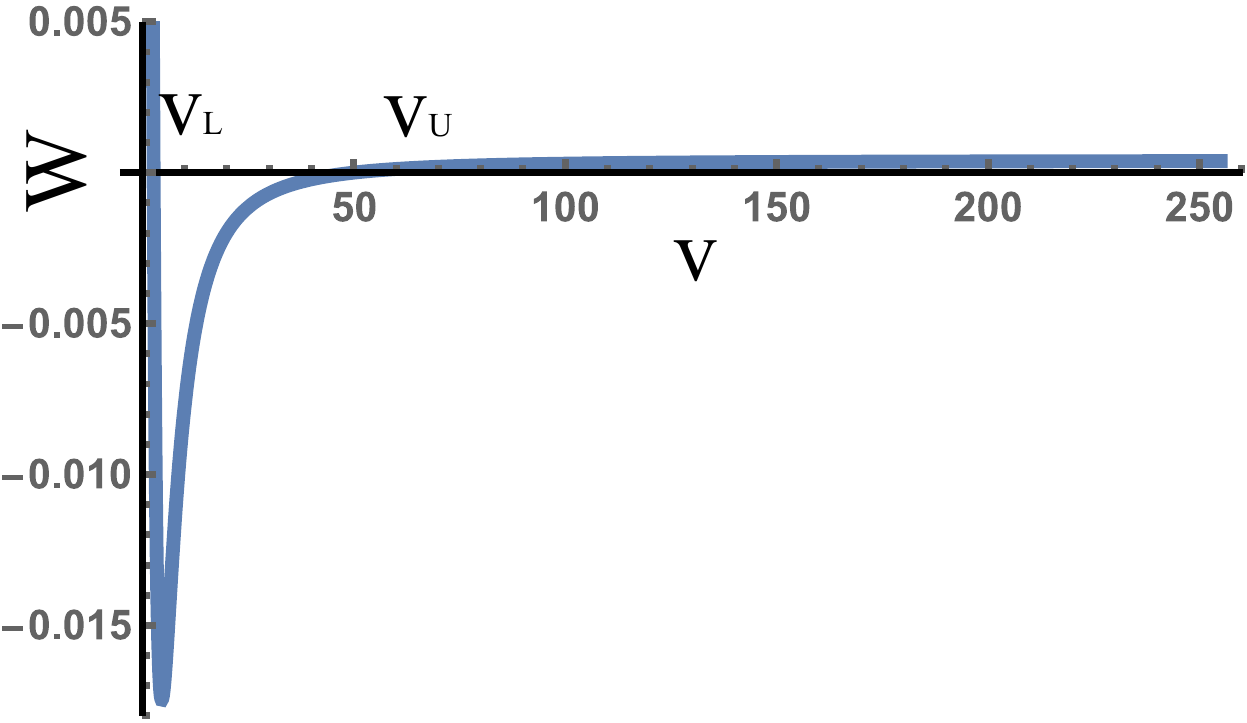}
\caption{Behavior of the diffusion coefficient $W$ versus the square gradient magnitude $v=\| \nabla U \|_2^2$ for the $T^2_{\mathrm{pr}}$ and $b_{\mathrm{pr}}$ values of the GDP.}
 \label{fig:TheW}
\end{figure}

\begin{figure}[h]
\centering
\includegraphics[width=\linewidth]{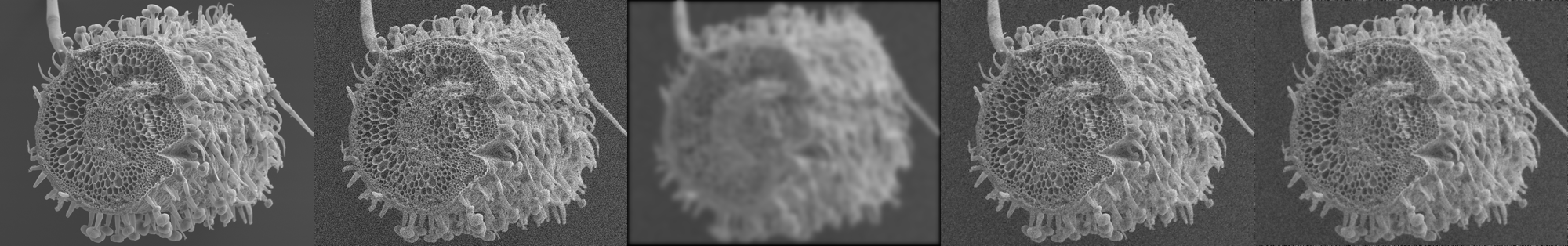}
\includegraphics[width=\linewidth]{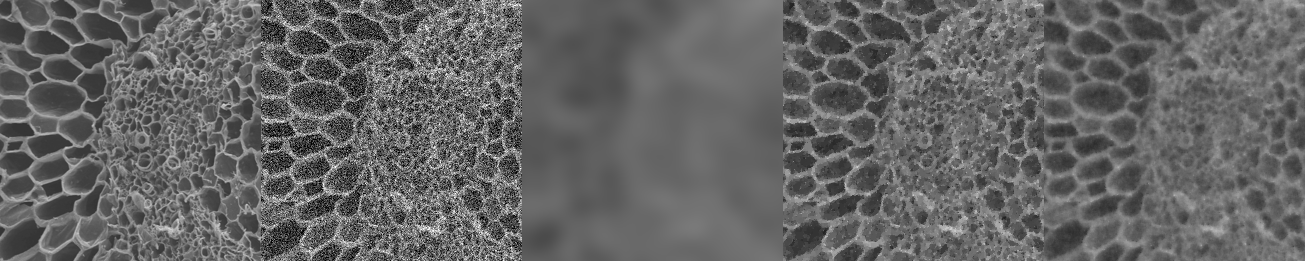}
\caption{Denoising example. From left to right: original image, noisy image, solution of the Perona-Malik model~\citep{PM1990}, solution of the TV model, solution of our GDP model. A magnified patch is shown under each image.}
 \label{fig:denoiseResult}
\end{figure}

\begin{table}[h]
\scriptsize
\centering  
\begin{tabular}{c|cccc} 
\hline\hline
  & noisy & Perona-Malik & TV & Present  \\
\hline
PSNR & 17.12 & 20.72& 26.39  & 26.39\\
\hline
SSIM & 0.2087 & 0.5534& 0.6441  & 0.7196\\
\hline
\end{tabular}
\caption{Quality comparison of denoising results from different models.} 
\label{table:denoise} 
\end{table}

\subsection{Blind Deconvolution}
Biomedical images are often blurred, e.g., due to object motion during exposure, or due to light diffraction in the detector optics. The latter is particularly common in microscopy, where the imaged objects are of similar length scale as the wavelength of the light used. The image is then significantly blurred by the point-spread function (PSF, or impulse-response function) of the optics, the Fourier transform of which is the optical transfer function of the imaging equipment. Since this blurring is an artifact of the imaging method, one often seeks to undo it to the extent possible. In fluorescence microscopy, the imaging process is linear, and the blurring is accurately described by a convolution of the original image with the PSF of the microscope. The task of {\em deconvolution} is to estimate the perfect latent image $U$ from the observed blurred image $I$. If the PSF (blur kernel) $K$ is unknown and to be estimated along, the problem is referred to as {\em blind deconvolution}. This is a typical ill-posed inverse problem. Imposing a prior can render the problem well posed. 

$K$ and $U$ can be estimated either in the spatial and/or the gradient domain. We provide here an algorithm for blind deconvolution using GDP. The algorithm is inspired by Fig.~\ref{fig:example}, showing that auto-correlation is significantly reduced in the gradient domain, which is a favorable property for (blur-)kernel estimation. The latent image, however, is better estimated in the spatial domain, where the auto-correlation signal can be exploited. Different combinations of spatial/gradient-domain deconvolution have already been priorly presented (see Table~\ref{table:deblur}). The present algorithm, however, is the first one to combine gradient-domain kernel estimation with spatial-domain image estimation, which we believe to be a particularly good combination. 

\begin{table}[h]
\scriptsize
\centering  
\begin{tabular}{c|c|c|c} 
\hline\hline
 & Kernel $K$ & Image $U$ & Typical Method \\
\hline
\multirow{4}{*}{\rotatebox{90}{domain}} & spatial & spatial & \citep{levin2007}\\
& spatial & gradient & \citep{fergus:2006}\\
& gradient & gradient & \citep{chen:2010}\\
& gradient & spatial & {\color{ForestGreen}present}\\
\hline 
\end{tabular}
\caption{Summary of blind deconvolution algorithms.} 
\label{table:deblur} 
\end{table}

Besides the working domain, the prior (or regularizer) used is of key importance. In general, sparsity of the kernel and TV of the latent image are imposed for deconvolution~\citep{Chan:1998,Marquina:2009,KrishnanTF11,LiLGT:2012}. However, it is known that a GDP on the latent image provides a better choice, removing less image detail than TV~\citep{fergus:2006,krishnan2009fast,cho:2009,chen:2010,shan:2008a}. Here, we use the present parametric GDP model as a prior for the latent image, but impose no prior on the kernel. This renders our methods generic to a wide variety of different blur kernels that do not have to be priorly known.

We use alternating minimization to estimate the kernel $K$ and the latent image $U$ by minimizing:
\begin{equation}
\label{eq:blurK}
\begin{split}
\mathcal{E}_k(K)&= \|\nabla U_i\otimes K -\nabla I \|_{2}^2\ \\
& s.t.~~  \|K\|_2=1 \, , K\geq 0\, .
\end{split}
\end{equation}
\begin{equation}
\label{eq:blurU}
\begin{split}
&\mathcal{E}_u(U) = 
\frac{1}{2}\| U\otimes K_{i+1} -I \|_{2}^2 \\
 & +\frac{\lambda}{2}\left(T_{\mathrm{pr}}^2\|\nabla U\|_2^2+\log(b_{\mathrm{pr}}+\|\nabla U\|_2^2)\right)\,,
\end{split}
\end{equation}  
where $i$ is the iteration number of the alternating minimization scheme. Equation~\ref{eq:blurK} is a convex function with convex constraints, guaranteeing a globally optimal solution. In Algorithm~\ref{algo:deblur} we hence solve this part of the problem analytically by projection. Equation~\ref{eq:blurU} is not convex, but can be solved by a diffusion process. Algorithm~\ref{algo:deblur} summaries the resulting overall blind deconvolution process, which is performed in a multi-scale fashion to avoid local minima and accelerate computation. A notable implementation detail is that we only compute on the interior pixels in order to avoid the boundary issue, instead of padding the image as done in previous methods. 

An example with a complicated blur kernel is shown in Fig.~\ref{fig:deblur} and Table~\ref{table:deblurResult}. The ground-truth image (Fig.~\ref{fig:deblur}(a)) is blurred with a known kernel (Fig.~\ref{fig:deblur}(b)). Figure~\ref{fig:deblur}(c,d) show the reconstructed images using two different {\em non-blind} deconvolution methods with the ground-truth kernel provided to them. Figure~\ref{fig:deblur}(e,f) show the results of two blind deconvolution methods along with the estimated kernels (insets). Figure~\ref{fig:DeblurKernel} shows the estimated $\hat{K}$ at different scales of the multi-scale process used in the present method. As evident from Table~\ref{table:deblurResult}, the result from the present GDP method achieves higher image quality (as measured by SSIM) than the comparison algorithms. 

\begin{algorithm}
\caption{Blind Deconvolution with GDP}
\label{algo:deblur}
\begin{algorithmic}[1]
\REQUIRE $I$, $\lambda$, $T_{\mathrm{pr}}$, $b_{\mathrm{pr}}$,$\epsilon$
\STATE ${\bf F}_I= FFT(\nabla I)$
\WHILE{$\|\nabla (U_{i} - U_{i-1})\|>\epsilon$}
\STATE{${\bf F}_U = FFT(\nabla U_i)$} \STATE{$\hat{K}_{i+1}=FFT^{-1}\left(\frac{\overline{{\bf F}_U}^T\circ {\bf F}_I }{\overline{{\bf F}_U}^T\circ {\bf F}_U}\right)$} 
\STATE{$K_{i+1}\leftarrow \hat{K}_{i+1}>0,\int_{\vec{x}\in W_K}\hat{K}_{i+1}\mathrm{d}\vec{x}=1$}
\STATE{$U_{i+1} \leftarrow $ Anisotropic Diffusion $U_i$ with $ K_{i+1}$ (Eq.~\ref{eq:denoise})}

\ENDWHILE
\end{algorithmic}
\end{algorithm} 

\begin{figure}[!htb]
\centering
\subfigure[ground truth image]{\includegraphics[width=0.48\linewidth]{deblur_source.png}
}
\subfigure[blurred image with $K$]{\includegraphics[width=0.48\linewidth]{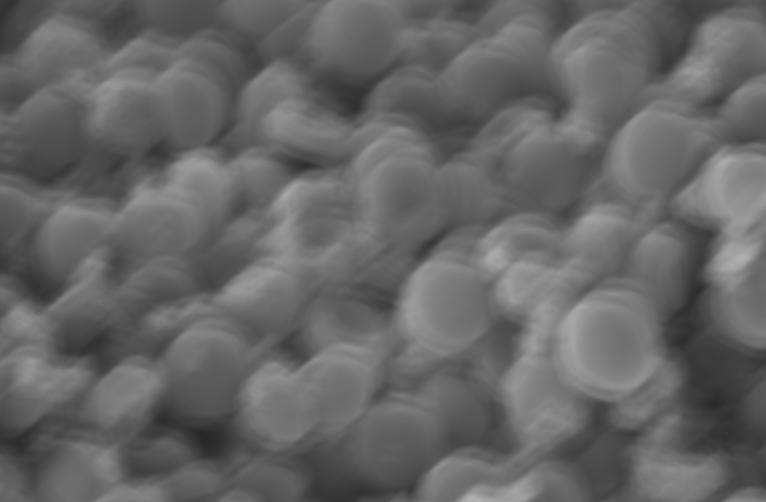}
\hspace{-0.23\linewidth}
\includegraphics[width=0.2\linewidth]{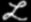}
\label{fig:blurF}
}
\subfigure[Lucy-Richardson result]{\includegraphics[width=0.48\linewidth]{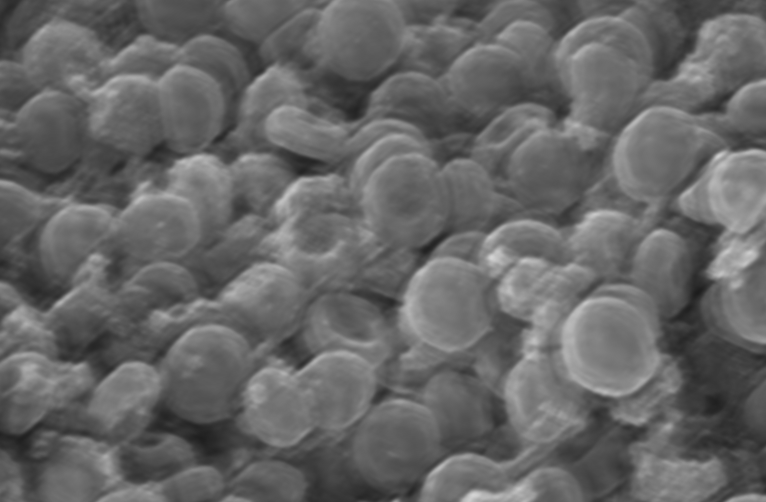}
\label{fig:DeblurR}
}
\subfigure[Hyper Laplace Prior result]{\includegraphics[width=0.48\linewidth]{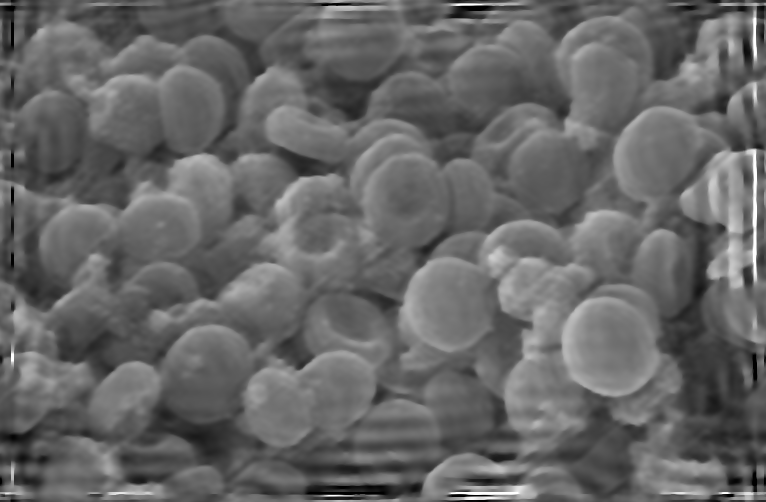}
\label{fig:DeblurLap}
}

\subfigure[normalized sparsity result]{\includegraphics[width=0.48\linewidth]{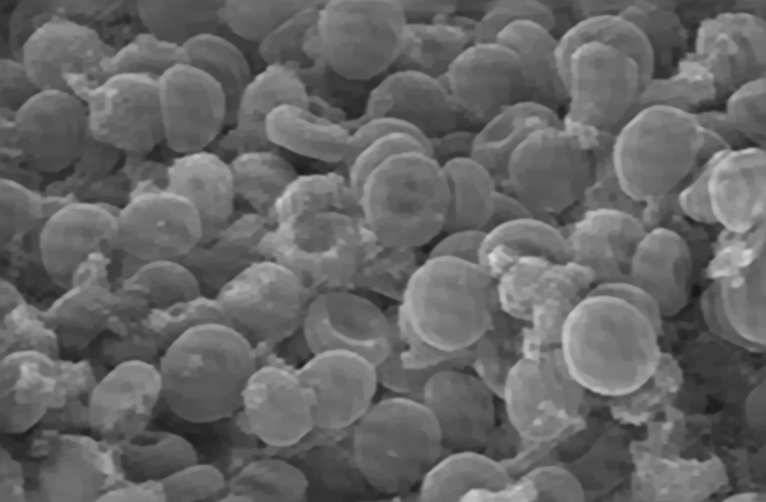}
\hspace{-0.23\linewidth}
\includegraphics[width=0.2\linewidth]{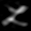}
\label{fig:DeblurL1L2}
}
\subfigure[present GDP result]{\includegraphics[width=0.48\linewidth]{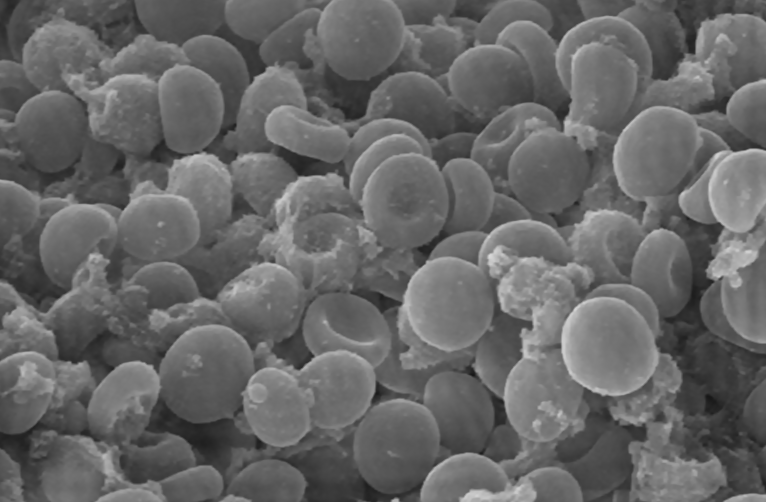}
\hspace{-0.22\linewidth}
\includegraphics[width=0.19\linewidth]{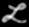}
\label{fig:DeblurF}
}
\subfigure[estimated $\hat{K}$ at different scales of our method]{\includegraphics[width=\linewidth]{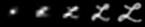}
\label{fig:DeblurKernel}
}
\caption{Image deconvolution example. (a) Ground truth image. (b) Input image to the deconvolution methods, obtained by blurring the image in (a) with the kernel shown in the inset. (c,d) Results from two non-blind deconvolution methods with the ground truth blur kernel given; the classical Lucy-Richardson algorithm~\citep{biggs1997acceleration} and the hyper-Laplace method~\citep{krishnan2009fast}. (e,f) Results from two blind deconvolution methods (\citep{KrishnanTF11} and our GDP) along with the estimated blur kernels (insets). (g) Blur kernel estimated by our method on different levels of the multi-scale process.}
 \label{fig:deblur}
\end{figure}

\begin{table}[h]
\scriptsize
\centering  
\begin{tabular}{c|cc} 
\hline\hline
  & PSRN & SSIM  \\
\hline
Blurred input image (Fig.~\ref{fig:blurF}) & 23.85 & 0.58\\
\hline
Lucy-Richardson (Fig.~\ref{fig:DeblurR}) & 27.44 & 0.72\\
\hline
Hyper-Laplacian (Fig.~\ref{fig:DeblurLap}) & 23.87 & 0.68\\
\hline
Normalized sparsity (Fig.~\ref{fig:DeblurL1L2}) & 27.34 & 0.73\\
\hline
Present method (Fig.~\ref{fig:DeblurF}) & 33.69 & 0.92\\
\hline 
\end{tabular}
\caption{Quality comparison of deconvolution results.} 
\label{table:deblurResult} 
\end{table}

A recent development in deconvolution is to use image patches instead of the whole image to accelerate kernel estimation~\citep{hu:eccv2012,patchmosaic}. This can easily be adopted also in our framework, provided the patches are large enough for the GDP to be valid within them (see Section~\ref{sec:DFW}).   

\subsection{Zooming and Super Resolution}
Zooming or super-resolution (SR) is the process of resampling an image (or a part of an image) onto a larger grid of pixels. Increasing the number of pixels in the image while keeping the field of view the same hence increases the image resolution. The interesting question is then how to interpolate the image information onto the finer pixel grid where no information is available on the course input grid. We show here how the same algorithm as for deconvolution can also be used for zooming. The only change is that we use an up-sampled $U$ (i.e., $U$ has more pixels than $I$) and a known Gaussian kernel $K(\vec{x}) = \frac{1}{\sqrt{ 2\pi}\sigma}e^{-\|\vec{x}\|^2/\sigma^2}$ in Eq.~\ref{eq:blurU}. We do not need to iterate Eq.~\ref{eq:blurK}, because the kernel is known in this application. An example is shown in Fig.~\ref{fig:SR}. For fun, we compare the resulting zoomed image with an image of the same sample acquired by a true super-resolution microscopy technique (here: PALM microscopy). While zooming with the present algorithm renders the image crisper (due to the deconvolution kernel) and better resolved (due to the finer pixel grid), it does not actually improve the optical resolution of the microscope. This can nicely be observed when two filaments cross. In the zoomed image there is a gap at the crossing point, whereas the PALM microscopy image properly resolves both filaments crossing.

\begin{figure}[H]
\centering
\setlength{\abovecaptionskip}{0cm}
\setlength{\belowcaptionskip}{0cm}
\setlength{\subfigbottomskip}{0cm}
\setlength{\subfigcapskip}{0cm}
\setlength{\subfigtopskip}{0cm}
\subfigure[original $128\times 128$]{\includegraphics[width=.32\linewidth]{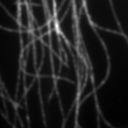}}
\subfigure[zoomed $512\times 512$ ]{\includegraphics[width=.32\linewidth]{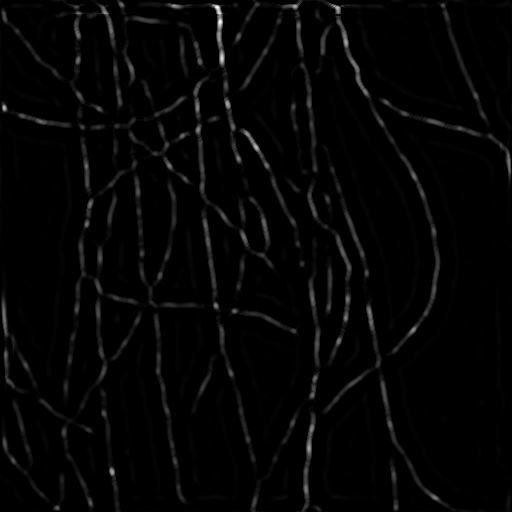}}
\subfigure[PALM $512\times 512$]{\includegraphics[width=.32\linewidth]{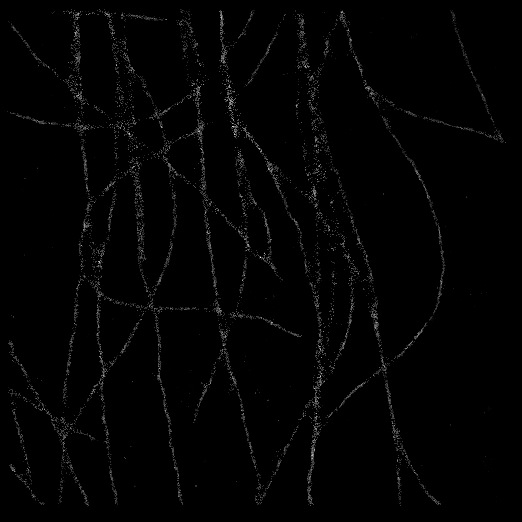}}
  \caption{Zooming using the GDP. Panel (b) shows the zoomed version (up-sampling factor 4) of the fluorescently labeled microtubules in (a) as computed using the present method. Panel (c) shows a real super-resolution PALM image of the same scene for comparison. {((a)\&(c) from: EPFL Collection of Reference Datasets, bigwww.epfl.ch/smlm/datasets/index.html?p=real-hd)}}
  \label{fig:SR} 
\end{figure} 

\subsection{Scatter Light Removal and Dehazing}
Scatter light is a common nuisance in light microscopy when imaging thick samples. The light propagating though the sample is scattered (Rayleigh  and Mie scattering), similarly to how fog or haze scatters light in a natural-scene photograph. The resulting image is the superposition of the scatter light and the latent image. In classical dehazing methods, the observed image $I$ is modeled as~\citep{DarkPrior}:
\begin{equation}
\label{eq:dehaze}
I(\vec{x}) = U(\vec{x})t(\vec{x}) + A(1-t(\vec{x}))\, ,
\end{equation} 
where $U$ is the latent image, $t(\vec{x})=e^{-\beta d(\vec{x})}$ is the unknown transmission map, and $A$ is the environment light constant. The unknown parameter $\beta$ is a material constant (scattering coefficient), and $d(\vec{x})$ is the distance from the scene to the camera. Solving this model for $U$ is ill-posed. A popular prior to regularize the problem in the spatial domain is the {\em dark-channel prior}~\citep{DarkPrior}. Alternatively, the problem can be regularized in a Bayesian framework~\citep{nishino:2012}. Here, we impose the GDP for the latent image and TV for the transmission map as hard constraints:
\begin{equation}
\label{eq:dehazeOur}
\begin{split}
\mathcal{E}(U,t)= &\frac{1}{2}\left\|U(\vec{x})t(\vec{x})+ A(1-t(\vec{x}))-I(\vec{x})\right\|_2^2\\
&s.t.~~p(U(\vec{x}))=p^{\mathrm{pr}},~p(t(\vec{x}))=p^{\mathrm{pr}}.
\end{split}
\end{equation}

Unlike previous works, Eq.~\ref{eq:dehaze} does not hold anymore in our model. 
Instead, our model can be written as: 
\begin{equation}
\label{eq:dehazeGDP}
\begin{split}
& \mathcal{E}(U,t)=\frac{1}{2}\left\|U(\vec{x})t(\vec{x})+ A(1-t(\vec{x}))-I(\vec{x})\right\|_2^2+\frac{\lambda}{2}\\
&\left[T^2\|\nabla U(\vec{x})\|_2^2+\log(b+\|\nabla U(\vec{x})\|^2_2)+\alpha \left\|\frac{\nabla t(\vec{x})}{t(\vec{x})}\right\|_1\right].
\end{split}
\end{equation} 
We use alternating minimization over $U$ and $t$ to obtain the final result. An example is shown in Fig.~\ref{fig:dehaze}. Using simple Otsu thresholding on the original image does not allow detecting any objects in the image. Instead, they are fused to one large blob by the scatter light. After dehazing using the present method, the same simple Otsu thresholding allows object segmentation.

\begin{figure}[H]
\centering
\setlength{\abovecaptionskip}{0cm}
\setlength{\belowcaptionskip}{0cm}
\setlength{\subfigbottomskip}{0cm}
\setlength{\subfigcapskip}{0cm}
\setlength{\subfigtopskip}{0cm}
\subfigure[Original]{\includegraphics[width=0.49\linewidth]{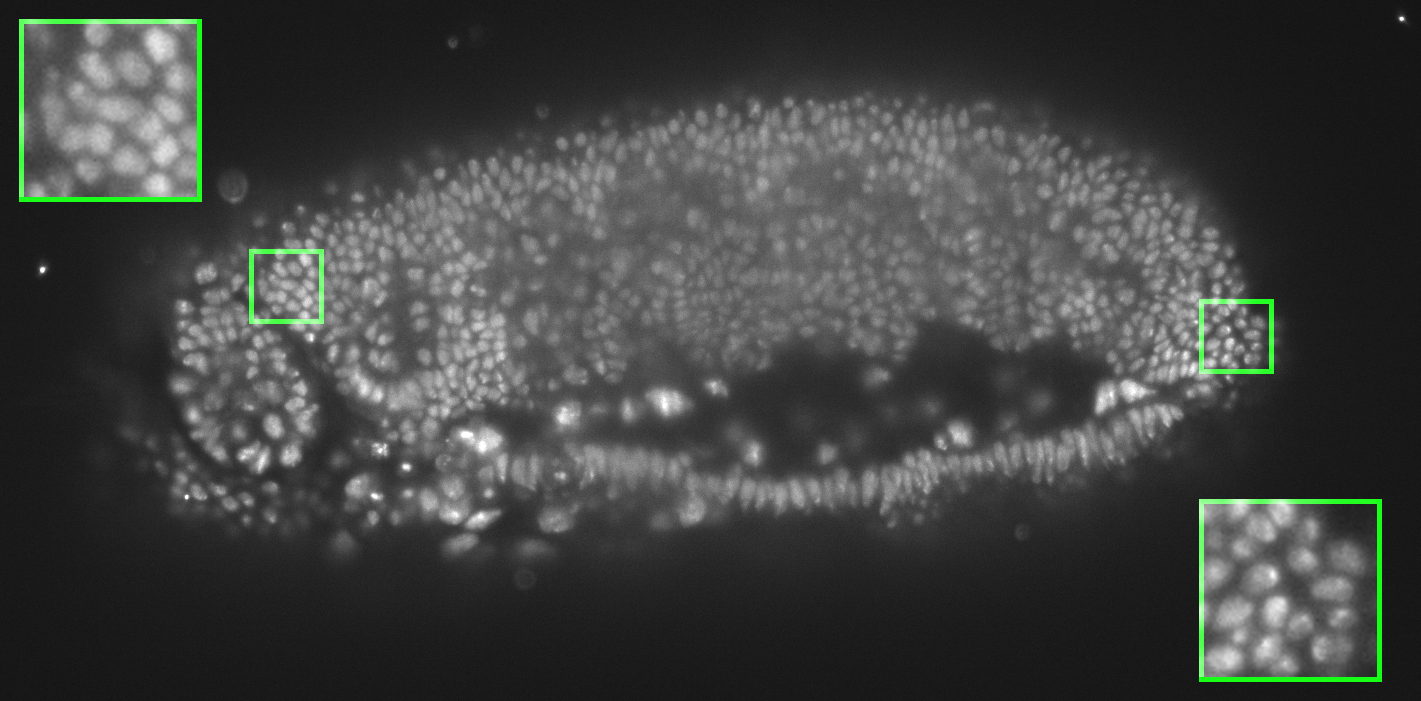}}
\subfigure[Present dehazing result]{\includegraphics[width=0.49\linewidth]{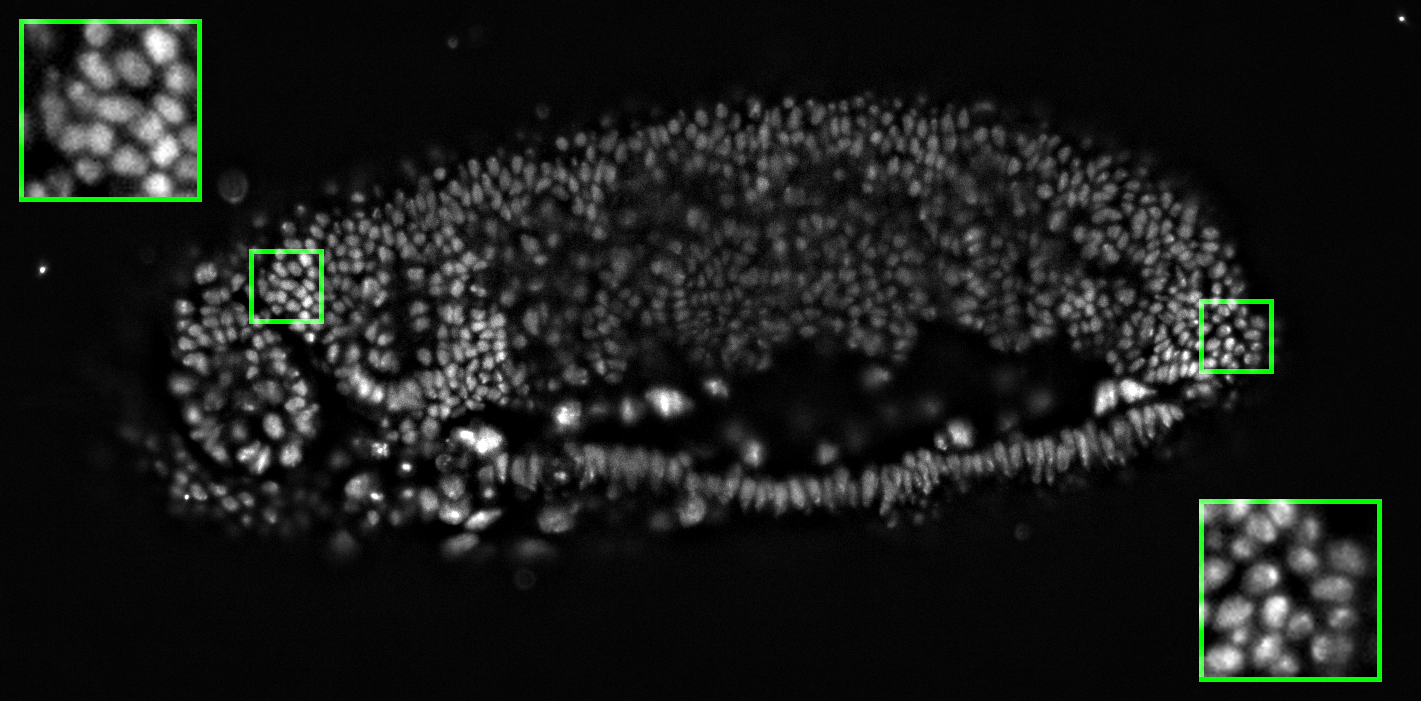}}
\subfigure[Otsu thresholding of (a)]{\includegraphics[width=0.49\linewidth]{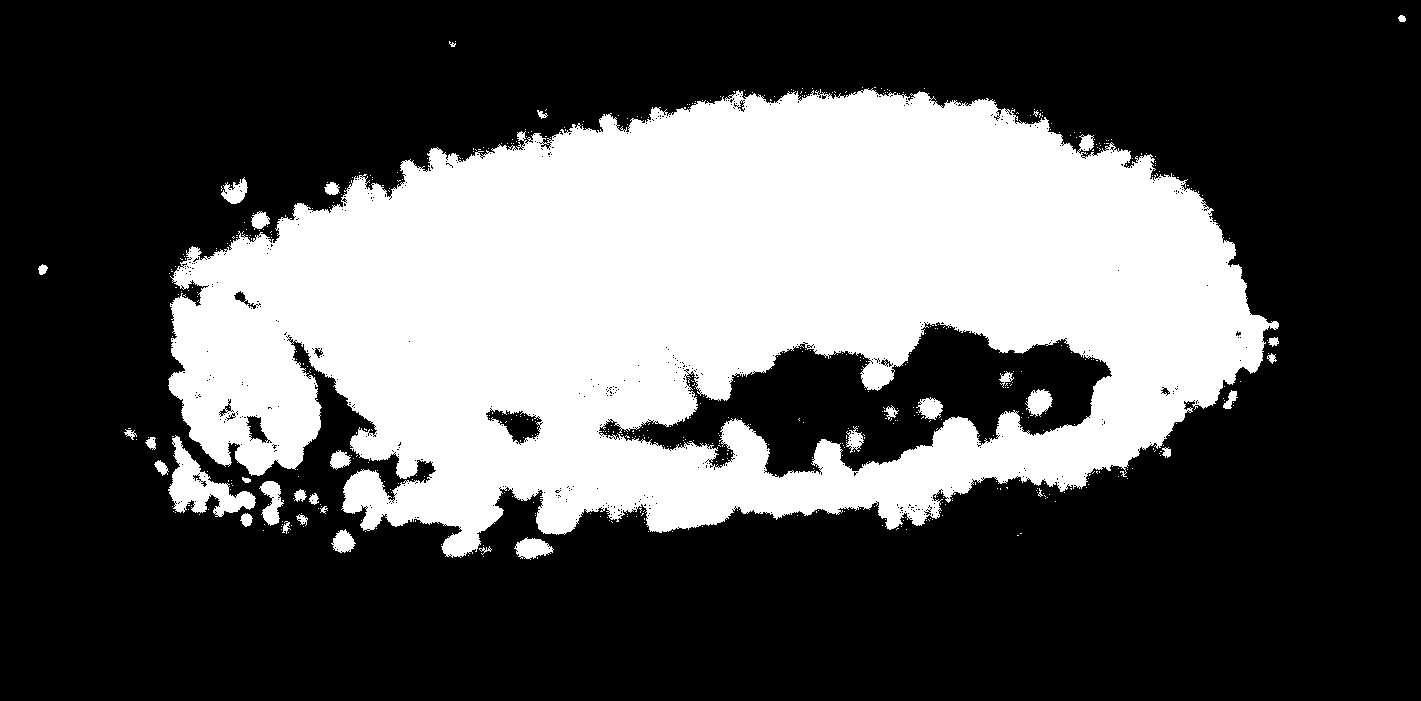}}
\subfigure[Otsu thresholding of (b)]{\includegraphics[width=0.49\linewidth]{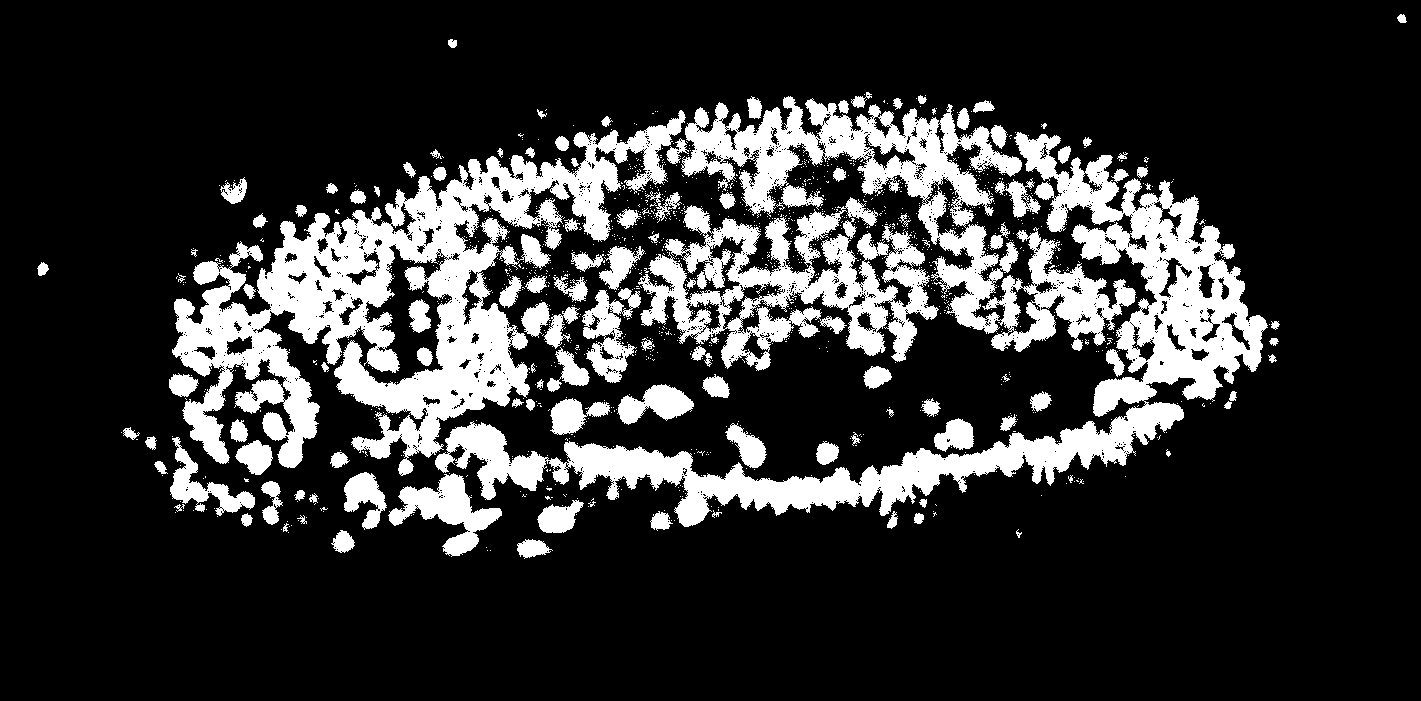}}
\caption{Scattering light removal in a SPIM microscopy image of a whole {\em Drosophila} embryo with the Nuclei labeled by fluorescence. (a) Original image as recorded by SPIM microscopy (source: Tomancak lab, MPI-CBG).
Due to the thickness of the sample, there is significant scatter light, prohibiting object segmentation using thresholding (c). (b) Result from the present dehazing method, enabling object thresholding (d). Insets show zoomed details as indicated by the green boxes.}
 \label{fig:dehaze} 
\end{figure} 

\subsection{The Naturalness Factor as an Image Feature}
The naturalness factor $N_{\!f}$ is a scalar number that is easy to compute. It can hence provide an interesting image feature, for example in classification or machine-learning frameworks when the naturalness of an image is to be quantified. We illustrate this by classifying transmitted light microscopy images of marine phytoplankton from natural-scene images. We collected a dataset of 1322 images of 45 different species of phytoplankton. Some example images are shown in Fig.~\ref{fig:sea}(a). Figure~\ref{fig:sea}(b) shows the histogram of $N_{\!f}$ of all images. Despite the fact that the images look visually diverse, they mostly have similar $N_{\!f}$, which moreover is clearly different from that of natural-scene images (see Fig.~\ref{fig:NfDistr}). The naturalness factor could hence be used as an important feature to classify these images.

\begin{figure}[h]
\subfigure[sample images]{\includegraphics[width=0.48\linewidth]{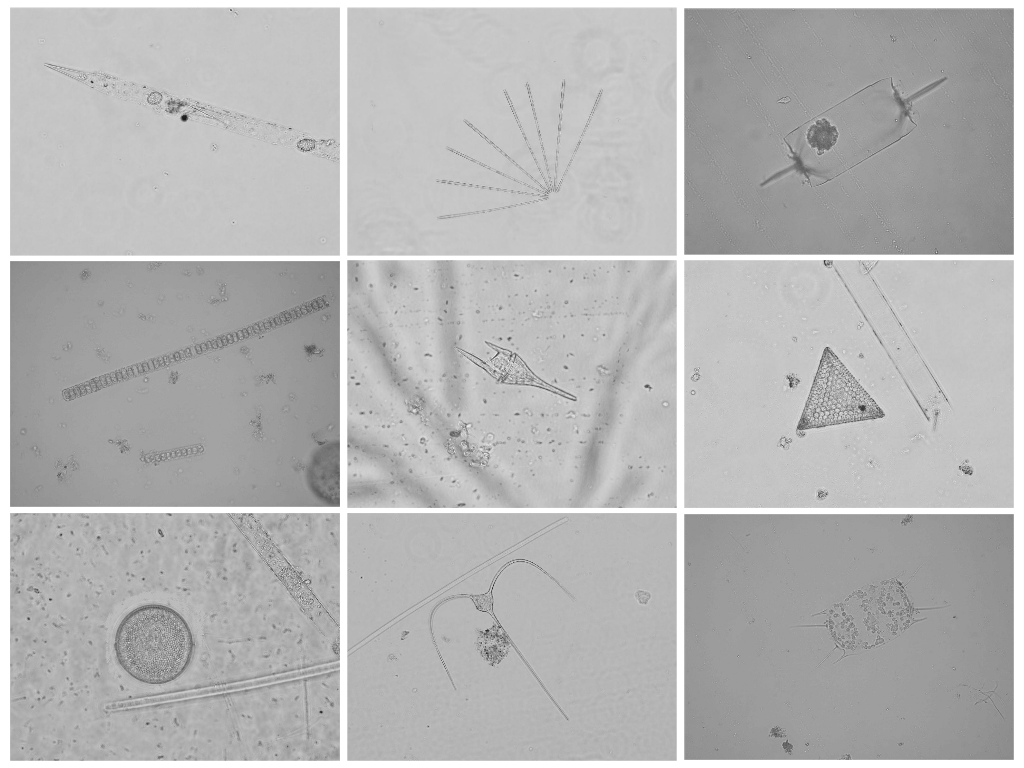}}
\subfigure[$N_{\!f}$ histogram]{\includegraphics[width=0.48\linewidth]{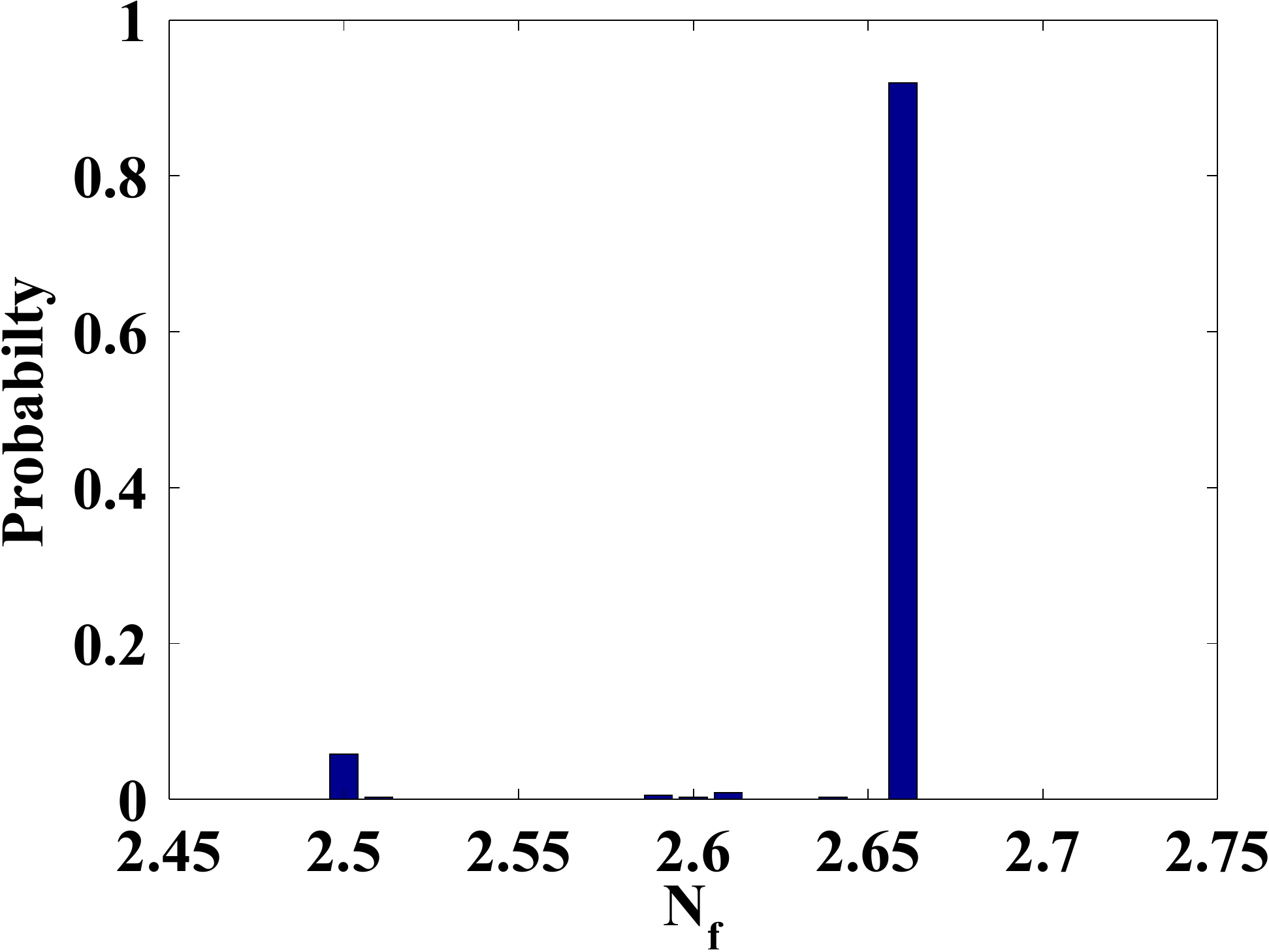}}
\caption{$N_{\!f}$ distribution of light-microscopy images of marine phytoplankton. Even though the images show plankton of very different morphologies, their $N_{\!f}$ is mostly the same, hence providing a classification feature for these images.}
\label{fig:sea}
\end{figure}
 
\section{Conclusion and Discussion}
\label{sec:DFW}
We proposed learning a gradient distribution prior (GDP) for biomedical images from natural-scene images. We provided different lines of argument why we believe this is worthwhile doing, and we have established that the resulting prior is stable and correlated with image quality. We have provided novel parametric models for GDP. Our models are of different accuracy and complexity, some of them leading to very simple convex problems. We have illustrated this in various applications, ranging from image enhancement to denoising, deconvolution, and dehazing. In all cases, the present GDP models led to results that were comparable or better than the existing state of the art in the respective field of application. We have further established a relationship between our GDP, traditional TV regularization, and anisotropic diffusion.

While we have exclusively focused on the image gradient here, the same work could also be done for higher-order derivatives, like the Laplacian. Spectral statistics of higher-order differential operators could provide additional regularization in the same framework. Of special interest could also be the mean or Gaussian curvature (GC) distributions~\cite{gong2013a}, as it directly relates to the geometry of cell membrane through the Willmore energy. As shown in Fig.~\ref{fig:secondOrder}, all of these second-order derivatives satisfy similar distributions as the gradient. Using GC as a prior is well known to better preserve edges in the image than mean curvature or the Laplacian. Figure~\ref{fig:secondOrder}, however, suggests the opposite. This needs more research. An interesting observation is that the naturalness factor derived from the distribution of the Laplacian is highly correlated with that derived from the gradient distribution (Fig.~\ref{fig:NfCorr}). This suggests that the naturalness factor is a universal image feature that does not depend on the order of the statistic over which it is defined. Confirming this, however, is still outstanding. 

\begin{figure}[h]
  \centering
  {\includegraphics[width=0.9\linewidth]{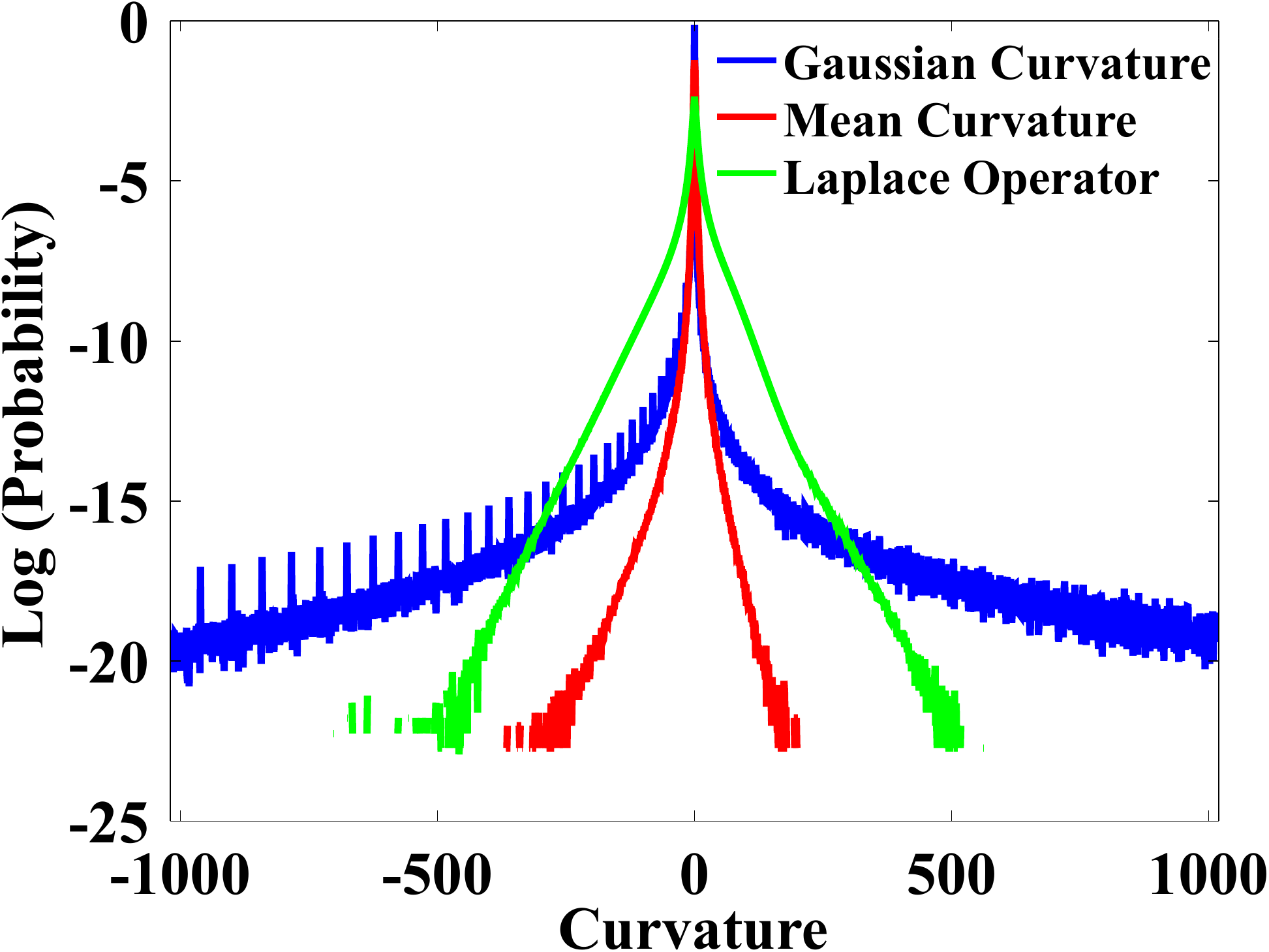}}
  \caption{Average distribution of Gaussian curvature, mean curvature, and Laplace operator response across all training images of our natural-scene image dataset.}
  \label{fig:secondOrder} 
\end{figure}

\begin{figure}[h]
  \centering
  {\includegraphics[width=0.9\linewidth]{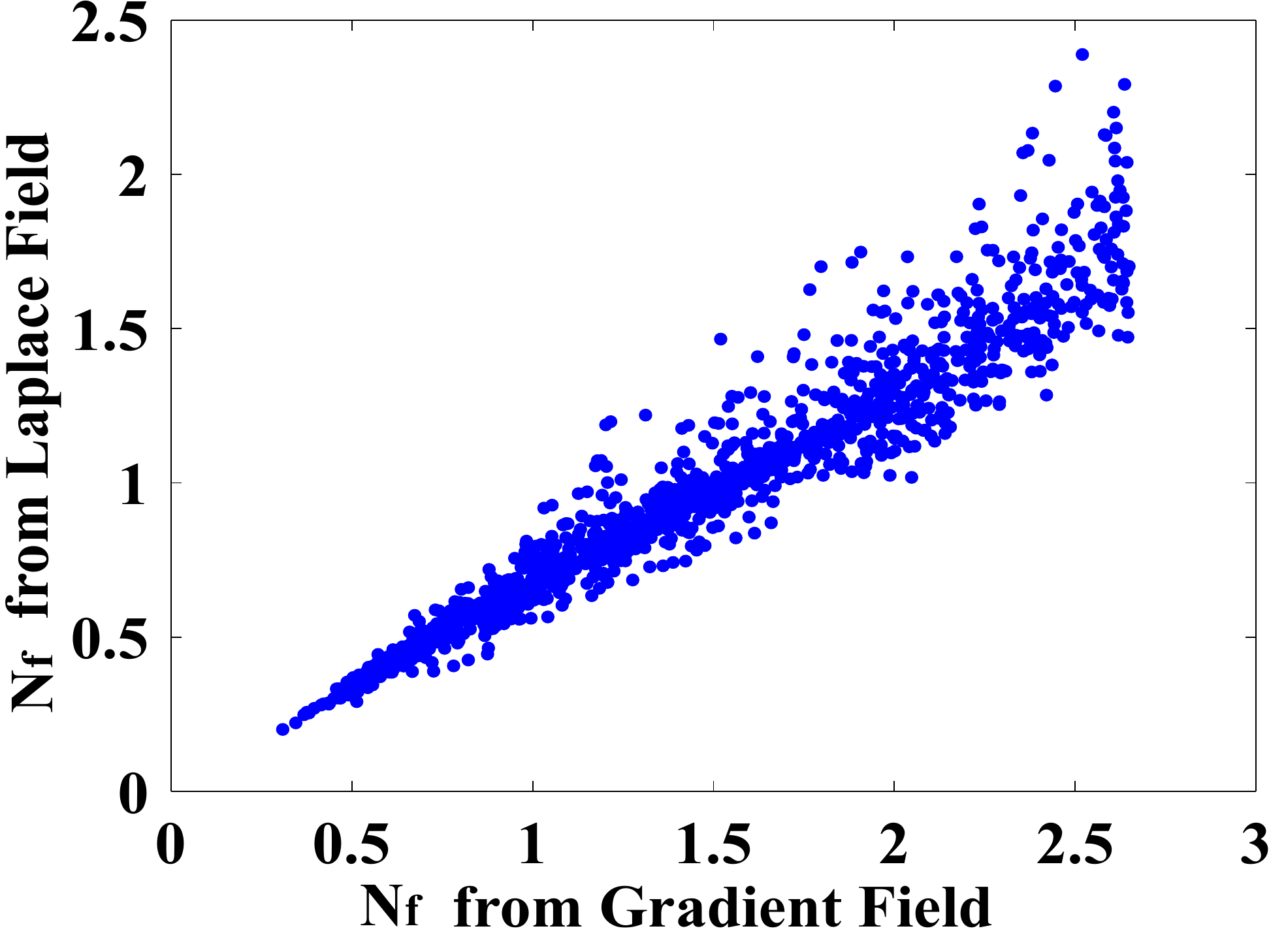}}
  \caption{the naturalness factors computed from the gradient and the Laplacian distributions are highly correlated.}
  \label{fig:NfCorr} 
\end{figure}

The present work can also be extended to higher-dimensional images. Constructing and using, for example, a GDP for 3D biomedical images is straightforward. The parametric models presented here are simple sums or products of 1D models and can hence trivially be extended to higher dimensions. As an example, Fig.~\ref{fig:3Dguess} shows the predicted gradient distribution in 3D using our Model 1. However, it is not easy to verify this result, because there are (almost) no 3D natural-scene images, and 3D biomedical images are corrupted by noise and blur.

\begin{figure}[h]
  \centering
  {\includegraphics[width=0.9\linewidth]{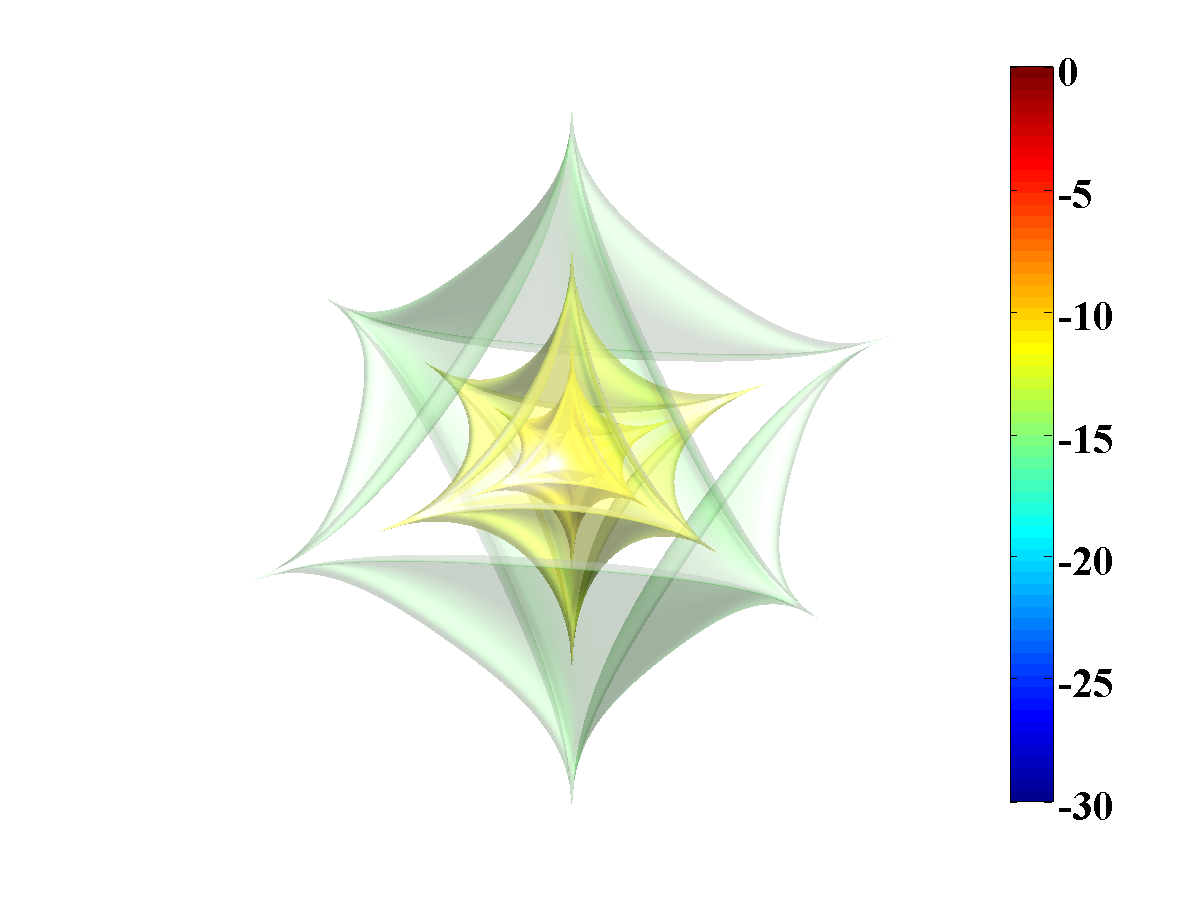}}
  \caption{Predicted 3D gradient PDF in log-scale from our Model 1. Isosurfaces are shown for -14, -12, -10, and -8. }
  \label{fig:3Dguess} 
\end{figure}

Since the GDP is stable with image contents, it is also valid on sub-images and image regions. As shown in Fig.~\ref{fig:local}(a,b), the gradient distribution is insensitive to the position, size, and shape of the patch. This is confirmed for an electron-microscopy image in Fig.~\ref{fig:local}(c,d). The image shows a transmission electron micrograph (ssTEM) of the {\em Drosophila} first instar larva ventral nerve cord (VNC) with a resolution of $4\times 4\times 50$\,nm/pixel~\citep{cardona:2010}. The gradient distributions within manually segmented mitochondria and synapses (all regions pooled) are almost identical, as shown in Fig.~\ref{fig:local}(d). This suggests that the characteristic gradient distribution is maybe more of a function of the imaging process than of the imaged objects. It also shows how the GDP can be straightforwardly extended to multi-region methods. 

\begin{figure}[h]
  \centering
  \subfigure[Different local image patches.]{
  \includegraphics[width=0.48\linewidth,height=0.36\linewidth]{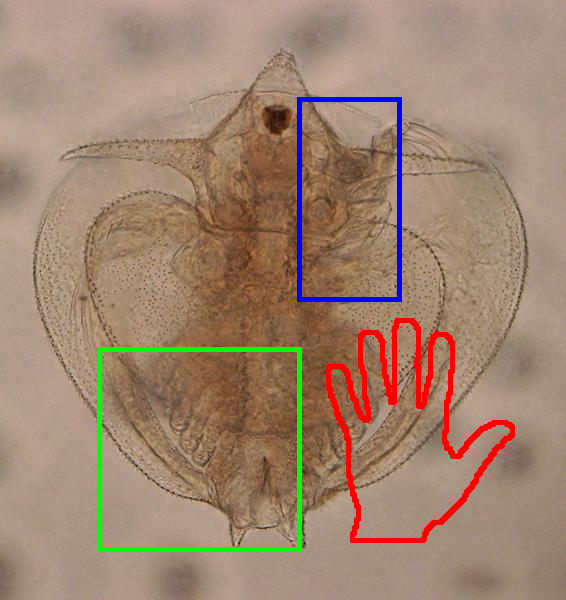}}
  \subfigure[Gradient distributions in these patches.]{
    \includegraphics[width=0.48\linewidth]{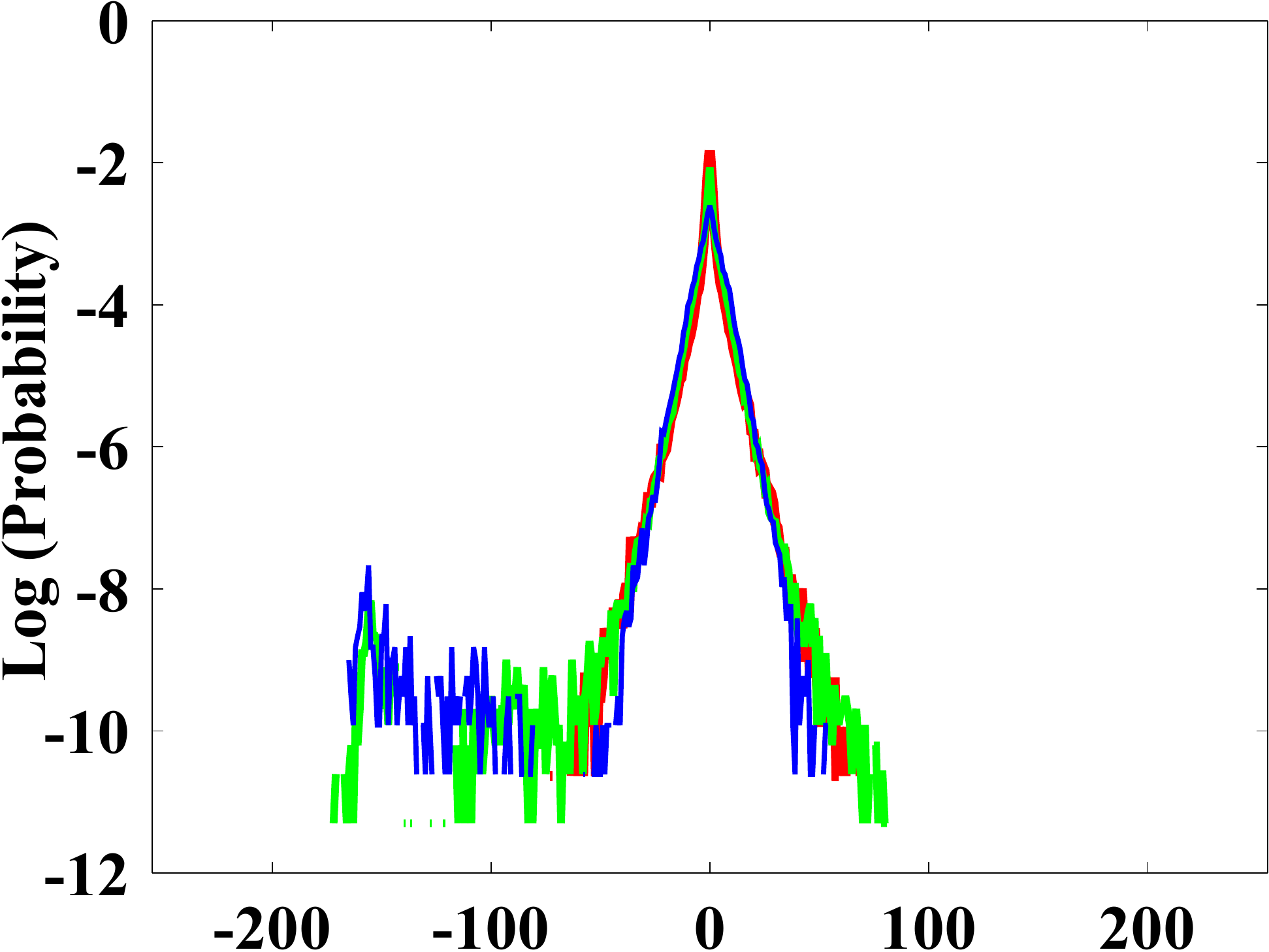}}
    \subfigure[Mitochondria (red) and synapses (blue) imaged by electron microscopy.]{
  \includegraphics[width=0.48\linewidth,height=0.36\linewidth]{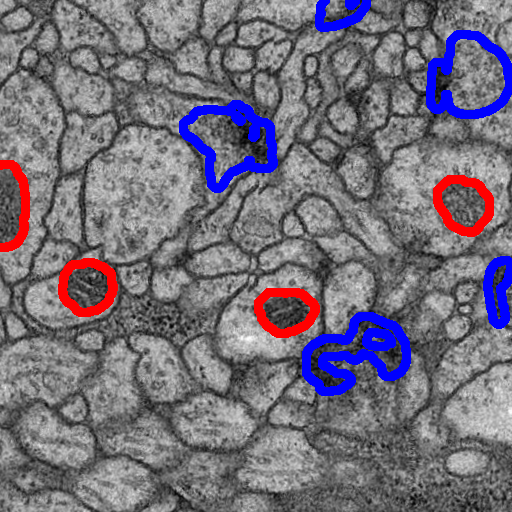}}
  \subfigure[Average gradient distributions.]{
    \includegraphics[width=0.48\linewidth]{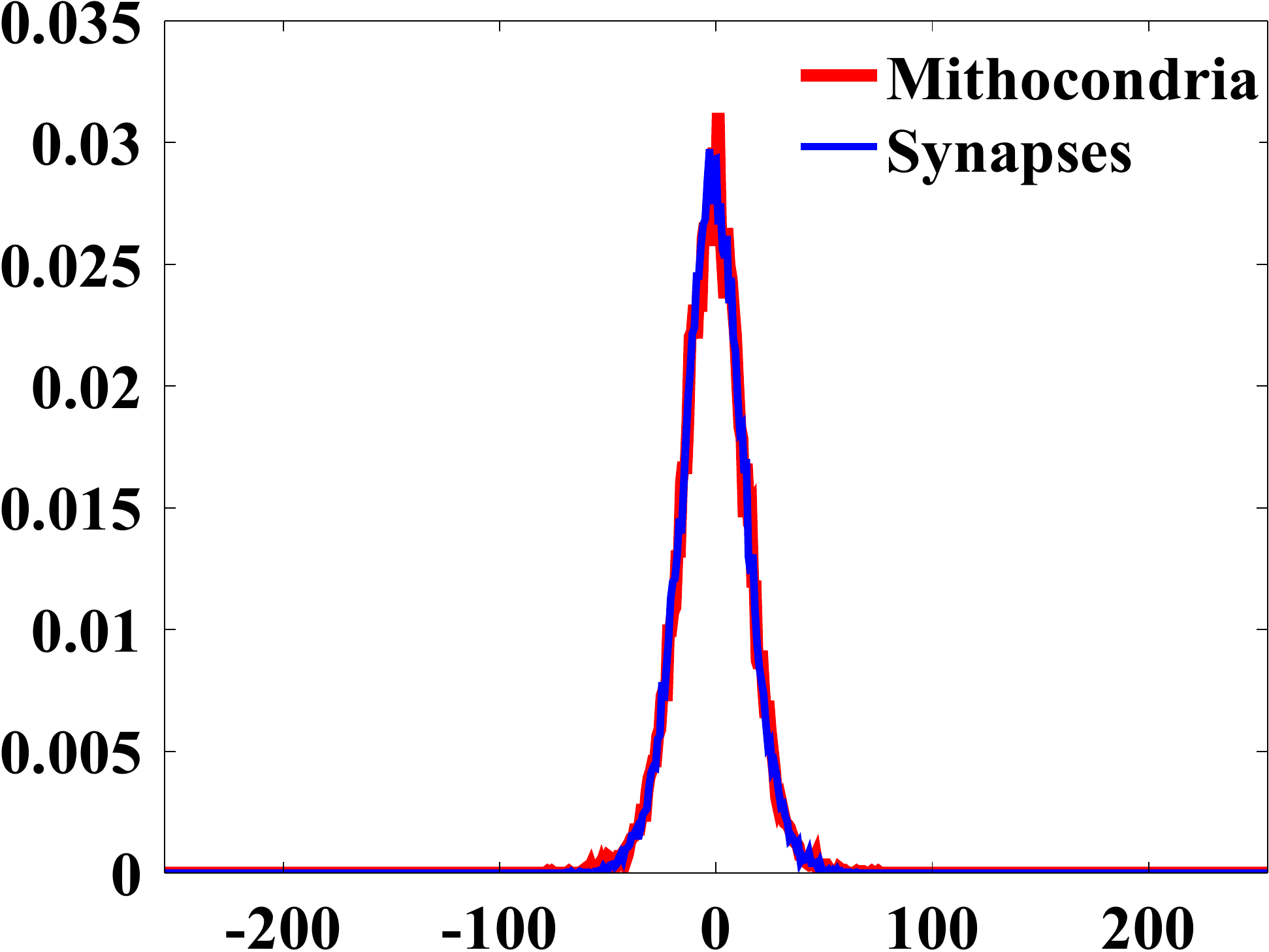}}
\caption{Gradient distributions in local image regions are invariant, provided the regions are large enough.}
 \label{fig:local} 
\end{figure}

Clearly, the GDP loses its validity when applied to small image patches that contain little or no internal structure. As the size of a local window decreases, we transition from a macroscopic view (entropy) to a microscopic view (pixel histogram). Since the GDP is a macroscopic quantity, it is only valid for large-enough image patches. But how large is large enough? Unfortunately, there is no sharp transition.  
To quantitatively see this, we define the naturalness map for a local window of edge length $w$ as:
\begin{equation}
N_w(x,y) = \iint p(\vec{G})\log \left(\frac{p(\vec{G})}{p^{\mathrm{pr}}}\right)\mathrm{d}G^{\hat{x}}\mathrm{d}G^{\hat{y}},
\end{equation} 
where $\hat{x}\in[x-w,x+w],\hat{y}\in[y-w,y+w]$. This quantifies the distance (KL-divergence) between the GDP and the gradient distribution in each local window. Computed for every image patch, this provides a map of how the image naturalness varies across patches. Two examples are shown in Fig.~\ref{fig:localChangeWindow}. When the window size decreases from 60 to 8, the average and median value of $N_w$ across all patches are plotted in Fig.~\ref{fig:localChangeWindow}(h,p). These plots show how the gradient distribution gradually diverges from the macroscopic GDP as the window size decreases. It seems that this behavior of prior invalidation is independent of image contents, as shown in Fig.~\ref{fig:localChangeWindowOther}. This is unexpected and requires further investigation. 

\begin{figure*}[!htb]
\centering
\subfigure[original]{\includegraphics[width=.24\linewidth]{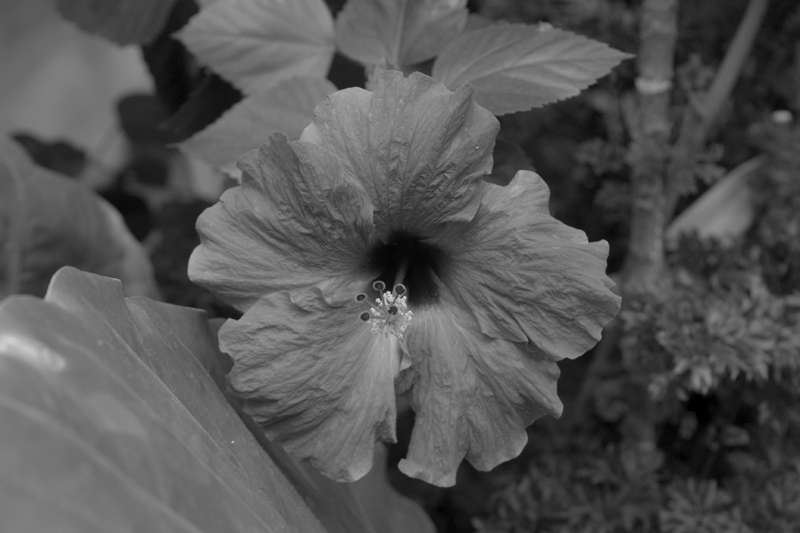}}
\subfigure[$w=56$]{\includegraphics[width=.24\linewidth]{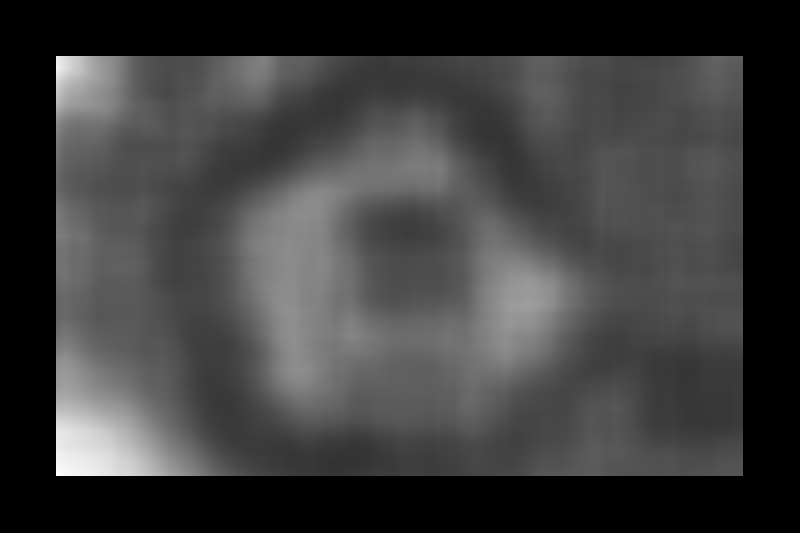}}
\subfigure[$w=48$]{\includegraphics[width=.24\linewidth]{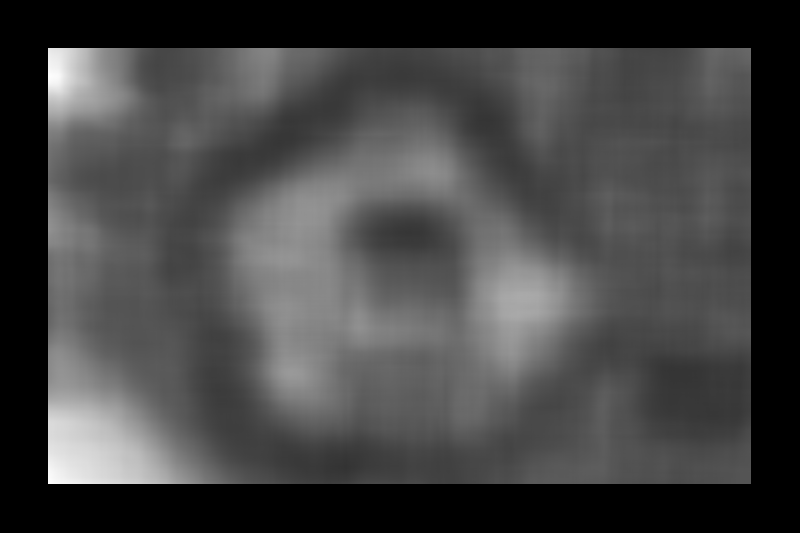}}
\subfigure[$w=40$]{\includegraphics[width=.24\linewidth]{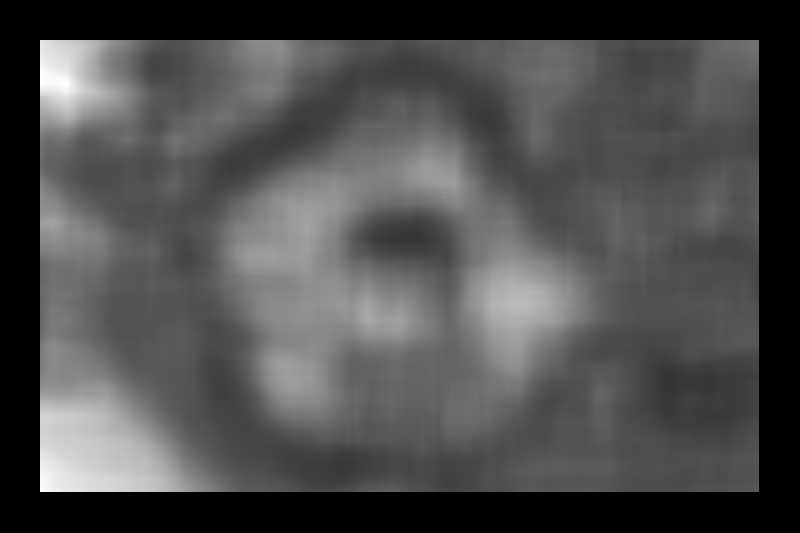}}
\subfigure[$w=32$]{\includegraphics[width=.24\linewidth]{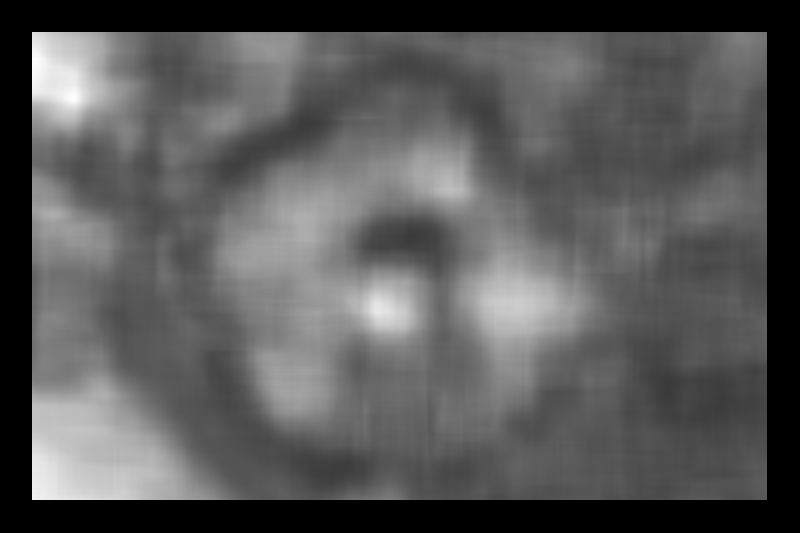}}
\subfigure[$w=16$]{\includegraphics[width=.24\linewidth]{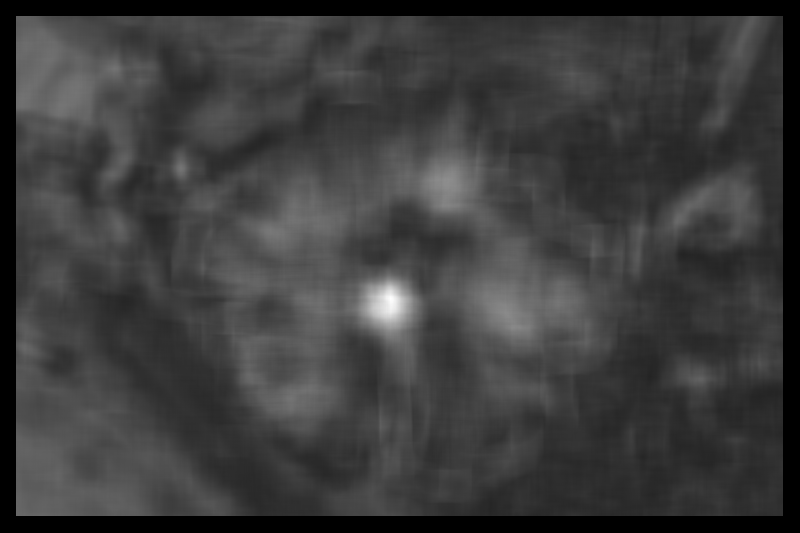}}
\subfigure[$w=8$]{\includegraphics[width=.24\linewidth]{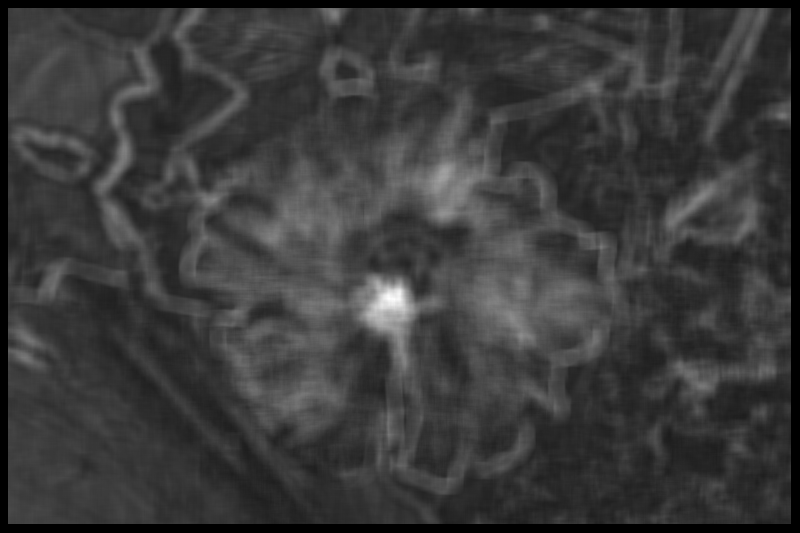}}
\subfigure[$N_w$ average and median]{\includegraphics[width=.24\linewidth, height=0.16\linewidth]{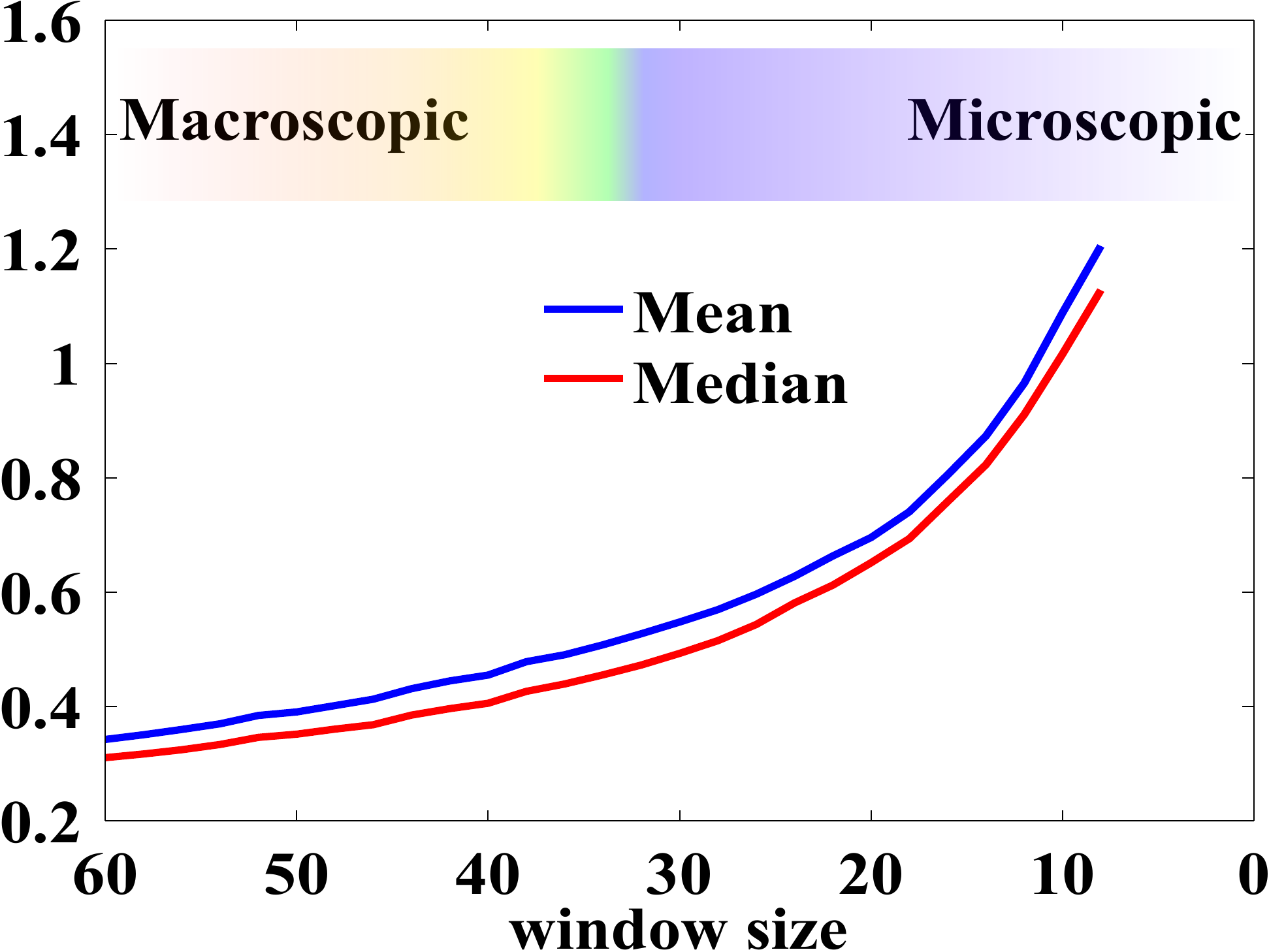}}

\subfigure[original]{\includegraphics[width=.24\linewidth]{deblur_source.png}}
\subfigure[$w=56$]{\includegraphics[width=.24\linewidth]{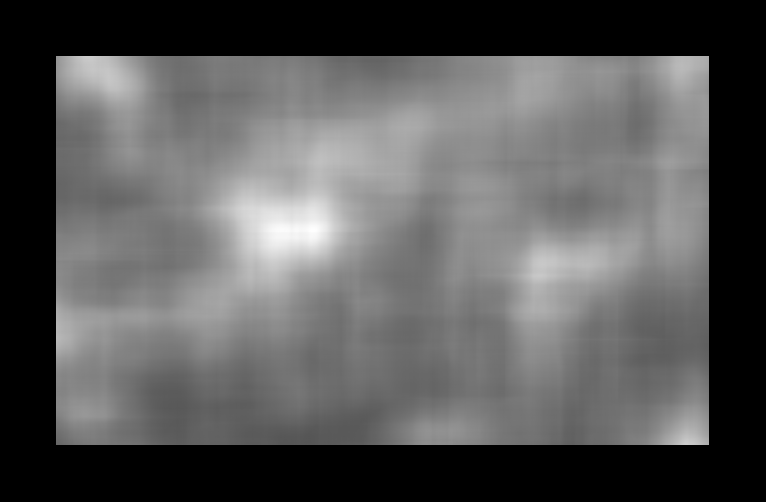}}
\subfigure[$w=48$]{\includegraphics[width=.24\linewidth]{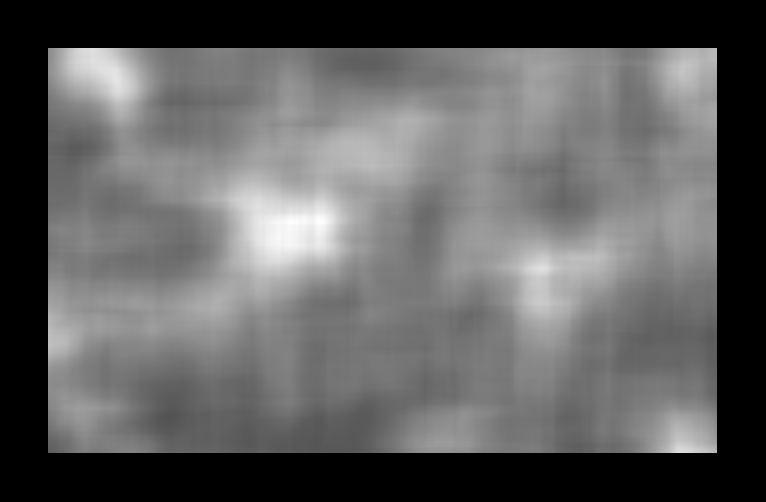}}
\subfigure[$w=40$]{\includegraphics[width=.24\linewidth]{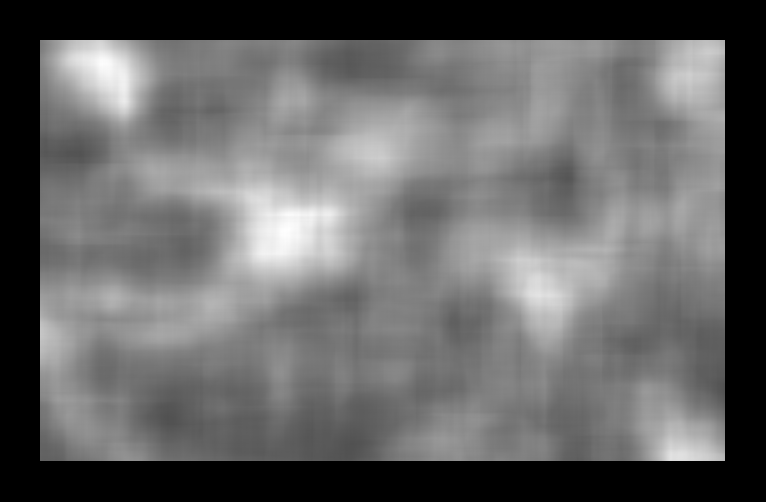}}
\subfigure[$w=32$]{\includegraphics[width=.24\linewidth]{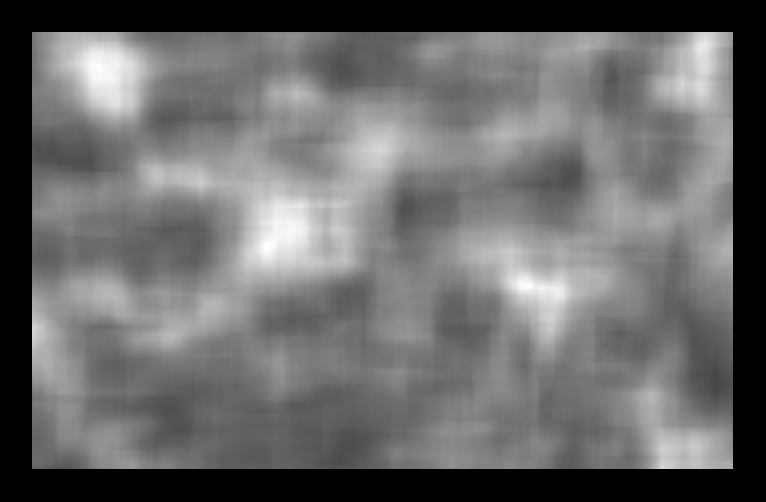}}
\subfigure[$w=16$]{\includegraphics[width=.24\linewidth]{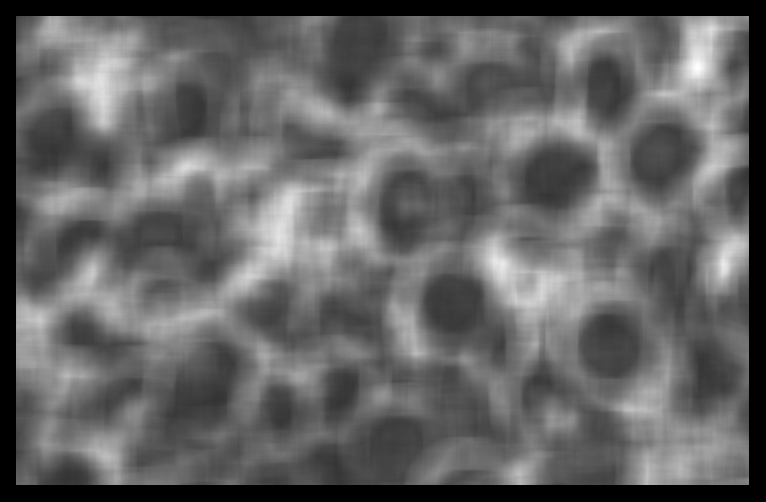}}
\subfigure[$w=8$]{\includegraphics[width=.24\linewidth]{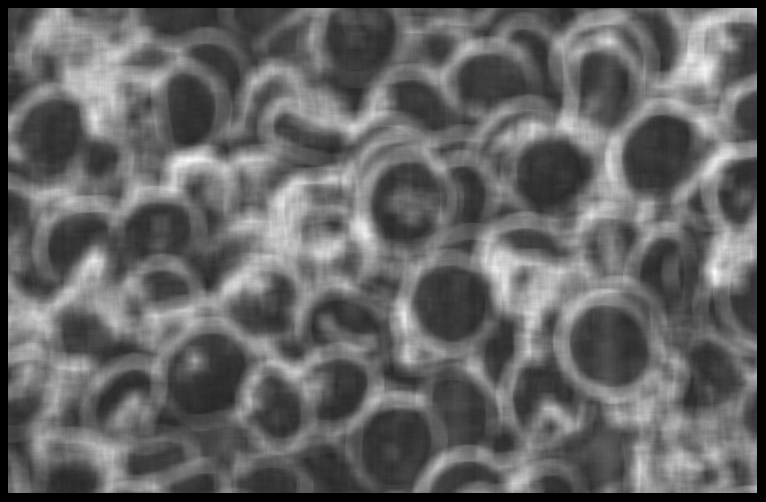}}
\subfigure[$N_w$ average and median]{\includegraphics[width=.24\linewidth, height=0.16\linewidth]{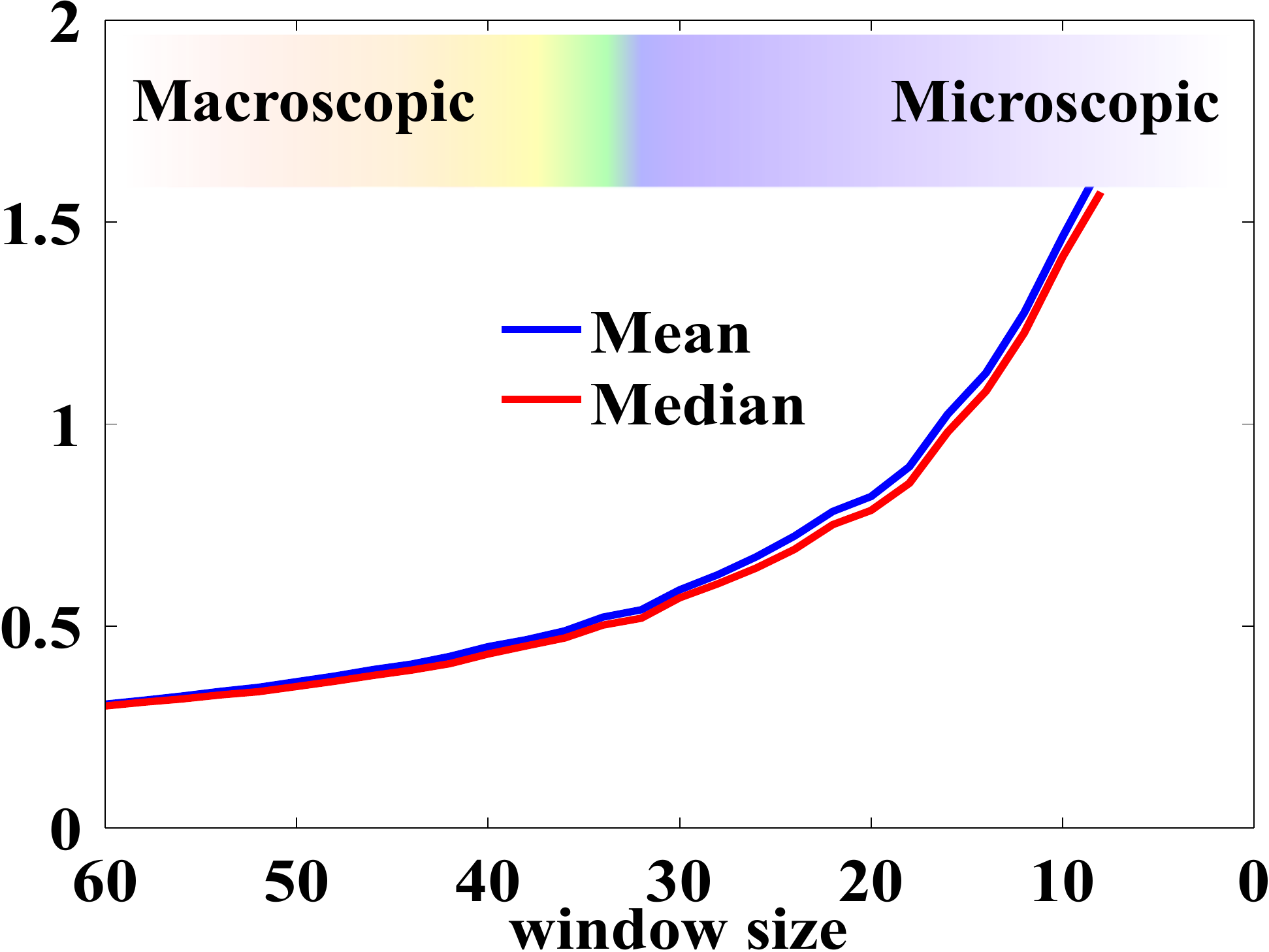}}
\caption{Decreasing the local window size $w$, the gradient distribution prior increasingly differs from the empirical distribution within the windows. The original image is shown followed by naturalness maps computed in increasingly smaller moving window sizes $w$. The last panel shows how both the mean and the median distance $N_w$ between the GDP and all local windows gradually grow with decreasing window size $w$.}
 \label{fig:localChangeWindow} 
\end{figure*} 

\begin{figure*}[!htb]
\centering
\subfigure[parrots]{\includegraphics[width=.24\linewidth]{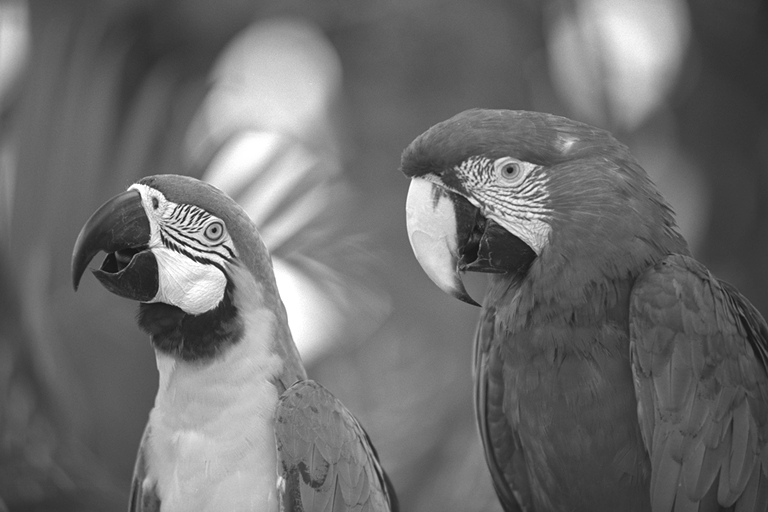}}
\subfigure[monarch]{\includegraphics[width=.24\linewidth]{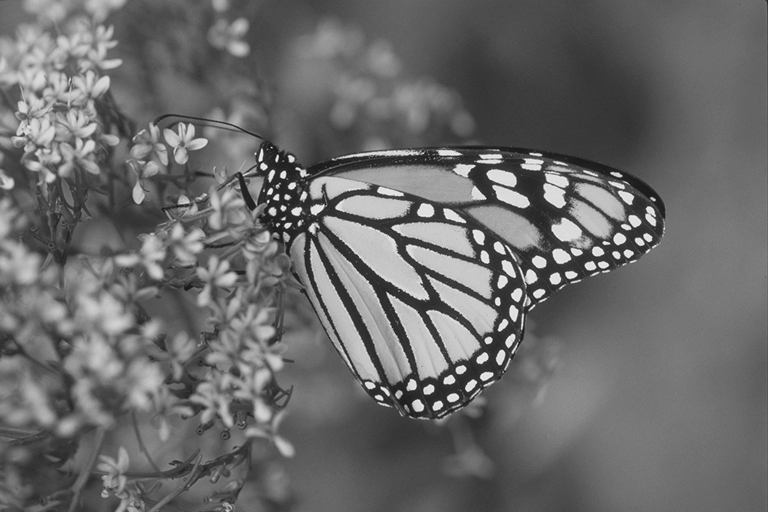}}
\subfigure[Lena]{\includegraphics[height=0.16\linewidth]{lena.jpg}}
\subfigure[$N_w$ average]{\includegraphics[width=.24\linewidth, height=0.16\linewidth]{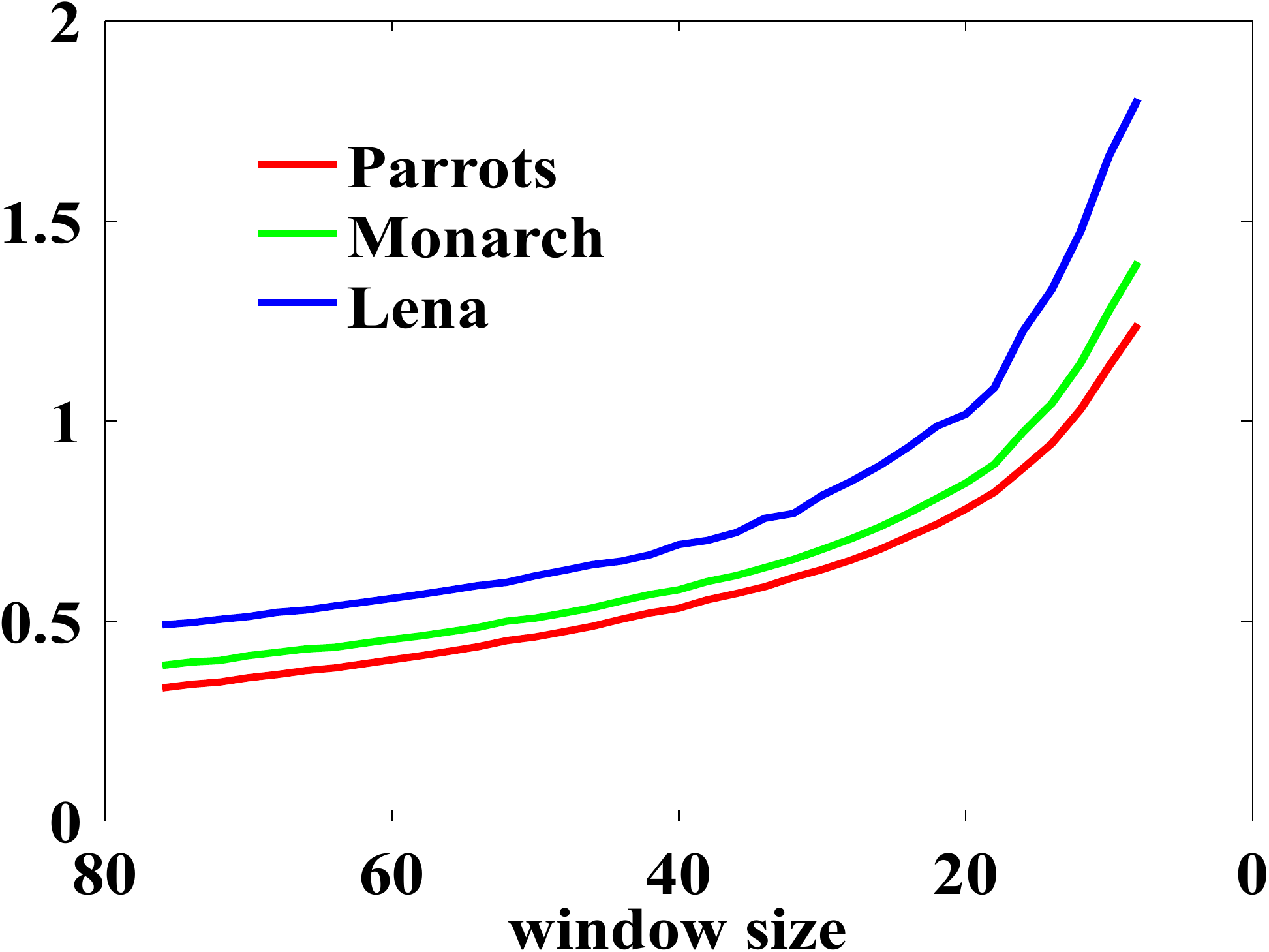}}
\caption{Three examples suggesting that the behavior of $N_w$ with patch size is independent of image contents.}
 \label{fig:localChangeWindowOther} 
\end{figure*} 

Notwithstanding the many open questions and limitations of gradient distribution priors, they have repeatedly proven extremely useful and competitive in image processing. The results presented here add to this. We believe, for the arguments set out here, that efficient parametric models of GDP learned from natural-scene images are a powerful and well-founded tool for biomedical image processing. In order to make this available to the community, we provide open-source code of all models presented here on our MOSAIC Group web page. We provide implementations in Matlab, C++  (included in the OpenCV library), and Java (as an ImageJ/Fiji plugin).

\section*{Acknowledgements}
We thank all researchers from the MOSAIC Group for the many inspiring discussions. We also thank Prof.~Dr.~Carsten Rother (Computer Vision Lab Dresden) for his feedback on the manuscript. Y.G.~was funded by a grant from the Swiss National Science Foundation, grant CRSII3-132396/1, awarded to I.F.S. This work was supported in parts by the German Federal Ministry of Research and Education (BMBF) under funding code 031A099.

\appendix

\section{Proof of Lemma~\ref{lem:w}}
\begin{proof}
\nonumber
\notag
Let $v=\|\nabla U\|_2^2$ and $W=T^2_{\mathrm{pr}}+\frac{b_{\mathrm{pr}}-v}{(b_{\mathrm{pr}}+v)^2}=0$.
Then we have quadratic equation $T^2_{\mathrm{pr}} (b_{\mathrm{pr}}+v)^2+b_{\mathrm{pr}}-v=0$.

If $T^2_{\mathrm{pr}}b_{\mathrm{pr}}\geq \frac{1}{8}$, then $W\geq 0$. 

If ~$T^2_{\mathrm{pr}}b_{\mathrm{pr}}< \frac{1}{8}$, then $v_L = \frac{1-2T^2_{\mathrm{pr}}b_{\mathrm{pr}}-\sqrt{1-8T^2_{\mathrm{pr}}b_{\mathrm{pr}}}}{2T^2_{\mathrm{pr}}}$ and $v_U= \frac{1-2T^2_{\mathrm{pr}}b_{\mathrm{pr}}+\sqrt{1-8T^2_{\mathrm{pr}}b_{\mathrm{pr}}}}{2T^2_{\mathrm{pr}}}$ such that 
\begin{eqnarray}
\begin{cases}
W < 0      & when ~~v_L\leq v\leq v_U \\
W >0   & otherwise \\
\end{cases}
\end{eqnarray}
\end{proof}

\bibliographystyle{elsarticle-harv}
\bibliography{IP}







\end{document}